\newtheorem{theorem}{Theorem}
\newtheorem{definition}[theorem]{Definition}
\newtheorem{example}[theorem]{Example}
\newtheorem{proposition}[theorem]{Proposition}
\newtheorem{lemma}[theorem]{Lemma}
\newcommand{\A}{\mathcal{A}}
\newcommand{\I}{\mathcal{I}}
\newcommand{\J}{\mathcal{J}}
\renewcommand{\H}{\mathcal{H}}
\newcommand{\K}{\mathcal{K}}
\renewcommand{\L}{\mathcal{L}}
\newcommand{\M}{\mathcal{M}}
\newcommand{\R}{\mathcal{R}}
\renewcommand{\S}{\mathcal{S}}
\newcommand{\T}{\mathcal{T}}
\newcommand{\DLc}{\textsl{DL-Lite}\ensuremath{_{\textit{core}}}\xspace}
\newcommand{\DLcH}{\textsl{DL-Lite}\ensuremath{_{\textit{core}}^{\smash{\H}}}\xspace}
\newcommand{\ALC}{\ensuremath{\mathcal{ALC}}\xspace}
\newcommand{\ELUf}{\ensuremath{\mathcal{ELU}_{\mathit{rhs}}}\xspace}
\newcommand{\hALC}{\textsl{Horn}\ensuremath{\mathcal{ALC}}\xspace}
\newcommand{\EL}{\ensuremath{\mathcal{EL}}\xspace}
\newcommand{\ELI}{\ensuremath{\mathcal{ELI}}\xspace}
\newcommand{\LogSpace}{\textsc{LogSpace}\xspace}
\newcommand{\PTime}{\textsc{P}\xspace}
\newcommand{\ExpTime}{\textsc{ExpTime}\xspace}
\newcommand{\NExpTime}{\textsc{NExpTime}\xspace}
\renewcommand{\succ}{\mathsf{succ}}
\newcommand{\sig}{\mathsf{sig}}
\newcommand{\ind}{\mathsf{ind}}
\newcommand{\type}{\boldsymbol{t}}
\newcommand{\q}{\boldsymbol q}
\newcommand{\TWAPA}{2APTA\xspace}
\newcommand{\TWAPAs}{2APTAs\xspace}
\newcommand{\TWABA}{2ABTA\xspace}
\newcommand{\TWABAs}{2ABTAs\xspace}
\newcommand{\Start}{\textit{Start}}
\newcommand{\Row}{\textit{Row}}
\newcommand{\End}{\textit{End}}
\newcommand{\first}{\textit{first}}
\newcommand{\halt}{\textit{halt}}
\newcommand{\Mod}{\boldsymbol{M}}
\newcommand{\fosh}{forest-shaped\xspace}
\newcommand{\RCQ}{rCQ\xspace}
\newcommand{\RUCQ}{rUCQ\xspace}
\newcommand{\avec}[1]{\boldsymbol{#1}}
\newcommand{\types}{\mathsf{type}}
\newcommand{\compl}{\mathsf{compl}}
\newcommand{\ToneCQ}{\mathcal{T}_{\text{CQ}}^{1}}
\newcommand{\TtwoCQ}{\mathcal{T}_{\text{CQ}}^{2}}
\newcommand{\ACQ}{\A_{\text{CQ}}}
\newcommand{\SigmaCQ}{\Sigma_{\text{CQ}}}
\newcommand{\KoneCQ}{\K_{\text{CQ}}^{1}}
\newcommand{\KtwoCQ}{\K_{\text{CQ}}^{2}}
\newcommand{\KonerCQ}{\K_{\text{rCQ}}^{1}}
\newcommand{\KtworCQ}{\K_{\text{rCQ}}^{2}}
\newcommand{\TonerCQ}{\T_{\text{rCQ}}^{1}}
\newcommand{\TtworCQ}{\T_{\text{rCQ}}^{2}}
\newcommand{\ArCQ}{\A_{\text{rCQ}}}
\newcommand{\SigmarCQ}{\Sigma_{\text{rCQ}}}
\newcommand{\up}[1]{#1^{\uparrow\Gamma}}
\newcommand{\down}[1]{#1^{\downarrow\Gamma}}
\newcommand{\Iup}{\up{\I}}
\newcommand{\Idown}{\down{\I}}
\newcommand{\Kup}{\up{\K}}
\newcommand{\Dup}{\up{D}}
\newcommand{\Cuup}{\up{C}}
\newcommand{\Tup}{\up{\T}}
\newcommand{\Jup}{\up{\J}}
\newcommand{\Jdown}{\down{\J}}
\tikzset{ %
  point/.style={thick,circle,draw=black,minimum size=1.3mm,inner sep=0pt},%
  constant/.style={fill=black},%
  rowind/.style={fill=gray},%
  cwitness/.style={circle,draw=black,minimum size=3.5mm,inner sep=0pt,
    label=center:{$\land$}},%
  dwitness/.style={minimum width=1.1cm, minimum height=0.3cm},%
  homomorphism/.style={line width=0.1cm,-latex},%
  role/.style={-latex, semithick},%
  gen/.style={role, decorate, decoration={snake, amplitude=0.3mm,segment
      length=2mm, post length=1mm}},%
  subtree/.style={isosceles triangle, draw, very thick, subtreecolor,
    anchor=north, outer sep=0.1, isosceles triangle stretches, minimum
    width=1cm, minimum height=0.6cm, shape border rotate=90}, %
  backward/.style={rectangle, draw=black, fill=gray!50, minimum size=1mm, inner
    ysep=4pt, inner xsep=7pt, outer sep=0.05cm, rounded corners=2mm},%
  sbound/.style={rectangle, draw=black, fill=gray!10, minimum size=1mm, inner
    ysep=4pt, inner xsep=7pt, outer sep=0.05cm},%
  infinite/.style={rectangle, draw=black, fill=gray!25, minimum size=1mm, inner
    ysep=4pt, inner xsep=7pt, outer sep=0.05cm, minimum width=1.5cm},%
  trans/.style={-stealth', semithick},%
}
\colorlet{subtreecolor}{black!60}
\def \tikzdots[#1]{
  \begin{scope}[shift={#1}]
    \node at (0,0.15) {$\cdot$}; \node {$\cdot$}; \node at (0,-0.15) {$\cdot$};
  \end{scope}
}
\journal{Artificial Intelligence}
\begin{document}

\begin{frontmatter}



\title{Query Inseparability for \ALC Ontologies}


\author[bz]{Elena Botoeva}
\ead{botoeva@inf.unibz.it}

\author[br]{Carsten Lutz}
\ead{clu@informatik.uni-bremen.de}

\author[bbk]{Vladislav Ryzhikov}
\ead{vlad@dcs.bbk.ac.uk}

\author[liv]{Frank Wolter}
\ead{wolter@liverpool.ac.uk}

\author[bbk]{Michael Zakharyaschev}
\ead{michael@dcs.bbk.ac.uk}

\address[bz]{KRDB Research Centre,
  Free University of Bozen-Bolzano, Italy}
\address[br]{Fachbereich Informatik,
  University of Bremen, Germany}
\address[bbk]{Department of Computer Science and Information Systems,
  Birkbeck, University of London, UK}
\address[liv]{Department of Computer Science,
  University of Liverpool, UK}

\begin{abstract}
  We investigate the problem whether two $\ALC$ ontologies are
  indistinguishable (or inseparable) by means of queries in a given signature, which is fundamental for ontology engineering
  tasks such as ontology versioning, modularisation, update, and forgetting. We consider both
  knowledge base (KB) and TBox inseparability. For KBs, we give
  model-theoretic criteria in terms of (finite partial) homomorphisms and
  products and prove that this problem is undecidable for conjunctive
  queries (CQs), but 2\ExpTime-complete for unions of CQs (UCQs). The
  same results hold if (U)CQs are replaced by rooted (U)CQs, where every variable is
  connected to an answer variable. We also show that inseparability by CQs is still undecidable
  if one KB is given in the lightweight DL $\mathcal{EL}$ and if no restrictions
  are imposed on the signature of the CQs. We also consider the problem whether two $\ALC$ TBoxes give the same answers
  to any query over any ABox in a given signature and show that,
  for CQs, this problem is undecidable, too. We then develop model-theoretic criteria for \hALC TBoxes and show using tree automata that, in contrast, inseparability
  becomes decidable and 2\ExpTime-complete, even \ExpTime-complete when restricted to (unions of) rooted CQs.
\end{abstract}


\end{frontmatter}



\section{Introduction}

In recent years, data access using description logic (DL) TBoxes has
become one of the most important applications of
DLs (see, e.g.,~\cite{PLCD*08,BienvenuO15,KontchakovRZ13} and references therein), where the underlying
idea is to use a TBox to specify semantics and background knowledge
for the data (stored in an ABox) and thereby derive more complete
 answers to queries.  A major research effort has led to the development of
efficient querying algorithms and tools for a number of DLs ranging from
DL-Lite~\cite{CDLLR07,CalvaneseGLLPRRRS11,Rodriguez-MuroKZ13} via more
expressive Horn DLs such as
Horn\ALC~\cite{EiterOSTX12,TrivelaSCS15}
to DLs with full Boolean constructors including 
\ALC and extensions such as $\mathcal{SHIQ}$~\cite{KolliaG13,ZhouGNKH15}. 


While query answering with DLs is now well-developed, this is much
less the case for reasoning services that support ontology engineering
when ontologies are used to query data. Important ontology
engineering tasks include ontology versioning~\cite{NoyM02-promptdiff,OntoView,RSDT08,HORROCKS,KonevL0W12},
ontology modularisation~\cite{DBLP:series/lncs/5445,KutzML10,GrauHKS08,KWZ10,RomeroKGH16},
ontology revision and update~\cite{GiacomoLPR09,LiuLMW11,WangWT10,WangWT15},
and forgetting in ontologies~\cite{KonevWW09,WangWTP10,LutzW11,WangWTPA14,KoopmannS14,NikitinaR14,KoopmannS15}. 
A fundamental reasoning problem in all these tasks is to \emph{compare} two ontologies.
For example, in ontology versioning, the user is interested in comparing two versions of an ontology
and understanding the relevant difference between them. In ontology modularisation, the relevant consequences 
of the full ontology should be preserved when it is replaced by a module. In ontology revision and update, one typically
minimises the relevant difference between the updated or revised ontology and the original ontology 
while taking into account new knowledge. In ontology forgetting, one constructs a new ontology, which is indistinguishable 
from the original ontology with respect to a signature of interest. The relevant consequences that should be considered
when comparing two ontologies depend on the application. 
In the context of querying data via ontologies, it is natural to consider the answers the ontologies give to queries. 
Then, in ontology versioning, the relevant difference between two versions of an ontology is based on the queries 
that receive distinct answers with respect to the ontology versions. In ontology modularisation, it is the answers to queries 
that should be preserved when a module is extracted from an ontology. In ontology update or revision, the difference between 
the answers to queries over the updated or revised ontology and the original one should be minimised 
when constructing update or revision operators. Similarly, in forgetting, it is the
answers to queries which should be preserved under appropriate forgetting operators.
Thus, in the context of query answering, the fundamental
relationship between ontologies is not whether they are logically
equivalent (have the same models), but whether they give the same
answers to any relevant query. To illustrate, consider the following simple TBox
$$
\T= \{\textit{Book} \sqsubseteq \exists \textit{author}.\neg \textit{Book}\}
$$
saying that every book has an author who is not a book.  Clearly, $\T$ is not logically equivalent to the TBox
$$
\T' = \{\textit{Book} \sqsubseteq \exists \textit{author}.\top\},
$$
which only states that every book has an author. However, if one takes as the query language the popular classes of conjunctive queries (CQs) or unions of CQs (UCQs),
then no matter what the data is, every query will have the same answers independently of  whether one uses $\T$ or $\T'$. Intuitively, the reason
is that the `positive' information given by $\T$ coincides with the `positive' information given by $\T'$. If the main purpose of
the ontology is answering UCQs, it is thus more important to know
that $\T$ can be safely replaced by $\T'$ without affecting the answers to UCQs than to establish that $\T$ and $\T'$
are not logically equivalent.

In most ontology engineering applications for ontology-based data access, the relevant
class $\mathcal{Q}$ of queries can be further restricted to those given in a finite signature of relevant concept and role names. For example, to
establish that a subset $\mathcal{M}$ of an ontology $\mathcal{O}$ is a module of $\Omc$,  one should
not require that $\mathcal{M}$ and $\mathcal{O}$ give the same answers to all queries in $\mathcal{Q}$, but only to those that are in the signature of $\mathcal{M}$. Similarly, in the versioning context,
often only the answers to queries in $\mathcal{Q}$ given in a small signature containing  a fraction of the concept and role names
of the ontology are relevant for the application, and so for the difference that should be presented to a user.

The resulting entailment problem can be formalised in two ways. Recall that, in DL, a knowledge base (KB)
$\K=(\T,\A)$ consists of a TBox $\T$ and an ABox $\A$. Now, given a class $\mathcal{Q}$ of queries, KBs $\K_{1}$ and
$\K_{2}$, and a signature $\Sigma$ of relevant concept and role names,
we say that $\K_{1}$ \emph{$\Sigma$-$\mathcal{Q}$ entails} $\K_{2}$ if
the answers to any $\Sigma$-query in $\mathcal{Q}$ over $\K_{2}$ are
contained in the answers to the same query over $\K_{1}$. Further,
$\K_{1}$ and $\K_{2}$ are \emph{$\Sigma$-$\mathcal{Q}$ inseparable} if
they $\Sigma$-$\mathcal{Q}$ entail each other. Since a KB
includes an ABox, this notion of entailment is appropriate if
the data is known while the ontology engineering task is completed and does not change frequently.
This is the case for many real-world ontologies, which not only provide a conceptual model of the domain of interest, 
but also introduce the individuals relevant for the domain and their properties.  
In addition to versioning, modularisation, revision, update, and forgetting, applications of
$\Sigma$-KB entailment and $\Sigma$-KB inseparability also include knowledge exchange~\cite{ArenasBCR13,ArenasBCR16,BotoevaKRWZ16}, where a user wants to transform a KB $\K_{1}$ given in a signature $\Sigma_{1}$
to a KB $\K_{2}$ in a new signature $\Sigma_{2}$ connected to $\Sigma_{1}$ using a mapping $\mathcal{M}$, also known as
an ontology alignment or ontology matching~\cite{ShvaikoE13}. The condition that
the target KB $\K_{2}$ is a sound and complete representation of $\K_{1}$ under $\mathcal{M}$ 
with respect to the answers to a class $\mathcal{Q}$ of relevant queries can then be formulated as
the condition that $\K_{1} \cup \mathcal{M}$ and $\K_{2}$ are $\Sigma_{2}$-$\mathcal{Q}$ inseparable~\cite{BotoevaKRWZ16}. 
The following simple example illustrates the notion of KB inseparability.
\begin{example}\label{ex:lecture}\em
Suppose we are given the KBs $\K_{1}=(\T_{1},\A)$ and $\K_{2}=(\T_{2},\A)$, where
\begin{align*}
& \Tmc_{1}= \{\textit{Lecturer}\sqsubseteq \forall \textit{teaches}.(\textit{Undergraduate} \sqcup \textit{Graduate})\}, \qquad \T_{2}=\emptyset,\\
& \A= \{\textit{Lecturer}(a), \, \textit{teaches}(a,b)\}.
\end{align*}
Then $\K_{1}$ and $\K_{2}$ are $\Sigma$-CQ inseparable, for any signature $\Sigma$. However, they are not $\Sigma$-UCQ inseparable for the signature $\Sigma$ containing the concept names $\textit{Undergraduate}$ and $\textit{Graduate}$.
To see this, consider the $\Sigma$-UCQ
$$
\q(x) = \textit{Undergraduate}(x) \vee \textit{Graduate}(x).
$$
Clearly, $b$ is an answer to $\q(x)$ over $\K_{1}$, but not over $\K_{2}$.
\end{example}

KB entailment and inseparability are appropriate if the data is known and does not change frequently. If, however, the data is not known or tends to change,
it is not KBs that should be compared, but TBoxes. Given a pair
$\Theta=(\Sigma_{1},\Sigma_{2})$ that specifies a relevant signature
$\Sigma_{1}$ for ABoxes and a relevant signature $\Sigma_{2}$ for queries, we say that a
TBox $\T_{1}$ \emph{$\Theta$-$\mathcal{Q}$ entails} a TBox $\T_{2}$
if, for every $\Sigma_{1}$-ABox $\A$, the KB $(\T_{1},\A)$
$\Sigma_{2}$-$\mathcal{Q}$ entails $(\T_{2},\A)$. TBoxes $\T_{1}$ and
$\T_{2}$ are \emph{$\Theta$-$\mathcal{Q}$ inseparable} if they
$\Theta$-$\mathcal{Q}$ entail each other.
\begin{example}\label{ex:2}\em
Consider again the TBoxes $\T_{1}$ and $\T_{2}$ from Example~\ref{ex:lecture}. Clearly, $\T_{1}$ and $\T_{2}$ are not $(\Sigma_{0},\Sigma_{1})$-UCQ inseparable
for $\Sigma_{0}=\{\textit{Lecturer},\textit{teaches}\}$ and $\Sigma_{1}=\{\textit{Undergraduate},\textit{Graduate}\}$
as we have seen a $\Sigma_0$-ABox $\A$ for which $(\T_{1},\A)$ and $(\T_{2},\A)$ are not
$\Sigma_{1}$-UCQ inseparable. Notice, however, that $\T_{1}$ and $\T_{2}$ are both $(\Sigma_{0},\Sigma_{0})$-UCQ and $(\Sigma_{1},\Sigma_{1})$-UCQ inseparable.
On the other hand, it is not difficult to see that
$\T_{1}$ and $\T_{2}$ are $(\Sigma_{0},\Sigma_{1})$-CQ inseparable. The situation
changes drastically if the ABox can contain additional role names, for instance $\textit{hasFriend}$. Indeed, suppose
$\Sigma_{2} = \Sigma_{0} \cup \Sigma_{1} \cup \{\textit{hasFriend}\}$.
Then $\T_{1}$ and $\T_{2}$ are $(\Sigma_{2},\Sigma_{2})$-CQ separable by the ABox $\A'$ shown in the picture below and the CQ 
$$
\q'(x) = \exists y \exists z\, \big( \mathit{teaches}(x,y) \land \mathit{Undergraduate}(y) \land \mathit{hasFriend}(y,z) \land \mathit{Graduate}(z) \big)
$$ 
since $a$ is returned as an answer to $\q'(x)$ over $(\T_{1},\A')$ but not over $(\T_{2},\A')$. (This example is a variant of the well-known~\cite[Example~4.2.5]{Scha94b}.)
\begin{center}
\begin{tikzpicture}[xscale=1.5, yscale=1.5]
  \foreach \name/\x/\y/\lab/\conc/\wh in {%
    a/-2/0.5/a/\mathit{Lecturer}/above,%
    b/-1/-0.3/b/\mathit{Undergraduate}/below, %
    c/0/0.5/c/{}/above, %
    d/1.5/0.5/d/\mathit{Graduate}/below%
  }{ \node[inner sep=1, outer sep=0, label=\wh:{\scriptsize$\conc$}] (\name)
    at (\x,\y) {\footnotesize $\lab$}; }

  \foreach \from/\to/\wh/\lab in {%
    a/b/below/\mathit{teaches}, a/c/above/\mathit{teaches}, b/c/below/\mathit{hasFriend},
    c/d/above/\mathit{hasFriend}%
  }{ \draw[role] (\from) -- node[\wh,sloped] {\scriptsize$\mathit{\lab}$} (\to); }

  \node[anchor=east] at (-2.5,0.2) {$\A'$:};

  \begin{scope}[xshift=6cm, yshift=0.2cm]
    \foreach \name/\x/\conc/\wh in {%
      x/-1.5//right,%
      y/0/{\footnotesize\mathit{Undergraduate}}/below, %
      z/1.5/{\footnotesize\mathit{Graduate}}/below%
    }{ \node[inner sep=1, outer sep=0, label=\wh:{\footnotesize$\conc$}]
      (\name) at (\x,0) {\small $\name$ }; }

    \foreach \from/\to/\wh in {%
      x/y/right%
    }{ \draw[role] (\from) -- node[above] {\footnotesize$\mathit{teaches}$} (\to); }

    \foreach \from/\to/\wh in {%
      y/z/right%
    }{ \draw[role] (\from) -- node[above] {\footnotesize$\mathit{hasFriend}$} (\to); }

    \node[anchor=east] at (-2,0) {$\q'(x)$:};
  \end{scope}
\end{tikzpicture}
\end{center}
\end{example}



In this paper, we investigate entailment and inseparability for KBs and TBoxes and for queries that are CQs or UCQs.
In practice, the majority of queries are \emph{rooted} in the sense that every variable is connected to an answer variable. We therefore also consider the classes of rooted CQs (\RCQ{s}) and UCQs (\RUCQ{s}).
So far, query entailment and inseparability have been studied for Horn DL KBs~\cite{BotoevaKRWZ14}, $\mathcal{EL}$
TBoxes~\cite{LutzW10,KonevL0W12}, DL-Lite TBoxes~\cite{KontchakovPSSSWZ09}, and also for OBDA
specifications, that is, DL-Lite TBoxes with mappings~\cite{BienvenuR16}; for a recent survey see~\cite{BotoevaKLRWZ16}.
No results are yet available for non-Horn DLs (neither in the KB nor in the TBox case) and for
expressive Horn DLs in the TBox case. In particular, query entailment
in non-Horn DLs has had the reputation of being a technically challenging problem. Here, we make first steps towards understanding query entailment and inseparability in these cases.
To begin with, we give model-theoretic characterisations of these notions
for $\mathcal{ALC}$ and \hALC in terms of (finite partial) homomorphisms and products of interpretations.
The obtained characterisations together with various types of automata are then used to investigate the
computational complexity of deciding query entailment and inseparability. Our main
results on KB and TBox inseparabilities are summarised in Tables~\ref{table:kb} and~\ref{table:TBox}, respectively:
%
%
\begin{table}[h]
\centering
\caption{KB query inseparability.}
\label{table:kb}
\begin{tabular}{|c|c|c|}
\hline
Queries &  $\mathcal{ALC}$ and $\mathcal{ALC}$ & $\mathcal{ALC}$ and $\mathcal{EL}$ \\
\hline
CQ and rCQ &  undecidable   &  undecidable \\
\hline
UCQ and rUCQ & 2\ExpTime-complete & in 2\ExpTime \\
\hline
\end{tabular}
%
%
\caption{TBox query inseparability.}%
\label{table:TBox}
\begin{tabular}{|c|c|c|c|}
\hline
Queries &  $\mathcal{ALC}$ and $\mathcal{ALC}$ & $\mathcal{ALC}$ and $\mathcal{EL}$ & \hALC and \hALC \\
\hline
CQs &  undecidable  &  undecidable & 2\ExpTime-complete \\
\hline
rCQs & undecidable  &  undecidable & \ExpTime-complete \\
\hline
\end{tabular}
\end{table}


Three of these results came as a real surprise to us. First, it turned out that
CQ and rCQ inseparability between \ALC KBs is undecidable, even if one of the
KBs is formulated in the lightweight DL $\EL$ and without any
signature restriction. This should be contrasted with the decidability
of subsumption-based entailment between \ALC TBoxes \cite{GhiLuWo-06} (and even theories in guarded fragments of FO~\cite{JungLM0W17})
and of CQ entailment between \hALC KBs \cite{BotoevaKRWZ14}. The
second surprising result is that inseparability between \ALC KBs becomes
decidable when CQs are replaced with UCQs or \RUCQ{s}. In fact, we
show that inseparability is 2\ExpTime-complete for both UCQs and \RUCQ{s}.
An even more fine-grained picture is obtained by considering entailment
instead of inseparability. It turns out that (r)CQ entailment of \hALC KBs by \ALC KBs
coincides with (r)UCQ entailment of \hALC KBs by \ALC KBs and is 2\ExpTime-complete, but
that in contrast (r)CQ entailment of \ALC KBs by \hALC KBs is undecidable.

For \ALC TBoxes, CQ and rCQ entailment as well as CQ and rCQ inseparability are undecidable as well. We
obtain decidability for \hALC TBoxes (where CQ und UCQ entailments coincide) using the fact that non-entailment
is always witnessed by tree-shaped ABoxes. 
As another surprise, CQ inseparability of \hALC TBoxes is 2\ExpTime-complete while
rCQ-entailment is only \ExpTime-complete. This applies to CQ entailment and rCQ entailment as well.
This result should be contrasted with the \EL case, where both problems are \ExpTime-complete
\cite{LutzW10}. Table~\ref{table:TBox} does not contain any results in the UCQ case, as the decidability
of UCQ entailment and inseparability between \ALC TBoxes remains open.

%

We now discuss the structure and contributions of this paper in more detail.
Section~2 defines the DLs we are interested in, which range from $\mathcal{EL}$ to  \hALC and \ALC. It also introduces query answering for DL KBs and provides basic completeness results and homomorphism characterisations for
query answering. Section~3 defines query entailment and inseparability between DL KBs. It  provides illustrating examples
and characterises UCQ entailment in terms of finite partial homomorphisms between models of KBs. To characterise CQ entailment, products of KB models are also required. The  difference between the characterisations will play a crucial role
in our algorithmic analysis of entailment. In some important cases later on in the paper, finite partial homomorphisms are replaced by full homomorphisms using, for example, automata-theoretic techniques and, in particular, Rabin's result that any tree automaton that accepts some tree accepts already a regular tree. This move from finite partial homomorphisms to full homomorphisms is non-trivial
and crucial for our decision procedures.

In Section~4, we prove the undecidability of (r)CQ entailment of an $\mathcal{ALC}$ KB by an $\mathcal{EL}$ KB using a reduction of an undecidable tiling problem.
The direction is important, as we prove later that (r)CQ entailment of an $\mathcal{EL}$ KB by an $\mathcal{ALC}$ KB is decidable (in 2\ExpTime). We also
prove undecidability of CQ inseparability between $\mathcal{EL}$ and $\mathcal{ALC}$ KBs. The model-theoretic characterisation of (r)CQ entailment
via products and finite homomorphisms is crucial for these proofs. We then use a `hiding technique' replacing concept names by complex concepts
to extend the undecidability results to the full signature. Thus, for example, even without any restriction on the signature it is undecidable
whether two $\mathcal{ALC}$ KBs are (r)CQ inseparable.

In Section~5, we first show that, in the (r)UCQ case, partial homomorphisms can be replaced by full homomorphisms in the model-theoretic characterisation
of rUCQ entailment between $\mathcal{ALC}$ KBs if one considers regular tree-shaped models of the KBs. This result is then used to
encode the UCQ entailment problem into an emptiness problem for two-way alternating parity automata on infinite trees (2APTAs). Using results from
automata theory we then obtain a 2\ExpTime upper bound for (r)UCQ entailment between $\mathcal{ALC}$ KBs and a characterisation of (r)UCQ
entailment with full homomorphisms that does not require the restriction to regular tree-shaped models. We prove that the 2\ExpTime upper bound is tight
by a reduction of the word problem for alternating Turing machines. Finally, we show using the hiding technique that the 2\ExpTime lower bounds
still hold without restrictions on the signature.

In Section~6, we introduce query entailment and inseparability between TBoxes and prove that the undecidability results for (r)CQ entailment and (r)CQ inseparability
can be lifted from KBs to TBoxes. In this case, however, undecidability without any restrictions regarding the signatures remains open. In Section~7,
we develop model-theoretic criteria for (r)CQ entailment of \hALC TBoxes by \ALC TBoxes. The crucial observation is that it suffices to consider tree-shaped ABoxes
when searching for counterexamples to (r)CQ entailment between TBoxes. This allows us to use, in Section~8, automata on trees to decide (r)CQ entailment.

In Section~8, we first prove an \ExpTime upper bound for rCQ entailment of \hALC TBoxes by \ALC TBoxes via an encoding into emptiness
problems for a mix of two-way alternating B\"uchi automata and non-deterministic top-down tree automata on finite trees (that represent tree-shaped ABoxes).
As satisfiability of \hALC TBoxes is \ExpTime-hard already, this bound is tight. We then consider arbitrary (not necessarily rooted) CQs and extend
the previous encoding into emptiness problems for tree automata to this case, thereby obtaining a 2\ExpTime upper bound. Here, it is non-trivial
to show that this bound is tight. We use a reduction of alternating Turing machines to prove the corresponding 2\ExpTime lower bound (also for CQ inseparability).

We conclude in Section~9 by discussing open problems. A small number of proofs that follow ideas presented in the main paper are deferred to the appendix. An extended abstract with initial results that led to this paper was presented at IJCAI 2016~\cite{BotoevaLRWZ16}.



\section{Preliminaries}
\label{sec:introdesc}

In DL, knowledge is represented by means of concepts and roles that are defined inductively starting from a 
countably infinite set \NC of \emph{concept names} and a countably-infinite set \NR of \emph{role names}, and using a
set of concept and role constructors~\cite{BCMNP03}.  Different sets
of concept and role constructors give rise to different DLs.

We begin by introducing the description logic \ALC.
The concept constructors available in \ALC
are shown in Table~\ref{tab:syntax-semantics}, where $R$ is a
role name and $C$, $D$ are concepts.
A concept built using these
constructors is called an \emph{\ALC-concept}. \ALC does not have any role constructors.
\begin{table}[tb]
  \begin{center}
    \small
    \leavevmode
    \begin{tabular}{|l|c|c|}
      \hline Name &Syntax&Semantics\\
      \hline\hline   & &\\[-1em]
      top concept    & $\top$ & $\Delta^\Imc$\\
      \hline         & &\\[-1em]
      bottom concept & $\bot$ & $\emptyset$ \\
      \hline         & &\\[-1em]
      negation       & $\neg C$ & $\Delta^\Imc \setminus C^\Imc$\\
      \hline         & &\\[-1em]
      conjunction    & $C\sqcap D$ & $C^\Imc\cap D^\Imc$\\
      \hline         & &\\[-1em]
      disjunction    & $C\sqcup D$ & $C^\Imc\cup D^\Imc$\\
      \hline         & &\\[-1em]
      existential restriction & $\exists R . C$ & $\{~ d \in \Delta^\Imc \mid \exists e \in C^\Imc \, (d,e) \in R^\Imc~ \}$ \\
      \hline         & &\\[-1em]
      universal restriction & $\forall R . C$ & $\{~ d \in \Delta^\Imc \mid \forall e \in \Delta^\Imc \, \big((d,e) \in R^\Imc \to e \in C^\Imc\big) ~\}$ 
      \\ \hline
    \end{tabular} 
    \caption{Syntax and semantics of \ALC.}
    \label{tab:syntax-semantics}
  \end{center}
\end{table}
An \emph{\ALC TBox} is a finite set of \emph{\ALC concept inclusions} (CIs) of the form $C \sqsubseteq D$ and
\emph{\ALC concept equivalences} (CEs) $C \equiv D$. (A CE $C \equiv D$ will be regarded as an
abbreviation for the two CIs $C \sqsubseteq D$ and $D \sqsubseteq C$.)
The \emph{size} $|\Tmc|$ of a TBox $\Tmc$ is the number of occurrences of symbols in $\Tmc$.

The semantics of TBoxes is given by \emph{interpretations}
$\Imc=(\Delta^\Imc,\cdot^\Imc)$, where the \emph{domain} $\Delta^\Imc$ is a
non-empty set and the \emph{interpretation function} $\cdot^\Imc$ maps
each concept name $A\in\NC$ to a subset $A^\Imc$ of $\Delta^\Imc$,
and each role name $R\in\NR$ to a binary relation $R^\Imc$ on
$\Delta^\Imc$. The extension of $\cdot^\Imc$ to arbitrary
concepts is defined inductively as shown in the third column of
Table~\ref{tab:syntax-semantics}.
We say that an interpretation \Imc \emph{satisfies} a CI $C
\sqsubseteq D$ if $C^\Imc \subseteq D^\Imc$, and that \Imc is a
\emph{model} of a TBox $\Tmc$ if $\Imc$ satisfies all the CIs in $\Tmc$.
A TBox is \emph{consistent} (or \emph{satisfiable}) if it has a
model. A concept $C$ is \emph{satisfiable with respect to~\Tmc} if there exists
a model $\Imc$ of $\Tmc$ such that $C^{\Imc}\neq\emptyset$.  A concept
$C$ is \emph{subsumed by a concept $D$ with respect to~\Tmc} ($\Tmc \models C
\sqsubseteq D$, in symbols) if every model \Imc of $\Tmc$ satisfies
the CI $C \sqsubseteq D$. For TBoxes $\Tmc_1$ and $\Tmc_2$, we write
$\Tmc_1\models \Tmc_2$ and say that \emph{$\Tmc_1$ entails $\Tmc_2$}
if $\Tmc_1\models \alpha$ for all $\alpha\in \Tmc_2$.  TBoxes
$\Tmc_{1}$ and $\Tmc_{2}$ are \emph{logically equivalent} if they have
the same models.  This is the case if and only if $\Tmc_{1}$ entails
$\Tmc_{2}$, and vice versa.

We next define two syntactic fragments of \ALC for which query answering (see below) is tractable in data complexity. 
The fragment of \ALC obtained by disallowing the constructors $\bot$, $\neg$, $\sqcup$ and
$\forall$ is known as \EL. Thus, $\mathcal{EL}$ concepts are
constructed using $\top$, $\sqcap$ and $\exists$ only \cite{BaBL05}.
A more expressive fragment with tractable query answering is Horn$\mathcal{ALC}$.
Following \cite{HuMS07,Kaza09}, we say, inductively, that a concept $C$ occurs positively in $C$ itself and,
if $C$ occurs positively (negatively) in $C'$, then
\begin{itemize}\itemsep 0cm
\item[--] $C$ occurs positively (respectively, negatively) in $C' \sqcup D$,
$C' \sqcap D$, $\exists R.C'$, $\forall R.C'$, $D \sqsubseteq C'$, and
\item[--] $C$ occurs
negatively (respectively, positively) in $\neg C'$ and $C' \sqsubseteq D$.
\end{itemize}
Now, we call an $\ALC$ TBox $\T$ \emph{Horn} if no concept of the form $C
\sqcup D$ occurs positively in $\T$, and no concept of the form $\neg
C$ or $\forall R.C$ occurs negatively in $\T$. In the DL \textsl{Horn}$\ALC$, only Horn TBoxes are allowed.

In DL, data is represented in the form of ABoxes. To introduce ABoxes,
we fix a countably-infinite set \NI of \emph{individual names}, which correspond to
individual constants in first-order logic. An \emph{assertion} is an
expression of the form $A(a)$ or $R(a,b)$, where $A$ is a concept
name, $R$ a role name, and $a,b$ individual names.  An
\emph{ABox} \Amc is a finite set of assertions. We call the pair
$\Kmc=(\Tmc,\Amc)$ of a TBox $\Tmc$ in a DL $\Lmc$ and an ABox $\Amc$
an \emph{$\Lmc$ knowledge base} (KB, for short). By ${\sf ind}(\A)$
and $\ind(\K)$, we denote the set of individual names in $\A$ and
$\K$, respectively.

To interpret ABoxes $\Amc$, we consider interpretations $\Imc$ that map all
individual names $a\in \ind(\A)$ to elements $a^{\Imc}\in \Delta^{\Imc}$ in
such a way that $a^{\Imc}\not=b^{\Imc}$ if $a\ne b$ (thus, we adopt the
\emph{unique name assumption}). It is to be noted that we do not assume all the individual names from \NI to be interpreted in $\Imc$. Sometimes, we make the \emph{standard name assumption}, that is, set $a^{\I}=a$, for all the relevant $a$. Both assumptions are without loss of generality as it is well known, and easy to check, that in $\mathcal{ALC}$ the certain answers to (unions of) 
conjunctive queries, as defined below, do not depend on the unique name assumption.  We say that $\Imc$ \emph{satisfies} assertions $A(a)$ and $R(a,b)$ if $a^\Imc \in A^\Imc$ and, respectively, $(a^\Imc,b^\Imc) \in R^\Imc$.  It is a \emph{model} of an ABox \Amc if it
satisfies all the assertions in \Amc, and it is a \emph{model} of a KB $\Kmc=(\Tmc,\Amc)$ if it is a model
of both \Tmc and \Amc. We say that \Kmc is \emph{consistent} (or
\emph{satisfiable}) if it has a model.  We apply the TBox terminology introduced above to KBs as well. For example, KBs $\K_{1}$ and $\K_{2}$ are
\emph{logically equivalent} if they have the same models (or, equivalently,
entail each other).

We next introduce query answering over KBs, starting with conjunctive
queries~\cite{GliHoLuSa-JAIR08,CalvaneseGLLR06,CalvaneseEO07}.
An \emph{atom} takes the form $A(x)$ or $R(x,y)$, where $x,y$ are from
a set of \emph{individual variables} \NV, $A$ is a concept name, and $R$ a role
name.  A \emph{conjunctive query} (or CQ) is an expression of the form
$\q(\avec{x}) = \exists \avec{y} \, \vp(\avec{x},\avec{y})$, where
$\avec{x}$ and $\avec{y}$ are disjoint sequences of variables and $\vp$
is a conjunction of atoms that only contain variables from $\avec{x}
\cup \avec{y}$---we (ab)use set-theoretic notation for sequences where
convenient. We often write $A(x) \in \q$ and $R(x,y) \in \q$ to
indicate that $A(x)$ and $R(x,y)$ are conjuncts of $\varphi$. We call
a CQ $\q(\avec{x}) = \exists \avec{y} \, \vp(\avec{x},\avec{y})$ \emph{rooted} (or an \RCQ) if every
$y \in \avec{y}$ is connected to some $x \in \avec{x}$ by a path in the undirected graph whose nodes
are the variables in $\q$ and edges are the pairs $\{u,v\}$ with
$R(u,v)\in\q$, for some $R$.
A \emph{union of CQs} (UCQ) is a disjunction $\q(\avec{x}) = \bigvee_i \q_i(\avec{x})$ of CQs $\q_i(\avec{x})$ with the same \emph{answer variables} $\avec{x}$; it is \emph{rooted} (\RUCQ) if all the $\q_i$ are rooted. If the sequence $\avec{x}$ is empty,
$\q(\avec{x})$ is called a \emph{Boolean} CQ or UCQ. Observe that no Boolean query is rooted.
\begin{example}
\em The CQ $\q(x_{1},x_{2})= \exists y_{1}\exists y_{2} (R(x_{1},y_{1}) \wedge S(x_{2},y_{2}))$ is an rCQ but
$\q(x_{1})= \exists x_{2}\exists y_{1}\exists y_{2} (R(x_{1},y_{1}) \wedge S(x_{2},y_{2}))$ is not an rCQ.
\end{example}
Given a UCQ $\q(\avec{x}) = \bigvee_i \q_i(\avec{x})$ with
$\avec{x}=x_{1},\ldots,x_{k}$ and a KB \Kmc, a sequence
$\avec{a}=a_{1},\ldots,a_{k}$ of individual names from $\Kmc$ is called a
\emph{certain answer to $\q(\avec{x})$ over \Kmc} if, for every model \Imc of
\Kmc, there exist a CQ $\q_i$ in $\q$ and a map (\emph{homomorphism}) $h$ of its
variables to $\Delta^\Imc$ such that $h(x_{j})= a_{j}^{\I}$, for $1\leq j
\leq k$, $A(z) \in \q_i$ implies $h(z) \in A^\Imc$, and $R(z,z') \in \q_i$
implies $(h(z),h(z')) \in R^\Imc$.  If this is the case, we write $\K
\models\q(\avec{a})$. For a Boolean UCQ $\q$, we say that the certain
answer to $\q$ over \Kmc is `yes' if $\K \models\q$ and `no' otherwise.  \emph{CQ} or
\emph{UCQ answering} means to decide---given a CQ or UCQ $\q(\avec{x})$, a KB
$\Kmc$ and a tuple $\avec{a}$ from $\ind(\K)$---whether $\K
\models\q(\avec{a})$.
\begin{example}\label{andrea}\em
To see that $a$ is a certain answer to the CQ $\q'(x)$ over the KB $\K = (\T_1,\A')$ from Example~\ref{ex:2}, 
we observe that, by the axiom of $\T_1$, we have $c \in \textit{Undergraduate}^\I$ or $c \in \textit{Graduate}^\I$ in any model $\I$  of $\K$. In the former case, the map $h_1$ with $h_{1}(x)=a$, $h_1(y)=c$ and $h_1(z)=d$ is a homomorphism from $\q'$ to $\I$, 
while in the latter one, $h_2$ with $h_{2}(x)=a$, $h_2(y)=b$ and $h_2(z)=c$ is such a homomorphism. 
%
%
%
%
%
%
\end{example}

A \emph{signature}, $\Sigma$, is a finite set of concept and role names.  The
\emph{signature} $\sig(C)$ of a concept $C$ is the set of concept and role
names that occur in $C$, and likewise for TBoxes $\T$, CIs $C \sqsubseteq D$,
assertions $R(a,b)$ and $A(a)$, ABoxes $\Amc$, KBs~$\K$, UCQs~$\q$. Note that
individual names are not in any signature and, in particular, not in the
signature of an assertion, ABox or KB. We are often interested in concepts, TBoxes, KBs, and ABoxes formulated using a specific
signature $\Sigma$, in which case we use the terms $\Sigma$-\emph{concept},  $\Sigma$-\emph{TBox},  $\Sigma$-\emph{KB}, etc.
When dealing with $\Sigma$-KBs, it mostly suffices to consider $\Sigma$-\emph{interpretations} $\I$ where $X^{\I}=\emptyset$
for all concept and role names $X\not\in \Sigma$. A $\Sigma$-\emph{model} of a KB
is a $\Sigma$-interpretation that is a model of the KB. The $\Sigma$-\emph{reduct} $\J$ of an interpretation $\I$
is obtained from $\I$ by setting $\Delta^{\J}=\Delta^{\I}$, $A^{\J}=A^{\I}$ for all concept names $A\in \Sigma$,
$R^{\J}=R^{\I}$ for all role names $R\in \Sigma$, and $A^{\J}=R^{\J}=\emptyset$ for all remaining concept names $A$ 
and role names $R$.  

To compute the certain answers to queries over a KB $\K$, it is  convenient to work with a `small' subset $\Mod$ of $\sig(\K)$-models of $\K$
that is \emph{complete for} $\K$ in the sense that, for any UCQ $\q(\avec{x})$ and any $\avec{a} \subseteq \ind(\K)$, we have
$\K \models \q(\avec{a})$ iff $\I \models
\q(\avec{a})$ for all $\I\in \Mod$. We shall frequently use the following characterisation of complete sets of models  based on (partial) homomorphisms.

Suppose $\I$ and $\J$ are interpretations and $\Sigma$ a signature. A function
$h \colon \Delta^{\I} \to \Delta^{\J}$ is called a \emph{$\Sigma$-homomorphism}
if $u \in A^{\I}$ implies $h(u) \in A^{\J}$ and $(u,v) \in R^{\I}$ implies
$(h(u),h(v)) \in R^{\J}$, for all $u,v \in \Delta^{\I}$, $\Sigma$-concept names
$A$, and $\Sigma$-role names $R$. If $\Sigma$ is the set of all concept and
role names, then $h$ is called simply a \emph{homomorphism}.  We say that $h$
\emph{preserves a set $N$ of individual names} if $h(a^{\I}) = a^{\J}$, for all $a \in N$ that are defined in $\I$.
It is known from database theory that homomorphisms characterise CQ-containment~\cite{ChandraM77}.
To characterise completeness for KBs, we require finite partial homomorphisms.
An interpretation $\Imc$ is a \emph{subinterpretation} of an interpretation $\Jmc$ (\emph{induced by} a set $\Delta$) if
$\Delta = \Delta^{\I} \subseteq \Delta^{\J}$, $A^{\I}=A^{\J}\cap \Delta^{\I}$ for all concept names $A$,
$R^{\I} = R^{\J} \cap (\Delta^{\I} \times \Delta^{\I})$ for all role names $R$, and the interpretation $a^{\I}$ of an
individual name $a$ is defined exactly if $a^{\J}\in \Delta^{\I}$, in which case $a^{\I}=a^{\J}$.
For a natural number $n$, we say that an interpretation $\I$ is \emph{$n\Sigma$-homomorphically embeddable into an interpretation $\J$} if,
for any subinterpretation $\I'$ of $\I$ with $|\Delta^{\I'}| \le n$, there is a $\Sigma$-homomorphism from $\I'$ to $\J$.
If $\Sigma$ is the set of all concept and role names, then we omit $\Sigma$ and speak about \emph{$n$-homomorphic embeddability}.
If we require all $\Sigma$-homomorphisms to preserve a set $N$
of individual names, then we speak about \emph{$n\Sigma$-homomorphic embeddability preserving $N$}. 
\begin{example}
\em Let $\Imc$ and $\J$ be interpretations whose domain is the set $\mathbb N$ of natural numbers and, for any $n,m \in \mathbb N$, we have 
$(n,m)\in R^{\I}$ if $m=n+1$, and $(n,m)\in R^{\J}$ if $n=m+1$. 
Then, for all $n\geq 0$, $\I$ is $n$-homomorphically embeddable into $\J$, 
but $\I$ is not homomorphically embeddable into $\J$. Now, let $a^{\I}=0$, $a^{\J}=m$,
and $N=\{a\}$. Then $\I$ is $(m+1)$-homomorphically embeddable into $\J$ preserving $N$, but $\I$ is not $(m+2)$-homomorphically
embeddable into $\J$ preserving $N$.
\end{example}  

\begin{proposition}\label{prop:char}
A set $\Mod$ of $\sig(\K)$-models of an $\mathcal{ALC}$ KB $\K$ is complete for $\K$ iff, for any model $\J$ of $\K$ and any $n>0$,
there is $\I\in \Mod$ such that $\I$ is $n$-homomorphically embeddable into $\J$ preserving $\ind(\K)$.
\end{proposition}
\begin{proof}
Let $\Sigma=\sig(\K)$ and let $\Mod$ be a class of $\Sigma$-models of $\K$.
Suppose first that $\Mod$ is not complete for $\K$.
Then there exist a UCQ $\q(\avec{x})$ and a tuple $\avec{a}$ from $\ind(\K)$ such that
$\K\not\models \q(\avec{a})$ but $\I \models \q(\avec{a})$ for all $\I\in \Mod$. Let $\J$ be a model of $\K$ such that
$\J\not\models \q(\avec{a})$ and let $n$ be the number of variables in $\q(\avec{x})$. For every $\I\in \Mod$, there exists
a subinterpretation $\I'$ of $\I$ with $|\Delta^{\I'}|\leq n$ and $\I'\models \q(\avec{a})$. No such $\I'$ is
homomorphically embeddable into $\J$ preserving $\avec{a}$, and so no $\I\in \Mod$ is $n$-homomorphically
embeddable into $\J$ preserving $\ind(\K)$.

Conversely, suppose there exists a model $\J$ of $\K$ and $n>0$ such that no $\I\in \Mod$ is $n$-homomorphically embeddable
into $\J$ preserving $\ind(\K)$. Let $\ind(\K)=\{a_{1},\dots,a_{k}\}$. For every finite
$\Sigma$-interpretation $\I$ with domain $\{u_{1},\dots,u_{m}\}$ such that $m\geq k$ and
$a_{i} =u_{i}$ ($1\leq i \leq k$), we define the \emph{canonical CQ} $\q_{\I}$ by taking 
$$
\q_{\I}(x_{1},\dots,x_{k}) ~=~ \exists x_{k+1}\cdots\exists x_{m} \, \Big( \bigwedge_{u_{i}\in A^{\I}\!,A\in \Sigma}A(x_{i}) \wedge
\bigwedge_{(u_{i},u_{j})\in R^{\I}\!,R\in \Sigma}R(x_{i},x_{j})\Big).
$$
%
%
%
Then there exists a homomorphism from $\I$ to $\J$ preserving $\ind(\K)$ iff $\J\models q_{\I}(a_{1},\ldots,a_{k})$.
Now pick for any $\I\in \Mod$ a subinterpretation $\I'$ of $\I$ with $\Delta^{\I'} \supseteq \ind(\K)$ and $|\Delta^{\I'}\setminus\ind(\K)|\leq n$
such that $\I'$ is not homomorphically embeddable into $\J$ preserving $\ind(\K)$. Let
$\q(x_{1},\ldots,x_{k})$ be the disjunction of all canonical CQs $\q_{\I'}(x_{1},\ldots,x_{k})$ determined by these $\I'$.
Then $\J\not\models\q(a_{1},\ldots,a_{k})$, and so $\K\not\models \q(a_{1},\ldots,a_{k})$, but $\I\models \q(a_{1},\ldots,a_{k})$, for all $\I\in \Mod$.
\end{proof}

Observe that, in the characterisation of Proposition~\ref{prop:char}, one cannot replace $n$-homomorphic embeddability by homomorphic embeddability as shown by the following example.
\begin{example}\label{ex:hom1}\em
Let $\K= \left(\{\top \sqsubseteq \exists R.\top\},\{A(a)\}\right)$. Then the class $\M$ of all interpretations that consist of a finite $R$-chain starting with $A(a)$ and followed by an $R$-cycle (of arbitrary length) is
complete for $\K$. However, there is no homomorphism from any member of $\M$ into the
model of $\K$ that consists of an infinite $R$-chain starting from $A(a)$.
\end{example}
We call an interpretation $\I$ a \emph{ditree interpretation}
if the directed graph $G_\I$ defined by taking
$$
G_{\I} = (\Delta^{\I}, \{(d,e) \mid (d,e)\in \bigcup_{R\in \NR} R^{\I}\})
$$
is a directed tree and $R^{\I}\cap S^{\I}=\emptyset$, for any
distinct role names $R$ and $S$.  $\I$ has \emph{outdegree} $n$ if $G_{\I}$ has
outdegree $n$.  A model $\I$ of $\K=(\T,\A)$ is \emph{forest-shaped} if
$\I$ is the disjoint union of ditree interpretations $\I_{a}$ with root $a$, for
$a\in \ind(\A)$, extended with all $R(a,b)\in \A$. In this case, the \emph{outdegree} of $\I$
is the maximum outdegree of the interpretations $\I_{a}$, for $a\in \ind(\A)$.
Denote by $\Mod^{\it bo}_\K$ the class of all forest-shaped $\sig(\K)$-models of $\K$ of outdegree $\le |\T|$. The following completeness result is well known \cite{Lutz-IJCAR08} (the first part is
shown in the proof of Proposition~\ref{regularcomplete}):
\begin{proposition}\label{forestcomplete}
$\Mod^{\it bo}_\K$ is complete for any $\mathcal{ALC}$ KB $\K$. If $\K$ is a \hALC
KB, then there is a single member $\I_{\K}$ of $\Mod^{\it bo}_\K$ that is complete for $\K$.
\end{proposition}
The model $\I_{\K}$ mentioned in Proposition~\ref{forestcomplete} is constructed using the standard chase procedure and called the
\emph{canonical model of $\K$}.
Proposition~\ref{forestcomplete} can be strengthened further. Call a subinterpretation $\I$ of a ditree interpretation
$\J$ a \emph{rooted subinterpretation of $\J$} if there exists $u\in \Delta^{\J}$ such that the domain $\Delta^{\I}$ of $\I$
is the set of all $u'\in \Delta^{\J}$ for which there is a path $u_{0},\ldots,u_{n}\in \Delta^{\J}$ with $u_{0}=u$, $u_{n}=u'$
and $(u_{i},u_{i+1})\in R_i^{\I}$ ($i<n$), for some role name $R_i$.
Call a ditree interpretation $\Imc$ \emph{regular}
if it has, up to isomorphism, only finitely many rooted subinterpretations. A forest-shaped model
$\Imc$ of a KB $\K$ is \emph{regular} if the ditree interpretations $\Imc_{a}$, $a\in {\sf ind}(\K)$, are regular.
Denote by $\Mod^{\it reg}_{\K}$ the class of all regular forest-shaped $\sig(\K)$-models of $\K = (\T,\A)$ of outdegree
bounded by $|\T|$.
\begin{proposition}\label{regularcomplete}
$\Mod^{\it reg}_\K$ is complete for any $\mathcal{ALC}$ KB $\K$.
\end{proposition}
\begin{proof}
Suppose $\K$ is an \ALC KB and $\K\not\models \q(\avec{a})$, for some UCQ $\q(\avec{x})$.
As shown in \cite{Lutz-IJCAR08}, there exists a consistent KB
$\K'=(\T',\A')$ with $\T'\supseteq \T$, $\A' \supseteq \A$, and
$\ind(\Amc')=\ind(\Amc)$ such that $\I\not\models \q(\avec{a})$, for
every model $\I$ of $\K'$ (called a \emph{spoiler for $\q$ and} $\K$ in
\cite{Lutz-IJCAR08} and constructed by carefully analyzing all
possible homomorphism from $\q$ to models of \Kmc and `spoiling' all of
them by suitable KB extensions). We construct a regular model $\J'$
of $\K'$ as follows. Let $\I'$ be a model of $\K'$. We may assume that
$\T'$ does not use the constructor $\forall r.C$. Denote by
$\mathsf{cl}(\T')$ the set of subconcepts of concepts in $\T'$ closed
under single negation. For $d \in \Delta^{\I'}$, the \emph{$\T'$-type
  of $d$ in $\I'$}, denoted $\type_{\T'}^{\I'}(d)$, is defined as
$
\type_{\T'}^{\I'}(d)= \{ C\in \mathsf{cl}(\T') \mid d\in C^{\I'}\}.
$
A subset $\type \subseteq \mathsf{cl}(\T')$ is a \emph{$\T'$-type}
if $\type = \type_{\T'}^{\I}(d)$, for some model $\I$ of $\T'$ and $d\in \Delta^{\I}$.  We denote the set of all
$\T'$-types by $\types(\T')$.  Let $\type,\type' \in \types(\T')$.
For $\exists R.C \in \type$, we say that $\type'$ is an
\emph{$\exists R.C$-witness for $\type$} if $C \in \type'$ and the concept
$\midsqcap \type \sqcap \exists R.(\midsqcap \type')$ is satisfiable with respect to
$\T'$. Denote by $\succ_{\exists R.C}(\type)$ the set of all $\exists R.C$-witnesses for $\type$.
Now choose, for any $\T'$-type $\type$ and $\exists R.C$ such that $\succ_{\exists R.C}(\type)\not=\emptyset$, 
a single type $s_{\exists R.C}(\type)\in \succ_{\exists R.C}(\type)$. We construct the
model $\J'$ of $\K'$ as follows. The domain $\Delta^{\J'}$ is the set of words
$$
aR_{1}\type_{1}\cdots R_{n}\type_{n},
$$
where $a\in \ind(\K')$ and, for $\type_{0}=\type_{\T'}^{\I'}(a)$ and $i< n$,
$\type_{i+1}=s_{\exists R_{i+1}.C}(\type_{i})$ for some $\exists R_{i+1}.C\in \type_{i}$.
Set $aR_{1}\type_{1}\cdots R_{n}\type_{n} \in A^{\J'}$ if $n=0$ and
$A\in \type_{\T'}^{\I'}(a)$ or $n>0$ and $A\in \type_{n}$. Finally, set
$(aR_{1}\type_{1}\cdots R_{n}\type_{n},bS_{1}\type_{1}'\cdots S_{m}\type_{m}')\in R^{\J'}$
iff $n=m=0$ and $R(a,b)\in \A$ or $0<m=n+1$, $S_{m}=R$ and $aR_{1}\type_{1}\cdots \type_{n}=
bS_{1}\type_{1}'\cdots \type_{m-1}'$. One can easily show that $\J'$ is a regular model of $\K'$.
Hence $\J'\not\models \q(\avec{a})$. The outdegree of $\J'$ is bounded by $|\T'|$ but possibly not by $|\T|$, and so it 
remains to modify $\J'$ in such a way that its outdegree is bounded by $|\T|$. To this end, we remove from
$\J'$ all $R$-successors (together with the subtrees they root)
$aR_{1}\type_{1}\cdots R_{n}\type_{n}R\type$ of all $aR_{1}\type_{1}\cdots R_{n}\type_{n}\in \Delta^{\J'}$ such that $\type\not=s_{\exists R.C}(\type_{n})$
for any $\exists R.C\in \mathsf{cl}(\T)$. By the construction, the resulting interpretation $\J$ is still regular,
it is a model of $\K$ (since $\T'\supseteq \T$), its outdegree is bounded by $|\T|$, and $\J\not\models \q(\avec{a})$ since
$\J'\not\models \q(\avec{a})$. 
\end{proof}
\begin{example}\em
  Consider the KB $\K=(\T,\A)$ with
  $\T=\{A \sqcup B \sqsubseteq \exists R.(A\sqcup B)\}$ and $\A=\{A(a)\}$.  The
  following class of regular models $\I$ is complete for $\K$. The domain of
  $\I$ is the natural numbers with $a^{\I}=0 \in A^{\I}$, $(i,j)\in R^{\I}$ if $j=i+1$, for all natural
  numbers $i$ and $j$, and there are
  $k,n,m\geq 0$ such that $A^{\I}$ and $B^{\I}$ are mutually disjoint, cover
  the initial segment $\{1,\dots,k\}$ and, on the remainder $\{k+1,\dots\}$,
  they are interpreted by alternating between $n$ consecutive nodes in $A^{\I}$
  and $m$ consecutive nodes in $B^{\I}$. Then $\I$ is regular since the number of
  non-isomorphic rooted subinterpretations of $\I$ with root $r> k$ is $\leq n+m$ (the number
  of non-isomorphic rooted subinterpretations of $\I$ with root $r\leq k$ is clearly bounded by $k+1$).  
\end{example}
In the undecidability proofs of Section~\ref{sec:undecidability}, we do not use the full expressive power of $\ALC$
but work with a small fragment denoted \ELUf.
An \ELUf \emph{TBox} $\T$ consists of CIs of the form
\begin{itemize}\itemsep 0cm
\item[--] $A \sqsubseteq C$,
\item[--] $A \sqsubseteq C \sqcup D$,
\end{itemize}
where $A$ is a concept name and $C,D$ are $\EL$-concepts. Given an \ELUf KB $\K = (\T,\A)$, we
construct by induction a (possibly infinite) labelled forest $\mathfrak{O}$
with a labelling function $\ell$. For each $a \in \ind(\A)$, $a$ is the root of
a tree in $\mathfrak{O}$ with $A \in \ell(a)$ iff $A(a) \in \A$.
Suppose now that $\sigma$ is a node in $\mathfrak{O}$ and $A \in
\ell(\sigma)$. If $A \sqsubseteq C$ is an axiom of $\T$ and $C \notin
\ell(\sigma)$, then we add $C$ to $\ell(\sigma)$. If $A \sqsubseteq C \sqcup D$ is
an axiom of $\T$ and neither $C \in \ell(\sigma)$ nor $D \in \ell(\sigma)$,
then we add to $\ell(\sigma)$ either $C$ or $D$ (but not both); in this case, we
call $\sigma$ an \emph{or-node}.
If $C \sqcap D \in \ell(\sigma)$, then we add both $C$ and $D$
to $\ell(\sigma)$ provided that they are not there yet. Finally, if $\exists R.C \in \ell(\sigma)$ and
the constructed part of the tree does not contain a node of the form $\sigma \cdot
w_{\exists R.C}$, then we add $\sigma \cdot w_{\exists R.C}$ as an
$R$-successor of $\sigma$ and set $\ell(\sigma \cdot w_{\exists R.C}) = \{C\}$.
Now we define a \emph{minimal model} $\I =
(\Delta^\I, \cdot^\I)$ of $\K$ by taking $\Delta^\I$ to be the set of nodes in
$\mathfrak{O}$, $a^\I = a$ for $a\in\ind(\A)$, $R^\I$ to be the
$R$-relation in $\mathfrak{O}$ together with $(a,b)$ such that $R(a,b) \in \A$,
and
$
A^\I = \{~\sigma \in \Delta^\I \mid A \in \ell(\sigma)~\},
$
for every concept name $A$. It follows from the construction that $\I$ is a model of $\K$.
\begin{lemma}\label{min-elu-complete}
For any \ELUf KB $\K$, the set $\Mod_\K$ of its minimal models is complete for $\K$.
\end{lemma}
\begin{proof}
By Proposition~\ref{prop:char}, it suffices to show that, 
for every model $\J$ of $\K$, there is a minimal model $\I$ that is homomorphically embeddable into $\J$ 
preserving $\ind(\K)$. 
Suppose a model $\J$ of $\K$ is given.  We can now inductively construct a set
$\Delta$, a labelling function $\ell$ defining a minimal model~$\I$, and a
homomorphism $h$ from $\I$ to $\J$ such that $h(\sigma) \in C^\J$, for each $C
\in \ell(\sigma)$ and $\sigma \in \Delta$. The model $\J$ is used as a guide.
For instance, let $\sigma \in \Delta$ such that $h(\sigma)$ is set.  Suppose
that $A \in \ell(\sigma)$, $A \sqsubseteq C \sqcup D$ is an axiom in $\T$, and
$C \notin \ell(\sigma)$, $D \notin \ell(\sigma)$. Since $\J$ is a model of
$\K$, it must be the case that $h(\sigma)^\J \in C^\J$ or $h(\sigma)^\J \in
D^\J$. In the former case, we add $C$ to $\ell(\sigma)$, in the latter case, we
add $D$ to $\ell(\sigma)$.
Suppose further that $\sigma \cdot w_{\exists R.C}$ is in $\Delta$ and
$h(\sigma \cdot w_{\exists R.C})$ is not set. Since $\J$ is a model of $\K$ and
by inductive assumption $h(\sigma) \in (\exists R.C)^\J$, there exists $d \in
\Delta^\J$ such that $(h(\sigma),d) \in R^\J$ and $d \in C^\J$. So we set
$h(\sigma \cdot w_{\exists R.C}) = d$.

Now we take the minimal model $\I = (\Delta, \cdot^\I)$, where $\cdot^\I$
is defined according to the labelling function $\ell$. By the construction of
$\Delta$ and the fact that $\I$ is minimal, we obtain that $h$ is indeed a
homomorphism from $\I$ to $\J$.
  %
\end{proof}



\section{Model-Theoretic Criteria for Query Entailment and Inseparability between Knowledge Bases}\label{sec:criteria}

In this section, we first define the central notions of query entailment and inseparability between KBs for CQs and UCQs as well as
their restrictions to rooted queries. Then we give model-theoretic characterisations of these notions based on products of interpretations and (partial) homomorphisms.
\begin{definition}\em
Let $\K_{1}$ and $\K_{2}$ be consistent KBs, $\Sigma$ a signature, and $\mathcal{Q}$ one of CQ, \RCQ, UCQ or \RUCQ. We say that $\K_1$ \emph{$\Sigma$-$\mathcal{Q}$-entails} $\K_2$ if
$\K_2\models \q(\avec{a})$ implies $\avec{a} \subseteq \ind(\K_1)$ and \mbox{$\K_1\models \q(\avec{a})$}, for all $\Sigma$-$\mathcal{Q}$ $\q(\avec{x})$ and all tuples $\avec{a}$ in $\ind(\K_2)$.
We say that $\K_1$ and $\K_2$ are \emph{$\Sigma$-$\mathcal{Q}$ inseparable} if
they $\Sigma$-$\mathcal{Q}$ entail each other.
If $\Sigma$ is the set of all concept and role names, we say `\emph{full signature $\mathcal{Q}$-entails}' or `\emph{full signature $\mathcal{Q}$-inseparable}'\!.
\end{definition}
As larger classes of queries separate more KBs, $\Sigma$-UCQ inseparability implies all other inseparabilities and
$\Sigma$-CQ inseparability implies $\Sigma$-\RCQ inseparability. The following example 
shows that, in general, no other implications between the different notions of inseparability hold for $\ALC$.
\begin{example}\label{ex:ex1}\em
  Suppose $\T_{0}=\emptyset$, $\T_{0}'= \{ E \sqsubseteq A \sqcup B\}$ and
  $\Sigma_{0}= \{A,B,E\}$.  Let $\A_{0}=\{ E(a)\}$, $\K_{0}=(\T_{0},\A_{0})$,
  and $\K_{0}'=(\T_{0}',\A_{0})$. Then $\K_{0}$ and $\K_{0}'$ are
  $\Sigma_{0}$-CQ inseparable (and so also $\Sigma_{0}$-\RCQ inseparable) but not $\Sigma_{0}$-\RUCQ inseparable (and
  so also not $\Sigma_{0}$-UCQ inseparable).  
  The former claim can be proved
  using the model-theoretic criterion given in Theorem~\ref{crit:KB} below, and the latter one follows from
  $\K_{0}'\models \q(a)$ and $\K_{0}\not\models \q(a)$, for $\q(x)= A(x)\vee B(x)$.

  Now, let $\Sigma_{1}= \{ E,B\}$, $\T_{1}=\emptyset$, and $\T_{1}'= \{ E
  \sqsubseteq \exists R.B\}$.  Let $\A_{1}=\{ E(a)\}$, $\K_{1}=(\T_{1},\A_{1})$,
  and $\K_{1}'=(\T_{1}',\A_{1})$.  Then $\K_{1}$ and $\K_{1}'$ are
  $\Sigma_{1}$-rUCQ inseparable (and so also $\Sigma_{1}$-rCQ inseparable) but not $\Sigma_{1}$-CQ inseparable. 
  The former claim can be proved
  using the model-theoretic criterion of Theorem~\ref{crit:KB} and the latter one follows from
  the observation that $\K_{1}'\models \exists x B(x)$ but $\K_{1}\not\models \exists x B(x)$.
\end{example}
The situation changes for \hALC KBs. The following can be easily proved by observing (using Proposition~\ref{forestcomplete}) that the certain
answers to a UCQ over a \hALC KB $\K$ coincide with the certain answers to its disjuncts over $\K$:

\begin{proposition}\label{UCQtoCQ}
Let $\K_{1}$ be an $\ALC$ KB and $\K_{2}$ a \hALC KB. Then
$\K_{1}$ $\Sigma$-UCQ entails $\K_{2}$ iff $\K_{1}$ $\Sigma$-CQ entails $\K_{2}$. The same holds for \RUCQ and \RCQ.
\end{proposition}
Now we give model-theoretic criteria of $\Sigma$-query entailment between KBs.
As usual in model theory~\cite[page~405]{ChangKeisler90}, we define the \emph{product} $\prod \avec{\I}$ of a family $\avec{\I}= \{\I_{i} \mid i \in I\}$ of interpretations by taking
\begin{eqnarray*}
\Delta^{\prod \avec{\I}} & = &\{ f\colon I \rightarrow \bigcup_{i\in I}\Delta^{\I_{i}} \mid \forall i\in I \, f(i)\in \Delta^{\I_{i}}\},\\
A^{\prod \avec{\I}} & = &\{ f  \mid \forall i\in I\, f(i) \in A^{\I_{i}}\},\\
R^{\prod \avec{\I}} & = &\{ (f,g) \mid \forall i\in I\, (f(i),g(i)) \in R^{\I_{i}}\},\\
a^{\prod \avec{\I}} & = & f_{a}, \ \mbox{ where $f_{a}(i)=a^{\I_{i}}$ for all $i\in I$}.
\end{eqnarray*}
\begin{proposition}[\cite{ChangKeisler90}]\label{prop:products}
For any CQ $\avec{q}(\avec{x})$ and any tuple $\avec{a}$ of individual names,
$\prod \avec{\I} \models \avec{q}(\avec{a})$ iff $\I \models \avec{q}(\avec{a})$ for all $\I \in \avec{\I}$.
\end{proposition}
\begin{example}\em
The KB $\K = (\T_1,\A')$ from Example~\ref{ex:2} has two minimal models: $\I_1$ that agrees with $\A'$ on $a$, $b$, $d$ and has $c \in \textit{Undergraduate}^{\I_2}$, and $\I_2$ that also agrees with $\A'$ on $a$, $b$, $d$ but has $c \in \textit{Graduate}^{\I_1}$ (cf.~Example~\ref{andrea}). By Lemma~\ref{min-elu-complete}, the set $\avec{\I} = \{\I_1,\I_2\}$ is complete for $\K$. The picture below\footnote{As usual in model theory, we write $(b,c)$ for $f$ with $f \colon 1 \mapsto b$ and $f \colon 2 \mapsto c$, and similarly for $(c,b)$, $(c,d)$ and $(d,c)$.} shows the `interesting'  part of $\prod \avec{\I}$. Clearly, $\prod \avec{\I} \models \q'(a)$, 
where $\q'$ is the CQ from Example~\ref{ex:2}. It follows that $\K \models \q'(a)$. 
\begin{center}
  \begin{tikzpicture}[xscale=1.5, yscale=1.5]
  \foreach \name/\x/\y/\lab/\conc/\wh in {%
    a/-2/0.5/a/\mathit{Lecturer}/above,%
    b/-1/-0.3/b/\mathit{Undergraduate}/below, %
    c/0/0.5/c/\mathit{Graduate}/above, %
    d/1.5/0.5/d/\mathit{Graduate}/below%
  }{ \node[inner sep=1, outer sep=0, label=\wh:{\scriptsize$\conc$}] (\name)
    at (\x,\y) {\footnotesize $\lab$}; }

  \foreach \from/\to/\wh/\lab in {%
    a/b/below/\mathit{teaches}, a/c/above/\mathit{teaches}, b/c/below/\mathit{hasFriend},
    c/d/above/\mathit{hasFriend}%
  }{ \draw[role] (\from) -- node[\wh,sloped] {\scriptsize$\mathit{\lab}$} (\to); }

  \node[anchor=east] at (-2.5,0.2) {$\I_1$:};
  
   \begin{scope}[xshift=6cm]
   
   \foreach \name/\x/\y/\lab/\conc/\wh in {%
    a/-2/0.5/a/\mathit{Lecturer}/above,%
    b/-1/-0.3/b/\mathit{Undergraduate}/below, %
    c/0/0.5/c/\mathit{Undergraduate}/above, %
    d/1.5/0.5/d/\mathit{Graduate}/below%
  }{ \node[inner sep=1, outer sep=0, label=\wh:{\scriptsize$\conc$}] (\name)
    at (\x,\y) {\footnotesize $\lab$}; }

  \foreach \from/\to/\wh/\lab in {%
    a/b/below/\mathit{teaches}, a/c/above/\mathit{teaches}, b/c/below/\mathit{hasFriend},
    c/d/above/\mathit{\quad\quad hasFriend}%
  }{ \draw[role] (\from) -- node[\wh,sloped] {\scriptsize$\mathit{\lab}$} (\to); }

  \node[anchor=east] at (-2.5,0.2) {$\I_2$:};
   
   \end{scope}
\end{tikzpicture}
  
    \begin{tikzpicture}[xscale = 3, yscale = 1.2]
      \foreach \name/\x/\y/\lab/\conc/\wh in {%
        a/0/1/{f_a}/\mathit{Lecturer}/3,%
        b/-1.5/0/{f_b}/Undergraduate/left, %
        c/-0.5/0/{f_c}//above, %
        d/-0.5/-1/{f_d}/Graduate/left,%
        bc/0.5/0/{(c,b)}//above, %
        cd/0.5/-1/{(d,c)}//above, %
        cb/1.5/0/{(b,c)}/Undergraduate/right, %
        dc/1.5/-1/{(c,d)}/Graduate/right%
      }{ \node[inner sep=1, outer sep=0, label=\wh:{\footnotesize$\conc$}]
        (\name) at (\x,\y) {\small $\lab$}; }

      \foreach \from/\to/\wh/\lab in {%
        a/b/left/teaches\quad\quad, a/c/left/teaches\ \ , b/c/below/hasFriend, c/d/right/, a/bc/right/\ teaches, bc/cd/right/hasFriend, c/d/right/hasFriend
      }{ \draw[role] (\from) -- node[\wh] {\footnotesize$\lab$} (\to); }
        
        \draw[ultra thick, ->] (a) -- node[right] {\quad \quad \footnotesize$\mathit{teaches}$}  (cb);
        
          \draw[ultra thick, ->] (cb) -- node[right] {\footnotesize$\mathit{hasFriend}$} (dc);
        
      \node[anchor=east] at (-2.5,0) {$\prod \avec{\I}$:};

    \end{tikzpicture}
  \end{center}
\end{example}
We characterise $\Sigma$-query entailment in terms of products and
$n\Sigma$-homomorphic embeddability.  To also capture rooted queries, we first
introduce the corresponding refinement of $\Sigma$-homomorphic and, respectively, $n\Sigma$-homomorphic embeddability.
A \emph{$\Sigma$-path $\rho$ from $u$ to $v$} in an interpretation $\I$ is a sequence $u_{0},\ldots,u_{n}\in \Delta^{\I}$ such that
$u_{0}=u$, $u_{n}=v$, and there are $R_{0},\ldots,R_{n-1}\in \Sigma$ with $(u_{i},u_{i+1})\in R_{i}^\I$, for $0\leq i< n$.
For a KB $\K=(\T,\A)$ and model $\I$ of $\K$, we say that $u\in \Delta^{\I}$ is \emph{$\Sigma$-connected to $\A$ in $\I$} if there exist $a\in \ind(\K)$ and a $\Sigma$-path from
$a^{\I}$ to $u$ in $\I$. The subinterpretation $\I^{con}$ of $\I$ induced by the
set of all $u\in \Delta^{\I}$ that are $\Sigma$-connected to $\A$ in $\I$ is called the \emph{$\Sigma$-component of $\I$ with respect to $\K$}.
Let $\I_{1}$ be a model of $\K_{1}$ and $\I_{2}$ a model of $\K_{2}$.
We say that $\I_2$ is \emph{con-$\Sigma$-homomorphically embeddable into $\I_1$} if the $\Sigma$-component $\I_{2}^{con}$ of $\I_{2}$ with respect to $\K_{2}$
is $\Sigma$-homomorphically embeddable into $\I_1$; and we say that $\I_2$ is \emph{con-$n\Sigma$-homomorphically embeddable into $\I_1$} if the
$\Sigma$-component $\I_{2}^{con}$ of $\I_{2}$ with respect to $\K_{2}$ is $n\Sigma$-homomorphically embeddable into $\I_{1}$. 
\begin{theorem}\label{crit:KB}
Let $\K_1$ and $\K_2$ be \ALC KBs, $\Sigma$ a signature, and let $\Mod_{\!i}=\{ \I_{j} \mid j\in I_{i}\}$
be complete for $\K_i$, $i=1,2$.
\begin{description}\itemsep=0pt
\item[\rm (1)] $\K_{1}$ $\Sigma$-UCQ entails $\K_2$ iff, for any  $n>0$ and $\I_1\in \Mod_{\!1}$, there exists $\I_2 \in \Mod_{\!2}$ that is $n\Sigma$-homomorphically embeddable into $\I_1$ preserving $\ind(\K_{2})$.
	
\item[\rm (2)] $\K_{1}$ $\Sigma$-rUCQ entails $\K_2$ iff, for any  $n>0$ and $\I_1\in \Mod_{\!1}$, there exists $\I_2 \in \Mod_{\!2}$ that is con-$n\Sigma$-homomorphically embeddable into $\I_1$ preserving $\ind(\K_{2})$.
	
\item[\rm (3)] $\K_{1}$ $\Sigma$-CQ entails $\K_2$ iff $\prod\!\Mod_{\!2}$ is $n\Sigma$-homomorphically embeddable into $\prod\!\Mod_{\!1}$ preserving $\ind(\K_2)$ for any $n>0$.
	
\item[\rm (4)] $\K_{1}$ $\Sigma$-rCQ entails $\K_2$ iff $\prod\!\Mod_{\!2}$ is con-$n\Sigma$-homomorphically embeddable into $\prod\!\Mod_{\!1}$ preserving $\ind(\K_{2})$ for any $n>0$.
\end{description}
\end{theorem}
%
%
%
\begin{proof}
  (1) Suppose $\K_{2}\models \q(\avec{a})$ but $\K_{1}\not\models \q(\avec{a})$,
  for a $\Sigma$-UCQ $\q$ and $\avec{a}$ in $\ind(\K_{1})$. Let $n$ be the
  number of variables in $\q$.  Take $\I_{1}\in \Mod_{\!1}$ such that
  $\I_{1}\not\models \q(\avec{a})$. Then no $\I_{2}\in \Mod_{\!2}$ is
  $n\Sigma$-homomorphically embeddable into $\I_{1}$ preserving $\ind(\K_{2})$
  since this would imply $\I_{2}\not\models \q(\avec{a})$.  Conversely, suppose
  $\I_{1}\in \Mod_{\!1}$ is such that, for some $n>0$, no $\I_{2}\in
  \Mod_{\!2}$ is $n\Sigma$-homomorphically embeddable into $\I_{1}$ preserving
  $\ind(\K_{2})$. Fix such an $n>0$ and take for every $\I_{2}\in \Mod_{\!2}$ a subinterpretation 
  $\I_{2}'$ of $\I_{2}$ with domain of size $\le n$ such that $\I_{2}'$ is not $\Sigma$-homomorphically
  embeddable into $\I_{1}$ preserving $\ind(K_{2})$. Recall from the proof of Proposition~\ref{prop:char} that we can 
  regard the $\Sigma$-reduct of any such $\I_{2}'$ as a $\Sigma$-CQ (with the answer variables corresponding to the 
  ABox individuals). The disjunction of all these CQs (up to
  isomorphisms) is entailed by $\K_{2}$ but not by $\K_{1}$. The proof of (2) is similar. 

  (3) Suppose $\K_{2}\models \q(\avec{a})$ but $\K_{1}\not\models \q(\avec{a})$, 
  for a $\Sigma$-CQ $\q$ and $\avec{a}$ in $\ind(\K_{1})$. By Proposition~\ref{prop:products},
  $\prod\!\Mod_{\!2}\models \q(\avec{a})$ but $\prod\!\Mod_{\!1}\not\models \q(\avec{a})$.
  Let $n$ be the number of variables in $\q$. Then $\prod\!\Mod_{\!2}$ is not $n\Sigma$-homomorphically embeddable into 
  $\prod\!\Mod_{\!1}$ preserving $\ind(\K_2)$ since this would imply $\prod\!\Mod_{\!1}\models \q(\avec{a})$.
   Conversely, suppose that, for some $n>0$, $\prod\!\Mod_{\!2}$ is not $n\Sigma$-homomorphically embeddable into 
   $\prod\!\Mod_{\!1}$ preserving $\ind(\K_2)$. Let $\I$ be the subinterpretation of $\prod\!\Mod_{\!2}$ with domain of size 
   $\leq n$ which cannot be $\Sigma$-homomorphically embedded in $\prod\!\Mod_{\!1}$ preserving 
   $\ind(\K_2)\cap \{ a \mid a^{\prod\!\Mod_{\!2}}\in \Delta^{\I}\}$. We can regard the $\Sigma$-reduct of
   $\I$ as a $\Sigma$-CQ which is entailed by $\K_{2}$ but not by $\K_{1}$ (by Proposition~\ref{prop:products}).
   The proof of (4) is similar.
\end{proof}
Example~\ref{ex:hom1} can be used to show that, in Theorem~\ref{crit:KB},  $n\Sigma$-homomorphic embeddability
cannot be replaced by $\Sigma$-homomorphic embeddability.
In Section~\ref{sec:ucq-kbs}, however, we show that in some cases
we \emph{can} find characterisations with full $\Sigma$-homomorphisms and use them to present decision procedures for entailment.

If both $\Mod_{\!i}$ are finite and contain only finite interpretations, then Theorem~\ref{crit:KB} provides a
decision procedure for KB entailment. This applies, for example, to KBs with acyclic classical
TBoxes~\cite{BCMNP03}, and to KBs for which the chase terminates \cite{GrauHKKMMW13}.

\section{Undecidability of (r)CQ-Entailment and Inseparability for \ALC KBs}\label{sec:undecidability}
The aim of this section is to show that CQ and rCQ-entailment and inseparability for \ALC KBs are undecidable. We begin by proving that it is undecidable whether an \EL KB
$\Sigma$-CQ entails an \ALC KB. A straightforward modification of the KBs
constructed in that proof is then used to prove that $\Sigma$-CQ inseparability
between $\mathcal{EL}$ and \ALC KBs is undecidable as well. It is to be noted that, as shown in Section~\ref{sec:ucq-kbs}, both $\Sigma$-UCQ and $\Sigma$-rUCQ entailments between \ALC KBs are decidable, which means, by Proposition~\ref{UCQtoCQ}, that checking whether an \ALC KB
$\Sigma$-(r)CQ entails an \EL KB is decidable.
We then consider rooted CQs and prove that $\Sigma$-rCQ entailment and inseparability
between $\mathcal{EL}$ and \ALC KBs are still undecidable. (In fact, the undecidability proof for rCQs implies the undecidability results for CQs, but is somewhat trickier.)  The
signature $\Sigma$ used in these undecidability proofs is a proper subset of the signatures of the KBs involved.
In the final part of this section, we prove that one can modify the KBs in such
a way that all the results stated above hold for full signature CQ and rCQ
entailment and inseparability.

\subsection{Undecidability of CQ-entailment and inseparability with respect to a signature $\Sigma$}

Our undecidability proofs are by reduction of the undecidable
\emph{rectangle tiling problem}: given a finite set $\mathfrak T$ of \emph{tile types} $T$ with four colours $\textit{up}(T)$, $\textit{down}(T)$, $\textit{left}(T)$ and $\textit{right}(T)$, a tile type \mbox{$I \in \mathfrak T$}, and two colours $W$ (for wall) and $C$ (for ceiling), decide whether there exist $N,M \in \mathbb N$ such that the $N \times M$ grid can be tiled using $\mathfrak T$ in such a way that $\textit{left}(T)=\textit{right}(T')$ if $(i,j)$ is covered by a tile of type $T$ and $(i+1,j)$ is covered by a tile of type $T'$, and $1\leq i <N$, $1\leq j \leq M$;
$\textit{up}(T)=\textit{down}(T')$ if $(i,j)$ is covered by a tile of type $T$ and $(i,j+1)$ is covered by a tile of type $T'$,
and $1\leq i \leq N$, $1\leq j < M$; $(1,1)$ is covered by a tile of type $I$; every $(N,i)$, for $i \le M$, is covered by a tile of type $T$ with $\textit{right}(T) = W$; and every $(i,M)$, for $i \le N$, is covered by a tile of type $T$ with $\textit{up}(T) = C$. (The reader can easily show that this problem is undecidable by reduction of the halting problem for Turing machines; cf.~\cite{lnpam187-Emd}.) If an instance $\mathfrak T$ of the rectangle tiling problem has a positive solution, we say that $\mathfrak T$ \emph{admits tiling}.

Given such an instance $\mathfrak{T}$, we construct
an $\mathcal{EL}$ TBox $\ToneCQ$, an $\mathcal{ALC}$ TBox $\TtwoCQ$, an ABox $\ACQ$, and a signature
$\SigmaCQ$ such that, for the KBs $\KoneCQ=(\ToneCQ,\ACQ)$ and $\KtwoCQ=(\TtwoCQ,\ACQ)$, the following conditions
are equivalent:
\begin{itemize}
\item[--] $\KoneCQ$ $\SigmaCQ$-CQ entails $\KtwoCQ$;

\item[--] the instance $\mathfrak{T}$ does not admit tiling.
\end{itemize}
The ABox $\ACQ$ does not depend on $\mathfrak{T}$ and is defined by setting $\ACQ=\{A(a)\}$.
The TBox $\TtwoCQ$ uses a role name $R$ to encode a grid by putting one row of the grid after the other
starting with the lower left corner of the grid. It also uses the following concept names:
\begin{itemize}
\item[--] $T^{\first}$, for each tile type $T\in \mathfrak{T}$, to encode the first row of a tiling;

\item[--] $T_{k}$, for $T\in \mathfrak{T}$ and $k=0,1,2$, to encode intermediate rows, with three copies of
each $T\in \mathfrak{T}$ needed to ensure the vertical matching conditions between rows;

\item[--] $T_{k}^{\halt}$, for $T\in \mathfrak{T}$ and $k=0,1,2$, to encode the last row;

\item[--] $\widehat{T}_{k}$, for $T\in \mathfrak{T}$ and $k=0,1,2$.
\end{itemize}
Of all these concept names, only the $\widehat{T}_{k}$ are in the signature $\SigmaCQ$ of the entailment problem we construct.
Thus, the $T^{\first}$, $T_{k}^{\halt}$, and $T_{k}$ are auxiliary concept names used to generate tilings, while the $\widehat{T}_{k}$ make the tilings `visible' to relevant CQs.

The TBox $\TtwoCQ$ uses the concept names $\Start$ and $\End$ as markers for the start and end of a tiling. Both concept names are in $\SigmaCQ$.
To mark the end of rows, $\TtwoCQ$ employs the concept names $\Row_{k}$ and $\Row_{k}^{\halt}$, for $k=0,1,2$,
where the $\Row_{k}^{\halt}$ indicate the last row. Similarly to the encoding of tile types above, the
concept names $\Row_{k}$ and $\Row_{k}^{\halt}$ are auxiliary concept names used to construct tilings. Three
copies are needed to ensure the vertical matching condition. In addition,  we
use a concept name $\Row\in \SigmaCQ$ that marks the end of rows and is visible to separating CQs.

The role name $R$ generating the grid is in $\SigmaCQ$. An additional concept name $A$
and role name $P$ link the individual $a$ in $\ACQ$ to the first row of the tiling.
The encoding does not depend on whether $A,P$ are in $\SigmaCQ$, but it will be useful later, when we
consider full signature CQ-entailment, to include them in $\SigmaCQ$.

Before writing up the axioms of $\TtwoCQ$, we explain how they generate all possible tilings.
We ensure that if a point $x$ in a model $\I$ of $\KtwoCQ$ is in $\widehat{T}_k$ and
$\textit{right}(T) = \textit{left}(S)$, then $x$ has an $R$-successor
in $\widehat{S}_k$. Thus, branches of $\I$ define (possibly infinite) horizontal rows of tilings with
$\mathfrak T$. If a branch contains a point $y \in \widehat{T}_k$ with $\textit{right}(T) = W$,
then this $y$ can be the last point in the row, which is indicated by an $R$-successor $z \in \Row$
of $y$. In turn, $z$ has $R$-successors in all $\widehat{T}_{(k+1)\, \text{mod}\, 3}$ that can be possible
beginnings of the next row of tiles. To coordinate the $\textit{up}$ and $\textit{down}$ colours
between the rows---which will be done by the CQs
separating $\KoneCQ$ and $\KtwoCQ$---we make every $x \in \widehat{T}_k$, starting from the
second row, an instance of all $\widehat{S}_{(k-1)\, \text{mod}\, 3}$ with
$\textit{down}(T) = \textit{up}(S)$. The row started by $z \in \Row$ can be the last one in the
tiling, in which case we require that each of its tiles $T$ has $\textit{up}(T) = C$. After the
point in $\Row$ indicating the end of the final row, we add an $R$-successor in $\End$ for the
end of tiling. The beginning of the first row is indicated by a $P$-successor in $\Start$ of the
ABox element $a$, after which we add an $R$-successor in $I^{\first}$ for the given initial tile type $I$.

The TBox $\TtwoCQ$ contains the following CIs, for $k = 0,1,2$:
\begin{align}
\label{initial}
& A \sqsubseteq \exists P.(\Start \sqcap \exists R.I^{\first}),\\
\label{horizontal-0}
& T^{\first} \sqsubseteq \exists R.S^{\first},\quad \text{ if $\textit{right}(T) =
  \textit{left}(S)$ and $T,S \in \mathfrak T$}, \\
\label{end-of-row-0}
& T^{\first} \sqsubseteq \exists R.(\Start \sqcap \Row_1),\quad \text{ if  $\textit{right}(T) = W$ and $T \in \mathfrak T$},\\
\label{first-row-0}
& T^{\first} \sqsubseteq \widehat{T}_0,\quad \text{ for $T \in \mathfrak T$},\\
%
\label{start-row-k}
& \Row_k \sqsubseteq \exists R. T_k, \quad \text{ for $T \in \mathfrak T$},\\
\label{horizontal-k}
& T_k \sqsubseteq \exists R.S_k, \quad \text{ if $\textit{right}(T) =
  \textit{left}(S)$ and $T,S \in \mathfrak T$}, \\
\label{end-of-row-k}
& T_k \sqsubseteq \exists R.\Row_{(k+1) \,\text{mod}\, 3}, \quad \text{ if
$\textit{right}(T) = W$ and $T \in \mathfrak T$},\\
\label{start-last-row}
& T_k \sqsubseteq \exists R.\Row^{\halt}_{(k+1)\, \text{mod}\, 3},\quad
\text{ if $\textit{right}(T) = W$ and $T \in \mathfrak T$},\\
\label{row-k}
& \Row_k \sqsubseteq \Row, \\
\label{tile-hat}
& T_k \sqsubseteq \widehat{T}_k,\quad \text{ for $T \in \mathfrak T$},\\
\label{vertical-k}
& T_k \sqsubseteq \widehat{S}_{(k-1)\, \text{mod}\, 3}, \quad \text{ if $\textit{down}(T) = \textit{up}(S)$ and $T,S \in \mathfrak T$},\\
%
\label{disjunction}
& \Row_k^{\halt} \sqsubseteq \exists R.{\End} ~\sqcup \hspace*{-1mm}
\bigsqcap_{\textit{up}(T) = C,\, T \in \mathfrak T} \exists R.T^{\halt}_k, \\
\label{horizontal-halt}
& T_k^{\halt} \sqsubseteq \exists R. S^{\halt}_k, \quad \text{ if $\textit{right}(T) = \textit{left}(S)$, $\textit{up}(S) = C$ and $T,S \in \mathfrak T$},\\
\label{end}
& T_k^{\halt} \sqsubseteq \exists R.(\Row \sqcap \exists R.{\End}), \quad
\text{ if $\textit{right}(T) = W$ and $T \in \mathfrak T$},\\
\label{row-halt}
& \Row^{\halt}_k \sqsubseteq \Row,\\
\label{vertical-halt}
& T_k^{\halt} \sqsubseteq \widehat{S}_{(k-1)\, \text{mod}\, 3}, \quad \text{ if $\textit{down}(T) = \textit{up}(S)$ and $T,S \in \mathfrak{T}$}.
\end{align}

\begin{figure}
  \centering
  \begin{tikzpicture}[yscale=1.2]
    \begin{scope}[yshift=1cm]
      \foreach \al/\x/\y/\type/\lab/\wh/\extra in {%
        a/0/0/point/A/right/constant, %
        start/0/-1/point/\Start/right/, %
        n0/0/-2/point/I^{\first}/80/,%
        n1/-0.8/-3/point/T^{\first}/left/,%
        n2/1.8/-3/point/T^{\first}/right/,%
        n3/-1.8/-4/point/T^{\first}/left/,%
        n4/-0.9/-4/point/T^{\first}/right/,%
        st2/0.3/-4/subtree/{\qquad\Start}/90/%
      }{ \node[\type, \extra, label=\wh:{\footnotesize $\lab$}] (\al)
        at (\x,\y) {}; }

      \foreach \from/\to/\type/\wh in {%
        start/n0/role/left, %
        n0/n1/role/left, n0/n2/role/right, 
        n1/n3/role/left, n1/n4/role/left, n1/st2.north/subtreecolor/right%
      } {\draw[role, \type] (\from) -- node[\wh,black] {\scriptsize $R$} (\to);}

      \draw[role] (a) -- node[left] {\scriptsize $P$} (start);

      \foreach \from in {%
        n2, n3, n4%
      } {\draw[thick, dotted] (\from) -- +(0,-0.5);}

      \foreach \from/\to in {%
        n1/n2, n3/n4%
      } {\draw[thick, dotted] ($0.7*(\from)+0.3*(\to)+(0,0.1)$) --
        ($0.3*(\from)+0.7*(\to)+(0,0.1)$);}

      \foreach \wh in {%
        st2%
      }{ \node[anchor=south] at (\wh.south) {\small $\tau_1$}; }
    \end{scope}

    \begin{scope}[xshift=6cm]
      \foreach \al/\x/\y/\type/\lab/\wh/\extra in {%
        r0/0/0/point/{\Row_k}/above/rowind, %
        r1/-1.2/-1/point/T_k/left/, %
        r2/1.2/-1/point/T_k/right/, %
        r3/-2.2/-2/point/T_k/left/, %
        r4/-1.1/-2/point/T_k/right/, %
        nr2/0/-2/subtree//85/,%
        end2/1.8/-2/subtree//90/%
      }{ \node[\type, \extra, label=\wh:{\footnotesize $\lab$}] (\al) at (\x,\y) {}; }

      \foreach \from/\to/\type/\wh in {%
        r0/r1/role/left, r0/r2/role/right, %
        r1/r3/role/left, r1/r4/role/left, %
        r1/nr2.north/subtreecolor/right, r1/end2.north/subtreecolor/right%
      } {\draw[thick, role, \type] (\from) -- node[\wh,black] {\scriptsize $R$} (\to);}

      \foreach \from in {%
        r2, r3, r4%
      } {\draw[thick, dotted] (\from) -- +(0,-0.5);}

      \foreach \from/\to in {%
        r1/r2, r3/r4%
      } {\draw[thick, dotted] ($0.7*(\from)+0.3*(\to)$) --
        ($0.3*(\from)+0.7*(\to)$);}

      \foreach \wh/\lab in {%
        nr2/, end2/\halt%
      }{ \node[anchor=north] at (\wh.south) {\small $\tau_{_{(k+1)\, \text{mod}\, 3}}^{\lab}$}; }

      \node at (-2,1) {$\tau_k$};
    \end{scope}

    \begin{scope}[xshift=11.2cm]
      \foreach \al/\x/\y/\lab/\wh/\extra in {%
        v0/-0.25/0/\Row_k^{\halt}/above/rowind, %
        u1/-1.8/-1/{\End}/left/,%
        v1/-0.5/-1/{T_k^{\halt}}/left/,%
        v2/1.8/-1/T_k^{\halt}/right/,%
        v3/-1.25/-2/T_k^{\halt}/left/,%
        v4/-0.5/-2/T_k^{\halt}/right/,%
        row1/0.75/-2/\Row/right/rowind,%
        end1/0.75/-3/\End/right/subtreecolor%
      }{ \node[point, \extra, label=\wh:{\footnotesize $\lab$}] (\al) at (\x,\y)
        {}; }

      \foreach \from/\to/\type/\wh in {%
        v0/u1/role/left, v0/v1/role/right, v0/v2/role/right, v1/v3/role/left,
        v1/v4/role/right, v1/row1/subtreecolor/right, row1/end1/subtreecolor/right%
      } {\draw[thick, role, \type] (\from) -- %
        node[pos=0.4, outer sep=0, inner sep=0] (\from-\to) {} %
        node[\wh,black] {\scriptsize $R$} (\to);}

      \foreach \from in {%
        v2, v3, v4%
      } {\draw[thick, dotted] (\from) -- +(0,-0.5);}

      \foreach \from/\to in {%
        v1/v2, v3/v4%
      } {\draw[thick, dotted] ($0.7*(\from)+0.3*(\to)$) --
        ($0.3*(\from)+0.7*(\to)$);}

      \draw (v0-u1) to[bend right] node[pos=0.7, yshift=0.12cm] {\scriptsize $\lor$} (v0-v1);

      \node at (-2,1) {$\tau_k^{\halt}$};
    \end{scope}

    \node at (0,-4.2) {(a)};
    \node at (6,-4.2) {(b)};
    \node at (11.2,-4.2) {(c)};
  \end{tikzpicture}

  \caption{The paths in the minimal models generated by the axioms of $\TtwoCQ$.}
  \label{fig:t2-axioms}
\end{figure}

The KB $\TtwoCQ$ is an \ELUf KB, with \eqref{disjunction} being the only CIs with $\sqcup$. Throughout the proof, we work with the set $\Mod_{\KtwoCQ}$
of minimal models of $\KtwoCQ$ and use the notation introduced in the construction of minimal models.
In figures, $\lor$ indicates an \emph{or-node}. We now comment on the  role of the CIs in $\TtwoCQ$.
\begin{itemize}
\item[--] The CIs \eqref{initial}--\eqref{end-of-row-0} produce all possible first rows whose ends are indicated by points in $\Start$ and $\Row_1$; see Fig.~\ref{fig:t2-axioms}(a), where $\tau_1$ denotes trees described below. The CI \eqref{first-row-0} ensures that the tiling of the first row is visible in $\SigmaCQ$ using the concept names $\widehat{T}_{0}$. Note that $\Row$ is visible in $\SigmaCQ$ due to \eqref{row-k}.

\item[--] The CIs \eqref{start-row-k}--\eqref{start-last-row} produce all possible intermediate rows starting with points in $\Row_k$ and ending by points in $\Row_{(k+1)\, \text{mod}\, 3}$ or $\Row^{\halt}_{(k+1)\,  \text{mod}\, 3}$; see Fig.~\ref{fig:t2-axioms}(b), where $\tau_k$ is the tree with root in $\Row_k$ and $\tau_k^{\halt}$ the tree with root in $\Row^{\halt}_k$ as described below. The CIs \eqref{row-k}--\eqref{vertical-k} ensure that the tilings of the intermediate rows as well as $\Row$ are visible in $\SigmaCQ$. Note that, for each intermediate row,  there exists $k$ such that the current row is encoded using $\widehat{T}_{k}$ and the matching previous row using $\widehat{T}_{(k-1)\, \text{mod}\, 3}$.

\item[--] The CIs \eqref{disjunction}--\eqref{end} produce all possible final rows starting with points in $\Row^{\halt}_k$. The role of the disjunction is explained below; see Fig.~\ref{fig:t2-axioms}(c). Finally, the axioms \eqref{row-halt}--\eqref{vertical-halt} make $\Row$ and the matching previous row visible in $\SigmaCQ$. Note that the last row itself is not visible in $\SigmaCQ$.
\end{itemize}
The existence of a tiling of some $N \times M$ grid for the given instance $\mathfrak T$ can be checked by
Boolean CQs $\q_n$, for $n \ge 1$, that require an $R$-path from $\Start$ to $\End$ going through $\widehat{T}_k$- or $\Row$-points:
\begin{equation*}
\q_n ~=~ \exists \avec{x} \, \big( \Start(x_0) \land \bigwedge_{i=0}^n R(x_i, x_{i+1}) \land \bigwedge_{i=1}^n B_{i}(x_{i}) \land \End(x_{n+1}) \big),
\end{equation*}
where $B_{i} \in \{\Row\} \cup \{\widehat{T}_k \mid T \in \mathfrak{T}, k=0,1,2\}$. The $\q_{n}$ will
serve as the separating $\SigmaCQ$-CQs if $\mathfrak{T}$ admits a tiling (in fact, if $\mathfrak{T}$ admits
a tiling of some $N\times M$ grid, then $q_{n}$ is a separating $\SigmaCQ$-CQ for $n=(N+1)\times (M-1)$).
We illustrate the relationship
between $\Mod_{\KtwoCQ}$ and the CQs $\q_{n}$ in Fig.~\ref{fig:k2}: the lower part of the figure shows
two interpretations, $\I_{l}$ and $\I_{r}$, from $\Mod_{\KtwoCQ}$ (we only mention the extensions of
concept names in $\SigmaCQ$). The two interpretations coincide up to the $\Row$-point before the final
row of the tiling. Then, because of the axiom~\eqref{disjunction}, they realise \emph{two alternative}
continuations: one as described above, and the other one having just a
single $R$-successor in $\End$. In the picture, we show a situation where row $m$
coincides with the row depicted below row $m+1$ (that satisfies the vertical tiling conditions with row $m+1$).
For example, the first row $\widehat{I}_{0}\cdots\widehat{T}_{0}^{N1}$ coincides with the row depicted below
the second row (after the second $\Start$). This is no accident and is enforced by the query $\q_{n}$
that is depicted in the upper part of the figure. If $\KtwoCQ \models \q_n$, then $\q_n$ holds in
both $\I_{l}$ and $\I_{r}$, and so there are homomorphisms $h_l \colon \q_n \to \I_l$ and
$h_r \colon \q_n \to \I_r$. As  $h_l(x_{n-1})$ and $h_r(x_{n-1})$ are instances of $B_{n-1}$, we have
$B_{n-1} = \widehat{T}_1^{N M-1}$ in the figure, and so $\textit{up}(T^{N M-1}) = \textit{down}(T^{N M})$.
By repeating this argument until $x_0$, we see that the colours between horizontal rows match and the
rows are of the same length.
Note that for this to work, we have to make both the $P$-successor of $a$ and the first $\Row$-point
an instance of $\Start$. We now formalise the observations above by proving the following:

\begin{figure}
  \centering
  \scalebox{1}{






\begin{tikzpicture}[xscale=0.89, %
  point/.style={thick,circle,draw=black,fill=white, minimum
    size=1.3mm,inner sep=0pt}%
  ]
  \foreach \al/\x/\y/\lab/\wh/\extra in {%
    a/0.2/0/A/below/constant, %
    start/1/0/\Start/below/, %
    x11/2/0/{\widehat{I}_0}/below/, %
    xN1/4/0/{\widehat{T}_0^{N1}}/below/, %
    row1/5/0/\Start/below/rowind, %
    x12/6/0/{\widehat{T}_1^{12}}/below/, %
    xN2/8/0/{\widehat{T}_1^{N2}}/below/, %
    row2/9/0/{\Row}/above/rowind,%
    x1m/10.5/0/{\widehat{T}_1^{1M\text{-}1}}/below/, %
    xNm/12.5/0/{\widehat{T}_1^{NM\text{-}1}}/below/, %
    rowm/13.5/0/{\Row}/above/rowind,%
    endl/14.5/0.6/{~~\End}/below/,%
    x1M/14.5/-0.6/{\widehat{T}_2^{1M}}/below/, %
    xNM/16.5/-0.6/{\widehat{T}_2^{NM}}/below/, %
    rowM/17.5/-0.6/{\Row}/above/rowind,%
    endr/18.3/-0.6/{\End}/below/%
  }{ \node[point, \extra, label=\wh:{$\lab$}] (\al) at (\x,\y)
    {}; }

  \node[label={[inner ysep=1]above:{$\Row~~~~~$}}] at (row1) {};%

  \node[label={[inner xsep=1]above:{$a$}}] at (a) {};%

  \foreach \al/\lab/\wh in {%
    x12/{$\widehat{I}_0$\dots}/1, %
    xN2/{$\widehat{T}_0^{N1}$\dots}/1, %
    x1m/{$\widehat{T}_0^{1M\text{-}2}$\dots}/1, %
    xNm/{$\widehat{T}_0^{NM\text{-}2}$\dots}/1, %
    x1M/{$\widehat{T}_1^{1M\text{-}1}$\dots}/-1, %
    xNM/{$\widehat{T}_1^{NM\text{-}1}$\dots}/-1%
  }{ \node[label={[scale=0.8]center:{\lab}}, yshift=-0.85cm] at
    (\al) {}; }

  \foreach \from/\to/\type/\extra in {%
    start/x11/role, x11/xN1/dotted, xN1/row1/role,%
    row1/x12/role, x12/xN2/dotted, xN2/row2/role,%
    row2/x1m/dotted, x1m/xNm/dotted, xNm/rowm/role,%
    rowm/x1M/role, x1M/xNM/dotted, xNM/rowM/role,%
    rowm/endl/role, rowM/endr/role%
  } {\draw[thick, \type, \extra] (\from) -- (\to);}

  \draw[role] (a) -- node[above] {$P$} (start); %

  \foreach \from/\to/\type in {%
    rowm/endl, rowm/x1M%
  } {\draw[draw=none] (\from) -- node[pos=0.5, inner sep=0] (\from-\to) {} (\to);}

  \draw[red] (rowm-endl) to[bend left] node[left] {\footnotesize $\lor$}
  (rowm-x1M);

  \node[yshift=0.1cm, xshift=-0.5cm] at (endl) {\Large$\I_{\!l}$};
  \node[yshift=-0.1cm, xshift=-0.6cm] at (x1M) {\Large$\I_{\!r}$};

  \begin{pgfonlayer}{background}
    \foreach \first/\last in {%
      x11/row1, x12/row2, x1m/rowm, x1M/rowM%
    }{ \node[fit=(\first)(\last), rounded corners, fill=gray!20] {}; }
  \end{pgfonlayer}

  \begin{pgfonlayer}{background}
    \def\drawsubtree[#1,#2]{%
      \draw[secondary, thick, dotted] (#1) -- ++(0.4,0.08) (#1) -- ++(0.4,-0.08);%
    }
    
    \colorlet{lightgray}{gray}
    \begin{scope}[every node/.style={scale=0.7}, secondary/.style={minimum
        size=0.8mm, draw=lightgray, thin}]\small
      \coordinate (xk1) at (2.8,0);%
      \coordinate (xk2) at (7,0);%
      \coordinate (xkm) at (11.5,0);%
      \coordinate (xkM) at (15.5,-0.6);

      \foreach \al/\tile in {%
        x11/T_0, xk1/T_0, xN1/T_0, row1/T_1, x12/T_1, xN2/T_1, xNm/T_1,
        xkM/T_2, xNM/T_2%
      }{ \node[point, secondary, xshift=0.6cm, yshift=0.8cm,
        label={[lightgray, inner sep=1]above:{$\widehat \tile$}}] (\al-z1) at (\al) {};

        \draw[role, lightgray, secondary] (\al) -- (\al-z1);

        \drawsubtree[\al-z1,1]
      }

      \foreach \al in {%
        row2, xk2, x1m, xkm, x1M%
      }{ \draw[dotted, lightgray, thick] (\al) -- +(0.5,0.6); }

      \foreach \al/\tile/\ysh in {%
        xN2/T_2/1.6cm%
      }{ \node[point, secondary, fill=lightgray!50, xshift=0.5cm, yshift=\ysh,
        label={[lightgray, inner sep=1]left:{$\Row$}}] (\al-halt) at (\al) {};

        \node[point, secondary, xshift=0.7cm, yshift=0.3cm,
        label={[lightgray, inner sep=1]right:{$\End$}}] (\al-halt-end) at (\al-halt) {};

        \node[point, secondary, xshift=0.7cm, yshift=-0.3cm,
        label={[lightgray, inner sep=1]above:{$\widehat\tile$}}] (\al-halt-tile) at (\al-halt) {};

        \foreach \from/\to in {%
          \al/\al-halt, \al-halt/\al-halt-end, \al-halt/\al-halt-tile%
        }{ \draw[role, lightgray, secondary] (\from) -- (\to); }

        \foreach \from/\to in {%
          \al-halt/\al-halt-end, \al-halt/\al-halt-tile%
        }{ \draw[draw=none] (\from) -- node[inner sep=0, outer sep=0, pos=0.55]
          (\from-\to) {} (\to); }

        \draw[lightgray!50!red] (\al-halt-\al-halt-end) to[bend left]
        node[left, inner sep=2, scale=0.7] {$\lor$} (\al-halt-\al-halt-tile);

        \drawsubtree[\al-halt-tile,1]%
      }

      \foreach \al/\ysh in {%
        xNm/1.6cm%
      }{ \node[point, secondary, fill=lightgray!50, xshift=0.5cm, yshift=\ysh,
        label={[lightgray, inner sep=1]left:{$\Row$}}] (\al-row) at (\al) {};

        \draw[role, lightgray, secondary] (\al) -- (\al-row); 

        \drawsubtree[\al-row,1]%
      }

      \foreach \al/\ysh in {%
        xk1/1.6cm%
      }{ \node[point, secondary, fill=lightgray!50, xshift=0.6cm, yshift=\ysh,
        label={[lightgray,text width=0.65cm]left:{$\Row$~~\\$\Start$}}]
        (\al-start) at (\al) {};

        \draw[role, lightgray, secondary] (\al) -- (\al-start); 

        \drawsubtree[\al-start,1]%
      }

      \foreach \al/\tile/\ysh in {%
        xkM/T_2/1.6cm%
      }{ \node[point, secondary, fill=lightgray!50, xshift=0.5cm, yshift=\ysh,
        label={[lightgray, inner sep=1]left:{$\Row$}}] (\al-end-row) at (\al) {};

        \node[point, secondary, xshift=0.7cm,
        label={[lightgray, inner sep=1]right:{$\End$}}] (\al-end) at (\al-end-row) {};

        \foreach \from/\to/\type in {%
          \al/\al-end-row, \al-end-row/\al-end%
        }{ \draw[role, lightgray, secondary] (\from) -- (\to); } %
      }
    \end{scope}    
  \end{pgfonlayer}

  \begin{scope}[xshift=3cm, yshift=2.25cm]
    \foreach \al/\x/\y/\lab/\wh/\extra in {%
      y0/0/0/{\Start}/right/, %
      y1/1/0/{B_1}/right/, %
      yN/3/0/{B_N}/right/, %
      yN1/4/0/{B_{N+1}}/right/rowind, %
      y21/5/0/{}/right/, %
      y2N/7/0/{}/right/, %
      y2N1/8/0/{}/right/rowind, %
      yl1/9.5/0/{B_{n{-}N}}/right/, %
      ylN/11.5/0/{B_{n{-}1}}/right/, %
      yl/12.5/0/{B_n}/right/rowind, %
      yend/13.5/0/\End/right/%
    }{ \node[point, \extra, label=above:{$\lab$}] (\al) at
      (\x,\y) {}; }

    \foreach \from/\to/\type in {%
      y0/y1/role, y1/yN/dotted, yN/yN1/role, yN1/y21/role, y21/y2N/dotted,
      y2N/y2N1/role, y2N1/yl1/dotted, yl1/ylN/dotted, ylN/yl/role, yl/yend/role%
    } {\draw[thick, \type] (\from) -- (\to);}

    \begin{pgfonlayer}{background}
      \foreach \first/\last in {%
        y1/yN1, y21/y2N1, yl1/yl%
      }{ \node[fit=(\first)(\last), rounded corners, fill=gray!20] {}; }
    \end{pgfonlayer}

    \node[xshift=-1cm] at (y0) {\Large$\q_n$};
  \end{scope}

  \begin{pgfonlayer}{background}

    \foreach \al in {y0,yend,start,row1,endl,endr} { %
      \node[fit=(\al), inner sep=3] (\al-a) {}; }

    \foreach \from/\to/\out/\in in {%
      y0-a/start-a/250/70, yend-a/endl-a/250/30
    } {\draw[homomorphism, black!50] (\from) to[out=\out, in=\in]
      node[above] {$h_l$} (\to);}

    \foreach \from/\to/\in in {%
      y0-a/row1-a/90, yend-a/endr-a/110
    } {\draw[homomorphism, black!75] (\from) to[out=290, in=\in]
      node[above] {$h_r$} (\to);}
  \end{pgfonlayer}
\end{tikzpicture}


  \caption{The structure of the models $\I_l$ and $\I_r$ of $\K_2$, and
    homomorphisms $h_l \colon \q_n \to \I_l$ and $h_r \colon \q_n \to \I_r$.}
  \label{fig:k2}
\end{figure}

\begin{lemma}\label{qn-tiling}
  The instance $\mathfrak{T}$ admits a rectangle tiling iff there
  exists $\q_n$ such that $\KtwoCQ \models \q_n$.
\end{lemma}
%
%
\begin{proof}
$(\Rightarrow)$ Suppose $\mathfrak T$ tiles the $N \times
M$ grid so that a tile of type $T^{ij} \in \mathfrak T$ covers $(i,j)$. Let
$$
\textit{block}_j = (\widehat T^{1,j}_k, \dots, \widehat T^{N,j}_k, \Row),
$$
for $j=1,\dots,M-1$ and $k =(j-1)\!\! \mod\! 3$. Let $\q_n$ be the CQ in which the $B_i$ follow the pattern
$$
\textit{block}_1, \ \textit{block}_2,  \dots, \ \textit{block}_{M-1}
$$
(thus, $n = (N+1) \times (M-1)$).  In view of Lemma~\ref{min-elu-complete}, we
only need to prove that $\I \models \q_n$, for each model $\I\in \Mod_{\KtwoCQ}$.  Take such an $\I$. We have to
show that there is an $R$-path $x_0,\dots,x_{n+1}$ in $\I$ such that $x_{0}\in \Start^{\I}$, $x_i \in
B_i^\I$ for $1\leq i \leq n$, and $x_{n+1} \in \End^\I$.

First, we construct an auxiliary $R$-path $y_0,\dots,y_n$.
We take $y_0 \in \Start^\I$ and $y_1 \in {I_0}^\I$ by~\eqref{initial} ($I = T^{1,1}$). Then  we take $y_2 \in (T^{2,1}_0)^\I, \dots, y_{N} \in (T^{N,1}_0)^\I$ by~\eqref{horizontal-0}. We now have $\textit{right}(T^{N,1}) = W$. By~\eqref{end-of-row-0}, we obtain $y_{N+1} \in \Row_1^\I \cap \Start^\I$. By~\eqref{row-k}, $y_{N+1} \in \Row_1^\I \subseteq \Row^\I$. We proceed in this way, starting with \eqref{start-row-k}, till the moment we construct $y_{n-1} \in (T^{N,M-1}_k)^\I$ with $\textit{right}(T^{N,M-1}) = W$, for which we use~\eqref{start-last-row} and \eqref{row-halt} to obtain $y_n \in \Row^{\halt}_k \subseteq \Row^\I$, for some $k$. Note that ${T_k}^\I \subseteq \widehat {T_k}^\I$ by~\eqref{tile-hat}, for a tile type $T$.

By~\eqref{disjunction}, two cases are possible now:

\emph{Case 1}: there is $y$ such that $(y_n,y) \in R^\I$ and $y
\in \End^\I$. Then we take $x_0 = y_0, \dots, x_n = y_n, x_{n+1} = y$.

\emph{Case 2}: there is $z_1$ such that $(y_n,z_1) \in R^\I$ and $z_1
\in (T^{\halt}_k)^\I$, where $T = T^{1,M}$ and $\textit{up}(T) =
C$. We then use~\eqref{horizontal-halt} and find a sequence
$z_2,\dots,z_N,u,v$ such that $z_i \in (T^{\halt}_k)^\I$, where $T = T^{i,M}$,
$u \in \Row^\I$ and $v \in \End^\I$. So we take $x_0 = y_{N+1},\dots, x_{n-N-1} =
y_n$, $x_{n-N} = z_1,\dots, x_{n-1}= z_N$, and $x_n = u, x_{n+1} = v$. Note
that, by~\eqref{vertical-k} and~\eqref{vertical-halt}, we have $(T^{i,j}_k)^\I
\subseteq (\widehat T^{i,j-1}_{(k-1)\!\!\mod\! 3})^\I$.

\smallskip%
$(\Leftarrow)$ Let $\q_n$ be such that $\KtwoCQ \models \q_n$.
Then $\I \models \q_n$, for each $\I \in \Mod_{\KtwoCQ}$. Consider all the
pairwise distinct pairs $(\I,h)$ such that $\I \in \Mod_{\KtwoCQ}$ and $h$ is a
homomorphism from $\q_n$ to $\I$.  Note that $h(\q_n)$ contains an or-node
$\sigma_h$ (which is an instance of $\Row^{\halt}_k$, for some $k$).  We call
$(\I,h)$ and $h$ \emph{left} if $h(x_{n+1}) = \sigma_h \cdot w_{\exists
  R.\End}$, and \emph{right} otherwise.
It is not hard to see that there exist a left $(\I_l,h_l)$ and a right
$(\I_r,h_r)$ with $\sigma_{h_l} = \sigma_{h_r}$ (if this is not the case, we
can construct $\I \in \Mod_{\KtwoCQ}$ with $\I\not\models\q_n$ by choosing at every or-node $\sigma$ the left (right) branch if there is no left (respectively, right) homomorphism $h$ from $\q_n$ such that $h(x_n) = \sigma$).

Take $(\I_l,h_l)$ and $(\I_r,h_r)$ such that $\sigma_{h_l} = \sigma_{h_r} =
\sigma$ and use them to construct the required tiling. Let $\sigma = a
w_0\cdots w_n$. We have $h_l(x_{n+1}) = \sigma \cdot w_{\exists R.\End}$ and
$h_l(x_{n}) = \sigma$. Let $h_r(x_{n+1}) = \sigma v_1\cdots v_{m+2}$, which
is an instance of $\End$ (see Fig.~\ref{fig:k2}). Then $h_r(x_n) = \sigma v_1\cdots v_{m+1}$, which is
an instance of $\Row$.

Suppose $v_{m} = w_{\exists R. T^{\halt}_2}$ (other $k$\,s are treated
analogously). By~\eqref{end}, $\textit{right}(T) = W$; by
\eqref{horizontal-halt}, $\textit{up}(T) = C$.
Suppose $w_{n-1} = w_{\exists R. S_k}$. Then
$k=1$. By~\eqref{start-last-row}, $\textit{right}(S) = W$. By considering
the atom $B_{n-1}(x_{n-1})$ in $\q_n$, we obtain that both $a w_0 \cdots w_{n-1}$ and
$\sigma v_1 \cdots v_{m}$ are instances of $B_{n-1}$. By \eqref{tile-hat} and
\eqref{vertical-halt}, $B_{n-1} = \widehat S_1$ and $\textit{down}(T) =
\textit{up}(S)$.

Suppose $v_{m-1} = w_{\exists R. U^{\halt}_2}$. By \eqref{horizontal-halt}, $\textit{right}(U) = \textit{left}(T)$ and $\textit{up}(U) = C$.
Suppose $w_{n-2} = w_{\exists R. Q_1}$. By \eqref{horizontal-k}, we have $\textit{right}(Q) = \textit{left}(S)$.
By considering $B_{n-2}(x_{n-2})$ in $\q_n$, we obtain that both $a w_0\cdots w_{n-2}$
and $\sigma v_1\cdots v_{m-1}$ are instances of $B_{n-2}$. By \eqref{tile-hat}
and \eqref{vertical-halt}, $B_{n-2} = \widehat Q_1$ and $\textit{down}(U) =
\textit{up}(Q)$.

We proceed in the same way until we reach $\sigma$ and $a w_0\cdots w_{n-N-1}$,
for $N = m$, both of which are instances of $B_{n-N-1} = \Row$. Thus, we  have tiled
the two last rows of the grid. We proceed further and tile the whole $N \times
M$ grid, where $M = n/(N+1) + 1$.
\end{proof}

Next, we define the \EL-KB $\KoneCQ = (\ToneCQ,\ACQ)$.
Let
$
\Sigma_0 = \{\Row\} \cup \{\widehat{T}_k \mid T \in \mathfrak{T}, k = 0,1,2\},
$
and let $\ToneCQ$ contain the following CIs:
\begin{align}
  A \sqsubseteq &~ \exists P. D,\\
   D\sqsubseteq &~ \exists R.D ~\sqcap~ \exists R. \exists R. E ~\sqcap
  \bigsqcap_{X \in \Sigma_0} X \sqcap \Start,\\
  E \sqsubseteq &~ \exists R. E ~\sqcap~ \bigsqcap_{X \in \Sigma_0} X
  ~\sqcap~ \End.
\end{align}
As $\KoneCQ$ is an $\EL$-KB, it has a canonical model $\I_{\KoneCQ}$:
\begin{center}
  \begin{tikzpicture}[xscale=2, yscale=1.2, %
    emptyind/.style={fill=gray!50, inner sep=1.7},
    rotatelabels/.style={rotate=-14},
    abovesloped/.style={above,rotatelabels}]
    \small
    
    \foreach \al/\x/\y/\lab/\wh/\extra/\rot in {%
      a/0.5/0/A/above/constant/, %
      x1/1.5/0/{\Start,\Sigma_0,D}/above//, %
      x2/1.5/-0.7/{}/below/emptyind/, %
      x3/2.2/-1.0/{\End,\Sigma_{0},E~~~}/below//rotatelabels, %
      x4/2.9/-1.3/{~~~\End,\Sigma_{0},E}/below//rotatelabels, %
      y1/3/0/{\Start,\Sigma_0,D}/above/,%
      y2/3/-0.7/{}/above/emptyind/,%
      y3/3.7/-1.0/{\End,\Sigma_{0},E~~~}/below//rotatelabels,%
      y4/4.4/-1.3/{~~~\End,\Sigma_{0},E}/below//rotatelabels,%
      z1/4.5/0/{\Start,\Sigma_0,D}/above/,%
      z2/4.5/-0.7/{}/above/emptyind,%
      z3/5.2/-1.0/{\End,\Sigma_{0},E~~~}/below//rotatelabels,%
      z4/5.9/-1.3/{~~~\End,\Sigma_{0},E}/below//rotatelabels%
    }{ \node[point, \extra, label={[inner sep=2,\rot]\wh:{
          $\lab$}}] (\al) at (\x,\y) {}; }

    \node[label={[inner xsep=1]left:{$a$}}] at (a) {};%

    \foreach \from/\to/\lab/\wh in {%
      a/x1/P/above, x1/x2/R/left, x2/x3/R/abovesloped, x3/x4/R/abovesloped, %
      x1/y1/R/above, y1/y2/R/left, y2/y3/R/abovesloped, y3/y4/R/abovesloped, %
      y1/z1/R/above, z1/z2/R/left, z2/z3/R/abovesloped, z3/z4/R/abovesloped%
    } {\draw[role] (\from) -- node[\wh] {$\lab$} (\to);}

    \foreach \from in {%
      x4, y4, z4
    } {\draw[dotted, thick] (\from) -- +(0.5,-0.2);}

    \draw[dotted, thick] (z1) -- +(1,0);
  \end{tikzpicture}
\end{center}
Note that the vertical $R$-successors of the $\Start$-points are not instances of any concept name, and so $\KoneCQ$ does not satisfy
any CQ $\q_n$. Now let $\SigmaCQ=\sig(\KoneCQ)$. We show that $\KtwoCQ \models \q$ implies $\KoneCQ \models \q$,
for every $\SigmaCQ$-CQ $\q$ without a subquery of the form $\q_n$.
\begin{lemma}\label{qn-rejected-by-k1}
$\prod\!\Mod_{\KtwoCQ}$ is $n\SigmaCQ$-homomorphically embeddable into $\I_{\KoneCQ}$ preserving $\{a\}$, for all $n\geq 1$,
iff $\KtwoCQ \not\models \q_{m}$, for all $m \ge 1$.
\end{lemma}
\begin{proof}
  $(\Rightarrow)$ Suppose $\KtwoCQ \models \q_{m}$ for some $m$. Then
  $\prod\!\Mod_{\KtwoCQ} \models \q_m$.  By assumption, $\prod\!\Mod_{\KtwoCQ}$
  is $m\SigmaCQ$-homomorphically embeddable into $\I_{\KoneCQ}$ preserving
  $\{a\}$, and so we have $\I_{\KoneCQ} \models \q_m$, which is clearly
  impossible because none of the paths of $\I_{\KoneCQ}$ contains the full
  sequence of symbols mentioned in $\q_m$.

$(\Leftarrow)$ Suppose $\KtwoCQ \not\models \q_{m}$ for all $m$. Then
$\prod \Mod_{\KtwoCQ} \not\models \q_m$ for all $m$. Take any subinterpretation
of $\prod\!\Mod_{\KtwoCQ}$ whose domain contains $n$ elements. Recall from the
proof of Proposition~\ref{prop:char} that we can regard the $\SigmaCQ$-reduct
of this subinterpretation as a Boolean $\SigmaCQ$-CQ, and so denote it by
$\q$. Without loss of generality we can assume that $\q$ is connected; clearly,
$\q$ is tree-shaped. We know that there is no $\SigmaCQ$-homomorphism from
$\q_m$ into $\q$ for any $m$; in particular, $\q$ does not have a subquery of
the form $\q_m$.  We have to show that $\I_{\KoneCQ} \models \q$.

If $\q$ contains $A$ or $P$, then they appear at the root of $\q$ or,
respectively, in the first edge of $\q$. By the structure of $\K_2$, the product $\prod\!\Mod_{\KtwoCQ}$ does not contain a path from $A$ to $\End$, so $\q$ does not contain $\End$ and, therefore, can be mapped into
$\I_{\KoneCQ}$.  In what follows, we assume that $\q$ does not contain $A$ and
$P$ (note that $D$ and $E$ also do not occur in $\q$).

If $\q$ does not contain $\Start$ atoms or $\q$ does not contain $\End$ atoms,
then clearly, $\I_{\KoneCQ} \models \q$.  

Suppose $\q$ contains both $\Start$ and $\End$ atoms. If there exists an $R$-path from a $\Start$ node to an $\End$ node in $\q$ then, by the structure
of $\KtwoCQ$, the $\End$ node is a leaf of $\q$ (as $\End$ nodes are always leaves in the models from $\Mod_{\KtwoCQ}$) and the $\Start$ node is the root of $\q$ (as there are minimal models $\I_l$ and $\I_r$ in Fig.~\ref{fig:k2}, in which the first $\Start$ node has no $R$-predecessor). 
%
%
Since $\q$ does not contain a subquery of the form $\q_m$, this $R$-path should contain variables with the empty $\SigmaCQ$-concept label, in which case $\q$ can be mapped into
$\I_{\KoneCQ}$ by sending the root of $\q$ to the $P$-successor of $a$ and the
rest of the query so as to map a variable with the empty $\SigmaCQ$-concept
label to the vertical $R$-successor of a $\Start$ node.

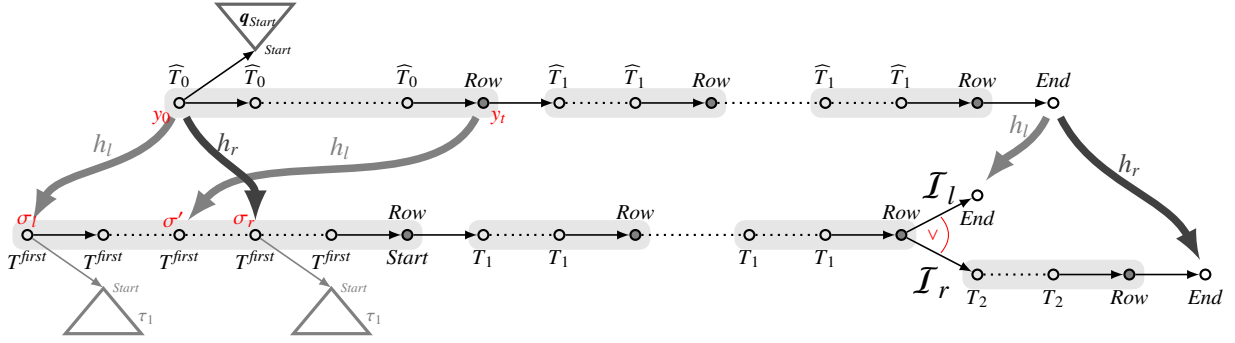
\begin{figure}
  \centering
  \begin{tikzpicture}[xscale=1, yscale=0.7]
    \foreach \al/\y/\x/\type/\lab/\wh/\extra in {%
      n0/0/-1/point/T^{\first}/below/,%
      n1/0/0/point/T^{\first}/below/,%
      st0/-1/0/subtree/{}/90/gray,%
      n1a/0/1/point/T^{\first}/below/,%
      n2/0/2/point/T^{\first}/below/,%
      st2/-1/3/subtree/{}/90/gray,%
      n3/0/3/point/T^{\first}/below/,%
      row1/0/4/point/{\Start}/below/rowind,%
      x12/0/5/point/{T_1}/below/,%
      xN2/0/6/point/{T_1}/below/,%
      row2/0/7/point/{\Row}/above/rowind,%
      x1m/0/8.5/point/{T_1}/below/,%
      xNm/0/9.5/point/{T_1}/below/,%
      rowm/0/10.5/point/{\Row}/above/rowind,%
      endl/0.75/11.5/point/{\mathit{End}}/below/,%
      x1M/-0.75/11.5/point/{T_2}/below/,%
      xNM/-0.75/12.5/point/{T_2}/below/,%
      rowM/-0.75/13.5/point/{\Row}/below/rowind,%
      endr/-0.75/14.5/point/{\mathit{End}}/below/%
    }{ \node[\type, \extra, label=\wh:{\footnotesize $\lab$}] (\al) at (\x,\y)
      {}; }

    \node[anchor=south] at (row1.north) {\footnotesize $\Row$};

    \foreach \from/\to/\type in {%
      n0/n1/role, n1/n1a/dotted, n1a/n2/dotted, n2/n3/dotted,
      n3/row1/role, %
      row1/x12/role, x12/xN2/dotted, xN2/row2/role,%
      row2/x1m/dotted, x1m/xNm/dotted, xNm/rowm/role, rowm/endl/role, %
      rowm/x1M/role, x1M/xNM/dotted, xNM/rowM/role, rowM/endr/role%
    } {\draw[thick, \type] (\from) -- (\to);}

    \foreach \to/\al in {endl/left,x1M/right}{%
      \draw[draw=none] (rowm) -- node[pos=0.5, inner sep=0] (\al) {} (\to); }%
    \draw[red] (left) to[bend left] node[left] {\scriptsize $\lor$} (right);

    \begin{scope}[gray]
      \foreach \from/\to/\type in {%
        n0/st0.north/role, n2/st2.north/role%
      } {\draw[thick, \type] (\from) -- (\to);}

      \foreach \wh in {%
        st0, st2%
      }{ \node[anchor=west] at (\wh.east) {\scriptsize $\tau_1$}; %
        \node[anchor=west] at (\wh.north) {\tiny $\Start$};
      }
    \end{scope}

    \foreach \al/\lab/\wh in {%
      n0/\sigma_l/above, n1a/\sigma'~~/above, n2/\sigma_r~~~~/above%
    }{\node[label={[inner sep=2,red]\wh:{\footnotesize $\lab$}}, inner sep=0]
      at (\al) {}; }

    \begin{pgfonlayer}{background}
      \foreach \first/\last in {%
        n0/row1, x12/row2, x1m/rowm, x1M/rowM%
      }{ \node[fit=(\first)(\last), rounded corners, fill=gray!20] {}; }
    \end{pgfonlayer}

  \node[yshift=0.1cm, xshift=-0.5cm] at (endl) {\Large$\I_l$};
  \node[yshift=-0.1cm, xshift=-0.6cm] at (x1M) {\Large$\I_r$};

    \begin{scope}[yshift=2.5cm, xshift=1cm]
      \foreach \al/\y/\x/\type/\lab/\wh/\extra in {%
        y0/0/0/point/\widehat{T}_0/90/,%
        y1/0/1/point/\widehat{T}_0/right/,%
        y3/0/3/point/\widehat{T}_0/right/,%
        z0/0/4/point/\Row/right/rowind,%
        z1/0/5/point/\widehat{T}_1/right/,%
        z2/0/6/point/{\widehat{T}_1}/right/,%
        z10/0/7/point/{\Row}/right/rowind,%
        z11/0/8.5/point/{\widehat{T}_1}/right/,%
        z12/0/9.5/point/{\widehat{T}_1}/right/,%
        z20/0/10.5/point/{\Row}/right/rowind,%
        end/0/11.5/point/{\mathit{End}}/right/%
      }{ \node[\type, \extra, label=above:{\footnotesize $\lab$}] (\al) at
        (\x,\y) {}; }

      \foreach \from/\to/\type in {%
        y0/y1/role, y1/y3/dotted, y3/z0/role, %
        z0/z1/role, z1/z2/dotted, z2/z10/role,%
        z10/z11/dotted, z11/z12/dotted, z12/z20/role, z20/end/role%
      } {\draw[thick, \type] (\from) -- (\to);}

      \begin{scope}
        \node[subtree,rotate=180] (yst0) at (1,1) {};

        \foreach \from/\to/\type in {%
          y0/yst0.north/role%
        } {\draw[thick, \type] (\from) -- (\to);}

        \foreach \wh in {%
          yst0%
        }{ \node[anchor=west] at (\wh.east) {\scriptsize $\q_{\Start}$}; %
          \node[anchor=west] at (\wh.north) {\tiny $\Start$};
        }
      \end{scope}

      \foreach \al/\lab/\wh in {%
        y0/y_0/below left, z0/y_t/below right%
      }{\node[label={[inner sep=2,red]\wh:{\footnotesize $\lab$}}, inner sep=0]
        at (\al) {}; }

      \begin{pgfonlayer}{background}
        \foreach \first/\last in {%
          y0/z0, z1/z10, z11/z20%
        }{ \node[fit=(\first)(\last), rounded corners, fill=gray!20] {}; }
      \end{pgfonlayer}

    \end{scope}

    \begin{pgfonlayer}{background}
      \foreach \al in {y0,z0,z11,end,n0,n1a,x1m,endl,n2,x1M,endr} { %
        \node[fit=(\al), inner sep=3] (\al-a) {}; }

      \foreach \from/\to/\out/\in in {%
        y0/n0/250/70, z0/n1a/250/60, end/endl/250/60%
      } {\draw[homomorphism, black!50] (\from-a) to[out=\out, in=\in]
        node[above] {$h_l$} (\to-a);}

      \foreach \from/\to/\in in {%
        y0/n2/90, end/endr/110%
      } {\draw[homomorphism, black!75] (\from-a) to[out=290, in=\in]
        node[above] {$h_r$} (\to-a);}
    \end{pgfonlayer}
  \end{tikzpicture}\caption{A query that contains both $\Start$ and $\mathit{End}$ atoms must have variables with empty concept labels.}
\label{fig:empty-labels}
\end{figure}

Now, suppose that $\q$ does not contain a (directed) path from a $\Start$
node to an $\End$ node. Then the $\Start$ node is not the root of $\q$. We denote by $\q_\Start$ the subtree of $\q$ generated by this node (see Fig.~\ref{fig:empty-labels}), and  by $\q_\End$ the path from the root $y_0$ of $\q$ to the $\End$ node.  
By the structure of $\KtwoCQ$ shown in Fig.~\ref{fig:t2-axioms}(a), the projection of $y_0$ onto every minimal model of $\KtwoCQ$ is of the form $\delta \cdot w_{\exists R.T^{\first}}$. 
We prove that $\q_\End$
must have at least one intermediate node with the empty $\SigmaCQ$-concept
label.
Indeed, suppose to the contrary that each intermediate variable $x$ in $\q_{\End}$ appears in
an atom of the form $B(x)$, for $B\in\{\widehat{T}_k\mid k=0,1,2\} \cup \{\Row\}$.
Since $\KtwoCQ\models \q_{\End}$, it follows that there is some $k$ such that
the distance between two neighbour $\Row$ nodes in $\q_{\End}$ is $k$.  Let
$\I_l$ and $\I_r$ be the minimal models that satisfy \eqref{disjunction} by
picking the first and the second disjunct, respectively, and identical,
otherwise (see Fig.~\ref{fig:empty-labels}). Suppose that $\I_l$ satisfies
$\q_{\End}$ by mapping $y_0$ to $\sigma_l$ of the form $\delta \cdot w_{\exists
  R.T^{\first}}$ and $\I_r$ satisfies $\q_{\End}$ by mapping $y_0$ to
$\sigma_r$ of the form $\sigma_l \cdots w_{\exists R.T^{\first}}$. Then the
distance between $\sigma_l$ and $\sigma_r$ is $k$.
Let $t$ be the distance from $y_0$ to the first $\Row$ node $y_t$. If $t \leq
k$, then $y_t$ should be mapped to $\sigma'$ that is a predecessor of
$\sigma_r$ in $\I_l$ or $\sigma_r$ itself. However, such a map is not possible as the $\SigmaCQ$-label of $\sigma'$ does not contain $\Row$
(only a concept of the form $\widehat{T}_0$), and we get a contradiction. In the case $t > k$, the argument is similar; one needs to observe that the structure of $\KtwoCQ$ (in particular, \eqref{first-row-0}, \eqref{end-of-row-k}, \eqref{tile-hat}) makes it impossible to map $y_0, \dots, y_t$ onto the common part of $I_l$ and $I_r$ in such a way that $h_r(y_i) = h_l(y_i) \sigma$ with $|\sigma| = k$. 
%
%
Thus, we conclude that $\q$ can be homomorphically mapped to $\I_{\KoneCQ}$ as follows: $y_0$ goes to $a w_{\exists P.D}$, $\q_{\Start}$ to the infinite path of $\Start$
nodes, and $\q_{\End}$ so as to map a variable with the empty
$\SigmaCQ$-concept label to the vertical successor of a $\Start$ node.
\end{proof}

As an immediate consequence of Lemmas~\ref{qn-tiling} and~\ref{qn-rejected-by-k1} and the characterisation of $\Sigma$-CQ-entailment given in Theorem~\ref{crit:KB} (3), we obtain:
\begin{theorem}\label{thm:undecidability-cq}
The problem whether an $\mathcal{EL}$ KB $\Sigma$-CQ entails an $\ALC$ KB is undecidable.
\end{theorem}
We now modify the KBs constructed in the proof of Theorem~\ref{thm:undecidability-cq} to
show undecidability of $\Sigma$-CQ-inseparability.
\begin{theorem}\label{thm:undecidability-cqinsep}
$\Sigma$-CQ inseparability between $\mathcal{EL}$ and \ALC KBs is undecidable.
\end{theorem}
\begin{proof}
We set $\K_2 = \KtwoCQ \cup \KoneCQ$ and show that the following conditions are equivalent:
\begin{enumerate}
\item[(1)] $\KoneCQ$ $\SigmaCQ$-CQ entails $\KtwoCQ$;

\item[(2)] $\KoneCQ$ and $\K_2$ are $\SigmaCQ$-CQ inseparable.
\end{enumerate}
Let $\I_{\KoneCQ}$ be the canonical model of $\KoneCQ$ and $\Mod_{\KtwoCQ}$
the set of minimal models of $\KtwoCQ$. One can easily show that the following set $\Mod_{\K_{2}}$
is complete for $\K_2$:
$$
\Mod_{\K_2} ~=~ \{~ \I \uplus \I_{\KoneCQ} \mid \I \in \Mod_{\KtwoCQ}~\},
$$
where $\I \uplus \I_{\KoneCQ}$ is the interpretation that results from merging the roots $a$ of
$\I$ and $\I_{\KoneCQ}$. Now, the implication $(2) \Rightarrow (1)$ is trivial. For the converse
direction, suppose $\KoneCQ$ $\SigmaCQ$-CQ entails $\KtwoCQ$. It follows that
$\K_2$ $\SigmaCQ$-CQ entails $\KoneCQ$. So it remains to show that $\KoneCQ$ $\SigmaCQ$-CQ entails $\K_2$.
Suppose this is not the case and there is a $\SigmaCQ$-CQ $\q$ such that
$\K_2 \models \q$ and $\KoneCQ \not\models \q$. We can assume  $\q$ to be a \emph{smallest connected} CQ with this property;
in particular, no proper sub-CQ of $\q$ separates $\KoneCQ$ and $\K_2$.
Now, we cannot have $\KtwoCQ \models \q$ because this would contradict the fact that $\KoneCQ$ $\SigmaCQ$-CQ entails $\KtwoCQ$.
Then $\KtwoCQ \not\models \q$, and so there is $\I \in \Mod_{\KtwoCQ}$ such that  $\I \not \models \q$. On the other hand, we have
$\I \uplus \I_{\KoneCQ} \models \q$. Take a homomorphism $h\colon \q \to \I \uplus \I_{\KoneCQ}$. As $\q$ is connected, $\I \not \models \q$ and
$\I_{\KoneCQ} \not\models \q$, there is a variable $x$ in $\q$ such that $h(x) = a$. For every  variable $x$ with $h(x) = a$, we remove
$\exists x$ from the prefix of $\q$ if any.
Denote by $\q'$ the maximal sub-CQ of $\q$ such that $h(\q') \subseteq \I$ (more precisely, $S(\avec{y})\in \q$
is in $\q'$ iff $h(\avec{y}) \subseteq \Delta^\I$). Clearly, $\q' \subsetneqq \q$ and $\K_2 \models \q'$. Denote by $\q''$ the complement of $\q'$ to $\q$. Obviously, $h(\q'') \subseteq \I_{\KoneCQ}$. Now, we either have $\KoneCQ \models \q'$ or $\KoneCQ \not\models \q'$. The latter case contradicts the choice of $\q$ because $\q'$ is a proper sub-CQ of $\q$.
Thus, $\KoneCQ \models \q'$, and so there is a homomorphism $h' \colon \q' \to \I_{\KoneCQ}$ with $h'(x) = a$, for
every free variable $x$. Define a map $g \colon \q \to \I_{\KoneCQ}$ by taking $g(y) = h'(y)$ if $y$ is in
$\q'$ and $g(y) = h(y)$ otherwise. The map $g$ is a
homomorphism because all the variables that occur in both $\q'$ and $\q''$ are free and must be mapped by
$g$ to $a$.
Therefore, $\I_{\KoneCQ} \models \q$, which is a contradiction.
%
\end{proof}
Observe that our undecidability proof does not work for UCQs as the UCQ composed of the two disjunctive branches shown in  Fig.~\ref{fig:k2} (for non-trivial instances) distinguishes between the KBs independently of the existence of a tiling.
In Section~\ref{sec:ucq-kbs}, we show that, for UCQs, entailment is decidable.


\subsection{Undecidability of rCQ-entailment and inseparability with respect to a signature $\Sigma$}

It is not difficult to see that the KBs $\KoneCQ$ and $\KtwoCQ$ constructed
in the undecidability proof for CQ-entailment cannot be used to prove undecidability of rCQ-entailment.
In fact, $\KoneCQ$ $\SigmaCQ$-rCQ entails $\KtwoCQ$, for any instance of the rectangle tiling problem.
We now sketch how the KBs defined above can be modified to show that 
rCQ-entailment and inseparability are indeed undecidable. Detailed proofs are given in the appendix.
\begin{theorem}\label{thm:undecidability-rcq}
\textup{(}i\textup{)} The problem whether an $\mathcal{EL}$ KB $\Sigma$-rCQ entails an $\ALC$ KB is undecidable.

\textup{(}ii\textup{)} $\Sigma$-rCQ inseparability between $\mathcal{EL}$ and \ALC KBs is undecidable.
\end{theorem}
\begin{figure*}
  \centering
  \scalebox{1}{






\begin{tikzpicture}[xscale=0.82, %
  point/.style={thick,circle,draw=black,fill=white, minimum
    size=1.3mm,inner sep=0pt}%
  ]
  \foreach \al/\x/\y/\lab/\wh/\extra in {%
    a/0/0/{A,\Row,\widehat{T}_0~}/below/constant, %
    start/1/0/\qquad\Row/above/rowind, %
    x11/2/0/{\widehat{I}_0}/below/, %
    xN1/4/0/{\widehat{T}_0^{N1}}/below/, %
    row1/5/0/\Row\quad~/above/rowind, %
    x12/6/0/{\widehat{T}_1^{12}}/below/, %
    xN2/8/0/{\widehat{T}_1^{N2}}/below/, %
    row2/9/0/{\Row}/above/rowind,%
    x1m/10.5/0/{\widehat{T}_1^{1M\text{-}1}}/below/, %
    xNm/12.5/0/{\widehat{T}_1^{NM\text{-}1}}/below/, %
    rowm/13.5/0/{\Row}/above/rowind,%
    endl/14.5/0.6/{~~\End}/below/,%
    x1M/14.5/-0.6/{\widehat{T}_2^{1M}}/below/, %
    xNM/16.5/-0.6/{\widehat{T}_2^{NM}}/below/, %
    rowM/17.5/-0.6/{\Row}/above/rowind,%
    endr/18.3/-0.6/{\End}/below/%
  }{ \node[point, \extra, label=\wh:{$\lab$}] (\al) at (\x,\y)
    {}; }

  \node[label={[inner xsep=1]left:{$a$}}] at (a) {};%

  \foreach \al/\lab/\wh in {%
    x12/{$\widehat{I}_0$\dots}/1, %
    xN2/{$\widehat{T}_0^{N1}$\dots}/1, %
    x1m/{$\widehat{T}_0^{1M\text{-}2}$\dots}/1, %
    xNm/{$\widehat{T}_0^{NM\text{-}2}$\dots}/1, %
    x1M/{$\widehat{T}_1^{1M\text{-}1}$\dots}/-1, %
    xNM/{$\widehat{T}_1^{NM\text{-}1}$\dots}/-1%
  }{ \node[label={[scale=0.8]center:{\lab}}, yshift=-0.85cm] at
    (\al) {}; }

  \draw[role] (a) to[out=220,in=140,looseness=20] (a);

  \foreach \from/\to/\type/\extra in {%
    a/start/role, start/x11/role, x11/xN1/dotted, xN1/row1/role,%
    row1/x12/role, x12/xN2/dotted, xN2/row2/role,%
    row2/x1m/dotted, x1m/xNm/dotted, xNm/rowm/role,%
    rowm/x1M/role, x1M/xNM/dotted, xNM/rowM/role,%
    rowm/endl/role, rowM/endr/role%
  } {\draw[thick, \type, \extra] (\from) -- (\to);}

  \foreach \from/\to/\type in {%
    rowm/endl, rowm/x1M%
  } {\draw[draw=none] (\from) -- node[pos=0.5, inner sep=0] (\from-\to) {} (\to);}

  \draw[red] (rowm-endl) to[bend left] node[left] { $\lor$}
  (rowm-x1M);

  \node[yshift=0.1cm, xshift=-0.5cm] at (endl) {\Large$\I_{\!l}$};
  \node[yshift=-0.1cm, xshift=-0.6cm] at (x1M) {\Large$\I_{\!r}$};

  \begin{pgfonlayer}{background}
    \foreach \first/\last in {%
      x11/row1, x12/row2, x1m/rowm, x1M/rowM%
    }{ \node[fit=(\first)(\last), rounded corners, fill=gray!20] {}; }
  \end{pgfonlayer}

  \begin{pgfonlayer}{background}
    \def\drawsubtree[#1,#2]{%
      \draw[secondary, thick, dotted] (#1) -- ++(0.4,0.08) (#1) -- ++(0.4,-0.08);%
    }
    
    \colorlet{lightgray}{gray}
    \begin{scope}[every node/.style={scale=0.7}, secondary/.style={minimum
        size=0.8mm, draw=lightgray, thin}]\small
      \coordinate (xk1) at (2.8,0);%
      \coordinate (xk2) at (7,0);%
      \coordinate (xkm) at (11.5,0);%
      \coordinate (xkM) at (15.5,-0.6);

      \foreach \al/\tile in {%
        x11/\widehat{T}_0, xk1/\widehat{T}_0, xN1/\widehat{T}_0,
        row1/\widehat{T}_1, x12/\widehat{T}_1, xN2/\widehat{T}_1,
        xNm/\widehat{T}_1, xkM/\widehat{T}_2, xNM/\widehat{T}_2%
      }{ \node[point, secondary, xshift=0.6cm, yshift=0.8cm,
        label={[lightgray, inner sep=1]above:{$\tile$}}] (\al-z1) at
        (\al) {};

        \draw[role, lightgray, secondary] (\al) -- (\al-z1);

        \drawsubtree[\al-z1,1]
      }

      \foreach \al in {%
        row2, xk2, x1m, xkm, x1M%
      }{ \draw[dotted, lightgray, thick] (\al) -- +(0.5,0.6); }

      \foreach \al/\tile/\ysh in {%
        xN2/\widehat{T}_2/1.6cm%
      }{ \node[point, secondary, fill=lightgray!50, xshift=0.5cm, yshift=\ysh,
        label={[lightgray, inner sep=1]left:{$\Row$}}] (\al-halt) at (\al) {};

        \node[point, secondary, xshift=0.7cm, yshift=0.3cm,
        label={[lightgray, inner sep=1]right:{$\End$}}] (\al-halt-end) at (\al-halt) {};

        \node[point, secondary, xshift=0.7cm, yshift=-0.3cm,
        label={[lightgray, inner sep=1]above:{$\tile$}}] (\al-halt-tile) at (\al-halt) {};

        \foreach \from/\to in {%
          \al/\al-halt, \al-halt/\al-halt-end, \al-halt/\al-halt-tile%
        }{ \draw[role, lightgray, secondary] (\from) -- (\to); }

        \foreach \from/\to in {%
          \al-halt/\al-halt-end, \al-halt/\al-halt-tile%
        }{ \draw[draw=none] (\from) -- node[inner sep=0, outer sep=0, pos=0.55]
          (\from-\to) {} (\to); }

        \draw[lightgray!50!red] (\al-halt-\al-halt-end) to[bend left]
        node[left, inner sep=2, scale=0.7] {$\lor$} (\al-halt-\al-halt-tile);

        \drawsubtree[\al-halt-tile,1]%
      }

      \foreach \al/\ysh in {%
        xNm/1.6cm%
      }{ \node[point, secondary, fill=lightgray!50, xshift=0.5cm, yshift=\ysh,
        label={[lightgray, inner sep=1]left:{$\Row$}}] (\al-row) at (\al) {};

        \draw[role, lightgray, secondary] (\al) -- (\al-row); 

        \drawsubtree[\al-row,1]%
      }

      \foreach \al/\ysh in {%
        xk1/1.6cm%
      }{ \node[point, secondary, fill=lightgray!50, xshift=0.6cm, yshift=\ysh,
        label={[lightgray]left:{$\Row$}}]
        (\al-start) at (\al) {};

        \draw[role, lightgray, secondary] (\al) -- (\al-start); 

        \drawsubtree[\al-start,1]%
      }

      \foreach \al/\tile/\ysh in {%
        xkM/T_2/1.6cm%
      }{ \node[point, secondary, fill=lightgray!50, xshift=0.5cm, yshift=\ysh,
        label={[lightgray, inner sep=1]left:{$\Row$}}] (\al-end-row) at (\al) {};

        \node[point, secondary, xshift=0.7cm,
        label={[lightgray, inner sep=1]right:{$\End$}}] (\al-end) at (\al-end-row) {};

        \foreach \from/\to/\type in {%
          \al/\al-end-row, \al-end-row/\al-end%
        }{ \draw[role, lightgray, secondary] (\from) -- (\to); } %
      }
    \end{scope}    
  \end{pgfonlayer}

  \begin{scope}[xshift=1cm, yshift=2.25cm]
    \foreach \al/\x/\y/\lab/\wh/\extra in {%
      y/-1/0/{}/right/, %
      y0/0/0/{B_0}/right/rowind, %
      y1/1/0/{B_1}/right/, %
      yN/3/0/{B_N}/right/, %
      yN1/4/0/{B_{N+1}}/right/rowind, %
      y21/5/0/{}/right/, %
      y2N/7/0/{}/right/, %
      y2N1/8/0/{}/right/rowind, %
      y31/9/0/{}/right/, %
      y3N/11/0/{}/right/, %
      y3N1/12/0/{}/right/rowind, %
      yl1/13.5/0/{B_{n{-}N}}/right/, %
      ylN/15.5/0/{B_{n{-}1}}/right/, %
      yl/16.5/0/{B_n}/right/rowind, %
      yend/17.5/0/\End/right/%
    }{ \node[point, \extra, label=above:{ $\lab$}] (\al) at
      (\x,\y) {}; }

    \foreach \from/\to/\type in {%
      y/y0/role, y0/y1/role, y1/yN/dotted, yN/yN1/role, yN1/y21/role,
      y21/y2N/dotted, y2N/y2N1/role, y2N1/y31/role, y31/y3N/dotted,
      y3N/y3N1/role, y3N1/yl1/dotted, yl1/ylN/dotted, ylN/yl/role,
      yl/yend/role%
    } {\draw[thick, \type] (\from) -- (\to);}

    \begin{pgfonlayer}{background}
      \foreach \first/\last in {%
        y1/yN1, y21/y2N1, y31/y3N1, yl1/yl%
      }{ \node[fit=(\first)(\last), rounded corners, fill=gray!20] {}; }
    \end{pgfonlayer}

    \node[xshift=-0.5cm] at (y) {\Large$\q^{r}_n$};
  \end{scope}

  \begin{pgfonlayer}{background}

    \foreach \al in {y,y0,yN1,yend,start,row1,endl,endr} { %
      \node[fit=(\al), inner sep=3] (\al-a) {}; }

    \foreach \from/\to/\out/\in/\wh in {%
      y-a/a/255/105/left, yN1-a/start-a/210/90/above, yend-a/endl-a/250/30/above
    } {\draw[homomorphism, black!50] (\from) to[out=\out, in=\in]
      node[\wh] {$h_l$} (\to);}

    \foreach \from/\to/\in/\wh in {%
      y-a/a/290/right, yN1-a/row1-a/90/left, yend-a/endr-a/110/left
    } {\draw[homomorphism, black!75] (\from) to[bend left=15]
      node[\wh] {$h_r$} (\to);}
  \end{pgfonlayer}
\end{tikzpicture}


  \caption{The structure of models $\I_l$ and $\I_r$ of $\K_2$, and
    homomorphisms $h_l \colon \q^{r}_n \to \I_l$ and $h_r \colon \q^{r}_n \to \I_r$.}
  \label{fig:k2-rooted}
\end{figure*}
\begin{proof}
For (\emph{i}), we do not use the role name
$P$ but add $R(a,a)$ and $\textit{Row}(a)$ to the ABox $\{A(a)\}$. The CQs $\q_{n}$ are modified by adding
a conjunct $R(y,x_0)$ with answer variable $y$ to $\q_n$.
%
In more detail, suppose that an instance $\Tmf$ of the rectangle tiling problem is given. Let
\begin{align}\label{r-abox}
\ArCQ = \{ R(a,a), \Row(a), A(a) \} \cup \{\widehat T_0(a) \mid T \in \mathfrak{T}\}, 
\end{align}
let $\TtworCQ$ contain the CIs \eqref{start-row-k}--\eqref{vertical-halt} of $\TtwoCQ$ as well as 
\begin{align}\label{r-initial}
& A \sqsubseteq \exists R. (\Row \sqcap \exists R. I_0),
\end{align}
and let $\KtworCQ = (\TtworCQ, \ArCQ)$. Note that the loop $R(a,a)$ in $\ArCQ$ plays roughly the same role as 
the path between 
two $\Start$-points in the previous construction (see Fig.~\ref{fig:k2}).
The existence of a tiling can now be checked by the rCQs
\begin{equation*}
\q^{r}_n(y) ~=~  \exists \avec{x} \, \big( R(y, x_0) \land \bigwedge_{i=0}^{n} \bigl( R(x_i,
  x_{i+1}) \land B_{i}(x_{i})\bigr) \land \End(x_{n+1}) \big),
\end{equation*}
where $B_{i} \in \{\Row\} \cup \{\widehat{T}_k \mid T \in \mathfrak{T}, k=0,1,2\}$, for which we have an analogue of Lemma~\ref{qn-tiling} for $\KtworCQ$. 
The structure of the two homomorphisms is shown in 
Fig.~\ref{fig:k2-rooted}. Note that the CQ encodes the first row two times.
Now, we take $\KonerCQ = (\TonerCQ,\ArCQ)$, where $\TonerCQ$ contains the following CIs (recall that we set $\Sigma_0 = \{\Row\} \cup \{\widehat{T}_k \mid T \in \mathfrak{T}, k = 0,1,2\}$):
\begin{align}
  A \sqsubseteq &~\exists R.D ~\sqcap~ \exists R.\exists R. E,\\
  D\sqsubseteq &~ \exists R.D ~\sqcap~ \exists R. \exists R. E ~\sqcap \bigsqcap_{X \in \Sigma_0} X,\\
  E \sqsubseteq &~ \exists R. E ~\sqcap~ \bigsqcap_{X \in \Sigma_0} X ~\sqcap~ \End.
\end{align}
The canonical model $\I_{\KonerCQ}$ of $\KonerCQ$ is shown below:
\begin{center}
  \begin{tikzpicture}[xscale=2, yscale=1.2, %
    emptyind/.style={fill=gray!50, inner sep=1.7},
    rotatelabels/.style={rotate=-14},
    abovesloped/.style={above,rotatelabels}]
    \small
    
    \foreach \al/\x/\y/\lab/\wh/\extra/\rot in {%
      a/0/0/{\qquad\qquad A,\Row,\widehat{T}_0}/above/constant/, %
      a2/0/-0.7/{}/below/emptyind/, %
      a3/0.7/-1/{\End,\Sigma_{0},E~~~}/below//rotatelabels, %
      a4/1.4/-1.3/{~~~\End,\Sigma_{0},E}/below//rotatelabels, %
      x1/1.5/0/{\Sigma_0,D}/above//, %
      x2/1.5/-0.7/{}/below/emptyind/, %
      x3/2.2/-1/{\End,\Sigma_{0},E~~~}/below//rotatelabels, %
      x4/2.9/-1.3/{~~~\End,\Sigma_{0},E}/below//rotatelabels, %
      y1/3/0/{\Sigma_0,D}/above//,%
      y2/3/-0.7/{}/above/emptyind/,%
      y3/3.7/-1/{\End,\Sigma_{0},E~~~}/below//rotatelabels,%
      y4/4.4/-1.3/{~~~\End,\Sigma_{0},E}/below//rotatelabels,%
      z1/4.5/0/{\Sigma_0,D}/above//,%
      z2/4.5/-0.7/{}/above/emptyind/,%
      z3/5.2/-1/{\End,\Sigma_{0},E~~~}/below//rotatelabels,%
      z4/5.9/-1.3/{~~~\End,\Sigma_{0},E}/below//rotatelabels%
    }{ \node[point, \extra, label={[\rot, inner sep=2]\wh:{
          $\lab$}}] (\al) at (\x,\y) {}; }

    \node[label={[inner xsep=1]left:{$a$}}] at (a) {};%

    \foreach \from/\to/\wh in {%
      a/a2/left, a2/a3/abovesloped, a3/a4/abovesloped, %
      a/x1/above, x1/x2/left, x2/x3/abovesloped, x3/x4/abovesloped, %
      x1/y1/above, y1/y2/left, y2/y3/abovesloped, y3/y4/abovesloped, %
      y1/z1/above, z1/z2/left, z2/z3/abovesloped, z3/z4/abovesloped%
    } {\draw[role] (\from) -- node[\wh] {$R$} (\to);}

    \draw[role] (a) to[out=230,in=130,looseness=12] node[left] {$R$} (a);

    \foreach \from in {%
      a4, x4, y4, z4
    } {\draw[dotted, thick] (\from) -- +(0.5,-0.2);}

    \draw[dotted, thick] (z1) -- +(1,0);
  \end{tikzpicture}
\end{center}

We set $\SigmarCQ = \sig(\KonerCQ)$. Again, one can show Lemma~\ref{qn-rejected-by-k1} for 
$\KonerCQ$ and $\KtworCQ$. The proof of~(\emph{ii}) is similar to the non-rooted case
and given in the appendix.
\end{proof}

\subsection{Undecidability of \textup{(}r\textup{)}CQ-entailment and inseparability for full signature}

The KBs used in the undecidability proofs above trivially do not
$\Sigma$-CQ-entail each other for the \emph{full signature}~$\Sigma$. For
example, the answer to the CQ $\exists y\exists z \, (P(a,y) \wedge R(y,z)
\wedge I^{\first}(z))$ is `yes' over $\KtwoCQ$ and `no' over $\KoneCQ$. To establish undecidability results for separating CQs with arbitrary symbols, we modify the KBs constructed above.  We follow~\cite{Lutz2012non} and replace the
non-$\Sigma$-symbols by complex \ALC-concepts that, in contrast to concept
names, cannot occur in CQs.
Let $\Gamma$ be a set of concept names. For any $B\in \Gamma$, let $Z_{B}$ be a
fresh concept name and let $R_{B}$ and $S_{B}$ be fresh role names. The
\emph{abstraction of $B$} is the $\mathcal{ALC}$-concept
$$
H_{B} ~=~ \forall R_{B}.\exists S_{B}.\neg Z_{B}.
$$
The \emph{$\Gamma$-abstraction} $\Cuup$ of a (possibly compound) concept $C$
is obtained from $C$ by replacing every $B\in \Gamma$ with~$H_{B}$. The
\emph{$\Gamma$-abstraction $\Tup$} of a TBox $\T$ is obtained from $\T$ by
replacing all concepts in $\T$ with their $\Gamma$-abstractions.  We associate
with $\Gamma$ an auxiliary TBox
$$
\T_{\Gamma}^{\exists} ~=~ \{~ \top \sqsubseteq \exists R_{B}.\top, \ \top \sqsubseteq \exists S_{B}.Z_{B}\mid B \in \Gamma ~\}
$$
and call $\Tup \cup \T_{\Gamma}^{\exists}$ the
\emph{enriched $\Gamma$-abstraction} of $\T$ for $\Gamma$. In what
follows, we are going to replace TBoxes $\T$ with their enriched $\Gamma$-abstractions. 
We say that a TBox $\T$ \emph{admits trivial models} if any interpretation $\I$ with $X^{\I}=\emptyset$,
for any concept or role name $X$, is a model of $\T$. The TBoxes used in the undecidability proofs above
admit trivial models.
\begin{theorem}\label{Thm:equivalence}
Suppose $\K_{1}=(\T_{1},\A)$ and $\K_{2}=(\T_{2},\A)$ are $\mathcal{ALC}$ KBs and $\Sigma$ a signature such that ${\sig}(\A) \subseteq \Sigma$, $\Gamma ={\sig}(\T_{1}\cup \T_{2})\setminus \Sigma$ contains no role names, and $\T_{1}$ and $\T_{2}$ admit trivial models. 
Let $\Kup_{i}=(\Tup_{i}\cup \T_{\Gamma}^{\exists},\A)$, for $i=1,2$.
Then the following conditions are equivalent:
\begin{enumerate}
\item[$(1)$] $\K_{1}$ $\Sigma$-\textup{(}r\textup{)}CQ entails $\K_{2}$;

\item[$(2)$] $\Kup_{1}$ full signature \textup{(}r\textup{)}CQ entails $\Kup_{2}$.
\end{enumerate}
\end{theorem}
\begin{proof}
We start by defining the \emph{$\Gamma$-abstraction} $\Iup$ and the \emph{$\Gamma$-instantiation} $\Idown$ of an interpretation $\I$. 
The latter is defined in the same way as $\I$ except that 
$B^{\Idown}={H_{B}}^{\I}$, for all $B\in \Gamma$. It is straightforward to show the following.

\smallskip
\noindent 
\emph{Fact} 1. For all $\mathcal{ALC}$ concepts $D$ over the signature ${\sf sig}(\K_{1}\cup \K_{2})$ and all 
$d\in \Delta^{\I}$, we have
$
d\in D^{\Idown}$ iff $d\in (D^{\uparrow\Gamma})^{\I}.
$
In particular, if $\I$ is a model of $\Kup_{i}$, then $\Idown$ is a model of $\K_{i}$, for $i=1,2$.

\smallskip

We now define the interpretation $\Iup$. The domain $\Delta^{\Iup}$ of $\Iup$ is the set of words $w= d
v_{1}\cdots v_{n}$ such that $d\in \Delta^{\I}$ and $v_{i}\in
\{R_{B},S_{B},\bar{S}_{B}\mid B \in \Gamma\}$, where $v_{i}\not=\bar{S}_{B}$ if either 
(\emph{i}) $i>2$ or (\emph{ii}) $i=2$ and $d\not\in B^{\Imc}$ or $v_{1}\not=R_{B}$. Then
\begin{align*}
A^{\Iup}  &  ~=~ A^{\I}, \text{ for all concept names $A\in {\sf sig}(\K_{1}\cup \K_{2})\setminus \Gamma$;}\\
B^{\Iup} &  ~=~ \emptyset, \text{ for all concept names $B\in \Gamma$;}\\
{Z_{B}}^{\Iup} & ~=~ {Z_{B}}^{\I} \cup \{w \mid {\sf tail}(w) = S_{B}\}, \text{ for all concept names $B\in \Gamma$;}\\ 
S^{\Iup} & ~=~ S^{\I}, \text{ for all role names $S\not\in \{R_{B},S_{B} \mid B\in \Gamma\}$;}\\
{R_{B}}^{\Iup} &~=~ {R_{B}}^{\I} \cup \{(w,wR_{B}) \mid wR_{B}\in \Delta^{\Iup}\}, 
\text{ for all concept names $B\in \Gamma$;}\\				
{S_{B}}^{\Iup} &~=~ {S_{B}}^{\I} \cup \{(w,wS_{B}) \mid wS_{B}\in \Delta^{\Iup}\} \cup 
\{(w,w\bar{S}_{B}) \mid w\bar{S}_{B}\in \Delta^{\Iup}\}, \text{ for all concept names $B\in \Gamma$.}
\end{align*}
By the construction of $\Iup$, we have ${H_{B}}^{\Iup}=B^{\I}$, for all concept names $B\in \Gamma$.  For the interpretation $\I$ below consisting of two elements $d_1$ and $d_2$ with $d_1 \in B^\I$ and $d_2 \in (\neg B)^\I$ and $\Gamma = \{B\}$, the $\Gamma$-abstraction $\Iup$ can be depicted as follows, where the grey points \tikz[baseline]\node[point,draw=gray,fill=gray!80,inner sep=2pt] at (0,0.1) {}; correspond to the words of the form $w\bar{S}_{B}$:
\begin{center}
\begin{tikzpicture}[notSb/.style={draw=gray, fill=gray!80, inner sep=2pt}]
  \node[point, constant, label=left:{$d_1$}, label=above:{$B$}] {};
  \node[point, constant, label=right:{$d_2$}, label=above:{$\neg B$}] at (1,0) {};

  \node at (0.5,1) {$\I$};

  \begin{scope}[shift={(6,0)}]
    \node[point, constant, label=above:{$d_1$}] (d1) {};

    \foreach \al/\x/\y/\conc/\wh/\extra in {%
      x/-1.6/-1//above/,%
      y/0/-1/{Z_B}/right/,%
      z/1.6/-1//right/notSb,%
      x1/-3.5/-2//above/,%
      x2/-2.5/-2/{\qquad~~Z_B}/center/,%
      x3/-1.5/-2//center/notSb,%
      y1/-0.5/-2//above/,%
      y2/0.5/-2/{\qquad~~Z_B}/center/,%
      z1/1.5/-2//above/,%
      z2/2.5/-2/{\qquad~~Z_B}/center/%
    }{ \node[point, \extra, label=\wh:{\scriptsize $\conc$}] (\al) at (\x,\y) {};}

    \foreach \from/\to/\lab/\wh in {%
      d1/x/R_B/left, d1/y/S_B/right, d1/z/S_B/right, %
      x/x1/R_B/left, x/x2/S_B/right, x/x3/S_B/right,%
      y/y1/R_B/left, y/y2/S_B/right, z/z1/R_B/left, z/z2/S_B/right%
    }{ \draw[role] (\from) -- node[\wh] {\scriptsize $\lab$} (\to); }

    \foreach \from in {%
      x1, x2, x3, y1, y2, z1, z2%
    }{ \draw[role] (\from) -- node[inner sep=0.2mm, left] {\scriptsize $R_B$} ++(-0.2,-0.9);%
      \draw[role] (\from) -- node[inner sep=0.2mm, right] {\scriptsize $S_B$} ++(0.2,-0.9); }

    \draw[loosely dotted, thick] (-3.7,-3.1) -- ++(6.5,0);
    
    \node at (3,1) {$\Iup$};
  \end{scope}

  \begin{scope}[shift={(12,0)}]
    \node[point, constant, label=above:{$d_2$}] (d2) {};

    \foreach \al/\x/\y/\conc/\wh/\extra in {%
      x/-1.6/-1//above/,%
      y/0/-1/{Z_B}/right/,%
      z/1.6/-1//right/notSb,%
      x1/-2.5/-2//above/,%
      x2/-1.5/-2/{\qquad~~Z_B}/center/,%
      y1/-0.5/-2//above/,%
      y2/0.5/-2/{\qquad~~Z_B}/center/,%
      z1/1.5/-2//above/,%
      z2/2.5/-2/{\qquad~~Z_B}/center/%
    }{ \node[point, \extra, label=\wh:{\scriptsize $\conc$}] (\al) at (\x,\y) {};}

    \foreach \from/\to/\lab/\wh in {%
      d2/x/R_B/left, d2/y/S_B/right, d2/z/S_B/right, x/x1/R_B/left, x/x2/S_B/right,%
      y/y1/R_B/left, y/y2/S_B/right, z/z1/R_B/left, z/z2/S_B/right%
    }{ \draw[role] (\from) -- node[\wh] {\scriptsize $\lab$} (\to); }

    \foreach \from in {%
      x1, x2, y1, y2, z1, z2%
    }{ \draw[role] (\from) -- node[inner sep=0.2mm, left] {\scriptsize $R_B$} ++(-0.2,-0.9);%
      \draw[role] (\from) -- node[inner sep=0.2mm, right] {\scriptsize $S_B$} ++(0.2,-0.9); }

    \draw[loosely dotted, thick] (-2.7,-3.1) -- ++(5.5,0);
  \end{scope}

\end{tikzpicture}
\end{center}

\smallskip
\noindent 
\emph{Fact} 2. For all $\ALC$ concepts $D$ over the signature ${\sf sig}(\K_{1}\cup \K_{2})$ and all $d\in \Delta^{\I}$, we have
$
d\in (\Dup)^{\Iup}$ iff $d\in D^{\I}.
$
Moreover, if $\I$ is a model of $\K_{i}$, then $\Iup$ is a model of $\Kup_{i}$, for $i=1,2$.

\smallskip
\noindent
\emph{Proof of Fact~$2$}. For the `moreover'-part, observe that, for 
$C\sqsubseteq D\in \T_{i}$ and $d\in \Delta^{\I}$, we have that $d\in
(\Cuup)^{\Iup}$ implies $d\in (\Dup)^{\Iup}$ by the first part of Fact~2. For 
$d\in \Delta^{\Iup}\setminus \Delta^{\I}$, observe that $d\not\in H_{B}^{\Iup}$ for any $B\in \Gamma$, 
$d\not\in A^{\Iup}$ and any concept name $A\in {\sf sig}(\K_{1}\cup \K_{2})$, and $(d,d')\not\in R^{\Iup}$ for any $d'$
and role name $R\in {\sf sig}(\K_{1}\cup \K_{2})$. Thus, if $C\sqsubseteq D\in \T_{i}$ and $d\in C^{\Iup}$ then it follows
from the condition that $\T_{i}$ admits trivial models that $d\in D^{\Iup}$.
Thus $\Iup$ is a model of $\Tup_{i}$. Since $\Iup$
is a model of $\T_{\Gamma}^{\exists}$ by construction, it follows that $\Iup$
is a model of $\Tup_{i}\cup \T_{\Gamma}^{\exists}$. 

\smallskip

We collect further basic properties of the interpretations $\Iup$ and $\Idown$. In the formulation
and proofs of Facts~3--6 below, the homomorphisms are always constructed in such a way that individual names are preserved. For simplicity, we do not state this explicitly.  

\smallskip
\noindent
\emph{Fact} 3. Let $\I,\J$ be interpretations and $n>0$. 
If $\I$ is $n$-homomorphically embeddable into $\J$, then $\Iup$ is $n$-homomorphically embeddable into $\Jup$.

\smallskip
\noindent
\emph{Proof of Fact~$3$.}
Suppose $n>0$ and $\I$ is $n$-homomorphically embeddable into $\J$.
Let $\I'$ be a subinterpretation of $\Iup$ with $|\Delta^{\I'}|\leq n$.  For
the subinterpretation $\I''$ of $\I$ induced by $\Delta_{0}= \Delta^{\I} \cap
\Delta^{\I'}$, there exists a homomorphism $h_{0}$ from $\I''$ to $\J$. We extend
$h_{0}$ to a homomorphism $h$ from $\I'$ to $\Jup$ inductively as follows. 
Suppose $d\in \Delta^{\I'} \setminus \Delta^{\I}$ and $h(d)$ has not yet been
defined, but there is no $R_{B}$ or $S_{B}$-predecessor of $d$ in $\Iup$ for
which $h(d)$ has not been defined.  We distinguish three cases (which are
mutually exclusive by the construction of $\Iup$). If (\emph{i}) $h(d')$ has been defined for an $R_{B}$-predecessor $d'$ of $d$ in $\I'$, then choose an $R_{B}$-successor $e$ of $h(d')$ in $\Jup$ and set $h(d)=e$. Observe that such an
$R_{B}$-successor exists by the construction of $\Jup$. If (\emph{ii}) $h(d')$ has been
defined for an $S_{B}$-predecessor $d'$ of $d$ in $\I'$, then choose an
$S_{B}$-successor $e$ of $h(d')$ in $\Jup$ such that $e\in {Z_{B}}^{\Jup}$ and
set $h(d)=e$. Again such an $S_{B}$-successor exists by the construction of
$\Jup$. (\emph{iii}) There does not exist any $R_{B}$ or $S_{B}$-predecessor of $d$ in
$\I'$ for which $h$ has been defined. In this case, choose $h(d)$ arbitrarily
in $\Jup$ such that if $d\in {Z_{B}}^{\Iup}$, then $h(d)\in {Z_{B}}^{\Jup}$. Such a $d$ exists since ${Z_{B}}^{\Jup}\not=\emptyset$. The resulting map is a
homomorphism from $\I'$ to $\Jup$.

\smallskip
\noindent
\emph{Fact} 4. Let $\I$ be a model of $\Kup$, for $\K \in \{\K_1,\K_2\}$. Then
$\up{(\Idown)}$ is homomorphically embeddable into $\I$.

\medskip
\noindent
\emph{Proof of Fact~4.} Let $h_0$ be the identity mapping from $\Idown$
to $\I$ (observe that $\Delta^{\Idown} = \Delta^{\I}$). One can now extend $h_0$ to a
homomorphism $h$ from $\up{(\Idown)}$ to $\I$ in the same way as in the construction of $h$
in the proof of Fact~3 above. 

\smallskip
\noindent
\emph{Fact} 5. Let $\K\in \{\K_{1},\K_{2}\}$. If $\Mod$ is complete for $\K$, 
then $\{ \Iup \mid \I \in \Mod\}$ is complete for $\Kup$.

\smallskip
\noindent
\emph{Proof of Fact~$5$.}
Suppose $\J$ is a model of $\Kup$. By Proposition~\ref{prop:char}, it suffices to show that, for any $n>0$, there is
$\I\in \Mod$ such that $\Iup$ is $n$-homomorphically embeddable into $\J$. Fix $n>0$ and consider the interpretation $\Jdown$.
By Fact~1, $\Jdown$ is a model of $\K$ and so there exists a model $\I$ of $\K$ such that $\I$ is $n$-homomorphically
embeddable into $\Jdown$. But then, by Fact~3, $\Iup$ is $n$-homomorphically embeddable into $\up{(\Jdown)}$ which, by Fact~4,
itself is homomorphically embeddable into $\J$. Thus, $\Iup$ is $n$-homomorphically embeddable into
$\J$. By Fact~2, $\Iup$ is a model of $\Kup$.

\smallskip
\noindent
\emph{Fact} 6. Let $\Mod_{i}$ be families of interpretations with
$X^{\I}=\emptyset$, for all $\I\in \Mod_{i}$ and all concept and role names $X$ with $X\not\in
\sig(\K_{i})$, $i=1,2$. Then the following conditions
are equivalent:
\begin{itemize}
\item[(a)] $\prod \Mod_{2}$ is $n\Sigma$-homomorphically embeddable into $\prod \Mod_{1}$, for all $n>0$;

\item[(b)] $\prod \{ \Iup \mid \I \in \Mod_{2}\}$ is $n$-homomorphically embeddable into 
$\prod \{ \Iup \mid \I \in \Mod_{1}\}$, for all $n>0$.
\end{itemize}
\emph{Proof of Fact~$6$.}
Suppose $\Mod_{1}= \{\I_{i} \mid i \in I\}$ and $\Mod_{2}= \{ \J_{j} \mid j \in J\}$.  

Assume first that (a) holds and let $\J$ is a subinterpretation of 
$\prod \{ \Jup_{j} \mid j \in J\}$ with $|\Delta^{\J}|\leq n$.
We have to construct a homomorphism from $\J$ to $\prod \{ \Iup_{i} \mid i\in I\}$.
There is a $\Sigma$-homomorphism $h_{0}$ from the subinterpretation $\J'$ of $\prod \Mod_{2}$ induced by
$\Delta^{\J} \cap \Delta^{\prod \Mod_{2}}$ to $\prod \Mod_{1}$. By definition, $h_{0}$ is a homomorphism from the 
subinterpretation $\J''$ of $\prod \{ \Jup_{j} \mid j \in J\}$ induced by $\Delta^{\J} \cap \Delta^{\prod \Mod_{2}}$ 
to $\prod \{ \Iup_{i} \mid i\in I\}$ (the only difference between $\J'$ and $\J''$ is that $B^{\J''}=\emptyset$ for 
all $B\in \Gamma$). Following the proof of Fact~3, one can now expand $h_{0}$ to a homomorphism $h$ from 
$\J$ to $\prod \{ \Iup_{i} \mid i\in I\}$.

Conversely, assume that (b) holds and assume that $\J$ is a subinterpretation of $\prod \Mod_{2}$ with $|\Delta^{\J}|\leq n$.
We have to construct a $\Sigma$-homomorphism from $\J$ to $\prod \Mod_{1}$. 
There is a $\Sigma$-homomorphism $h_{0}$ from the subinterpretation $\J'$ of $\prod \{ \Jup_{j} \mid j \in J\}$
induced by $\Delta^{\J}$ to $\prod \{ \Iup_{i} \mid i\in I\}$. To obtain from $h_{0}$ the required $\Sigma$-homomorphism
$h$, we have to re-define $h_{0}(d)$ for any $d$ with $h_{0}(d)\in \Delta^{\prod \{ \Iup_{i} \,\mid\, i\in I\}}\setminus \Delta^{\prod\Mod_{1}}$. Consider such a $d$. Observe that $h_{0}(d)\not\in B^{\prod \{ \Iup_{i} \,\mid\, i\in I\}}$ for any concept name $B\in \Sigma$ and 
$h_{0}(d)$ is not in the range or domain of any $R^{\prod \{ \Iup_{i} \,\mid\, i\in I\}}$ for any role name $R\in \Sigma$. 
But then, since $h_{0}$ is a $\Sigma$-homomorphism, $d\not\in B^{\J}$ for any concept name $B\in \Sigma$ and $d$ is 
not in the range or domain of $R^{\J}$ for any role name $R\in \Sigma$. Thus, we can choose $h(d)$ arbitrarily in 
$\Delta^{\prod \Mod_{1}}$ and obtain the required $\Sigma$-homomorphism.

\smallskip
For CQs, Theorem~\ref{Thm:equivalence} now follows directly from  Theorem~\ref{crit:KB} (3) and 
Facts~5 and 6. Note that we can consider sets $\Mod_{i}$ of interpretations that are complete for $\K_{i}$ such that
$X^{\I}=\emptyset$, for all $\I\in \Mod_{i}$ and all concept and role names $X$ with $X\not\in \sig(K_{i})$, $i=1,2$.
For rCQs, we use Theorem~\ref{crit:KB} (4).
\end{proof}

Now, to prove undecidability of full signature (r)CQ entailment and inseparability, we apply Theorem~\ref{Thm:equivalence} to the KBs constructed in the proofs of Theorems~\ref{thm:undecidability-cq},
\ref{thm:undecidability-cqinsep} and~\ref{thm:undecidability-rcq}.
Note that the KBs $(\KoneCQ)^{\uparrow\Gamma}$ with $\Gamma=\sig(\KoneCQ\cup\KtwoCQ)\setminus\SigmaCQ$ and
$(\KonerCQ)^{\uparrow\Gamma}$ with $\Gamma=\sig(\KonerCQ \cup\KtworCQ)\setminus\SigmarCQ$ are still $\mathcal{EL}$-KBs
since $\SigmaCQ = \sig(\KoneCQ)$ and $\SigmarCQ= \sig(\KonerCQ)$.
\begin{theorem}\label{thm:undecidability-full}
\textup{(}i\textup{)} The problem whether an $\mathcal{EL}$ KB full signature-\textup{(}r\textup{)}CQ entails an $\ALC$ KB is undecidable.

\textup{(}ii\textup{)} Full signature-\textup{(}r\textup{)}CQ inseparability between $\mathcal{EL}$ and \ALC KBs is undecidable.
\end{theorem}


\section{Decidability of (r)UCQ-Entailment and Inseparability for \ALC KBs}\label{sec:ucq-kbs}

We show that, in sharp contrast to the case of (r)CQs,
$\Sigma$-(r)UCQ-entailment and inseparability of \ALC KBs are
decidable and 2\ExpTime-complete.  We start by proving a new model-theoretic criterion
for $\Sigma$-(r)UCQ entailment that replaces finite partial
$\Sigma$-homomorphisms by $\Sigma$-homomorphisms and uses the class of
regular forest-shaped models for the entailing KB $\K_1$ and the class
of forest-shaped models for the entailed KB $\K_2$.  We then encode
this characterisation into an emptiness problem for two-way
alternating parity automata on infinite trees (2APTAs) by constructing
a 2APTA that accepts (representations of) forest-shaped models of the
entailing KB into which there is no $\Sigma$-homomorphism from any
forest-shaped model of the entailed KB.  Rabin's result that such an
automaton accepts a regular model iff it accepts any model will then yield
the desired 2\ExpTime upper bound for (r)UCQ-entailment.
Matching lower bounds are proved by a reduction of the word problem for exponentially space bounded alternating Turing machines.
Finally, we show that the same tight complexity bounds still hold in the full signature case.
\subsection{Model-theoretic characterisation of \textup{(}r\textup{)}UCQ-entailment based on regular models}

We show that finite partial homomorphisms can be replaced by
homomorphisms in the characterisation of $\Sigma$-(r)UCQ entailment
between $\mathcal{ALC}$-KBs given in Theorem~\ref{crit:KB} if one
considers regular forest-shaped models of the entailing KB $\K_1$ and
forest-shaped models of the entailed KB $\K_2$. Recall that, by
Proposition~\ref{regularcomplete}, the class
$\Mod^{\it reg}_{\K}$ of regular forest-shaped models of outdegree
$\leq |\T|$ is complete for any $\ALC$-KB $\K=(\T,\A)$.  We also show that if $\Sigma$
contains all role names in the entailed KB, then $\Sigma$-rUCQ
entailment coincides with $\Sigma$-UCQ entailment. This allows us
to transfer our 2\ExpTime lower bound from the non-rooted to the
rooted case.
%
%
\begin{theorem}\label{crit:KBUCQ1}
Let $\K_1$ and $\K_2$ be \ALC-KBs and $\Sigma$ a signature.
\begin{description}\itemsep=0pt
\item[\rm (1)] $\K_{1}$ $\Sigma$-UCQ entails $\K_2$ iff, for any $\I_1\in \Mod^{\it reg}_{\K_{1}}$, there exists $\I_2 \in \Mod^{\it bo}_{\K_{2}}$
that is $\Sigma$-homomorphically embeddable into $\I_1$ preserving $\ind(\K_{2})$.
	
\item[\rm (2)] $\K_{1}$ $\Sigma$-rUCQ entails $\K_2$ iff, for any $\I_1\in \Mod^{\it reg}_{\K_{1}}$, there exists
$\I_2 \in \Mod^{\it bo}_{\K_{2}}$ that is con-$\Sigma$-homomorphically embeddable into $\I_1$
preserving $\ind(\K_{2})$.
\end{description}
\end{theorem}
\begin{proof}
We only prove (1) as the proof of (2) is similar. The direction ($\Leftarrow$) follows from
Theorem~\ref{crit:KB} and the facts that $\Mod^{\it reg}_{\K_{1}}$ and $\Mod^{\it bo}_{\K_{2}}$ are complete for $\K_{1}$
and $\K_{2}$, respectively (Propositions~\ref{forestcomplete} and~\ref{regularcomplete}).
To show ($\Rightarrow$), suppose that $\K_{1}$ $\Sigma$-UCQ entails $\K_2$ and let $\I_1\in \Mod^{\it reg}_{\K_{1}}$.
We construct $\I_{2}\in \Mod^{\it bo}_{\K_{2}}$ and a $\Sigma$-homomorphism $h$ from $\I_{2}$ to $\I_{1}$ preserving $\ind(\K_{2})$.
By Theorem~\ref{crit:KB} (1) and Propositions~\ref{forestcomplete} and~\ref{regularcomplete}, we have

\smallskip
\noindent
  ($\ast$) for any $n>0$,
  there exists a model $\J \in \Mod^{\it bo}_{\K_{2}}$
  that is $n\Sigma$-homomorphically embeddable into $\I_1$ preserving $\ind(\K_{2})$.

  \smallskip
  \noindent
  Denote by $\J_{|\leq n}$ the subinterpretation of an interpretation $\J\in \Mod^{\it bo}_{\K_{2}}$ induced by the domain elements of $\J$ connected to ABox individuals in $\ind(\K_{2})$ by paths of role names (possibly not in $\Sigma$) of length $\le n$. A
  \emph{$(\Sigma,n)$-homomorphism $h$ from $\J$ to $\I_{1}$ preserving $\ind(\K_{2})$} is a $\Sigma$-homomorphism preserving $\ind(\K_{2})$
  whose domain is a finite subinterpretation of $\J$ that contains $\J_{|\leq n}$. Let $\Xi_{n}$ be the class of pairs $(\J,h)$ with $\J \in
  \Mod^{\it bo}_{\K_{2}}$ and $h$ a $(\Sigma,n)$-homomorphism from $\J$ to
  $\I_{1}$. By ($\ast$), all $\Xi_{n}$ are non-empty. We may assume that for
  $(\I,h),(\J,f)\in \bigcup_{n\geq 0}\Xi_{n}$, if $\I_{|\leq n}$ and $\J_{|\leq n}$ are isomorphic then $\I_{|\leq
    n}=\J_{|\leq n}$, for all $n\geq 0$. We
  define classes $\Theta_{n} \subseteq \bigcup_{m\geq n}\Xi_{m}$, $n\geq 0$, with $\Theta_{0}\supseteq \Theta_{1}\supseteq \cdots$
  such that the following conditions hold:
  \begin{itemize}
  \item[(a)] $\Theta_{n} \cap \Xi_{m} \not=\emptyset$ for all $m\geq n$;
  \item[(b)] $\I_{|\leq n} = \J_{| \leq n}$ and $h_{|\leq n} = f_{|\leq n}$ for
    all $(\I,h),(\J,f)\in \Theta_{n}$ (here and below, $h_{|\leq n}$ denotes the restriction of
    $h$ to $\I_{|\leq n}$).
  \end{itemize}
  Let $\Theta_{0}$ be the set of all pairs $(\J,h)$ such that $(\J,h)\in
  \Xi_{0}$. Our assumptions imply that $\Theta_{0}$ has the
  properties (a) and (b) because $h(a^\J)=a^{\I}$ holds for every $\Sigma$-homomorphism $h$ preserving $\ind(\K_{2})$
  and all ABox individuals $a\in \ind(\K_{2})$. Suppose now that $\Theta_{n}$ is defined and satisfies (a) and (b).
  Define an equivalence relation $\sim$ on $\Theta_{n} \cap (\bigcup_{m\geq n+1}\Xi_{m})$ by setting
  $(\I,h) \sim (\J,f)$ if
  $\I_{|\leq n+1} = \J_{| \leq n+1}$ and, for all $x\in \Delta^{\J_{|\leq n+1}}\setminus\Delta^{\J_{|\leq n}}$, the
  following holds: $h(x)$ and $f(x)$ are always roots of isomorphic ditree subinterpretations of
  $\I_{1}$ and if, in addition, either $h(x)\in \ind(\K_{1})$ or $f(x)\in \ind(\K_{1})$, or there is a
  $y\in \Delta^{\J_{|\leq n}}$
  such that $x$ is an $R$-successor of $y$ in $\J_{| \leq n+1}$, for some role name $R\in \Sigma$, then $h(x)=f(x)$.
  By the finite outdegree and regularity of $\I_{1}$, the properties (a) and
  (b) of $\Theta_{n}$, and the finite outdegree of all $\J$ such that
  $(\J,h)\in \Xi_{n}$, the number of equivalence classes for~$\sim$ is finite. Hence there
  exists an equivalence class $\Theta$ satisfying (a). Clearly, we can modify the $(\Sigma,n)$-homomorphisms $h,f$ in
  the pairs $(\I,h),(\J,f)\in \Theta$ in such a way that $h(x)=f(x)$ holds for all $x\in \Delta^{\J_{|\leq n+1}}\setminus\Delta^{\J_{|\leq n}}$
  while preserving the remaining properties of $\Theta$. The resulting set of pairs satisfies  (a) and (b).
	
  We define an interpretation $\I_{2}$ as the union of all $\J_{|\leq n}$ such that there exists $(\J,h)\in \Theta_{n}$,
  $n\geq 0$:
\begin{eqnarray*}
\Delta^{\I_{2}} & = &  \bigcup_{n\geq 0} \big\{\Delta^{\J_{|\leq n}} \mid \exists h\;(\J,h)\in \Theta_{n}~\big\};\\
A^{\I_{2}} & = & \bigcup_{n\geq 0} \big\{A^{\J_{|\leq n}} \mid \exists h\;(\J,h)\in \Theta_{n}~\big\}, \text{ for all concept names $A$};\\
R^{\I_{2}} & = & \bigcup_{n\geq 0} \big\{R^{\J_{|\leq n}} \mid \exists h\;(\J,h)\in \Theta_{n}~\big\}, \text{ for all role names $R$}.
\end{eqnarray*}
Using Conditions~(a) and (b) and the fact that the sequence $\Theta_{0},\Theta_{1},\cdots$ is decreasing, it is
straightforward to show that $\I_{2}\in \Mod^{\it bo}_{\K_{2}}$. Define a function $h$ from $\I_{2}$ to $\I_{1}$ by setting
$$
h = \bigcup_{n\geq 0} \big\{~h_{|\leq n} \mid \exists \J\;(\J,h)\in \Theta_{n}~\big\}.
$$
It follows from Condition~(b) that $h$ is well defined. It is readily checked that $h$ is a $\Sigma$-homomorphism
from $\I_{2}$ to $\I_{1}$ preserving $\ind(\K_{2})$.
\end{proof}
\begin{lemma}\label{lem:rUCQ-CQ}
Let $\K_1$ and $\K_2$ be \ALC-KBs and $\Sigma$ a signature containing all role names in $\sig(\K_{2})$.
Then $\K_{1}$ $\Sigma$-UCQ entails $\K_{2}$ iff $\K_{1}$ $\Sigma$-rUCQ entails $\K_{2}$.
\end{lemma}
\begin{proof}
Suppose $\K_{1}$ $\Sigma$-rUCQ entails $\K_{1}$. By Theorem~\ref{crit:KBUCQ1}, it suffices to prove that, for any
$\I_1\in \Mod^{\it reg}_{\K_{1}}$, there exists $\I_2 \in \Mod^{\it bo}_{\K_{2}}$ that is $\Sigma$-homomorphically embeddable
into $\I_1$ preserving $\ind(\K_{2})$. By Theorem~\ref{crit:KBUCQ1}, we know that, for any
$\I_1\in \Mod^{\it reg}_{\K_{1}}$, there exists $\I_2 \in \Mod^{\it bo}_{\K_{2}}$ that is con-$\Sigma$-homomorphically
embeddable into $\I_1$ preserving $\ind(\K_{2})$.
Moreover, as $\Sigma$ contains the role names in $\sig(\K_{2})$, we may assume that every $u\in \Delta^{\I_{2}}$
is $\Sigma$-connected to the ABox $\A_{2}$ of $\K_{2}$. But then $\I_{2}$ is con-$\Sigma$-homomorphically
embeddable into $\I_{1}$ preserving $\ind(\K_{2})$ iff it is $\Sigma$-homomorphically embeddable into $\I_{1}$
preserving $\ind(\K_{2})$, as required.
\end{proof}

\subsection{2\ExpTime upper bound for \textup{(}r\textup{)}UCQ-entailment with respect
  to signature $\Sigma$}
\label{sect:2expupperKB}
We use the model-theoretic criterion of
Theorem~\ref{crit:KBUCQ1} to prove a 2\ExpTime upper bound for
(r)UCQ-entailment between $\mathcal{ALC}$-KBs with respect to a
signature $\Sigma$. We focus on the non-rooted case and then discuss
the modifications required for the rooted one. Let $\K_1$, $\K_2$ be
\ALC-KBs and $\Sigma$ a signature.  We aim to check if there is a
model $\Imc_1 \in \Mod^{\it reg}_{\K_{1}}$ into which no model
$\Imc_2\in \Mod^{\it bo}_{\K_{1}}$ is $\Sigma$-homomorphically
embeddable.  In the following, we construct an automaton \Amf that
accepts (a suitable representation of) the desired models $\Imc_1$. It
then remains to check whether the language $\L(\Amf)$ accepted by \Amf
is non-empty. Note that $\L(\Amf)$ contains also non-regular models,
but a well-known result by Rabin~\cite{Rabin1972} implies that, if
$\L(\Amf)$ is non-empty, then it contains a regular model, which is
sufficient for our purposes.

We use two-way alternating parity automata on infinite trees (\TWAPAs)
and encode forest-shaped interpretations as labeled trees to make them
inputs to \TWAPAs. Let \Nbbm\xspace denote the \emph{positive}
integers. A \emph{tree} is a non-empty (possibly infinite) set
$T \subseteq \Nbbm^*$ closed under prefixes.
The node $\varepsilon$ is the \emph{root} of $T$. As a convention, for $x \in \Nbbm^*$,  we
take $x \cdot 0 = x$ and $ (x \cdot i) \cdot -1 = x$. Note that
$\varepsilon \cdot -1$ is undefined.  We say that $T$ is
\emph{$m$-ary} if, for every $x \in T$, the set $\{ i \mid x \cdot i
\in T \}$ is of cardinality exactly $m$.  Without loss of generality, we assume that all
nodes in an $m$-ary tree are from $\{1,\dots,m\}^*$.

We use $[m]$ to denote the set $\{-1,0,\dots,m\}$ and, for any set $X$,
let $\Bmc^+(X)$ denote the set of all positive Boolean formulas over
$X$, i.e., formulas built using conjunction and disjunction over the
elements of $X$ used as propositional variables, and where the special
formulas $\mn{true}$ and $\mn{false}$ are allowed as well. For an
alphabet $\Gamma$, a \emph{$\Gamma$-labeled tree} is a pair $(T,L)$, where $T$ is a tree and $L:T \rightarrow \Gamma$ a node labelling function.
\begin{definition}\em
  A \emph{two-way alternating parity automaton \textup{(}\TWAPA\!\textup{)} on infinite
    $m$-ary trees} is a tuple $\Amf=(Q,\Gamma,\delta,q_0,c)$, where $Q$
  is a finite set of \emph{states}, $\Gamma$ a finite alphabet,
  $\delta\colon Q \times \Gamma \rightarrow \Bmc^+(\mn{tran}(\Amf))$ the
  \emph{transition function} with the
  set of \emph{transitions} $\mn{tran}(\Amf) = [m] \times Q$, $q_0 \in Q$ the \emph{initial state},
  and $c: Q \rightarrow \Nbbm$ is the \emph{acceptance condition}.
\end{definition}
Intuitively, a transition $(i,q)$ with $i>0$ means that a copy of the
automaton in state $q$ is sent to the $i$-th successor of the
current node. Similarly, $(0,q)$
means that the automaton stays at the current node and switches to
state $q$, and $(-1,q)$ indicates moving to the
predecessor of the current node.
\begin{definition}\em
  A \emph{run} of a \TWAPA $\Amf = (Q,\Gamma,\delta,q_0,c)$ on an infinite
  $\Gamma$-labeled tree $(T,L)$ is a $T \times Q$-labeled tree
  $(T_r,r)$ such that the following conditions are satisfied:
  \begin{itemize}\itemsep 0cm
  \item[--] $r(\varepsilon) = ( \varepsilon, q_0)$;

  \item[--] if $y \in T_r$, $r(y)=(x,q)$, and $\delta(q,L(x))=\vp$, then
    there is a (possibly empty) set $Q = \{ (c_1,q_1),\dots,(c_n,q_n)
    \} \subseteq \mn{tran}(\Amf)$ such that $Q$ satisfies $\vp$ and,
    for $1 \leq i \leq n$, $x \cdot c_i$ is a node
    in $T$, and there is a $y \cdot i \in T_r$ such that $r(y \cdot
    i)=(x \cdot c_i,q_i)$.
  \end{itemize}
  We say that $(T_r,r)$ is \emph{accepting} if in all infinite paths $
  y_1y_2\cdots$ of $T_r$,
  $
  \textsf{min}(\{c(q) \mid r(y_i)=(x,q)\text{ for infinitely many } i\})
  $
  is even.
  An infinite $\Gamma$-labeled tree $(T,L)$ is \emph{accepted} by \Amf if there
  is an accepting run of \Amf on $(T,L)$. We use $\L(\Amf)$ to denote the set of
  all infinite $\Gamma$-labeled trees accepted by \Amf.
\end{definition}
We require the following results from automata theory:
\begin{theorem}[\cite{Rabin1972,Vardi98}]\label{thm:autostuff1}~\\[-5mm]
  \begin{enumerate}\itemsep 0cm

  \item Given a \TWAPA \Amf, one can construct in polynomial time a
    \TWAPA \Bmf with $L(\Bmf)=\overline{L(\Amf)}$.

  \item Given a constant number of \TWAPAs $\Amf_1,\dots,\Amf_c$, one
    can construct in polynomial time a \TWAPA \Amf such that $L(\Amf)=
    L(\Amf_1) \cap \cdots \cap L(\Amf_c)$.

  \item Emptiness of \TWAPAs can be decided in time single exponential in the
    number of states.

  \item For any \TWAPA \Amf, $\L(\Amf)\neq\emptyset$ implies that
    $\L(\Amf)$ contains a regular tree.
  \end{enumerate}
\end{theorem}
Now, let $\Gamma$ be the alphabet with symbols from the set
$$
\{\mathit{root}, \mathit{empty}\}  \cup (\ind(\K_1) \times 2^{\mn{CN}(\Tmc_1)}) \cup
(\mathsf{RN}(\T_1) \times 2^{\mn{CN}(\Tmc_1)}),
$$
where $\mathsf{CN}(\T_i)$ (respectively, $\mathsf{RN}(\T_i)$) denotes the set of
concept (respectively, role) names in $\T_i$.
We represent forest-shaped models of $\Tmc_1$ as $m$-ary $\Gamma$-labeled trees,
with $m = \text{max}(|\T_1|, |\ind(\K_1)|)$. The root
node labeled with $\mathit{root}$ is not used in the representation. Each ABox
individual is represented by a successor of the root labeled with a symbol from
$\ind(\K_1) \times 2^{\mn{CN}(\T_1)}$; non-ABox elements are represented by
nodes deeper in the tree labeled with a symbol from
$\mathsf{RN}(\T_1) \times 2^{\mn{CN}(\T_1)}$. The label $\mathit{empty}$ is
used for padding to make sure that every tree node has exactly $m$ successors.

We call a $\Gamma$-labeled tree \emph{proper} if it satisfies the following
conditions:
\begin{itemize}\itemsep 0cm

\item[--] the root is labeled with $\mathit{root}$;

\item[--] for every $a \in \ind(\A_1)$, there is exactly one successor of
  the root that is labeled with a symbol from $\{a\} \times
  2^{\mn{CN}(\T_1)}$; all of the remaining successors of the root are labeled
  with $\mathit{empty}$;

\item[--] all other nodes are labeled with a symbol from $\mathsf{RN}(\T_1) \times
  2^{\mn{CN}(\T_1)}$ or with $\mathit{empty}$;

\item[--] if a node is labeled with $\mathit{empty}$, then so are all of its successors.
\end{itemize}
A proper $\Gamma$-labeled tree $(T,L)$ represents the following interpretation~$\I_{(T,L)}$:
$$
\begin{array}{@{}r@{~}c@{~}l}
  \Delta^{\I_{(T,L)}} &=& \mn{ind}(\Amc_1) \cup \{ x \in T \mid |x|>1
                          \text{ and } L(x) \neq \mathit{empty}\},\\[1mm]
  A^{\I_{(T,L)}} &=& \{ a \mid \exists x \in T: L(x)=(a,\type) \text{ with
  } A \in \type \}  \cup
  \{ x \in T \mid L(x)=(R,\type) \text{ with } A \in \type\},
  \\[1mm]
  R^{\I_{(T,L)}} &=& \{(a,b) \mid R(a,b) \in \A_1\} \cup{}\\[1mm]
  &&\{(a,ij) \mid ij \in T,\ L(i)=(a,\type_1), \text{ and }
  L(ij)=(R,\type_2) \} \cup{} \\[1mm]
  &&\{(x,xi) \mid xi \in T,\  L(x)=(S,\type_1), \text{ and }
  L(xi)=(R,\type_2) \}.
\end{array}
$$
Note that $\I_{(T,L)}$ is a forest-shaped interpretation of outdegree
at most $|\T_{1}|$ that satisfies all required conditions to qualify
as a \fosh model of $\Tmc_1$ except that it need not satisfy $\Tmc_1$.
In addition, the interpretation $\I_{(T,L)}$ is regular iff the tree
$(T,L)$ is regular (has, up to isomorphisms, only finitely many rooted
subtrees). Conversely, every model $\I\in \Mod^{\it bo}_{\K_{1}}$ can
be represented as a proper $m$-ary $\Gamma$-labeled tree.
Note that the assertions from $\Amc_1$ are not explicitly represented
in $(T,L)$, but readded in the construction of $\I_{(T,L)}$.

\medskip

The required \TWAPA \Amf is assembled from the following three automata:
\begin{itemize}

\item[--] a \TWAPA $\Amf_0$ that accepts an $m$-ary $\Gamma$-labeled tree iff it is proper;

\item[--] a \TWAPA $\Amf_1$ that accepts a proper $m$-ary $\Gamma$-labeled tree
  $(T,L)$ iff $\Imc_{(T,L)}$ is a model of $\Tmc_1$;

\item[--] a \TWAPA $\Amf_2$ that accepts a proper $m$-ary $\Gamma$-labeled tree
  $(T,L)$ iff there is a model $\Imc_2 \in \Mod^{\it bo}_{\K_{2}}$ that is
  $\Sigma$-homomorphically embeddable into $\Imc_{(T,L)}$ preserving $\ind(\K_{2})$.
\end{itemize}
The following result shows that we would achieve our goal once we have constructed
$\Amf_{0}$, $\Amf_{1}$, and $\Amf_{2}$ and then define \Amf in such a way that
$\L(\Amf) = \L(\Amf_0) \cap \L(\Amf_1) \cap \overline{\L(\Amf_2)}$.
\begin{lemma}\label{lem:equivalent} The following conditions are equivalent:
\begin{enumerate}
\item[\rm (1)] $\L(\Amf_0) \cap \L(\Amf_1) \cap \overline{\L(\Amf_2)} = \emptyset$,
\item[\rm (2)] for each model $\Imc_1 \in \Mod^{\it bo}_{\K_{1}}$, there exists a model
$\Imc_2 \in \Mod^{\it bo}_{\K_{2}}$ that is $\Sigma$-homomorphically embeddable into $\Imc_1$ preserving $\ind(\K_{2})$,
\item[\rm (3)] for each regular model $\Imc_1 \in \Mod^{\it bo}_{\K_{1}}$, there exists a model
$\Imc_2 \in \Mod^{\it bo}_{\K_{2}}$ that is $\Sigma$-homomorphically embeddable into $\Imc_1$ preserving $\ind(\K_{2})$,
\item[\rm (4)] $\Kmc_{1}$ $\Sigma$-UCQ-entails $\Kmc_{2}$.
\end{enumerate}
\end{lemma}
\begin{proof}
(1) $\Leftrightarrow$ (2) follows from the properties of $\Amf_{0}$, $\Amf_{1}$, $\Amf_{2}$;
(1) $\Leftrightarrow$ (3) follows from the properties of $\Amf_{0}$, $\Amf_{1}$, $\Amf_{2}$,
and Rabin's Theorem~\cite{Rabin1972}; and
(3) $\Leftrightarrow$ (4) is Theorem~\ref{crit:KBUCQ1}.
\end{proof}
%
%
The construction of $\Amf_0$ is trivial and left to the reader. The construction of $\Amf_1$ is quite standard \cite{CalvaneseEO07}. Let $C_{\T_1}$
be the negation normal form (NNF) of the concept $$\midsqcap_{C \sqsubseteq D
  \in \Tmc_1} (\neg C \sqcup D)$$ and let $\mathsf{cl}(C_{\T_1})$ denote the set of
subconcepts of $C_{\T_1}$, closed under single negation.
Now, the
\TWAPA $\Amf_{1} = ( Q, \Gamma, \delta, q_0 , c )$ is defined by
setting
$$
Q = \{q_0, q_1, q_\emptyset\} \cup \{q^{a,C}, q^{C}, q^{R}, q^{\neg R}
\mid a \in \mn{ind}(\Amc_1), \ C \in \mathsf{cl}(C_{\T_1}), \ R \in
\mathsf{RN}(\T_1)\}
$$
and defining the transition function $\delta$ as follows:
%
$$
\begin{array}{@{}r@{~~}r@{}}
\begin{array}{@{}r@{~}c@{~}l@{}}
  \delta(q_0,\mathit{root}) &=& %
  \displaystyle \bigwedge_{i=1}^m (i, q_1),
  \\
  \delta(q_1,\ell) &=&
  \displaystyle
  ((0, q_\emptyset) \lor (0,q^{C_{\T_1}})) \land \bigwedge_{i=1}^m (i, q_1),
  \\
  \delta(q^{\exists R.C}, (a,U)) &=&
  \displaystyle
  \bigvee_{i=1}^m ((i, q^{R}) \wedge (i, q^{C})) \lor
  \bigvee_{R(a,b) \in \A_1}(-1,q^{b,C}),
  \\[5mm]
  \delta(q^{\forall R.C}, (a,U)) &=&
  \displaystyle
  \bigwedge_{i=1}^m ((i,q_{\emptyset}) \lor (i, q^{\neg R}) \lor (i, q^{C})) \land
  \bigwedge_{R(a,b) \in \A_1}(-1,q^{b,C}),
  \\[5mm]
  \delta(q^{\exists R.C}, (S,U)) &=&
  \displaystyle
  \bigvee_{i=1}^m ((i, q^{R}) \wedge (i, q^{C})),
  \\[4mm]
  \delta(q^{\forall R.C}, (S,U)) &=&
  \displaystyle
  \bigwedge_{i=1}^m ((i,q_{\emptyset}) \lor (i, q^{\neg R}) \lor (i, q^{C})),
\end{array}
&
\begin{array}{@{}r@{~}c@{~}l@{}}
  \delta(q^{C \sqcap C'}, (x,U)) &=&
  (0, q^{C}) \land (0, q^{C'}), \\
  \delta(q^{C \sqcup C'}, (x,U)) &=&
  (0, q^{C}) \lor (0, q^{C'}), \\[1mm]
  \delta(q^{a,C},\mathit{root}) &=&
  \displaystyle
  \bigvee_{i=1}^m  (i, q^{a,C}),
  \\
  \delta(q^{a,C}, (a,U)) &=& (0,q^C),
  \\[1mm]
  \delta(q^{A}, (x,U)) &=& \mathsf{true}, \text{ if }A\in U,
  \\
  \delta(q^{\neg A}, (x,U)) &=& \mathsf{true}, \text{ if }A\notin U,
  \\
  \delta(q^{R}, (R,U)) &=& \mathsf{true},
  \\
  \delta(q^{\neg R}, (S,U)) &=& \mathsf{true},  \text{ if }R\neq S,
  \\
  \delta(q_\emptyset,\mathit{empty}) &=& \mn{true},
  \\[7mm]
  \multicolumn{3}{l}{\delta(q,\ell) = \mathsf{false}, %
  \quad \text{ for all other }q \in Q,~\ell\in \Gamma.}
\end{array}
\end{array}
$$
Here $x$ in the labels $(x,U)$ stands for an individual $a$ or for a role name
$S$, and $\ell$ in the second transition is any label from $\Gamma$. The acceptance condition $c$
is trivial ($c(q)=0$ for all $q\in Q$). It is standard to show that $\Amf_1$ accepts the desired tree language.

To construct $\Amf_2$, we use the notation introduced in the proof of Proposition~\ref{regularcomplete}.
Note that the set $\types(\T_2)$ of $\T_{2}$-types can be computed in time single exponential in $|\Kmc_2|$.
A \emph{completion of} $\K_2$ is a function $\tau \colon \ind(\A_2) \to \types(\T_2)$ such that, for any
$a \in \ind(\A_2)$, the KB
$$
\big(\T_2 \cup \bigcup_{a \in \ind(\A_2), C \in \tau(a)} \hspace*{-3mm} \{A_a \sqsubseteq C\}, \ ~\A \cup \bigcup_{a \in \ind(\A_2)} \{A_a(a)\}\big)
$$
is consistent, where $A_a$ is a fresh concept name for each $a \in\ind(\A_2)$.
Denote by $\compl(\K_2)$ the set of all completions of~$\K_2$; it can be
computed in time single exponential in $|\K_2|$.

We now construct the \TWAPA $\Amf_2$. It is easy to see that if there
is an assertion $R(a,b) \in \Amc_2 \setminus \Amc_1$ with $R \in
\Sigma$, then no model of $\Kmc_2$ is $\Sigma$-homomorphically
embeddable into a \fosh model of $\Kmc_1$ preserving $\ind(\K_{2})$. In this case, we choose
$\Amf_2$ so that it accepts the empty language.
Suppose there is no such assertion. It is easy to see
that any model $\Imc_2$ of $\Kmc_2$ such that some
$a \in \ind(\K_2) \setminus \ind(\K_1)$ occurs in $S^{\Imc_2}$, for
some symbol $S\in\Sigma$, is not
$\Sigma$-homomorphically embeddable into a \fosh model of $\Kmc_1$
preserving $\ind(\K_{2})$.  For this reason, we should only consider
completions of $\Kmc_2$ such that, for all
$a \in \ind(\K_2) \setminus \ind(\K_1)$, $\tau(a)$ contains no
$\Sigma$-concept names and no existential restrictions $\exists R . C$
with $R \in \Sigma$. We use
$\compl_{\mn{ok}}(\Kmc_2)$ to denote the set of all such completions.
%
%
We define the \TWAPA
$\Amf_{2}= (Q, \Gamma, \delta, q_0, c)$ by setting
$$
\begin{array}{r@{~}c@{~}l}
Q &=& \{q_0\} \cup
\{q^{a,\type}, q^{R,\type}, f^{\type} \mid a \in \ind(\Amc_1),\
\type \in \types(\T_2),\ R \in \mathsf{RN}(\T_2) \cap \Sigma\}
\end{array}
$$
%
%
and defining the transition function $\delta$ as follows:
$$
\begin{array}{r@{\,}c@{\,}l@{\,}l}
  %
  \delta(q_0,\mathit{root}) &=&
  \displaystyle
  \bigvee_{\tau \in \compl_{\mn{ok}}(\K_2)} \ %
  \bigwedge_{a \in \ind(\A_2) \cap \ind(\A_1)} \ %
  \bigvee_{i=1}^{m} (i, q^{a,\tau(a)}), \\[6mm]
  \delta(q^{a,\type}, (a,U)) &~=~&
  \displaystyle
  \bigwedge_{\substack{\exists R.C \in \type\\R \in \Sigma}} \ %
  \bigvee_{\boldsymbol{s}\in \succ_{\exists R.C}(\type)}
  \Big(\bigvee_{i=1}^m
    (i, q^{R,\boldsymbol{s}}) \lor%
    \bigvee_{R(a,b) \in \Amc_1} (-1, q^{b,\boldsymbol{s}})
  \Big) ~
  \wedge\bigwedge_{\substack{\exists R.C \in \type\\R \notin \Sigma}} \ %
  \bigvee_{\boldsymbol{s}\in \succ_{\exists R.C}(\type)} (0, f^{\boldsymbol{s}}),
  \\[8mm]
  \delta(q^{S,\type}, (S,U)) &=& 
  \displaystyle
\bigwedge_{\substack{\exists R.C \in \type\\R \in \Sigma}} \ %
  \bigvee_{\boldsymbol{s}\in \succ_{\exists R.C}(\type)} \ %
  \bigvee_{i=1}^m (i, q^{R,\boldsymbol{s}}) ~%
  \wedge\bigwedge_{\substack{\exists R.C \in \type\\R \notin \Sigma}} \ %
  \bigvee_{\boldsymbol{s}\in \succ_{\exists R.C}(\type)}%
  (0, f^{\boldsymbol{s}}), %
\end{array}
$$
where the last two transitions are subject to the conditions that
every $\Sigma$-concept name in $\type$ is also in $U$,
\begin{align*}
  \delta(f^{\type}, (v,U)) &~=~ (0,q^{v,\type}) \lor \bigvee_{i=1}^m (i,f^{\type}) \lor (-1,f^{\type}),\\%
  \delta(f^{\type}, \mathit{root}) &~=~ \bigvee_{i=1}^m (i,f^{\type}), \\%
  \label{hom-anon}
  \delta(q^{a,\type}, \mathit{root}) &~=~ \bigvee_{i=1}^m (i,q^{a,\type}), \\
  \delta(q,\ell) &~=~ \mathsf{false},
  \quad \text{ for all other }q \in Q \text{ and } \ell\in \Gamma,
\end{align*}
where $v$ is an individual $a$ or  a role name $S$. Note that
the states $f^{\type}$ are used to find non-deterministically the
homomorphic image of $\Sigma$-disconnected successors in the tree. Finally, we
set 
$c(q) = 0$ for $q \in \{q_0,q^{a,\type},q^{R,\type}\}$ and $c(f^{\type})=1$.
\begin{lemma}
  $(T,L) \in \L(\Amf_2)$ iff there is a model $\Imc_2\in  \Mod^{\it bo}_{\K_{2}}$ such that
  $\Imc_2$ is $\Sigma$-homomorphically embeddable into $\I_{(T,L)}$ preserving $\ind(\K_{2})$.
\end{lemma}
\begin{proof}
  ($\Rightarrow$) Given an accepting run $(T_r,r)$ for $(T,L)$, we can
  construct a model $\Imc_2\in  \Mod^{\it bo}_{\K_{2}}$ and a
  $\Sigma$-homomorphism $h$ from $\Imc_2$ to $\I_{(T,L)}$.
  %
  Intuitively, the type $\type$ of $a$ in $\Imc_2$ is given by the
  child $y_a$ of $\varepsilon$ in $T_r$ with
  $r(y_a) = (x_a,q^{a,\type})$, and the tree-shaped part of $\Imc_2$
  is defined inductively as follows. If an element $d$ of $\Imc_2$ has
  type $\type$ and $y_d \in T_r$, then for each
  $\exists R.C \in \type$ such that $R \in \Sigma$, $d$ has an
  $R$-successor $d'$ whose type
  $\boldsymbol{s} \in\succ_{\exists R.C}(\type)$ is determined by a
  child $y_{d'}$ of $y_d$ in $T_r$ with
  $r(y_{d'}) = (x_{d'}, q^{v,\boldsymbol{s}})$, for some $v$. Moreover,
  for each $\exists R.C \in \type$ such that $R \notin \Sigma$, $d$
  has an $R$-successor $d'$ whose type
  $\boldsymbol{s} \in\succ_{\exists R.C}(\type)$ is determined by the
  descendants $y_1,\dots,y_{n},y_{d'}$ of $y_d$ in $T_r$, $n\geq 1$,
  with $r(y_i) = (x_i, f^{\boldsymbol{s}})$, $1 \leq i \leq n$, and
  $r(y_{d'}) = (x_{d'},q^{v,\boldsymbol{s}})$ for some $v$.
  The homomorphism $h$ is defined by taking the identity on
  individual names, and setting $h(d)=a$ if
  $r(y_d) = (x_d,q^{a,\type})$, and $h(d)=x_d$ if
  $r(y_d) = (x_d,q^{R,\type})$.
  Observe that due to the accepting condition for which
  $c(f^{\boldsymbol{t}}) = 1$, the automaton cannot remain forever in
  the states $f^{\boldsymbol{t}}$, and so has to eventually find the
  homomorphic image of $\Sigma$-disconnected successors in the tree.

  ($\Leftarrow$) Suppose there is a model $\Imc_2 \in \Mod^{\it bo}_{\K_{2}}$ such
  that $\Imc_2$ is $\Sigma$-homomorphically embeddable into
  $\I_{(T,L)}$ preserving $\ind(\K_{2})$. It is straightforward to construct an
  accepting run for $(T,L)$ by using $\Imc_2$ as a guide.
\end{proof}
The constructed automaton \Amf has only single exponentially many states.
Thus, by Theorem~\ref{thm:autostuff1}, checking its emptiness can be done in 2\ExpTime.
\begin{theorem}\label{thm:2explower}
The problem whether an \ALC KB $\Sigma$-UCQ entails an \ALC KB is decidable in 2\ExpTime.
\end{theorem}
We now briefly discuss the modifications needed in the automata construction to
obtain the same upper bound for $\Sigma$-rUCQ entailment. In the rooted case,
we modify the automaton $\Amf_2$ in such way that it does not attempt to
construct a $\Sigma$-homomorphism when reaching $\Sigma$-disconnected
successors. Thus, the set $Q$ of states of $\Amf_2$ does not contain
$f^{\type}$, and the transition function is simplified accordingly. In
particular, in the definition of the transitions $\delta(q^{x,\type}, (x,U))$,
for $x\in \{a,S\}$, the second set of conjunctions for $\exists R.C \in \type$
and $R \notin \Sigma$ is omitted.
\begin{theorem}\label{thm:2explowerrooted}
The problem whether an \ALC KB $\Sigma$-rUCQ entails an \ALC KB is decidable in 2\ExpTime.
\end{theorem}
Our characterisation of $\Sigma$-(r)UCQ entailment using automata also allows
us to formulate Theorem~\ref{crit:KBUCQ1} without the restriction to regular
interpretations. For UCQs, this is a consequence of Lemma~\ref{lem:equivalent}
and, for rUCQs, one can prove an analogous lemma.
%
\begin{theorem}\label{UCQ-full-hom}
Let $\K_{1}$ and $\K_{2}$ be $\mathcal{ALC}$ KBs and $\Sigma$ a signature.
\begin{description}\itemsep=0pt
\item[\rm (1)] $\mathcal{K}_{1}$ $\Sigma$-UCQ entails $\K_{2}$ iff, for any $\Imc_{1}\in \Mod^{\it bo}_{\K_{1}}$,
there exists $\I_{2}\in \Mod^{\it bo}_{\K_{2}}$ that is $\Sigma$-homomorphically embeddable into $\Imc_{1}$ preserving $\ind(\K_{2})$.
\item[\rm (2)] $\mathcal{K}_{1}$ $\Sigma$-rUCQ entails $\K_{2}$ iff, for any $\Imc_{1}\in \Mod^{\it bo}_{\K_{1}}$,
there exists $\I_{2}\in \Mod^{\it bo}_{\K_{2}}$ that is con-$\Sigma$-homomorphically embeddable into $\Imc_{1}$ preserving $\ind(\K_{2})$.
\end{description}
\end{theorem}

%
%
%
\subsection{2\ExpTime lower bound for \textup{(}r\textup{)}UCQ-entailment and
  inseparability with respect to a signature}
\label{sect:first2explower}
We first show a 2\ExpTime lower bound for $\Sigma$-UCQ entailment between \ALC KBs by giving a polynomial
reduction of the word problem for exponentially space bounded alternating Turing machines. Using
Lemma~\ref{lem:rUCQ-CQ}, we obtain the same lower bound for rUCQs. We then modify the KBs from the entailment case
to obtain 2\ExpTime lower bounds for $\Sigma$-(r)UCQ inseparability.

An \emph{alternating Turing machine} (ATM) is a quintuple of the form $M =
(Q,\Gamma_I,\Gamma,q_0,\Delta)$, where the set of \emph{states} $Q =
Q_\exists \uplus Q_\forall \uplus \{q_a\} \uplus \{q_r\}$ consists of
\emph{existential states} in $Q_\exists$, \emph{universal states} in
$Q_\forall$, an \emph{accepting state} $q_a$, and a \emph{rejecting
  state} $q_r$; $\Gamma_I$ is the \emph{input alphabet} and $\Gamma \supseteq \Gamma_I$
the \emph{work alphabet} containing a \emph{blank symbol} $\square$; $q_0 \in Q_\exists \cup
Q_\forall$ is the \emph{starting} state; and the \emph{transition
  relation} $\Delta$ is of the form
$$
  \Delta \; \subseteq \; (Q \setminus \{ q_a,q_r \}) \times \Gamma \times Q \times \Gamma
  \times \{ -1,+1 \}.
$$
We write $\Delta(q,\sigma)$ to denote $\{ (q',\sigma',m) \mid
(q,\sigma,q',\sigma',m) \in \Delta \}$ and assume without loss of generality that every
set $\Delta(q,\sigma)$ contains exactly two elements.
A \emph{configuration} of $M$ is a word $wqw'$ with $w,w' \in
\Gamma^*$ and $q \in Q$. The intended meaning is that the tape
contains the word $ww'$, the machine is in state $q$, and the head is
on the symbol just after $w$. The \emph{successor configurations} of a
configuration $wqw'$ are defined in the usual way in terms of the
transition relation~$\Delta$. A \emph{halting configuration} is of the
form $wqw'$ with $q \in \{ q_a, q_r \}$. A configuration $wqw'$ is
\emph{accepting} if it is a halting configuration and $q=q_a$ or $q
\in Q_\forall$ and all of its successor configurations are accepting or
$q \in Q_\exists$ and there is an accepting successor configuration.
$M$ \emph{accepts} input $w$ if the \emph{initial configuration}
$q_0w$ is accepting. There is an exponentially space bounded ATM $M$ whose word problem is
{\sc 2Exp\-Time}-hard.

%
%

%
%
\begin{theorem}
\label{thm:ucq-lower-bound}
The problem whether an \ALC KB $\Kmc_1$ $\Sigma$-\textup{(}r\textup{)}UCQ entails an \ALC KB
$\Kmc_2$ is 2\ExpTime-hard.
\end{theorem}
\begin{proof}
We only consider the non-rooted case; the rooted case follows using
Lemma~\ref{lem:rUCQ-CQ} since the signature $\Sigma$ in our proof contains
all the role names used in the entailed KB $\K_{2}$. The proof is by reduction of the word problem for exponentially space bounded
ATMs. Let $M = (Q,\Gamma_I,\Gamma,q_0,\Delta)$ be such an ATM. We may assume
without loss of generality that
\begin{itemize}
\item[--] the length of every (path in a) computation of $M$ on $w \in {\Gamma_I}^n$ is bounded by $2^{2^{n}}$;
\item[--] all the configurations $wqw'$ in such computations satisfy $|ww'| \leq 2^{n}$, see \cite{ChandraKS81};
\item[--] $M$ never attempts to move left of the tape cell on which the head was located in the initial configuration;
\item[--] the two transitions contained in $\Delta(q,\sigma)$ are ordered and use $\delta_0(q,\sigma)$
and $\delta_1(q,\sigma)$ to denote the first and second transition in $\Delta(q,\sigma)$, respectively;
\item[--] the existential and universal states strictly alternate: any
transition from an existential state leads to a universal state, and vice
versa;
\item[--] $q_0 \in Q_\exists$;
\item[--] any run of $M$ on every input stops either in $q_{a}$ or $q_{r}$.
\end{itemize}

Let $w \in {\Gamma_I}^n$ be an input to $M$.  We construct \ALC TBoxes
$\Tmc_1$ and $\Tmc_2$ and a signature $\Sigma$ such that $M$ accepts $w$ iff
there is a model $\I_1$ of $\K_1 = (\Tmc_1,\{A(a)\})$ such that no model of
$\K_2 = (\Tmc_2,\{A(a)\})$ is $\Sigma$-homomorphically embeddable into $\Imc_1$.
In our construction, the models of $\K_1$ encode all possible sequences of
configurations of $M$ starting from the initial one and forming a full binary
tree.  Hence, most of the models do not correspond to correct runs of $M$. The
branches of the models stop at the accepting and rejecting states.
On the other hand, the models of $\K_2$ encode all possible local defects (such
as invalid configurations or incorrect executions of the transition function),
after the first step of the machine, or after the second step, and so on, or
detect valid (hence without local defects) but rejecting runs.
Then, if there exists a finite model $\I_1$ of $\K_1$ such that no model of
$\K_2$ is $\Sigma$-homomorphically embeddable into $\I_1$ preserving $\{a\}$,
we have that $\I_1$ represents a valid accepting computation of $M$.

The signature $\Sigma$ contains the following symbols:
\begin{itemize}\itemsep 0cm

\item[--] the concept name $A$;

\item[--] the concept names $A_0,\dots,A_{n-1},\overline{A}_0,\dots,\overline{A}_{n-1}$ that serve as bits in the binary
  representation of a number between 0 and $2^{n}-1$, identifying the position
  of tape cells inside configurations ($A_0$, $\overline{A}_0$ represent the
  lowest bit);

\item[--] the concept names $A_\sigma$, for $\sigma \in \Gamma$;

\item[--] the concept names $A_{q,\sigma}$, for $\sigma \in \Gamma$ and $q \in
  Q$;

\item[--] the concept names $X_0,X_1$ to distinguish the two successor configurations;

\item[--] the role names $R$, $S$; $R$ is used to connect the successor configurations, whereas $S$ is used to connect the root of each configuration with symbols that occur in the cells of it.
\end{itemize}
Also, we make use of the following auxiliary symbols that are not in $\Sigma$:
\begin{itemize}\itemsep 0cm
\item[--] $B_i$, $\overline{B}_i$, $B_\sigma$, $B_{q,\sigma}$; $G_i$, $\overline{G}_i$, $G_\sigma$, $G_{q,\sigma}$; $C_\sigma$, $C_{q,\sigma}$, for $\sigma \in \Gamma$, $q \in Q$, and $0 \leq i \leq n-1$,

\item[--] $L_i^\ell$, $D_{\textit{trans}}^\ell$, for $\ell\in\{0,1\}$  and $0 \leq i \leq n-1$,

\item[--] $K_0$, $K$, $\textit{Stop}$, $Y$, $D$, $\overline{D}$, $D_{\textit{conf}}$, $D_{\textit{trans}}$,  $D_{\textit{rej}}$, $D_{\textit{rej}}^\exists$, $D_{\textit{rej}}^\forall$, $\textit{Counter}_m$ for $m\in\{-1, 0, +1\}$, $E_B$, $E_G$.
\end{itemize}

\smallskip\noindent\textbf{TBox $\T_1$.} 
Each model of $\K_1$ encodes a binary tree of configurations of $M$. Thus,
$\T_1$ contains the axioms:
\[
\begin{array}{r@{~}l}
  A \sqsubseteq & \exists R. (X_0 \sqcap K) \sqcap \exists R. (X_1 \sqcap K),\\
  (X_0 \sqcup X_1) \sqcap \neg \textit{Stop} \sqsubseteq& \exists R. (X_0 \sqcap K) \sqcap \exists R. (X_1 \sqcap K),\\
  K \sqsubseteq& \exists S. (L_0^0 \sqcap \overline{A}_0) \sqcap \exists S. (L_0^1 \sqcap A_0),\\
  L_i^\ell \sqsubseteq& \exists S. (L_{i+1}^0 \sqcap \overline{A}_{i+1}) \sqcap \exists S. (L_{i+1}^1 \sqcap A_{i+1}),\quad \text{ for }0\leq i\leq n-2,\ \ell\in\{0,1\},\\
  L_{n-1}^\ell \sqsubseteq& \bigsqcup_{\sigma \in \Gamma} (A_\sigma \sqcup \bigsqcup_{q \in Q} A_{q,\sigma}),\\
  A_{\sigma_1} \sqcap A_{\sigma_2} \sqsubseteq& \bot, \quad \text{ for }\sigma_1\neq\sigma_2,\\
  A_{\sigma_1} \sqcap A_{q_2,\sigma_2} \sqsubseteq& \bot,\\
  A_{q_1,\sigma_1} \sqcap A_{q_2,\sigma_2} \sqsubseteq& \bot, \quad \text{ for }(q_1,\sigma_1)\neq(q_2,\sigma_2),\\
  A_i \sqsubseteq& \forall S. A_i, \qquad
  \overline{A}_i \sqsubseteq \forall S. \overline{A}_i,\\
  \exists S^n. A_{q_a,\sigma} \sqsubseteq& \textit{Stop}, \qquad
  \exists S^n. A_{q_r,\sigma} \sqsubseteq \textit{Stop},
\end{array}
\]
where $\exists S^n.A$ is an abbreviation for the concept $\exists S. \exists S. \dots \exists S.A$ ($S$ occurs $n$ times). The models of $\K_1$ look as in Fig.~\ref{fig:models-of-k1},
\begin{figure}
\begin{center}
  \begin{tikzpicture}[xscale=1.4, yscale=1.2]
    \node[point, constant, label=left:{$a$}, label=below:{\small$K_0$}] (a) at
    (0,0) {};

    \foreach \al/\x/\y/\conc/\wh in {%
      x0/1/0.5/X_0/below,%
      x1/1/-0.5/X_1/above,%
      x00/2/0.9/X_0/above,%
      x01/2/0.3/X_1/above,%
      x10/2/-0.3/X_0/below,%
      x11/2/-0.9/X_1/below%
    }{ \node[point, label=\wh:{\scriptsize$\conc$}] (\al) at (\x,\y) {}; }

    \foreach \al/\down in {%
      a/1, x0/-1, x1/1%
    }{ \draw[gray, very thick] (\al) -- ++(0.4,\down*-0.7) -- ++(-0.8,0) -- (\al); }

    \foreach \from/\to/\wh in {%
      a/x0/above, a/x1/below, x0/x00/above, x0/x01/below, x1/x10/above,
      x1/x11/below%
    }{ \draw[role] (\from) -- node[\wh] {\scriptsize $R$} (\to); }

    \foreach \from in {%
      x00, x01, x10, x11%
    }{ \draw[dashed] (\from) -- ++(0.7,0); }
  \end{tikzpicture}
\end{center}
\caption{The structure of the models of $\K_1$.}
\label{fig:models-of-k1}
\end{figure}
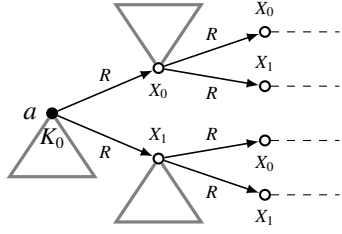
where the grey triangles are the trees encoding configurations rooted at $K$
except for the initial configuration. These trees are binary trees of depth
$n$, where each leaf represents a tape cell. For $w = \sigma_1\cdots \sigma_n$,
the initial configuration is encoded at $a$ by the following $\T_1$-axioms:
\[
\begin{array}{r@{~}l}
  A \sqsubseteq &  \exists S. (L_0^0 \sqcap \overline{A}_0 \sqcap K_0) \sqcap \exists S. (L_0^1 \sqcap A_0 \sqcap K_0),\\
  K_0 \sqsubseteq& \forall S. K_0,\\
  K_0 \sqcap (\textsf{val}_A=0) \sqsubseteq& A_{q_0,\sigma_1},\\
  K_0 \sqcap (\textsf{val}_A=i) \sqsubseteq& A_{\sigma_{i+1}}, \text{ for }1 \leq i \leq n-1,\\
  K_0 \sqcap (\textsf{val}_A\geq n) \sqsubseteq& A_{\square},\\
\end{array}
\]
where $(\textsf{val}_A=j)$ is the conjunction over $A_i,\overline{A}_i$
expressing the fact that the value of the $A$-counter is $j$, for $j \leq 2^n-1$.

\medskip\noindent\textbf{TBox $\T_2$.}
Each model of $\K_2$ encodes (at least) one of four possible defects:
\begin{itemize}\itemsep 0cm
\item[--] invalid configuration defect $D_{\textit{conf}}$;
\item[--] transition defect $D_{\textit{trans}}$ encoding errors in
  executing the transition function;
\item[--] copying defect $D_{\textit{copy}}$ encoding errors in
  copying a symbol not under the head;
\item[--] a rejecting run defect $D_{\textit{rej}}$.
\end{itemize}
The first three defects are used to filter out sequences of configurations that
do not correspond to valid runs of $M$. These defects are `local'\!, and so
they are connected to $a$ via paths. Instead, the last defect is used to detect
valid rejecting runs of $M$, so it is `global' and is represented by a tree.
Thus, $\T_2$ contains the following axioms:
\[
\begin{array}{r@{~}l}
  A \sqsubseteq &  \exists R. (X_0 \sqcap Y) \sqcup \exists R. (X_1 \sqcap Y) \sqcup D_\textit{rej}^\exists,\\
  Y \sqcap \overline{D} \sqsubseteq &  \exists R. (X_0 \sqcap Y) \sqcup \exists R. (X_1 \sqcap Y),\\
\end{array}\qquad
\begin{array}{r@{~}l}
  Y \sqsubseteq& D \sqcup \overline{D}, \qquad D \sqcap \overline{D} \sqsubseteq \bot,\\
  D\sqsubseteq& D_{\textit{conf}}  \sqcup D_{\textit{trans}} \sqcup D_{\textit{copy}}.\\
\end{array}
\]
We now describe each of the defects separately, using the following
abbreviations:
\[
  \begin{array}{l@{\qquad}l@{\qquad}l}
    \textsf{pos}^B=(\overline{B}_0 \sqcup B_0) \sqcap \cdots \sqcap (\overline{B}_{n-1} \sqcup B_{n-1}),
    & \textsf{symbol}^B=\bigsqcup_{\sigma \in \Gamma} B_{\sigma},
    &\textsf{state}^B= \bigsqcup_{q\in Q,~ \sigma \in \Gamma} B_{q,\sigma}.
\end{array}\]
The abbreviations $\textsf{pos}^G$, $\textsf{symbol}^{G}$ and
$\textsf{state}^G$ are defined analogously using concept names $G_i$,
$\overline{G}_i$, $G_{q,\sigma}$, and $G_\sigma$.

\smallskip\noindent\textbf{Invalid configuration defect.} %
$D_{\textit{conf}}$ is the simplest `local' defect that encodes incorrect
configurations, that is, configurations with at least two heads on the tape. %
It guesses the first position of the head, the symbol under it and the state by
means of the concepts $\textsf{pos}^B$ and $\textsf{state}^B$, and similarly,
it guesses the second position using the corresponding concepts with the
superscript $G$. This information is stored in the symbols transparent to
$\Sigma$ ($B_x$, $\overline{B}_x$ and $G_x$, $\overline{G}_x$).
\[
  D_{\textit{conf}} \sqsubseteq \textsf{pos}^B \sqcap
                                 \textsf{state}^B
                                 \sqcap \exists S^n.E_B \sqcap{} 
                               \textsf{pos}^G \sqcap
          \textsf{state}^G
          \sqcap \exists S^n.E_G \sqcap (\textsf{val}_B \neq \textsf{val}_G),
\]
where $(\textsf{val}_B \neq \textsf{val}_G)$ stands for
$(B_0 \sqcap \overline{G}_0) \sqcup (G_0 \sqcap \overline{B}_0) \sqcup \cdots
\sqcup (B_{n-1} \sqcap \overline{G}_{n-1}) \sqcup (G_{n-1} \sqcap
\overline{B}_{n-1})$ and ensures that the position encoded using $B$-symbols
is different from the position encoded using $G$-symbols.

All the symbols $B_x$ and $\overline{B}_x$, and $G_x$ and $\overline{G}_x$ are
propagated down the $S$-successors, and at the concepts $E_B$ and $E_G$ they
are copied into the $\Sigma$-symbols $A_x$ and $\overline{A}_x$:
\begin{align}
&  B_x \sqsubseteq \forall S. B_x, \ 
  G_x \sqsubseteq \forall S. G_x, \ 
  E_B \sqcap B_x \sqsubseteq A_x, \ 
  E_G \sqcap G_x \sqsubseteq A_x, \ 
  \text{ for }x \in \{0, \dots, n-1\} \cup \{ (q,\sigma), \sigma \mid q \in Q, \sigma \in \Gamma \}, \nonumber \\
&  \overline{B}_i \sqsubseteq \forall S. \overline{B}_i, \ 
  \overline{G}_i \sqsubseteq \forall S. \overline{G}_i, \
  E_B \sqcap \overline{B}_i \sqsubseteq \overline{A}_i, \ 
  E_G \sqcap \overline{G}_i \sqsubseteq \overline{A}_i, \
  \text{ for }i \in \{0, \dots, n-1\}.
  \label{eq:prop-down-S-copy-at-E}
\end{align}
A (partial) model of an invalid configuration defect is shown in
Fig.~\ref{fig:defects}(a), for $n=3$.

\smallskip\noindent\textbf{Transition defect.} %
Given a (correct) configuration, $D_{\textit{trans}}$ encodes defects in a
following configuration coming from an incorrect execution of the transition
function. It is also a `local' defect, but it operates on two consecutive
configurations. It guesses the position of the head, the symbol under it and
the state by means of the concepts $\textsf{pos}^B$ and $\textsf{state}^B$, and
also guesses which of the two transitions is violated:
\[
D_{\textit{trans}} \sqsubseteq \textsf{pos}^B
                                  \sqcap \textsf{state}^B
                                  \sqcap \exists S^n.E_B
                                  \sqcap (D_{\textit{trans}}^0 \sqcup D_{\textit{trans}}^1 ).
\]
Then, given the current state and the symbol under the head, the transition
defect guesses the result of an incorrect execution of the transition
function. The defective value at the successor configuration is stored in
symbols $C_x$, while the relative position of the defect is stored in
$\textit{Counter}_m$, for $m \in \{-1,0,+1\}$. Thus, for
$\delta_\ell(q,\sigma)=(q_\ell,\sigma_\ell,m_\ell)$, $\ell\in \{0,1\}$,
$m_\ell \in \{-1,+1\}$, we have
\[
  \begin{array}{r@{~}l}
    D_{\textit{trans}}^\ell \sqsubseteq & \exists R.(X^\ell \sqcap \exists S^n.E_B),
    \\
    B_{q,\sigma} \sqcap D_{\textit{trans}}^\ell \sqsubseteq & (\textit{Counter}_{0} \sqcap \bigsqcup_{\sigma' \in \Gamma \setminus \{ \sigma_\ell\}} C_{\sigma'}) \sqcup
      (\textit{Counter}_{m_\ell} \sqcap \bigsqcup_{\sigma' \in\Gamma} (C_{\sigma'} \sqcup
      \bigsqcup_{q' \in Q \setminus \{q_\ell\}}C_{q',\sigma'})).
\end{array}
\]
The position of the defect is passed/updated along the $R$-successor as follows:
\begin{equation}
\begin{array}{r@{~}l}
  \textit{Counter}_{+1} \sqcap \overline{B}_k \sqcap B_{k-1} \sqcap \cdots \sqcap B_0
  \sqsubseteq& \forall R.(B_k \sqcap \overline{B}_{k-1} \sqcap \cdots \sqcap \overline{B}_0), \quad \text{ for } n > k \geq 0,
  \\
  \textit{Counter}_{+1} \sqcap B \sqcap \overline{B}_{k}
  \sqsubseteq& \forall R.B, \quad \text{ for }B \in\{B_j,\overline{B}_j \mid n > j > k\},
  \\
  \textit{Counter}_{-1} \sqcap B_k \sqcap \overline{B}_{k-1} \sqcap \cdots \sqcap \overline{B}_0
  \sqsubseteq& \forall R.(\overline{B}_k \sqcap B_{k-1} \sqcap \cdots \sqcap B_0), \quad \text{ for } n > k \geq 0,
  \\
  \textit{Counter}_{-1} \sqcap B \sqcap B_{k}
  \sqsubseteq& \forall R.B, \quad \text{ for }B \in \{B_j,\overline{B}_j \mid n > j > k\},
  \\
  \textit{Counter}_{0} \sqcap  B \sqsubseteq& \forall R.B, \quad \text{ for }B \in \{B_i, \overline{B}_i \mid 0 \leq i \leq n-1\}.
\end{array}
\label{eq:pass-along-R}
\end{equation}
The defect is copied via $R$ as follows:
\begin{equation}
 C_x \sqsubseteq \forall R. B_x, \quad x \in \{ (q,\sigma),\ \sigma \mid q \in Q, \sigma \in \Gamma \}.
\label{eq:pass-defect-along-R}
\end{equation}
Then the symbols $B_x$ and $\overline{B}_x$ that have been copied via $R$ are
propagated down the $S$-successors, and copied at $E_B$ into the
$\Sigma$-symbols $A_x$ and $\overline{A}_x$ using
\eqref{eq:prop-down-S-copy-at-E}.
A model of a transition defect is shown in Fig.~\ref{fig:defects}(b),
for $n=3$ and  $\delta_1(q_1,\sigma_1) = (q_2, \sigma_2, +1)$.

\smallskip\noindent\textbf{Copying defect.}  %
Similarly to the transition defect, the copying defect concerns two consecutive
configurations and encodes errors in copying symbols that are not under the
head. So it guesses a position of the head, a symbol under it, and a state
by means of the concepts $\textsf{pos}^G$ and $\textsf{state}^G$, and a
position different from the position of the head and a symbol at this
position by means of the concepts $\textsf{pos}^B$ and
$\textsf{symbol}^B$:
\[
  D_{\textit{copy}} \sqsubseteq \textsf{pos}^G
                                 \sqcap \textsf{state}^G
                                 \sqcap \exists S^n.E_G
  \sqcap{} \textsf{pos}^B \sqcap \textsf{symbol}^B
                                 \sqcap \exists S^n.E_B \sqcap \exists R. \exists S^n.E_B
                                 \sqcap (\textsf{val}_B \neq \textsf{val}_G).
\]
Then it chooses a new (incorrect) symbol (possibly with a state) at
the $B$-position in the subsequent configuration:
\[
\begin{array}{r@{~}l}
  B_{\sigma} \sqcap D_{\textit{copy}} \sqsubseteq &
  \textit{Counter}_{0}
  \sqcap \bigsqcup_{\sigma'\in\Gamma,~ \sigma'\neq\sigma} (C_{\sigma'} \sqcup
  \bigsqcup_{q \in Q}C_{q,\sigma'}).
\end{array}
\]
Using \eqref{eq:pass-along-R} and \eqref{eq:pass-defect-along-R}, the incorrect
value and its position are copied via $R$, and then propagated via the
$S$-successors and copied at $E_B$ to $A$-symbols using
\eqref{eq:prop-down-S-copy-at-E}.
%

%
%

\begin{figure}
  \centering
  \begin{tikzpicture}[xscale=1.4, yscale=0.8,
    config/.style={draw=gray, very thick},
    cell/.style={gray, fill=gray!50},
    defect/.style={fill=gray}]

    \begin{scope}
      \foreach \al/\x/\y/\conc/\wh/\extra in {%
        x0/0/1/D_{\textit{conf}}/above/defect,%
        x1/-0.1/0//below/config,%
        x2/-0.2/-1//above/config,%
        x3/-0.3/-2/{B_2,\overline{B}_1, B_0, B_{q_1,\sigma_1}\qquad\qquad~~}/below/cell,%
        xx1/0.1/0//below/config,%
        xx2/0.2/-1//above/config,%
        xx3/0.3/-2/{~~\qquad\qquad \overline{G}_2,G_1,G_0, G_{q_2,\sigma_2}}/below/cell%
      }{ \node[point, \extra, label=\wh:{\scriptsize$\conc$}] (\al) at (\x,\y) {}; }

      \foreach \from/\to/\wh in {%
        x0/x1/left, x1/x2/left, x2/x3/left, %
        x0/xx1/right, xx1/xx2/right, xx2/xx3/right%
      }{ \draw[role, config] (\from) -- node[\wh] {\scriptsize $S$} (\to); }

      \node at (0, -3.5) {\small (a) An invalid configuration defect.};
    \end{scope}

    \begin{scope}[xshift=2.75cm]
      \foreach \al/\x/\y/\conc/\wh/\extra in {%
        x0/0/1/{D_{\textit{trans}},D^1_{\textit{trans}}}/above/defect,%
        x1/0/0//below/config,%
        x2/0/-1//above/config,%
        x3/0/-2/{B_2,\overline{B}_1, B_0, B_{q_1,\sigma_1}}/below/cell,%
        y0/1.5/1/X_1/above/,%
        y1/1.5/0//above/config,%
        y2/1.5/-1//below/config,%
        y3/1.5/-2/{B_2,B_1, \overline{B}_0, B_{q_3,\sigma_3}}/below/cell%
      }{ \node[point, \extra, label=\wh:{\scriptsize$\conc$}] (\al) at (\x,\y) {}; }

      \foreach \from/\to/\wh in {%
        x0/x1/left, x1/x2/left, x2/x3/left, %
        y0/y1/left, y1/y2/left, y2/y3/left%
      }{ \draw[role, config] (\from) -- node[\wh] {\scriptsize $S$} (\to); }

      \draw[role] (x0) -- node[below] {\scriptsize $R$} (y0); %

      \node at (0.75, -3.5) {\small (b) A transition defect.};
    \end{scope}

    \begin{scope}[xshift=5.5cm, yshift=1cm]
      \node[point, constant, label=above:{\scriptsize$D_{\textit{rej}}^\exists$}] (a) at
      (0,0) {};

      \foreach \al/\x/\y/\conc/\wh in {%
        x0/1/0.5/{X_0,D_{\textit{rej}}^\forall}/above,%
        x1/1/-0.5/{X_1,D_{\textit{rej}}^\forall}/below,%
        x20/2/0.5/{D_{\textit{rej}}^\exists}/above,%
        x21/2/-0.5/{D_{\textit{rej}}}/right,%
        x30/3.6/1/{X_0,D_{\textit{rej}}}/above,%
        x31/3.2/0/{X_1,D_{\textit{rej}}}/above%
      }{ \node[point, label=\wh:{\scriptsize$\conc$}] (\al) at (\x,\y) {}; }

      \foreach \from/\to/\wh in {%
        a/x0/above, a/x1/below, x0/x20/above, x1/x21/below, x20/x30/above,
        x20/x31/below%
      }{ \draw[role] (\from) -- node[\wh] {\scriptsize $R$} (\to); }

      \foreach \at/\i/\wh in {%
        x21/2/left, x30/3/right, x31/1/left%
      }{ %
        \node[point, config, yshift=-0.8cm] (s1) at (\at) {};
        \node[point, config, yshift=-1.6cm] (s2) at (\at) {};
        \node[point, cell, yshift=-2.4cm, label=below:{\scriptsize$A_{q_r,\sigma_\i}$}] (s3) at (\at) {};
        \draw[role, config] (\at) -- node[\wh] {\scriptsize $S$} (s1);
        \draw[role, config] (s1) -- node[\wh] {\scriptsize $S$} (s2);
        \draw[role, config] (s2) -- node[\wh] {\scriptsize $S$} (s3);
      }

      \node at (2, -4.5) {\small (c) A rejecting run defect.};
    \end{scope}
  \end{tikzpicture}
  \caption{Models of defects.}
  \label{fig:defects}
\end{figure}
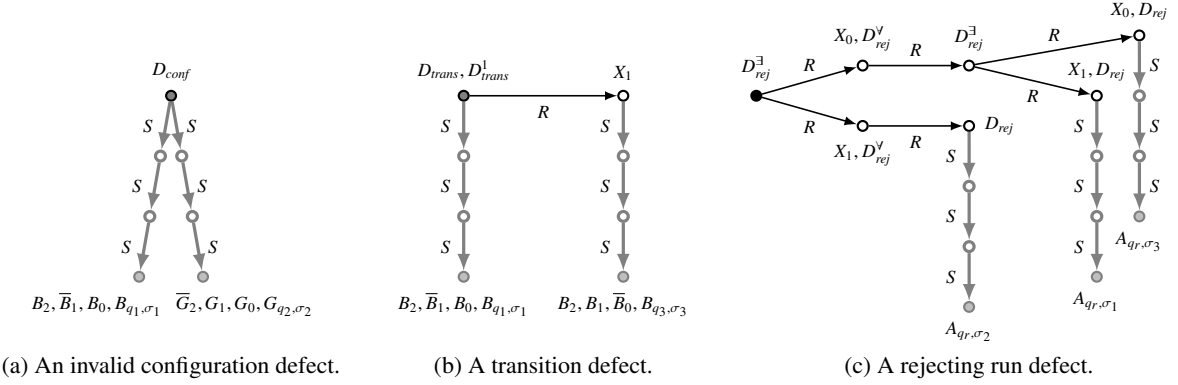

\smallskip\noindent\textbf{Rejecting run defect.} %
The rejecting run defect detects when $M$ does not accept $w$.  It is done by
checking the negation of the accepting condition. So this defect is a tree
starting at $A$ where every node at even distance from the root
($D_\textit{rej}^\exists$) has two successors (recall that $q_0\in Q_\exists$),
every node at odd distance from the root ($D_\textit{rej}^\forall$) has one
successor, and the leaves are `labelled' by rejecting states:
\[
\begin{array}{r@{~}l}
  D_{\textit{rej}}^\exists \sqsubseteq &
  \displaystyle\bigsqcap_{\ell\in\{0,1\}} \exists R. (X_\ell \sqcap (D_\textit{rej}^\forall \sqcup D_{\textit{rej}})),\\
  D_{\textit{rej}}^\forall \sqsubseteq &  \exists R.(D_\textit{rej}^\exists \sqcup D_\textit{rej}),\\
  D_{\textit{rej}} \sqsubseteq & \bigsqcup_{\sigma\in\Gamma}\exists S^n.A_{q_r, \sigma}.
\end{array}
\]
A (partial) model of a rejecting defect is shown in Fig.~\ref{fig:defects}(c).

Now we sketch a proof that $M$ accepts $w$ iff $\K_1$ does not
$\Sigma$-UCQ-entail $\K_2$.

($\Rightarrow$) Suppose $M$ accepts $w$. Then there is a model $\I_1$ of $\K_1$
such that
\begin{itemize}
\item[--] it has no local defects, that is, it has only valid configurations, and
  at each step the transition function is executed correctly and all symbols
  not affected by the head are copied correctly;
\item[--] it contains a subtree representing an accepting computation of $M$ on $w$.
\end{itemize}
Note that the former means that $\I_1$ is finite as we assumed that any run of
$M$ on every input stops either in $q_a$ or $q_r$. So the models of $\K_2$ that
are infinite paths or trees not `realising' any defect (such models never
actually pick $D$ or $D_{\textit{rej}}$ to satisfy disjunction) will not be
$\Sigma$-homomorphically embeddable into $\I_1$.  Moreover, the latter implies
that the models of $\K_2$ encoding rejecting run defect will not be
$\Sigma$-homomorphically embeddable into $\I_1$ either. So no model of $\K_2$
is $\Sigma$-homomorphically embeddable into $\I_1$, and hence $\K_1$ does not
$\Sigma$-UCQ entail $\K_2$.

($\Leftarrow$) Suppose $\K_1$ does not $\Sigma$-UCQ entail $\K_2$. Then there
exists a model $\I_1$ of $\K_1$ such that no model $\I_2$ of $\K_2$ is
$\Sigma$-homomorphically embeddable into $\I_1$. It follows that:
\begin{itemize}
\item[--] parts of $\I_1$ in grey triangles (see Fig.~\ref{fig:models-of-k1})
  represent configurations with at most one head, because of the models $\I_2$
  of $\K_2$ that detect invalid configurations;
\item[--] for every non-final configuration in $\I_1$ as explained above and for
  each of its two successor configurations, there are neither transition nor
  copying defects, because of the models of $\I_2$ that detect such defects;
\item[--] it is not the case that the tree of configurations represented by $\I_1$
  witnesses that $M$ does not accept $w$, because of the models $\I_2$ that
  detect such cases.
\end{itemize}
We thus conclude that $\I_1$ contains a valid accepting computation.
\end{proof}
%
%
%
%
We now modify the KBs in the proof above to obtain the following:
\begin{theorem}\label{kb:2explower}
$\Sigma$-\textup{(}r\textup{)}UCQ inseparability between \ALC KBs is 2\ExpTime-hard.
\end{theorem}
\begin{proof}
  We only deal with the non-rooted case; the rooted case follows using Lemma~\ref{lem:rUCQ-CQ}.
  Consider the KBs $\K_i$, $i=1,2$, and the signature $\Sigma$ from the proof
  of Theorem~\ref{thm:ucq-lower-bound}. We
  construct (in \LogSpace) a KB $\K_{2}''$ such that $\K_1$ $\Sigma$-UCQ entails
  $\K_2$ iff $\K_1$ and $\K_{2}''$ are $\Sigma$-UCQ inseparable. This provides us
  with the desired lower bound for $\Sigma$-UCQ inseparability.
  Let $\T_i^i$ be a copy of $\T_i$ in which all concept names $X\in \sig(\T_{i})\setminus\{A\}$ are replaced
  by fresh symbols $X^i$, and let $\T_i'$ be the extension of $\T_i^i$ with
  $X^i \sqsubseteq X$, for all concept names $X \in \Sigma\setminus \{A\}$.
  We set $\K_{i}' = (\T_{i}',\{A(a)\})$, $i=1,2$, and let $\K_{2}'' = (\T_1'\cup\T_2',\{A(a)\})$.
  Observe that $\K_{i}'$ and $\K_{i}$ are $\Sigma$-UCQ inseparable, for $i=1,2$.
  We prove that $\K_{1}$ $\Sigma$-UCQ entails $\K_{2}$ iff $\K_{1}'$ and $\K_{2}''$ are $\Sigma$-UCQ inseparable. The implication $(\Leftarrow)$ is straightforward.

  Conversely, suppose $\K_{1}$ $\Sigma$-UCQ entails $\K_{2}$.
  Clearly, $\K_{2}''$ $\Sigma$-UCQ entails $\K_{1}'$, and thus it remains to prove that
  $\K_{1}'$ $\Sigma$-UCQ entails $\K_{2}''$. For $i=1,2$, we consider the class $\Mod_{i}$ of models
  $\I\in \Mod_{\K_{i}'}^{\it bo}$ such that $A^{\I}=\{a\}$, if $a\in X^{\I}$ for a concept name $X$, then
  $X\in \{{D_\textit{rej}^{0}}',A\}$, and $X^{\I}=\emptyset$, for all concept names $X\not\in \sig(\K_{i}')$.
It follows from the construction of $\K_{i}$ that $\Mod_{i}$ is complete for $\K_{i}'$.
Let
$$
\Mod = \{\I_1 \uplus \I_2 \mid \I_i\in \Mod_{i}, i =1,2\},
$$
where $\I_1 \uplus \I_2$ is the interpretation that results from merging the root $a$ of $\I_{1}$ and $\I_{2}$.
We first show that $\Mod$ is complete for $\K_{2}''$. The interpretations $\I\in \Mod$ are models of $\K_{2}''$
since, for all axioms $C \sqsubseteq D\in \T_{i}'$, either $C^{\I}\subseteq \Delta^{\I_i}\setminus \{a\}$ or
$C \in \{{D_\textit{rej}^{0}}',A,\exists S^{n}.A_{q_{a},\sigma},\exists S^{n}.A_{q_{r},\sigma}\}$ and $D$ is either a concept name
or of the form $\exists R.C'$ or $\exists S.C'$.
To see that $\Mod$ is complete for $\K_{2}''$, let $\J$ be a model of $\K_{2}''$ and $n\geq 1$.
It suffices to show that there exists $\I \in \Mod$ that is $n$-homomorphically embeddable into
$\J$ preserving $\{a\}$ (Proposition~\ref{prop:char}). But since $\J$ is a model of $\K_{i}'$, there are models $\I_{i}\in \Mod_{i}$
such that $\I_{i}$ is $n$-homomorphically embeddable into $\J$ preserving $\{a\}$, $i=1,2$ (Proposition~\ref{prop:char}).
By taking the union of the two partial witness homomorphisms from $\I_{1}$ and $\I_{2}$,  one can show that
$\I_{1}\uplus \I_{2}$ is $n$-homomorphically embeddable into $\J$ preserving $\{a\}$, as required.

We now use Theorem~\ref{crit:KB} (1) to prove that $\K_{1}'$ $\Sigma$-UCQ entails $\K_{2}''$.
Let $\I_{1}\in \Mod_{1}$ and $n\geq 1$. It suffices to find $\J\in \Mod$ that is $n\Sigma$-homomorphically embeddable into $\I_{1}$
preserving $\{a\}$. But since $\K_{1}'$ $\Sigma$-UCQ-entails $\K_{2}'$, there exists
$\I_2\in \Mod_{2}$ such that $\I_2$ is $n\Sigma$-homomorphically embeddable into $\I_1$ preserving $\{a\}$.
By combining $n\Sigma$-homomorphisms from $\I_{2}$ with the identity mapping from $\I_{1}$, it is now straightforward to
show that the model $\I_{1}\uplus \I_{2}\in \Mod$ is $n\Sigma$-homomorphically
embeddable into $\I_1$ preserving $\{a\}$, as required.
\end{proof}
The following theorem summarises the results obtained so far.
\begin{theorem}
$\Sigma$-\textup{(}r\textup{)}UCQ inseparability and $\Sigma$-\textup{(}r\textup{)}UCQ-entailment between \ALC KBs are both 2\ExpTime-complete.
\end{theorem}

\subsection{\textup{(}r\textup{)}UCQ-entailment and inseparability with full signature}
We extend the 2\ExpTime lower bound from $\Sigma$-(r)UCQ entailment and inseparability to
full signature (r)UCQ entailment and inseparability.
To this end we prove a UCQ-variant of Theorem~\ref{Thm:equivalence}:
\begin{theorem}\label{Thm:equivalenceUCQ}
Let $\K_{1}=(\T_{1},\A)$ and $\K_{2}=(\T_{2},\A)$ be $\mathcal{ALC}$ KBs and $\Sigma$ a signature such that ${\sig}(\A) \subseteq \Sigma$ and $\Gamma ={\sig}(\T_{1}\cup \T_{2})\setminus \Sigma$ contains no role names. Suppose $\T_{1}$ and $\T_{2}$ admit trivial models.
Let $\Kup_{i}=(\Tup_{i}\cup \T_{\Gamma}^{\exists},\A)$, for $i=1,2$.
Then the following conditions are equivalent:
\begin{enumerate}
\item[\rm (1)] $\K_{1}$ $\Sigma$-(r)UCQ entails $\K_{2}$;
\item[\rm (2)] $\Kup_{1}$ full signature (r)UCQ entails $\Kup_{2}$.
\end{enumerate}
\end{theorem}
\begin{proof}
We use and modify the proof of Theorem~\ref{Thm:equivalence}.
Let $\Mod_{i}$ be complete for $\K_{i}$, $i=1,2$. We may assume that $X^{\I}=\emptyset$ for all concept and
role names $X\not\in \sig(\K_{i})$ and $\I\in \Mod_{i}$, $i=1,2$. By Fact~5 of the proof of
Theorem~\ref{Thm:equivalence}, $\{\Iup \mid \I\in \Mod_{i}\}$ is complete for $\Kup_{i}$.
Thus, by Theorem~\ref{crit:KB}, it suffices to prove that $\I_{2}$ is $n\Sigma$-homomorphically embeddable into $\I_{1}$ preserving
$\ind(\K_{2})$ iff $\Iup_{2}$ is $n$-homomorphically embeddable into $\Iup_{1}$ preserving $\ind(\K_{2})$, for any $n>0$, $\I_{1}\in \Mod_{1}$ and
$\I_{2}\in \Mod_{2}$.
This can be done in the same way as in the proof of Fact~6.
\end{proof}
The following complexity result now follows from the observation that
the KBs and signature $\Sigma$ used in the proof of Theorem~\ref{kb:2explower} satisfy the conditions of
Theorem~\ref{Thm:equivalenceUCQ}: $\Sigma$ contains the signature of the ABox
and all role names of the KBs, and the TBoxes admit trivial models.
\begin{theorem}
Full signature \textup{(}r\textup{)}UCQ inseparability and entailment between $\ALC$ KBs are
both 2\ExpTime-complete.
\end{theorem}


\section{Query Entailment and Inseparability for \ALC TBoxes}

In this section, we introduce query entailment and inseparability between TBoxes.
Two TBoxes $\T_{1}$ and $\T_{2}$ are query inseparable for a class $\mathcal{Q}$ of queries 
if, for all ABoxes $\A$ that are consistent with $\T_{1}$ and $\T_{2}$, queries from $\mathcal{Q}$ have the same certain answers 
over the KBs $(\T_{1},\A)$ and $(\T_{2},\A)$. The TBox $\T_{1}$ $\mathcal{Q}$-entails $\T_{2}$ if, for any such $\A$, the certain
answers to queries from $\mathcal{Q}$ over $(\T_{2},\A)$ are contained in the certain answers over $(\T_{1},\A)$. As in the KB case,
we consider the restriction of CQs and UCQs to a signature $\Sigma$ of relevant symbols and their restrictions to rooted queries. In applications, it is
also natural to restrict the signature of the ABox which might be different from the signature of the relevant queries.
\begin{definition}\em
\label{def:tboxinsep}
Let $\T_{1}$ and $\T_{2}$ be TBoxes, $\mathcal{Q}$ one of CQ, \RCQ, UCQ or \RUCQ, and let $\Theta = (\Sigma_1,\Sigma_2)$ be a pair of signatures. We say that $\T_{1}$ \emph{$\Theta$-$\mathcal{Q}$ entails} $\T_{2}$ if, for every $\Sigma_1$-ABox $\A$ that is consistent with both $\T_1$ and $\T_2$, the KB $(\T_1,\A)$ $\Sigma_2$-$\mathcal{Q}$ entails the KB $(\T_2,\A)$. $\T_1$ and $\T_2$ are \emph{$\Theta$-$\mathcal{Q}$ inseparable} if they $\Theta$-$\mathcal{Q}$ entail each other. If $\Sigma_{1}$ is the set of all concept and role names, we say `\emph{full ABox signature $\Sigma_{2}$-$\mathcal{Q}$ entails}' or `\emph{full ABox signature $\Sigma_{2}$-$\mathcal{Q}$ inseparable}'\!.
\end{definition}
In the definition of $\Theta$-$\mathcal{Q}$ entailment, we only consider ABoxes that are consistent with both TBoxes.
The reason is that the complexity of the problem of deciding whether every $\Sigma$-ABox
consistent with a TBox $\T_{1}$ is also consistent with a TBox $\T_{2}$ is already well understood and is dominated
by the $\Theta$-$\mathcal{Q}$-entailment problem as defined above. More precisely, we say that a TBox $\T_{1}$
\emph{$\Sigma_{\bot}$-entails} a TBox $\T_{2}$ if all $\Sigma$-ABoxes $\A$ consistent with $\T_{1}$ are consistent with
$\T_{2}$. $\Sigma_{\bot}$-entailment is closely related to the containment problem between ontology-mediated queries,
which we define next~\cite{BienvenuLW12,BienvenuCLW14,HLSW-IJCAI16}. For a query $\q$, 
TBoxes $\T_{1}$ and $\T_{2}$, and a signature $\Sigma$, we say that \emph{$(\T_{1},\q)$
is contained in $(\T_{2},\q)$ for $\Sigma$} and write $(\T_{1},\q)\subseteq_{\Sigma} (T_{2},\q)$ if, for every
$\Sigma$-ABox $\A$, the certain answers to $\q$ over $(\T_{1},\A)$ are contained in the certain answers to $\q$
over $(\T_{2},\A)$. We note that the authors of~\cite{BienvenuLW12,HLSW-IJCAI16}
demand that the $\Sigma$-ABoxes considered in the definition of containment are consistent with both TBoxes, 
but the complexity results for deciding containment do not depend on this condition. 
The \emph{containment problem} for a description 
logic $\Lmc$ relative to a class $\mathcal{Q}$ of queries is to decide, for TBoxes $\T_{1}$ and 
$\T_{2}$ in $\Lmc$, signature $\Sigma$, and query $\q\in \mathcal{Q}$, whether $(\T_{1},\q)\subseteq_{\Sigma} (T_{2},\q)$.
Thus, in contrast to $\Theta$-$\mathcal{Q}$-entailment, an instance of the containment problem does not 
quantify over all $\q\in \mathcal{Q}$ but takes the queries $\q\in \mathcal{Q}$ as inputs to the decision problem. 
It is known~\cite{BienvenuLW12,BienvenuCLW14,HLSW-IJCAI16}
that the containment problem is
\begin{itemize}
\item[--] \NExpTime-complete for $\mathcal{ALC}$ TBoxes and CQs of the form $\exists x A(x)$;
\item[--] \ExpTime-complete for $\hALC$ TBoxes and CQs of the form $\exists x A(x)$. 
\end{itemize}
It is straightforward to show that the containment problem for a DL $\Lmc$ and CQs of the form $\exists x A(x)$ is 
mutually polynomially reducible with the problem to decide $\Sigma_{\bot}$-entailment between $\Lmc$ TBoxes. For a polynomial
reduction of $\Sigma_{\bot}$-entailment to containment, observe that $\T_{1}$ $\Sigma_{\bot}$-entails $\T_{2}$ iff $(\T_{2},\exists x A(x))\subseteq_{\Sigma} (\T_{1},\exists x A(x))$ for $A\not\in \sig(\T_{1}\cup \T_{2})$. For a polynomial reduction of containment to $\Sigma_{\bot}$-entailment, assume that $\T_{1},\T_{2},\Sigma$, and $A$ are given. 
Let $\T_{i}'=\T_{i}\cup \{A\sqsubseteq \bot\}$. Then $(\T_{1},\exists x A(x))\subseteq_{\Sigma} (\T_{2},\exists x A(x))$ iff
$\T_{2}'$ $\Sigma_{\bot}$-entails $\T_{1}'$. We obtain the following result.
\begin{theorem}
The problem whether an $\mathcal{ALC}$ TBox $\Sigma_{\bot}$-entails an $\mathcal{ALC}$ TBox is \NExpTime-complete.
For $\hALC$ TBoxes $\T_{1}$ and $\T_{2}$, this problem is \ExpTime-complete.
\end{theorem}

It follows, in particular, that our complexity upper bounds for $\Theta$-CQ-entailment still hold if one 
admits ABoxes that are not consistent with the TBoxes. 

As in the KB case, $\Theta$-UCQ inseparability of $\ALC$ TBoxes implies all other types of inseparability, and Example~\ref{ex:ex1} can be
used to show that no other implications hold in general. The situation is different for \hALC TBoxes. In fact, the following result follows
 directly from Proposition~\ref{UCQtoCQ}:
\begin{proposition}\label{TBoxUCQtoCQ}
For any $\ALC$ TBox $\T_{1}$ and \hALC TBox $\T_{2}$, 
$\T_{1}$ $\Theta$-\textup{(}r\textup{)}UCQ entails $\T_{2}$ iff $\T_{1}$ $\Theta$-\textup{(}r\textup{)}CQ entails $\T_{2}$.
\end{proposition}

We now show that $\Theta$-(r)CQ entailment and inseparability are undecidable for $\mathcal{ALC}$ TBoxes.
In fact, we show that $\Theta$-(r)CQ inseparability is undecidable even if one of the TBoxes is given in $\mathcal{EL}$ 
and that $\Theta$-(r)CQ entailment is undecidable even if the entailing TBox $\T_{1}$ is in $\mathcal{EL}$.
The proofs re-use the TBoxes constructed in the undecidability proofs for KBs in Theorems~\ref{thm:undecidability-cq} and~\ref{thm:undecidability-rcq}.
We also show that, for CQs, these problems are still undecidable in the full ABox signature case or
if one assumes that the signatures for the ABoxes and CQs coincide. It remains open whether rCQ-entailment 
or inseparability are still undecidable in those cases.
\begin{theorem}\label{thm:TBoxbasicundecidability}
\textup{(}i\textup{)} The problem whether an $\mathcal{EL}$ TBox $\Theta$-$\mathcal{Q}$ entails an $\ALC$ TBox is undecidable for $\mathcal{Q} \in \{\text{CQ},\text{\RCQ}\}$.

\textup{(}ii\textup{)} $\Theta$-$\mathcal{Q}$ inseparability between $\mathcal{EL}$ and \ALC TBoxes is undecidable for $\mathcal{Q} \in \{\text{CQ},\text{\RCQ}\}$.

\textup{(}iii\textup{)} For CQs, \textup{(}i\textup{)} and \textup{(}ii\textup{)} hold for full ABox signatures and for $\Theta = (\Sigma_1,\Sigma_2)$ with $\Sigma_{1}=\Sigma_{2}$.
\end{theorem}
\begin{proof}
Here, we focus on the CQs; the proofs for rCQs are given in the appendix. 
We use the KBs
$\KoneCQ =(\ToneCQ,\ACQ)$ and $\KtwoCQ =(\TtwoCQ,\ACQ)$ and the signature $\SigmaCQ=\sig(\KoneCQ)$ from the proof of 
Theorem~\ref{thm:undecidability-cq}. Recall that it is undecidable whether $\KoneCQ$ $\SigmaCQ$-CQ entails $\KtwoCQ$.
Also recall that, for $\K_{2}=(\T_{2},\ACQ)$ with $\T_{2}=\ToneCQ\cup \TtwoCQ$, it is undecidable whether $\KoneCQ$ and $\K_{2}$ 
are $\SigmaCQ$-CQ inseparable (Theorem~\ref{thm:undecidability-cqinsep}).

(\emph{i}) Let $\Sigma_{1}=\{A\}$, $\Sigma_{2}=\SigmaCQ$, and $\Theta=(\Sigma_{1},\Sigma_{2})$.
We show that $\ToneCQ$ $\Theta$-CQ-entails $\TtwoCQ$ iff $\KoneCQ$ $\SigmaCQ$-CQ-entails $\KtwoCQ$. 
Recall that $\ACQ=\{A(a)\}$. Thus, if $\KoneCQ$ does not $\SigmaCQ$-CQ entail $\KtwoCQ$, 
then we have found a $\Sigma_{1}$-ABox witnessing that $\ToneCQ$ does not $\Theta$-CQ entail $\TtwoCQ$. 
Conversely, observe that $\Sigma_{1}$-ABoxes $\A$ are sets of the form $\{A(b)\mid b\in I\}$, with $I$ a finite
set of individual names. Thus, if there exists a $\Sigma_{1}$-ABox $\A$ such that 
$(\ToneCQ,\A)$ does not $\SigmaCQ$-CQ entail $(\TtwoCQ,\A)$, then $(\ToneCQ,\ACQ)$ does not 
$\SigmaCQ$-CQ entail $(\TtwoCQ,\ACQ)$ either.

(\emph{ii}) Set again $\Theta=(\Sigma_{1},\Sigma_{2})$, for $\Sigma_{1}=\{A\}$ and $\Sigma_{2}=\SigmaCQ$. In exactly the 
same way as in (\emph{i}) one can show that $\KoneCQ$ and $\K_{2}$ are $\SigmaCQ$-inseparable iff $\ToneCQ$ and $\T_{2}$ are 
$\Theta$-CQ inseparable. 

(\emph{iii}) We first show undecidability of full ABox signature $\Sigma$-CQ inseparability. The
undecidability of full ABox signature $\Sigma$-CQ entailment follows directly from our proof.
We employ the abstraction technique from Theorem~\ref{Thm:equivalence} for $\Gamma=\sig(\T_{2})\setminus \SigmaCQ$.
Let $\T_{1}'= \ToneCQ\cup \T_{\Gamma}^{\exists}$, $\T_{2}' = \T_{2}^{\uparrow\Gamma}\cup \T_{\Gamma}^{\exists}$ and 
$\Sigma=\SigmaCQ\setminus\{P\}$. We aim to prove that the 
following conditions are equivalent:
\begin{enumerate}
\item[(1)] $\KoneCQ$ and $\K_{2}$ are $\Sigma$-CQ inseparable;
\item[(2)] $\T_{1}'$ and $\T_{2}'$ are full ABox signature $\Sigma$-CQ inseparable.
\end{enumerate}
Observe that undecidability of full ABox signature CQ-inseparability of TBoxes of the form
$\T_{1}'$ and $\T_{2}'$ follows since the proof of Theorems~\ref{thm:undecidability-cq} 
and~\ref{thm:undecidability-cqinsep}
shows that the role name $P$ is not needed to CQ-separate the KBs $\KoneCQ$ and $\K_{2}$ (if they
are $\SigmaCQ$-CQ separable). Thus,
it is undecidable whether $\KoneCQ$ and $\K_{2}$ are $\Sigma$-CQ inseparable. 

The implication $(2) \Rightarrow (1)$ is straightforward: if $\KoneCQ$ and $\K_{2}$ are not $\Sigma$-CQ inseparable, 
then the ABox $\ACQ$ witnesses that $\T_{1}'$ and $\T_{2}'$ are not full ABox signature $\Sigma$-CQ inseparable.  
Conversely, suppose $\T_{1}'$ and $\T_{2}'$ are not full ABox signature $\Sigma$-CQ inseparable.
Then there exists an ABox $\A$ such that $(\T_{1}',\A)$ and $(\T_{2}',\A)$ are not $\Sigma$-CQ inseparable. 
The canonical model $\I_{1}$ of the $\mathcal{EL}$ KB $(\T_{1}',\A)$ can be constructed as follows:
\begin{itemize}
\item[--] for any $A(b)\in \A$, take a copy of the canonical model $\I_{\KoneCQ}$ and hook it to $b$ by identifying 
$a$ in $\I_{\KoneCQ}$ with $b$;
\item[--] for any $D(b)\in \A$, take a copy of the subinterpretation of the canonical model $\I_{\KoneCQ}$ rooted at the
$P$-successor of $a$ and hook it to $b$ by identifying the $P$-successor of $a$ with $b$;
\item[--] for any $E(b)\in \A$, take a copy of the (unique up to isomorphism) subinterpretation of the canonical model 
$\I_{\KoneCQ}$ rooted at an $E$-node and hook it to $b$ by identifying the $E$-node with $b$.
\item[--] to satisfy $\T_{\Gamma}^{\exists}$, let $\J$ be the singleton interpretation with $X^{\J}=\emptyset$ for all concept and role names $X$; we hook to any element $u$ of the interpretation constructed so far a copy of $\J^{\uparrow\Gamma}$ by identifying the root of
$\J^{\uparrow\Gamma}$ with $u$ (see the proof of Theorem~\ref{Thm:equivalence} for the construction and properties of $\J^{\uparrow\Gamma}$).
\end{itemize}
Let $\Mod$ be the class of interpretations obtained from $\I_{1}$
by adding to any $b$ with $A(b)\in \A$ a $P$-successor $b'$ to which one hooks the subinterpretation rooted in the $P$-successor of $a$ in an
interpretation from $\{\I^{\uparrow\Gamma}\mid \I \in \Mod_{\KtwoCQ}\}$. One can show that $\Mod$ is complete for the
KB $(\T_{2}',\A)$. To this end, first recall from the proof of Theorem~\ref{thm:undecidability-cqinsep}
that for the canonical model $\I_{\KoneCQ}$ of $\KoneCQ$, the set 
$\Mod_{\K_{2}}= \{~ \I \uplus \I_{\KoneCQ} \mid \I \in \Mod_{\KtwoCQ}~\}$ (where
$\I \uplus \I_{\KoneCQ}$ is the interpretation that results from merging the roots $a$ of
$\I$ and $\I_{\KoneCQ}$) is complete for $\K_{2}$. By Theorem~\ref{Thm:equivalence} (Fact~5),
$\{\I^{\uparrow\Gamma} \mid \I \in \Mod_{\K_{2}}\}$ is complete for $\K_{2}^{\uparrow\Gamma}$. Now completeness
of $\Mod$ for $(\T_{2}',\A)$ follows directly from the fact that every $\I\in \Mod$ is a model of $(\T_{2}',\A)$.
Next, observe that $P\not\in \Sigma$ and that two KBs are $\Sigma$-CQ inseparable 
iff they are $\Sigma$-CQ inseparable for connected $\Sigma$-CQs. Thus, the only $\Sigma$-components of interpretations in
$\Mod$ that could distinguish $\Sigma$-CQs true in $\Mod$ from $\Sigma$-CQs true in $\I_{1}$ are the interpretations $\{\I^{\uparrow\Gamma}\mid \I \in \Mod_{\KtwoCQ}\}$.
It follows that if $(\T_{1}',\A)$ and $(\T_{2}',\A)$ are not $\Sigma$-CQ inseparable, then $(\KoneCQ)^{\uparrow\Gamma}$ and $\K_{2}^{\uparrow\Gamma}$
are not $\Sigma$-CQ inseparable either. But then, by the proof of Theorem~\ref{thm:undecidability-full}, $\KoneCQ$ and $\K_{2}$ are not 
$\Sigma$-CQ inseparable, as required.

To show undecidability of $\Theta$-CQ inseparability and entailment for $\Theta=(\Sigma_{1},\Sigma_{2})$ with $\Sigma_{1}=\Sigma_{2}$, 
we re-use the undecidability proof for the full ABox signature case. Set $\Theta=(\Sigma,\Sigma)$. Then the proof above shows that
$\T_{1}'$ and $\T_{2}'$ are $\Theta$-CQ inseparable iff they are full ABox signature $\Sigma$-CQ inseparable since one can always choose
the ABox $\ACQ$ as a witness for CQ-inseparability if $\T_{1}'$ and $\T_{2}'$ are full ABox signature $\Sigma$-CQ inseparable.
\end{proof}


\section{Model-Theoretic Criteria for Query Entailment of \hALC TBoxes by \ALC TBoxes}

We have seen that $\Theta$-(r)CQ entailment of an $\ALC$ TBox $\T_{2}$ by an $\mathcal{EL}$ TBox $\T_{1}$ is undecidable.
We now investigate the converse direction, with drastically different results (which even hold if $\mathcal{EL}$ TBoxes are replaced by \hALC TBoxes).
Thus, in this section, we give model-theoretic criteria for $\Theta$-(r)CQ entailment of a \hALC TBox $\T_{2}$ by an
\ALC TBox $\T_{1}$. In the next section, we use these criteria to prove tight complexity bounds for deciding
$\Theta$-(r)CQ entailment and inseparability. Recall that, by Proposition~\ref{TBoxUCQtoCQ}, our model-theoretic criteria and complexity
results also apply to $\Theta$-(r)UCQ entailment.

We assume that \hALC TBoxes are given in \emph{normal form} where concept inclusions look as follows:
$$
A\sqsubseteq B, \quad A_{1} \sqcap A_{2} \sqsubseteq B, \quad \exists R.A \sqsubseteq B,\quad 
A \sqsubseteq \bot, \quad \top \sqsubseteq B, \quad A \sqsubseteq \exists R.B, \quad A \sqsubseteq \forall R.B
$$
and $A,B$ are concept names. It is standard (see, e.g.,~\cite[Proposition~28]{BaaderBL16})
to show the following reduction of $\Theta$-(r)CQ entailment for arbitrary \hALC TBoxes to \hALC TBoxes in normal form.
\begin{proposition}
For any \hALC TBox $\T_{2}$ and any pair $\Theta$ of signatures, one can construct in polynomial time
a \hALC TBox $\T_{2}'$ in normal form such that an \ALC TBox $\T_{1}$ $\Theta$-(r)CQ entails $\T_{2}$ iff $\T_{1}$ $\Theta$-(r)CQ entails $\T_{2}'$.
\end{proposition}


Our model-theoretic criteria are based on two crucial
observations. First, to characterise $\Theta$-(r)CQ entailment between \hALC
TBoxes and \ALC TBoxes, it suffices to consider a very restricted class of
acyclic (r)CQs that corresponds exactly to queries constructed using
$\mathcal{EL}$ concepts. Second, it suffices to consider ABoxes that are
tree-shaped rather than arbitrary ABoxes when searching for witnesses for
non-$\Theta$-(r)CQ entailment.  We begin by introducing the relevant classes of
CQs and rCQs. A \emph{rooted $\mathcal{EL}$ query} takes the form $C(x)$, where
$C$ is an $\mathcal{EL}$ concept. The set of rooted $\mathcal{EL}$ queries is
denoted by rELQ.  Given a KB $\K$, $a\in \ind(\K)$, and an rELQ $C(x)$ we say that
\emph{$a$ is a certain answer to $C(x)$ over $\K$} if $a^{\I}\in C^{\I}$, for
every model $\I$ of $\K$. Note that rELQs can be regarded as acyclic CQs with
one answer variable. A \emph{Boolean $\mathcal{EL}$ query} takes the form
$\exists x C(x)$, where $C$ is an $\mathcal{EL}$ concept. The set of rooted and
Boolean $\mathcal{EL}$ queries is denoted by ELQ.  Given a KB $\K$ and a Boolean
$\mathcal{EL}$ query $\exists x C(x)$, we say that \emph{$\K$ entails $\exists x
  C(x)$} if $C^{\I}\not=\emptyset$, for every model $\I$ of $\K$. Boolean
$\mathcal{EL}$ queries can be regarded as Boolean acyclic CQs. In what follows
we use the same notation for (r)ELQs as for (r)CQs. For TBoxes $\T_{1}$ and
$\T_{2}$ and a pair $\Theta=(\Sigma_{1},\Sigma_{2})$ of signatures, we say that
$\T_{1}$ $\Theta$-(r)ELQ \emph{entails} $\T_{2}$ if, for every $\Sigma_{1}$ ABox $\A$
that is consistent with both $\T_{1}$ and $\T_{2}$, and every
$\Sigma_{2}$-(r)ELQ $\q(a)$ with $a\in \ind(\A)$, whenever $(\T_{2},\A)
\models \q(a)$ then $(\T_{1},\A)\models \q(a)$.
\begin{proposition}\label{Elqs}
Let $\T_{1}$ be an $\mathcal{ALC}$ TBox, $\T_{2}$ a Horn$\mathcal{ALC}$ TBox,
and $\Theta=(\Sigma_{1},\Sigma_{2})$ a pair of signatures.
Then $\T_{1}$ $\Theta$-\textup{(}r\textup{)}CQ entails $\T_{2}$ iff $\T_{1}$ $\Theta$-(r)ELQ entails $\T_{2}$.
\end{proposition}
\begin{proof}
  Suppose $\Amc$ is a $\Sigma_{1}$-ABox and $(\T_{2},\A)\models \q(\avec{a})$
  but $(\T_{1},\A)\not\models \q(\avec{a})$ for a $\Sigma_{2}$-CQ
  $\q$. As $(\T_{2},\A)\models \q(\avec{a})$, there is a homomorphism $h \colon \q \rightarrow \I_{(\T_{2},\A)}$. Let
  $\I$ be the $\Sigma_{2}$-reduct of the
  subinterpretation of $\I_{(\T_{2},\A)}$ induced by the image of $\q$ under
  $h$. Then $\I$ is the disjoint union of
\begin{itemize}
\item[--] ditree interpretations $\I_{a}$ attached to $a\in \ind(\A)\cap
  \Delta^{\I}$ such that $\ind(\A)\cap \Delta^{\I_{a}}=\{a\}$, and
\item[--] ditree interpretations $\J$ with $\ind(\A) \cap \Delta^{\J}=\emptyset$
  (there exists no such $\J$ if $\q$ is an rCQ),
\end{itemize}
and, additionally, pairs $(a,b)$ in $R^{\I}$ for $a,b\in \ind(\A)\cap
\Delta^{\I}$, $R\in \Sigma_{1}$, and $R(a,b)\in \A$.  Thus, if $\q$ is an rCQ
then there exists $\I_{a}$ such that the canonical CQ $\q_{\I_{a}}(x)$
determined by $\I_{a}$ is an rELQ (see the proof of Proposition~\ref{prop:char})
and $(\T_{2},\A)\models \q_{\I_{a}}(a)$ but $(\T_{1},\A)\not\models
\q_{\I_{a}}(a)$, as required. If $\q$ is not an rCQ and no such $\I_{a}$
exists, then there exists $\J$ such that the canonical CQ $\q_{\J}$ determined
by $\J$ is a Boolean $\mathcal{EL}$ query and $(\T_{2},\A)\models \q_{\J}$ but
$(\T_{1},\A)\not\models \q_{\J}$.
\end{proof}

An ABox $\A$ is called a \emph{tree ABox} if the undirected graph
$$
G_{\A} ~=~ \left(\ind(\A), \big\{\{a,b\} \mid R(a,b)\in \A\big\}\right)
$$
is an undirected tree and $R(a,b)\in \A$ implies $R(b,a)\not\in\A$ and $S(a,b) \notin \A$, for $S\not=R$.
The \emph{outdegree} of $\A$ is defined as the outdegree of $G_{\A}$.
\begin{theorem}\label{thm:homcharhorn}
Let $\T_{1}$ be an \ALC TBox, $\T_{2}$ a \hALC TBox, and $\Theta=(\Sigma_{1},\Sigma_{2})$.
Then
\begin{enumerate}
\item[\rm (1)] $\T_{1}$ $\Theta$-rCQ-entails $\T_{2}$ iff, for any tree
  $\Sigma_{1}$-ABox $\A$ of outdegree bounded by $|\T_{2}|$ and consistent
  with $\T_{1}$ and $\T_{2}$, and any model $\I_{1}$ of $(\T_{1},\A)$, 
  $\I_{(\T_{2},\A)}$ is con-$\Sigma_{2}$-homomorphically embeddable into
  $\I_{1}$ preserving $\ind(\A)$.
\item[\rm (2)] $\T_{1}$ $\Theta$-CQ-entails $\T_{2}$ iff, for any tree
  $\Sigma_{1}$-ABox $\A$ of outdegree bounded by $|\T_{2}|$ and consistent
  with $\T_{1}$ and $\T_{2}$, and any model $\I_{1}$ of $(\T_{1},\A)$, 
  $\I_{(\T_{2},\A)}$ is $\Sigma_{2}$-homomorphically embeddable into $\I_{1}$
  preserving $\ind(\A)$.
\end{enumerate}
\end{theorem}
\begin{proof}
(1) The direction from left to right follows from Theorem~\ref{UCQ-full-hom} and Proposition~\ref{UCQtoCQ}.  Conversely, suppose $\T_{1}$ does not $\Theta$-rCQ-entail $\T_{2}$.  By Proposition~\ref{Elqs}, there are a $\Sigma_{1}$-ABox $\A$ consistent with $\T_{1}$ and $\T_{2}$, a $\Sigma_{2}$-rELQ $C(x)$, and $a\in \ind(\A)$ such that $(\T_{2},\A)\models C(a)$ and $(\T_{1},\A)\not\models C(a)$.  It is shown in \cite{BaaderBL16} (proof of Proposition~30)\footnote{The proof of Proposition~30 in~\cite{BaaderBL16} shows this for $\mathcal{ELIF}_{\bot}$ TBoxes. Observe that we can regard every \hALC TBox in normal form as an $\mathcal{ELI}_{\bot}$ TBox by replacing $A\sqsubseteq \forall R.B$ by $\exists R^{-}.A \sqsubseteq B$.} that there exist a tree $\Sigma_{1}$-ABox $\A'$ with outdegree bounded by $|\T_{2}|$ and $(\T_{2},\A')\models C(a)$, and an ABox homomorphism\footnote{ABox homomorphisms are defined before Proposition~\ref{prop:ABoxhom}
in the appendix.} $h$ from $\A'$ to $\A$ with $h(a)=a$. 
It follows from Proposition~\ref{prop:ABoxhom} in the appendix that $\A'$ is consistent with $\T_{1}$ and $\T_{2}$ and that $(\T_{1},\A')\not\models C(a)$.
Let $\I_{1}$ be a model of $(\T_{1},\A')$ such that $\I_{1}\not\models C(a)$. We know that $\I_{(\T_{2},\A')}\models C(a)$. Thus,
$\I_{(\T_{2},\A')}$ is not con-$\Sigma_{2}$-homomorphically embeddable into $\I_{1}$ preserving $\ind(\A')$, as required. 
(2) is proved similarly using ELQs instead of rELQs and $\Sigma_{2}$-homomorphisms instead of con-$\Sigma_{2}$-homomorphisms.
\end{proof}
The notion of (con-)$\Sigma$-CQ homomorphic embeddability used in Theorem~\ref{thm:homcharhorn} is slightly
unwieldy to use in the subsequent definitions and automata constructions. We therefore resort to simulations
whose advantage is that they are compositional (they can be partial and are closed under unions).
\newcommand{\Sigmaabox}{\Sigma_1} \newcommand{\Sigmaquery}{\Sigma_2}
Let $\Imc_1,\Imc_2$ be interpretations and $\Sigma$ a signature.  A relation
$\mathcal{S}\subseteq \Delta^{\Imc_1}\times \Delta^{\Imc_2}$ is a
\emph{$\Sigma$-simulation} from $\Imc_{1}$ to $\Imc_{2}$ if (\emph{i}) $d\in
A^{\Imc_{1}}$ and $(d,d')\in \mathcal{S}$ imply $d'\in A^{\Imc_{2}}$ for all
$\Sigma$-concept names $A$, and (\emph{ii}) if $(d,e)\in R^{\Imc_{1}}$ and
$(d,d')\in \mathcal{S}$ then there is a $(d',e')\in R^{\Imc_{2}}$ with
$(e,e')\in \mathcal{S}$ for all $\Sigma$-role names~$R$.  Let $d_i \in
\Delta^{\Imc_i}$, $i \in \{1,2\}$.  $(\Imc_{1},d_{1})$ is
\emph{$\Sigma$-simulated} by $(\Imc_{2},d_{2})$, in symbols $(\Imc_{1},d_{1})
\leq_{\Sigma}(\Imc_{2},d_{2})$, if there exists a $\Sigma$-simulation
$\mathcal{S}$ with $(d_{1},d_{2})\in \mathcal{S}$.  Observe that every
$\Sigma$-homomorphism from $\I_{1}$ to $\I_{2}$ is a
$\Sigma$-simulation. Conversely, if $\I_{1}$ is a ditree interpretation and
$(\I_{1},d_{1}) \leq_{\Sigma} (\I_{2},d_{2})$, then one can construct a
$\Sigma$-homomorphism $h$ from $\I_{1}$ to $\I_{2}$ with $h(d_{1})=d_{2}$.
%
%



\begin{lemma}
\label{lem:homtosim}
Let $\Sigma_{1}$ and $\Sigma_{2}$ be signatures, \Amc a
$\Sigmaabox$-ABox, and $\Imc_1$ a model of $(\Tmc_1,\Amc)$. Then

\smallskip
\noindent
(i)~$\Imc_{\Tmc_2,\Amc}$ is not con-$\Sigmaquery$-homomorphically embeddable into
$\Imc_1$ iff there is $a \in \mn{ind}(\Amc)$ such that one of the following
holds:
  \begin{enumerate}\itemsep 0cm

  \item[\rm (1)] there is a $\Sigmaquery$-concept name $A$ with
    $a \in A^{\Imc_{\Tmc_2,\Amc}} \setminus A^{\Imc_1}$;

  \item[\rm (2)] there is an $R$-successor $d$ of $a$ in
    $\Imc_{\Tmc_2,\Amc}$, for some $\Sigmaquery$-role name $R$, such
    that $d \notin \mn{ind}(\Amc)$ and, for all $R$-successors $e$
    of $a$ in $\Imc_1$, we have  $(\Imc_{\Tmc_2,\Amc},d)
    \not\leq_{\Sigmaquery}(\Imc_1,e)$.

  \end{enumerate}
$(ii)$ $\Imc_{\Tmc_2,\Amc}$ is not $\Sigmaquery$-homomorphically embeddable into
$\Imc_1$ if there is $a\in \ind(\A)$ such that {\rm (1)} or {\rm (2)} or {\rm (3)} holds, where
\begin{enumerate}
\item[\rm (3)] there is an element $d$ in the subinterpretation of
      $\Imc_{\Tmc_2,\Amc}$ rooted at $a$ \textup{(}with possibly $d=a$\textup{)} and $d$ has an
    $R_0$-successor $d_0$, for some role name $R_0 \notin
    \Sigmaquery$, such that 
    $(\Imc_{\Tmc_2,\Amc},d_0) \not\leq_{\Sigmaquery}(\Imc_1,e)$, for all elements $e$ of $\Imc_1$.
\end{enumerate}
\end{lemma}
\begin{proof} We only prove $(ii)$ as $(i)$ is a direct consequence
of our proof. Clearly, if there exists $a\in \ind(\A)$ such that (1) or (2) or (3) holds for $a$,
then there does not exist a $\Sigma$-homomorphism from $\I_{1}$ to $\Imc_{\T_{2},\A}$ preserving $\{a\}\subseteq \ind(\A)$.

Conversely, suppose none of (1), (2) or (3) holds for any $a \in
\mn{ind}(\Amc)$. Then, for any $a \in \mn{ind}(\Amc)$, $R$-successor $d$ of
$a$ in $\Imc_{\Tmc_2,\Amc}$ with $R \in \Sigmaquery$ and $d \notin
\mn{ind}(\Amc)$, there is an $R$-successor $d'$ of $a$ in $\Imc_1$ and a
$\Sigma_{2}$-simulation $\S_d$ from $\Imc_{\Tmc_2,\Amc}$ to $\Imc_1$ such that
$(d,d') \in \S_d$. As the subinterpretation of $\Imc_{\Tmc_2,\Amc}$ rooted at
$d$ is a ditree interpretation, we can assume that $\S_d$ is a partial
function. Also, for every $d_{0}$ in $\Imc_{\Tmc_{2},\A}$ with $d_{0}\not\in
\ind(\A)$ that has an $R_{0}$-predecessor in $\Imc_{\Tmc_{2},\A}$ with
$R_{0}\not\in \Sigma_{2}$, we find an $e$ in $\Imc_{1}$ such that there is a
$\Sigmaquery$-simulation $\mathcal{S}_{d_{0}}$ between $\Imc_{\Tmc_{2},\A}$ and
$\Imc_{1}$ with $(d_{0},e)\in \mathcal{S}_{d_{0}}$.  As the
subinterpretation of $\Imc_{\Tmc_2,\Amc}$ rooted at $d_{0}$ is ditree
interpretation, we can assume that $\S_{d_{0}}$ is a partial function.  Now
consider the function $h$ defined by setting $h(a)=a$, for all $a \in
\mn{ind}(\Amc)$, and then taking the union with all the simulations $\S_d$ and
$\S_{d_{0}}$. It can be verified that $h$ is a $\Sigmaquery$-homomorphism from
$\Imc_{\Tmc_2,\Amc}$ to $\Imc_1$.
\end{proof}

\section{Decidability of Query Entailment of \hALC TBoxes by \ALC TBoxes}

We show that the problem whether an \ALC TBox $\Theta$-CQ entails a
\hALC TBox is in 2\ExpTime, and that the complexity drops to \ExpTime in the case of rooted CQs.
Using the fact that satisfiability of \hALC TBoxes is \ExpTime-hard, it is straightforward to prove
a matching \ExpTime lower bound even for the full ABox signature case and $(\Sigma,\Sigma)$-rCQ entailment and inseparability between \hALC TBoxes.
Proving a matching lower bound for the non-rooted case is more involved. Using a reduction of exponentially space
bounded alternating Turing machines, 
we show that $(\Sigma,\Sigma)$-CQ inseparability between the empty TBox and \hALC TBoxes is 2\ExpTime-hard. It follows that both
$(\Sigma,\Sigma)$-CQ inseparability and $(\Sigma,\Sigma)$-CQ entailment between \hALC TBoxes are 2\ExpTime-hard. The problem whether the
2\ExpTime upper bound is tight in the full ABox signature case remains open.

\subsection{\ExpTime upper bound for $\Theta$-rCQ-entailment of \hALC TBoxes by \ALC TBoxes}

%
%
Our aim is to establish the following:
\begin{theorem}
\label{thm:exptbox}
$\Theta$-\RCQ inseparability between \hALC TBoxes and $\Theta$-\RCQ entailment of a \hALC TBox by an \ALC TBox are both \ExpTime-complete.
The \ExpTime lower bound holds already for $\Theta$ of the form $(\Sigma,\Sigma)$ and the full ABox signature case.
\end{theorem}
The lower bounds can be proved in a straightforward way using the fact that satisfiability of \hALC TBoxes is \ExpTime-hard.
Note that \ExpTime-hardness of $(\Sigma,\Sigma)$-rCQ inseparability is also inherited from \cite{LutzW10}, where
this bound is shown for $\mathcal{EL}$ TBoxes. It thus remains to prove the upper bound.

We use a mix of two-way alternating B\"uchi automata (\TWABAs) and
non-deterministic top-down tree automata (NTAs), both on \emph{finite}
trees (in contrast to Section~\ref{sect:2expupperKB}).  A finite tree
$T$ is \emph{$m$-ary} if, for any $x \in T$, the set
$\{ i \mid x \cdot i \in T \}$ is of cardinality zero or exactly~$m$.
\TWABAs on finite trees are defined exactly like \TWAPAs on infinite
trees except that
\begin{itemize}

\item[--] the acceptance condition now takes the form $F \subseteq Q$ and
  a run is accepting if, for every infinite path $y_1y_2\cdots$, the set
  $\{ i \mid r(y_i)=(x,q) \text{ with } q \in F \}$ is infinite;

\item[--] we allow a special transition $\mn{leaf}$ and add to
the definition of a run $r$ the condition that, for any node $y$ of
the input tree $T$, $r(y)=(x,\mn{leaf})$ implies that $x$ is a leaf in
$T$.

\end{itemize}
Note that runs can still be infinite.
\begin{definition}\em 
  A \emph{nondeterministic top-down tree automaton} (NTA) on finite
  $m$-ary trees is a tuple $\Amf=(Q, \Gamma, Q_0, \delta, F)$ where
  $Q$ is a finite set of \emph{states}, $\Gamma$ a finite alphabet,
  $Q_0 \subseteq Q$ a set of \emph{initial states},
  $\delta\colon Q \times \Gamma \rightarrow 2^{Q^m}$ a \emph{transition
    function}, and $F \subseteq Q$ is a set of \emph{final states}.
  Let $(T,L)$ be a $\Gamma$-labeled $m$-ary tree.  A \emph{run} of
  \Amf on $(T,L)$ is a $Q$-labeled $m$-ary tree $(T,r)$ such that
  $r(\varepsilon) \in Q_0$ and
  $\langle r(x \cdot 1), \ldots ,r(x \cdot m)\rangle \in \delta( r(x),
  L(x))$, for each node $x \in T$.
  The run is \emph{accepting} if $r(x) \in F$, for every leaf $x$ of
  $T$. The set of trees accepted by $\mathcal A$ is denoted by
  $L(\mathcal A)$.
\end{definition}
We use the following results from automata theory
\cite{Vardi98,MullerS87,ThomasHandbook97}.
\begin{theorem}~\\[-5mm]
  \begin{enumerate}

  \item Every \TWABA $\Amf=(Q,\Gamma,\delta,q_0,F)$ can be converted
    into an equivalent NTA $\Amf'$ whose number of states is \textup{(}single\textup{)}
    exponential in $|Q|$; the conversion needs time polynomial in the
    size of $\Amf'$;

  \item Given a constant number of \TWABAs \textup{(}respectively, NTAs\textup{)}
    $\Amf_1,\dots,\Amf_c$, one can construct in polynomial time a \TWABA
    \textup{(}respectively, an NTA\textup{)} $\Amf$ such that $L(\Amf)=L(\Amf_1) \cap \cdots
    \cap L(\Amf_c)$;

  \item Emptiness of NTAs $\Amf=(Q, \Gamma, Q_0, \delta,F)$ can be
    decided in polynomial time. 

  \end{enumerate}
\end{theorem}
Before proceeding further, we give a concrete
definition of the canonical model for \hALC KBs that was mentioned in Proposition~\ref{forestcomplete}, tailored towards
the constructions used in the rest of this section.  Let $\K =
(\T,\A)$ be a \hALC KB with \Tmc in normal form. We use $\mn{CN}(\T)$
to denote the set of concept names in $\T$. For any $a \in
\mn{ind}(\A)$, we use $\mn{tp}_\K(a)$ to denote the set $\{A \in
\mn{CN}(\Tmc) \mid \K \models A(a) \}$. For $t \subseteq
\mn{CN}(\Tmc)$, set $\mn{cl}_{\Tmc}(t) = \{ A \in \mn{CN}(\Tmc) \mid
\Tmc \models \midsqcap t \sqsubseteq A \}$.  A set $S = \{ \exists R
. A, \forall R . B_1, \dots, \forall R . B_n \}$ is a \emph{successor
  set for} $t$ if there is a concept name $A' \in t$ such that $A'
\sqsubseteq \exists R . A \in \Tmc$ and $\forall R . B_1, \dots,
\forall R . B_n$ is the set of all concepts of this form such that,
for some $B \in t$, we have $B \sqsubseteq \forall R . B_i \in
\Tmc$. Later on, we shall call $S$ a \emph{$\Sigmaquery$-successor
  set} if $R \in \Sigmaquery$. We use $S^\downarrow$ to denote the set
$\{A,B_1,\dots,B_n\}$.  A \emph{path for \Kmc} is a sequence $aS_1
\cdots S_n$ such that $a \in \mn{ind}(\Amc)$, $S_1$ is a successor set
for $\mn{tp}_\K(a)$, and $S_{i+1}$ is a successor set for
$\mn{cl}_{\Tmc}(S_i^\downarrow)$, for $1 \leq i < n$. Now, the
\emph{canonical
model $\I_{\K}$ of \Kmc} is defined as follows:
$$
\begin{array}{rcl}
  \Delta^{\I_\K} &=& \mn{ind}(\A) \cup \{ aS_1 \cdots S_n \mid
  aS_1 \cdots S_n \text{ path for } \K \}, \\
  A^{\I_{\K}} &=& \{ a \mid A \in \mn{tp}_{\Kmc}(a) \}
 \cup \{ aS_1 \cdots S_n \mid n \geq 1 \text{ and } A \in \mn{cl}_\Tmc(S_n^\downarrow) \},
 \\
 R^{\I_{\K}} &=&  \{ (a,b) \mid R(a,b) \in \A \} \cup
 \{ (aS_1 \cdots S_{n-1},aS_1 \cdots S_n) \mid \text{$R$ is the role name in } S_n \}.
\end{array}
$$
The following result is standard:
\begin{lemma}
\label{lem:canmodprops}
  Let $\K=(\T,\A)$ be a \hALC KB in normal form. Then
%
  %
    $\I_\K$ is a model of \Kmc iff \Kmc is consistent iff there is no
    $a \in \mn{ind}(\Amc)$ with $\Tmc \models \mn{tp}_{\K}(a) \sqsubseteq
    \bot$. 
%
\end{lemma}
We now establish the upper bound in Theorem~\ref{thm:exptbox}.
Let $\Tmc_1$ be an \ALC TBox, $\Tmc_2$ a \hALC TBox, and
$\Sigmaabox,\Sigmaquery$ signatures.  Set $m = |\Tmc_2|$.  We aim to
construct an NTA \Amf such that a tree is accepted by \Amf iff this tree encodes a tree $\Sigmaabox$-ABox \Amc of outdegree at
most $m$ that is consistent with both $\Tmc_1$ and $\Tmc_2$ and a
(part of a) model $\Imc_1$ of $(\Tmc_1,\Amc)$ such that the canonical
model $\Imc_{\Tmc_2,\Amc}$ of $(\Tmc_2,\Amc)$ is not
con-$\Sigmaquery$-homomorphically embeddable into $\Imc_1$.  By
Theorem~\ref{thm:homcharhorn}, this means that \Amf accepts the empty
language iff $\Tmc_2$ is $(\Sigma_1,\Sigma_2)$-rCQ entailed
by $\Tmc_1$. To ensure that $\Imc_{\Tmc_2,\Amc}$ is not
con-$\Sigmaquery$-homomorphically embeddable into $\Imc_1$, we use
the characterisation provided by Lemma~\ref{lem:homtosim}.
We first make precise which trees should be accepted by
the NTA \Amf and then show how to construct \Amf.

We assume that $\Tmc_1$ takes the form $\top \sqsubseteq C_{\T_1}$
with $C_{\T_1}$ in NNF and use $\mathsf{cl}(C_{\T_1})$ to denote the
set of subconcepts of $C_{\T_1}$, closed under single negation.  We
also assume that $\Tmc_2$ is in normal form and use $\mn{sub}(\Tmc_2)$
for the set of subconcepts of (concepts in) $\Tmc_2$.  Let $\Gamma_0$
denote the set of all subsets of $\Sigmaabox \cup \{ R^- \mid R \in
\Sigmaabox \}$ that contain at most one role, where a \emph{role} is a
role name $R$ or its \emph{inverse} $R^-$.  Automata will run on $m$-ary
$\Gamma$-labeled trees where
$$
\Gamma ~=~ \Gamma_0 \times 2^{\mn{cl}(\Tmc_1)} \times 2^{\mn{CN}(\Tmc_2)}
\times \{ 0,1 \} \times 2^{\mn{sub}(\Tmc_2)}.
$$
For a $\Gamma$-labeled tree $(T,L)$ and a node $x$
from $T$, we write $L_i(x)$ to denote the $i+1$st component of $L(x)$, for each
$i \in \{0,\dots,4\}$. Informally, the projection of a $\Gamma$-labeled tree
to the
\begin{itemize}

\item[--] $L_0$-components 
represents the
  tree $\Sigmaabox$-ABox \Amc that witnesses non-$\Sigmaquery$-query
  entailment of $\Tmc_2$ by $\Tmc_1$;

\item[--] $L_1$-components (partially) represents a model $\Imc_1$ of
  $(\Tmc_1,\Amc)$;

\item[--] $L_2$-components (partially) represents the
  canonical model $\Imc_{\Tmc_2,\Amc}$ of $(\Tmc_2,\Amc)$;

\item[--] $L_3$-components mark the individual $a$ in \Amc from
  Lemma~\ref{lem:homtosim};

\item[--] $L_4$-components contains bookkeeping information that helps to
  ensure that the individual marked by the $L_3$-component indeed
  satisfies one of the two conditions from Lemma~\ref{lem:homtosim}.

\end{itemize}
By `partial' we mean that the restriction of the respective model
to individuals in \Amc is represented whereas its `anonymous' part
is not.  We now make these intuitions more precise by defining certain
properness conditions for $\Gamma$-labeled trees, one for each
component in the labels, which make sure that each component can
indeed be meaningfully interpreted to represent what it is supposed
to. A $\Gamma$-labeled tree $(T,L)$ is \emph{0-proper} if
it satisfies the following conditions:
\begin{enumerate}

\item for the root $\varepsilon$ of $T$, $L_0(\varepsilon)$ contains no role;

\item for every non-root node $x$ of $T$, $L_0(x)$ contains a role.

\end{enumerate}
Every 0-proper $\Gamma$-labeled tree $(T, L)$ represents the tree
$\Sigmaabox$-ABox
$$
\Amc_{( T, L)} =  \{ A(x) \mid A \in L_0(x) \}
   \cup \{ R(x,y) \mid R \in L_0(y), y \text{ is a child of } x \}
   \cup \{ R(y,x) \mid  R^{-} \in L_0(y), y \text{ is a child of } x \}.
$$
A $\Gamma$-labeled tree $(T, L)$ is \emph{1-proper} if it
satisfies the following conditions, for all $x,y \in T$:
\begin{enumerate}

\item there is a model \Imc of $\Tmc_1$ and a $d \in \Delta^\Imc$ such
  that $d \in C^\Imc$ iff $C \in L_1(x)$ for all $C \in
  \mn{cl}(\Tmc_1)$;


\item $A \in L_0(x)$ implies $A \in L_1(x)$;

\item if $y$ is a child of $x$ and $R \in L_{0}(y)$, then $\forall R . C
  \in L_1(x)$ implies $C \in L_1(y)$ for all $\forall R . C \in \mn{cl}(\Tmc_1)$;

\item if $y$ is a child of $x$ and $R^- \in L_{0}(y)$, then $\forall R . C
  \in L_1(y)$ implies $C \in L_1(x)$ for all $\forall R . C \in \mn{cl}(\Tmc_1)$.

\end{enumerate}
A $\Gamma$-labeled tree $(T, L)$ is \emph{2-proper} if, for every node
$x \in T$,
\begin{enumerate}

\item  $L_2(x)=\mn{tp}_{\T_{2},\Amc_{(T,L)}}(x)$;

\item $\Tmc_2 \not\models \bigsqcap L_2(x) \sqsubseteq \bot$.


\end{enumerate}
%
It is \emph{3-proper} if there is
exactly one node $x$ with $L_3(x)=1$.

The canonical model $\Imc_{\Tmc_2,\Smc}$ of $\Tmc_2$ \emph{and a
  finite set} $\Smc \subseteq \mn{sub}(\Tmc_2)$ is the interpretation
obtained from the canonical model of the KB that consists of the TBox
$\Tmc_{2} \cup \{ A_C \sqsubseteq C \mid C \in \Smc \}$ and the ABox $
\{ A_C(a_\varepsilon) \mid C \in \Smc \}$, with all fresh concept
names $A_C$ removed.  A $\Gamma$-labeled tree $(T,L)$ is
\emph{4-proper} if 
the following
conditions hold, for $x_1,x_2 \in T$:
\begin{enumerate}

\item if $L_3(x_1)=1$, then there is a $\Sigmaquery$-concept name in $L_2(x_1)
  \setminus L_1(x_1)$ or $L_4(x_1)$ is
a $\Sigmaquery$-successor set for~$L_2(x_1)$;

\item yif $L_4(x_1) = \{\exists R . A, \forall R.B_1,\dots,\forall
  R.B_n\}$, then there is a model \Imc of $\Tmc_1$ and a $d \in
  \Delta^\Imc$ such that $d \in C^\Imc$ iff $C \in L_1(x_1)$ for all
  $C \in \mn{cl}(\Tmc_1)$ and $(\Imc_{\Tmc_2,\{A,B_1,\dots,B_n\}},a_\varepsilon)
  \not\leq_{\Sigmaquery} (\Imc,e)$ for all $(d,e) \in R^\Imc$;


\item if $x_2$ is a child of $x_1$, $L_0(x_2)$ contains the role name
  $R$, and $L_4(x_1)= \{ \exists R . A, \forall R . B_1, \dots,\forall
  R . B_n \}$, then
  %
there is a $\Sigmaquery$-concept name in
  $\mn{cl}_{\Tmc_2}(\{A,B_1,\dots,B_n\}) \setminus L_1(x_2)$ or
$L_4(x_2)$ is
  a $\Sigmaquery$-successor set for $\mn{cl}_{\Tmc_2}(\{A,B_1,\dots,B_n\})$;

\item if $x_2$ is a child of $x_1$, $L_0(x_2)$ contains the role
  $R^-$, and $L_4(x_2)= \{ \exists R . A, \forall R . B_1,
  \dots,\forall R . B_n \}$, then
  %
%
there is a $\Sigmaquery$-concept name in
  $\mn{cl}_{\Tmc_2}(\{A,B_1,\dots,B_n\}) \setminus L_1(x_1)$ or
%
$L_4(x_1)$ is
  a $\Sigmaquery$-successor set for $\mn{cl}_{\Tmc_2}(\{A,B_1,\dots,B_n\})$.
%


\end{enumerate}
For $L_4(x) = \{\exists R . A, \forall R .B_1,\dots,\forall
  R.B_n\}$, this expresses the obligation that
$(\Imc_{\Tmc_2,\{A,B_1,\dots,B_n\}},a_\varepsilon) \not\leq_{\Sigmaquery}
(\Imc,e)$, for $(d,e) \in R^\Imc$, where \Imc is the interpretation that is (partly) represented by the
$L_1$-components of the labels in $(T,L)$; see the proof of
Lemma~\ref{lem:gammattrees} for a precise definition of~$\Imc$.  With
this in mind, note how 4-properness addresses (1) and (2) of
Lemma~\ref{lem:homtosim}. In fact, Condition~1 of 4-properness decides
whether (1) or (2) is satisfied. If (2) is satisfied,
which says that there is an $R$-successor $d$ of $x_1$ in
$\Imc_{\Tmc_2,\Amc}$, for some $\Sigmaquery$-role name $R$, such that
$d \notin \mn{ind}(\Amc)$ and, for all $R$-successors $e$ of $x_1$ in
$\Imc$, we have
$(\Imc_{\Tmc_2,\Amc},d) \not\leq_{\Sigmaquery}(\Imc,e)$, then the
role name $R$ and the element $d$ are represented by the successor set
stored in $L_4(x_1)$. In fact, that element is $d=x_1L_4(x_1)$,
see the definition of canonical models. The remaining conditions of
4-properness implement the obligations represented by the
$L_4$-components of node labels.
\begin{lemma}
\label{lem:gammattrees}
There is an $m$-ary $\Gamma$-labeled tree that is $i$-proper for all
$i \in \{0,\dots,4\}$ iff there are a tree $\Sigmaabox$-ABox
$\Amc$ of outdegree at most $m$ that is consistent with $\Tmc_1$
and $\Tmc_2$ and a model $\Imc_1$ of
$(\Tmc_1,\Amc)$ such that the canonical model $\Imc_{\Tmc_2,\Amc}$ of
$(\Tmc_2,\Amc)$ is not con-$\Sigmaquery$-homomorphically embeddable
into $\Imc_1$.
\end{lemma}
\begin{proof}
$(\Rightarrow)$ Let $(T,L)$ be an $m$-ary $\Gamma$-labeled tree that is
  $i$-proper for all $i \in \{0,\dots,4\}$. Then $\Amc_{(T,L)}$ is a
  tree $\Sigmaabox$-ABox of outdegree at most $m$.  Moreover,
  $\Amc_{(T,L)}$ is consistent with $\Tmc_2$, by 2-properness and
  Lemma~\ref{lem:canmodprops}.

  Since $(T,L)$ is $3$-proper, there is exactly one $x_0 \in T$ with
  $L_3(x_0)=1$. By construction, $x_0$ is also an individual name in
  $\Amc_{(T,L)}$.  To finish this direction of the proof, it suffices
  to construct a model $\Imc_1$ of $(\Tmc_1,\Amc_{(T,L)})$ such that
  $(\Imc_{\Tmc_2,\Amc},x_0) \not\leq_{\Sigmaquery} (\Imc_1,x_0)$.  In
  fact, such an $\Imc_1$ witnesses consistency of $\Amc_{(T,L)}$ with
  $\Tmc_1$ and, moreover, by the definition of simulations, $\Imc_1$ must
  satisfy one of (1) or (2) of Lemma~\ref{lem:homtosim} with $a$
  replaced by $x_0$. Consequently, by that lemma, $\Imc_{\Tmc_2,\Amc}$
  is not con-$\Sigmaquery$-homomorphically embeddable into $\Imc_1$.

  We start with the interpretation $\Imc_0$ defined as follows:
  $$
  \begin{array}{r@{~}c@{~}l}
    \Delta^{\Imc_0} &=& T, \\
    A^{\Imc_0} &=& \{ x \in T \mid A \in L_1(x) \}, \\ 
    R^{\Imc_0} &=& \{ (x_1,x_2) \mid x_2 \text{ child of } x_1 \text{
                   and } R \in L_0(x_2) \} \, \cup{} 
    \{ (x_2,x_1) \mid x_2 \text{ child of } x_1 \text{
                   and } R^- \in L_0(x_2) \}.
  \end{array}
  $$
  Then take, for each $x \in T$, a model $\Imc_x$ of $\Tmc_{1}$ such
  that $x \in C^{\Imc_x}$ iff $C \in L_1(x)$ for all $C \in
  \mn{cl}(\Tmc_1)$, which exists by Condition~1 of 1-properness.
  Moreover, if $L_4(x) = \{\exists R . A, \forall R .B_1,\dots,\forall
  R.B_n\}$, then choose $\Imc_x$ such that
  $(\Imc_{\Tmc_2,\{A,B_1,\dots,B_n\}},a_\varepsilon)
  \not\leq_{\Sigmaquery} (\Imc_x,y)$ for all $(x,y) \in R^{\Imc_x}$,
  which is possible by Condition~2 of 4-properness. Further, suppose 
  $\Delta^{\Imc_0}$ and $\Delta^{\Imc_x}$ share only the element
  $x$. Then $\Imc_1$ is the union of $\Imc_0$ and all chosen
  interpretations $\Imc_x$.  It is straightforward to prove that
  $\Imc_1$ is indeed a model of $(\Tmc_1,\Amc_{(T,L)})$.

  We show that $(\Imc_{\Tmc_2,\Amc_{(T,L)}},x_0)
  \not\leq_{\Sigmaquery} (\Imc_1,x_0)$.
  By Condition~1 of 4-properness, there is a $\Sigmaquery$-concept name
  $A$ in $L_2(x_0) \setminus L_1(x_0)$ or $L_4(x_0)$ is a
  $\Sigmaquery$-successor set for $L_2(x_0)$. In the former case, $x_0
  \in A^{\Imc_{\Tmc_2,\Amc_{(T,L)}}} \setminus A^{\Imc_1}$, and so we
  are done. In the latter case, it suffices to show the following.
  \\[2mm]
  {\em Claim.} For all $x \in T$, if $L_4(x) = \{ \exists R . A,
  \forall R . B_1, \dots, \forall R . B_n \}$, then
    $(\Imc_{\Tmc_2,\{A,B_1,\dots,B_n\}},a_\varepsilon)
    \not\leq_{\Sigmaquery}(\Imc_1,y)$ for all $(x,y) \in R^{\Imc_1}$.
  \\[2mm]
  The proof of the claim is by induction on the co-depth of $x$ in
  $\Amc_{(T,L)}$, which is the length $n$ of the longest sequence of
  role assertions $R_1(x,x_1),\dots,R_n(x_{n-1},x_n)$ in
  $\Amc_{(T,L)}$. It uses Conditions~2 to~4 of 4-properness.

  $(\Leftarrow)$ Let \Amc be a tree $\Sigmaabox$-ABox of outdegree at most $m$ that is
  consistent with $\Tmc_1$ and $\Tmc_2$, and $\Imc_1$ a model of
  $(\Tmc_1,\Amc)$ such that $\Imc_{\Tmc_2,\Amc}$ is not
  con-$\Sigmaquery$-homomorphically embeddable into $\Imc_1$. By duplicating
  successors, we can make sure that every non-leaf in \Amc has exactly $m$
  successors. We can further assume without loss of generality that $\mn{ind}(\Amc)$ is a
  prefix-closed subset of $\mathbbm{N}^*$ that reflects the tree-shape of \Amc,
  that is, $R(a,b) \in \Amc$ implies $b=a\cdot c$ or $a = b \cdot c$, for some
  $c \in \mathbbm{N}$.  By Lemma~\ref{lem:homtosim}, there is an $a_0 \in
  \mn{ind}(\Amc)$ such that one of the following holds:
  \begin{enumerate}

  \item[(1)] there is a $\Sigmaquery$-concept name $A$ with
    $a_0 \in A^{\Imc_{\Tmc_2,\Amc}} \setminus A^{\Imc_1}$;

  \item[(2)] there is an $R_0$-successor $d_0$ of $a_0$ in
    $\Imc_{\Tmc_2,\Amc}$, for some $\Sigmaquery$-role name $R_0$, such
    that $d_0 \notin \mn{ind}(\Amc)$ and, for all $R_0$-successors $d$
    of $a_0$ in $\Imc_1$, we have  $(\Imc_{\Tmc_2,\Amc},d_0)
    \not\leq_{\Sigmaquery}(\Imc_1,d)$.

  \end{enumerate}
  We now show how to construct from \Amc a $\Gamma$-labeled tree
  $(T,L)$ that is $i$-proper for all $i \in \{0,\dots,4 \}$.  For each
  $a \in \mn{ind}(\Amc)$, set $\Rsf(a) = \emptyset$ if $a=\varepsilon$,
  and otherwise set $\Rsf(a) = \{ R\}$ if $R(b,a) \in \Amc$ and $a= b
  \cdot c$, for some $c \in \mathbbm{N}$, and $\Rsf(a) = \{ R^- \}$ if
  $R(a,b) \in \Amc$ and $a= b \cdot c$, for some $c \in \mathbbm{N}$.
  Now set
  $$
  \begin{array}{rcl}
    T &=& \mn{ind}(\Amc) ,\\
    L_0(x) &=& \{ A \mid A(x) \in \Amc \} \cup \{\Rsf(x) \},
    \\
    L_1(x) &=& \{ C \in \mn{cl}(\Tmc_1) \mid x \in C^{\Imc_1} \},
    \\
    L_2(x) &=& \mn{tp}_{\Tmc_{2},\Amc}(x), \\
    L_3(x) &=& \left \{
               \begin{array}{rl}
                 1 & \text{ if } x=a_0, \\
                 0 & \text{ otherwise.}
               \end{array}
               \right .
  \end{array}
  $$
  It remains to define $L_4$.  Start with setting $L_4(x)=\emptyset$ for
  all~$x$. If (1) above holds, we are done. If (2) holds, then
  there is a $\Sigmaquery$-successor set $\Smc =\{ \exists R_0 . A, \forall R_0
  . B_1,\dots,\forall R_0 . B_n\}$ for $L_2(a_0)$ such that the restriction of
  $\Imc_{\Tmc_2,\Amc}$ to the subtree-interpretation rooted at $d_0$ is the
  canonical model $\Imc_{\Tmc_{2},\{A,B_1,\dots,B_n\}}$. Set $L_4(a_0)=\Smc$. We
  continue to modify $L_4$, proceeding in rounds. To keep track of the
  modifications that we have already done, we use a set
  $$\Omega \subseteq
  \mn{ind}(\Amc) \times (\NR \cap \Sigmaquery) \times
  \Delta^{\Imc_{\Tmc_2,\Amc}}
   $$
   such that the following conditions are satisfied:
   \begin{itemize}

   \item[(i)] if $(a,R,d) \in \Omega$, then $L_4(a)$ has the form $\{ \exists R
     . A, \forall R . B_1,\dots,\forall R . B_n\}$ and the restriction of
     $\Imc_{\Tmc_2,\Amc}$ to the subtree-interpretation rooted at $d$ is the
     canonical model $\Imc_{\Tmc_{2},\{A,B_1,\dots,B_n\}}$;

   \item[(ii)] if $(a,R,d) \in \Omega$ and $d'$ is an $R$-successor of
     $a$ in $\Imc_1$, then $(\Imc_{\Tmc_2,\Amc},d)
     \not\leq_{\Sigmaquery} (\Imc_1,d')$.


   \end{itemize}
   Initially, set $\Omega = \{ (a_0,R_0,d_0) \}$.  In each round of
   the modification of $L_4$, iterate over all elements $(a,R,d) \in
   \Omega$ that have not been processed in previous rounds. Let
   $L_4(a)= \{ \exists R . A, \forall R . B_1,\dots,\forall R . B_n\}$
   and iterate over all $R$-successors $b$ of $a$ in \Amc. By (ii),
   $(\Imc_{\Tmc_2,\Amc},d) \not\leq_{\Sigmaquery} (\Imc_1,b)$. By (i),
   there is thus a top-level $\Sigmaquery$-concept name $A'$ in
   $\mn{cl}_{\Tmc_2}(\{A,B_1,\dots,B_n\})$ such that $b \notin
   {A'}^{\Imc_1}$ or there is an $R'$-successor $d'$ of $d$ in
   $\Imc_{\Tmc_2,\Amc}$, $R'$ a $\Sigmaquery$-role name, such that for
   all $R'$-successors $d''$ of $b$ in $\Imc_1$,
   $(\Imc_{\Tmc_2,\Amc},d') \not\leq_{\Sigmaquery} (\Imc_1,d'')$. In
   the former case, we do nothing. In the latter case, there is a
   $\Sigmaquery$-successor set $S'=\{ \exists R' . A', \forall R'
   . B'_1,\dots,\forall R' . B'_{n'}\}$ for
   $\mn{cl}_{\Tmc_2}(\{A,B_1,\dots,B_n\})$ such that the restriction
   of $\Imc_{\Tmc_2,\Amc}$ to the subtree-interpretation rooted at
   $d'$ is the canonical model
   $\Imc_{\Tmc_{2},\{A',B'_1,\dots,B'_{n'}\}}$. Set $L_4(b)=S'$ and add
   $(b,R',d')$ to $\Omega$.

   Since we are only following role names (but not inverse roles)
   during the modification of $L_4$ and since \Amc is tree-shaped, we
   shall never process tuples $(a_1,R_1,d_1), (a_2,R_2,d_2)$ from
   $\Omega$ such that $a_1=a_2$. For any $x$, we might thus only
   redefine $L_4(x)$ from the empty set to a non-empty set, but never
   from one non-empty set to another. For the same reason, the
   definition of $L_4$ finishes after finitely many rounds.

   It can be verified that the $\Gamma$-labeled tree $(T,L)$ just
   constructed is $i$-proper for all $i \in \{0,\dots,4\}$. The
   most interesting point is 4-properness, which consists of four
   conditions. Condition~1 is satisfied by the construction of $L_4$.
   Condition~2 is satisfied by (ii), and Conditions~3 and~4 again
   by the construction of $L_4$.
\end{proof}

By Theorem~\ref{thm:homcharhorn} and Lemma~\ref{lem:gammattrees}, we
can decide whether $\Tmc_1$ does $(\Sigma_1,\Sigma_2)$-rCQ entail
$\Tmc_2$ by checking whether there is no $\Gamma$-labeled tree that is
$i$-proper for each $i \in \{0,\dots,4\}$. We do this by constructing
automata $\Amc_0,\dots,\Amc_4$ such that each $\Amc_i$ accepts exactly
the $\Gamma$-labeled trees that are $i$-proper, then intersecting the
automata and finally testing for emptiness. Some of the constructed
automata are \TWABAs while others are NTAs. Before intersecting, all
\TWABAs are converted into equivalent NTAs (which involves an
exponential blowup). To achieve \ExpTime overall complexity, the
constructed \TWABAs should thus have at most polynomially many states, 
while the NTAs can have at most (single) exponentially many
states. 
It is straightforward to construct
\begin{enumerate}

\item[--] an NTA $\Amf_0$ that checks 0-properness and has constantly many
  states;

\item[--] a \TWABA $\Amf_1$ that checks 1-properness and whose number of
  states is polynomial in $|\Tmc_1|$ (note that Conditions~1 and~2 of
  1-properness are in a sense trivial as they could also be guaranteed
  by removing undesired symbols from the alphabet $\Gamma$;
  %

\item[--] an NTA $\Amf_3$ that checks 3-properness and has constantly many
  states.

\end{enumerate}
%
It thus remains to construct
\begin{enumerate}

\item[--] a \TWABA $\Amf_2$ that checks 2-properness and whose number
  of states is polynomial in $|\Tmc_2|$;

\item[--] an NTA $\Amf_4$ that checks 4-properness and whose number of
states is (single) exponential in $|\Tmc_2|$.

\end{enumerate}
In fact, the reason for mixing \TWABAs and NTAs is that while $\Amf_2$
is easier to construct as a \TWABA, there is no obvious way to
construct $\Amf_4$ as a \TWABA with only polynomially many states: it
seems that one state is needed for every possible value of the
$L_4$-components in $\Gamma$-labels.
The \TWABA $\Amf_2$ is actually the intersection of two \TWABAs
$\Amf_{2,1}$ and $\Amf_{2,2}$. The \TWABA $\Amf_{2,1}$ ensures one
direction of Condition~1 of 2-properness as well as Condition~2,
that is:
\begin{itemize}

\item[(i)] 
 $(\Tmc_{2},\Amc_{(T,L)}) \models A(x)$ implies $A \in L_2(x)$ for all $x
  \in T$ and $A \in \mn{CN}(\Tmc_2)$;

\item[(ii)] $\Tmc_{2} \not\models \bigsqcap L_2(x) \sqsubseteq \bot$.

\end{itemize}
Note that, by Lemma~\ref{lem:canmodprops}, (i) and (ii) imply that
$\Amc_{(T,L)}$ is consistent with $\Tmc_2$. It is easy for a \TWABA
to verify~(ii), alternatively one can simply refine $\Gamma$.  To
achieve~(i), it suffices to guarantee the following conditions, for 
$x_1,x_2 \in T$:
\begin{enumerate}

\item[--] $A \in L_0(x_1)$ implies $A \in L_2(x_1)$;

\item[--] if $A_1,\dots,A_n \in L_2(x_1)$ and $\Tmc_2 \models A_1
  \sqcap \cdots \sqcap A_n \sqsubseteq A$, then $A \in L_2(x_1)$;

\item[--] if $A \in L_2(x_1)$, $x_2$ is a successor of $x_1$,
  $R \in L_0(x_2)$, and $A \sqsubseteq \forall R . B \in \Tmc_2$,
  then $B \in L_2(x_2)$;

\item[--] if $A \in L_2(x_2)$, $x_2$ is a successor of $x_1$,
  $R^- \in L_0(x_2)$, and $A \sqsubseteq \forall R . B \in \Tmc_2$,
  then $B \in L_2(x_1)$;

\item[--] if $A \in L_2(x_2)$, $x_2$ is a successor of $x_1$,
  $R \in L_0(x_2)$, and $\exists R . A \sqsubseteq B \in \Tmc_2$,
  then $B \in L_2(x_1)$;

\item[--] if $A \in L_2(x_1)$, $x_2$ is a successor of $x_1$,
  $R^- \in L_0(x_2)$, and $\exists R . A \sqsubseteq B \in \Tmc_2$,
  then $B \in L_2(x_2)$,

\end{enumerate}
all of which are easily verified with a \TWABA. Note that Conditions~1 and~2 can again be ensured by
refining $\Gamma$.

The purpose of $\Amf_{2,2}$ is to ensure the converse of (i). Before
constructing it, it is convenient to characterise the entailment
of concept names at ABox individuals in terms of derivation trees.
A \emph{$\Tmc_2$-derivation tree} for an assertion $A_0(a_0)$ in \Amc
with $A_0 \in \mn{CN}(\Tmc_2)$ is a finite $\mn{ind}(\Amc) \times
\mn{CN}(\Tmc_2)$-labeled tree $(T,V)$ that satisfies
the following conditions:
\begin{itemize}

\item[--] $V(\varepsilon)=(a_0,A_0)$;




\item[--] if $V(x)=(a,A)$ and neither $A(a) \in \Amc$ nor \mbox{$\top
  \sqsubseteq A \in \Tmc_2$}, then one of the following holds:
  \begin{itemize}

  \item $x$ has successors $y_1,\dots,y_n$ with
    $V(y_i)=(a,A_i)$, for $1 \leq i \leq n$, and $\Tmc_2 \models A_1
    \sqcap \cdots \sqcap A_n \sqsubseteq A$;

  \item $x$ has a single successor $y$ with $V(y)=(b,B)$ and there is
    an $\exists R . B \sqsubseteq A \in \Tmc_2$ such that
    $R(a,b) \in \Amc$;

  \item $x$ has a single successor $y$ with $V(y)=(b,B)$ and
    there is a $B \sqsubseteq \forall R . A \in \Tmc_2$ such
    that $R(b,a) \in \Amc$.

  \end{itemize}

\end{itemize}
%
%
\begin{lemma}
\label{lem:derivationtrees}
  %
  %
  %
  %
If $(\Tmc_2 , \Amc) \models A(a)$ and \Amc is consistent with
    $\Tmc_2$, then there is a derivation tree for $A(a)$ in \Amc, for
    all assertions $A(a)$ with $A \in \mn{CN}(\Tmc_2)$ and $a \in \mn{ind}(\Amc)$.
%
  %
  %
\end{lemma}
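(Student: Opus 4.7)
The plan is to reduce the construction to an induction over the chase computation defined at the beginning of the section. Because \Amc is consistent with $\Tmc_2$ and $\Tmc_2$ is in Horn-\ALC normal form, the standard soundness/completeness argument for the chase gives $\Amc,\Tmc_2 \models A(a)$ iff $A(a) \in \Amc^c$ for every $A \in \mn{CN}(\Tmc_2)$ and every $a \in \mn{ind}(\Amc)$. Enumerating the chase as a sequence of ``addition steps'', I induct on the first step $n$ at which $A(a)$ enters $\Amc^c$ and construct a derivation tree for each such assertion with $a \in \mn{ind}(\Amc)$.

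The base case covers assertions already in \Amc and those added by Rule~4 from some $\top \sqsubseteq A \in \Tmc_2$; in both cases the single-leaf tree labeled $(a,A)$ is already a derivation tree per the first bullet of the definition. In the inductive step I distinguish by the rule applied. Rules~1 and~2 add $A(a)$ from premises $A'(a)$ or $A_1(a), A_2(a)$ at the same individual; the induction hypothesis supplies derivation trees for the premises, and attaching them under a root $(a,A)$ is valid because $\Tmc_2 \models A' \sqsubseteq A$ or $\Tmc_2 \models A_1 \sqcap A_2 \sqsubseteq A$ holds by the very CI that licensed the rule. Rule~3 (resp.\ Rule~6) with the other premise at an ABox individual $b \in \mn{ind}(\Amc)$ is handled symmetrically via the second (resp.\ third) bullet, using the fact that the required $R(a,b)$ (resp.\ $R(b,a)$) assertion actually lies in the original \Amc: the chase introduces role assertions only through Rule~5, which always creates a fresh element on one side.

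The main obstacle is the case where the premise of the last chase step lives at a \emph{fresh} element $b$ produced by Rule~5, since the derivation-tree syntax refers only to $\mn{ind}(\Amc)$; the detour through $b$ must be collapsed into a single TBox-entailment step. The key structural observation is that in Horn-\ALC the fresh elements form a tree of successors rooted at $\mn{ind}(\Amc)$, no role assertion from the subtree below $b$ reaches back to $b$'s parent, and hence the set $\{A \in \mn{CN}(\Tmc_2) : A(b) \in \Amc^c\}$ is determined, through $\Tmc_2$-entailment, by the ``seed'' at $b$, which in turn is determined by the concept assertions $D(a) \in \Amc^c$ at $b$'s parent $a$. Concretely, if Rule~3 adds $A(a)$ using $\exists R.A' \sqsubseteq A \in \Tmc_2$ together with $R(a,b)$ and $A'(b)$ for a fresh $b$, then there is a finite set $\{D_1,\dots,D_k\} \subseteq \mn{CN}(\Tmc_2)$ with $D_i(a) \in \Amc^c$ for each $i$ and $\Tmc_2 \models D_1 \sqcap \cdots \sqcap D_k \sqsubseteq A$, obtained by absorbing the Rule~5/6 propagation into a single CI entailment. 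All $D_i(a)$'s are derived strictly before step $n$, since they had to be present at $a$ by the time the subtree below $b$ was completed and $A'(b)$ entered $\Amc^c$. Applying the IH to each $D_i(a)$ and combining the resulting trees at a root $(a,A)$ via the first bullet of the derivation-tree definition yields the required derivation tree; Rule~6 with a fresh $b$ and the inverse-role variants introduced by Rule~5 are handled analogously.
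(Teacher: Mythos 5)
Your proof is correct and takes essentially the same approach as the paper, whose own ``proof'' is just the remark that the lemma follows from the chase procedure (citing the $\mathcal{ELI}_\bot$ case): an induction over chase steps in which derivations passing through anonymous elements introduced by Rule~5 are absorbed into a single entailed inclusion $\Tmc_2 \models D_1 \sqcap \cdots \sqcap D_k \sqsubseteq A$ at the parent ABox individual, matching the first bullet of the derivation-tree definition. The only phrasing to tighten is ``by the time the subtree below $b$ was completed'' (the chase, and hence that subtree, may be infinite); the intended and correct statement is the stepwise one you gesture at, namely that every concept name present at $b$ before step $n$ is $\Tmc_2$-entailed by the seed concepts pushed to $b$ before step $n$, whose triggering assertions $D_i(a)$ therefore predate step $n$ and are covered by the induction hypothesis.
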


(A proof of Lemma~\ref{lem:derivationtrees} is based on the chase
procedure, details can be found in \cite{BienvenuLW-ijcai-13}.) 
We are now ready to construct the \TWABA~$\Amf_{2,2}$. Since $\Amf_{2,1}$ ensures
that $\Amc_{(T,L)}$ is consistent with $\Tmc_2$, by
Lemma~\ref{lem:derivationtrees} it is enough for
$\Amf_{2,2}$ to verify that, for each node $x \in T$ and each concept
name $A \in L_2(x)$, there is a $\Tmc_2$-derivation tree for $A(x)$ in
$\Amc_{(T,L)}$.

For readability, we use $\Gamma^- = \Gamma_0 \times \mn{CN}(\Tmc_2)$ as
the alphabet instead of $\Gamma$ since transitions of $\Amf_{2,2}$
only depend on the $L_0$- and $L_2$-components of $\Gamma$-labels. Let
$\mn{rol}(\Tmc_2)$ be the set of all roles $R,R^-$ such that the role
name $R$ occurs in $\Tmc_2$.  Set $\Amf_2=(Q,\Gamma^-,\delta,q_0,F)$, where 
  $
  Q = \{
  q_0 \} \uplus \{ q_A \mid A \in \mn{CN}(\Tmc_2) \} 
   \uplus \{ q_{A,R}, q_R \mid A \in \mn{CN}(\Tmc_2), 
     R \in \mn{rol}(\Tmc_2) \}
  $
  and $F=\emptyset$ (i.e., exactly the finite runs are accepting).
  %
  For all $(\sigma_0,\sigma_2) \in \Gamma^-$, set
  $$
  \begin{array}{rcll}
    \delta(q_0,(\sigma_0,\sigma_2)) &=& \displaystyle \bigwedge_{A
      \in \sigma_2} (0,q_A) \wedge (\mn{leaf} \vee \bigwedge_{i \in
                                        1..m} (i,q_0)), \\
    \delta(q_A,(\sigma_0,\sigma_2))&=&\mn{true}, & \text{whenever } A
                                                  \in \sigma_0 \text{
                                                  or } \top
  \sqsubseteq A \in \Tmc_2, \\ 
\delta(q_A,(\sigma_0,\sigma_2))&=&
\displaystyle
    \bigvee_{\Tmc_2 \models A_1 \sqcap \cdots \sqcap A_n \sqsubseteq A}
    ((0,q_{A_1}) \wedge \cdots \wedge (0,q_{A_n})) \vee{}
                                    &\text{whenever } A \notin
                                      \sigma_0 \text{ and } \top \sqsubseteq A \notin \Tmc_2,
\\
&&\displaystyle
    \bigvee_{\exists R . B \sqsubseteq A \in \Tmc, \ R \in \Sigmaabox}
    (((0,q_{R^-}) \wedge (-1,q_{B})) \vee
\displaystyle
    \bigvee_{i \in 1..m} (i,q_{B,R})) \vee{}\\[5mm]
&&\displaystyle
    \bigvee_{B \sqsubseteq \forall R . A \in \Tmc, \ R \in \Sigmaabox}
    ((0,q_{R}) \wedge (-1,q_{B})) \vee \bigvee_{i \in 1..m} (i,q_{B,{R^-}})),
\\
\delta(q_{A,R},(\sigma_0,\sigma_2))&=& (0,q_A), &\text{whenever } R \in
                                       \sigma_0, \\
\delta(q_{A,R},(\sigma_0,\sigma_2))&=& \mn{false}, &\text{whenever } R
                                       \notin \sigma_0, \\
\delta(q_{R},(\sigma_0,\sigma_2))&=& \mn{true},  &\text{whenever } R \in
                                     \sigma_0 ,\\
\delta(q_{R},(\sigma_0,\sigma_2))&=&\mn{false}, & \text{whenever } R \notin \sigma_0.
\end{array}
$$
Note that the finiteness of runs ensures that $\Tmc_2$-derivation
  trees are also finite, as required.

We next discuss the construction of the NTA $\Amf_4$, omitting most of the details because the construction is not difficult. 
Conditions~1 and~2 of 4-properness can be enforced by making sure that
certain symbols from $\Gamma$ do not occur. However, in the case of
Condition~2, we have to decide during the automaton construction
whether, for given sets $S_1 \subseteq \mn{cl}(\Tmc_1)$ and
$S_2=\{\exists R_0 . A, \forall R_0 .B_1,\dots,\forall R_0.B_n\} \subseteq \mn{sub}(\Tmc_2)$, there is a model $\Imc$ of $\Tmc_1$
and a $d \in \Delta^\Imc$ such that
\begin{enumerate}

\item[(a)] $d \in C^\Imc$ iff $C \in S_1$ for all $C \in \mn{cl}(\Tmc_1)$
  and

\item[(b)] $(\Imc_{\Tmc_2,S_2^\downarrow},a_\varepsilon)
  \not\leq_{\Sigma_2} (\Imc,e)$ for all $(d,e) \in R_0^\Imc$.

\end{enumerate}
We have to show that this check can be done in \ExpTime.  We
give a sketch of a decision procedure based on nondeterministic
B\"uchi automata on infinite trees that borrows ideas from
the above constructions, but is much simpler.
\begin{definition}
\label{def:NBA}
\em 
  A \emph{nondeterministic B\"uchi tree automaton} (NBA) on infinite $m$-ary
  trees is a tuple $\Amf=(Q, \Gamma, Q_0, \delta,F)$ where $Q$ is a
  finite set of \emph{states}, $\Gamma$  a finite alphabet,
  $Q_0 \subseteq Q$ a set of \emph{initial states},
  $\delta\colon Q \times \Gamma \rightarrow 2^{Q^m}$ a \emph{transition
    function}, and 
  $F \subseteq Q$ is an \emph{acceptance condition}.
  Let $(T,L)$ be a $\Gamma$-labeled $m$-ary tree.  A \emph{run} of
  \Amf on $(T,L)$ is a $Q$-labeled $m$-ary tree $(T,r)$ such that
  $r(\varepsilon) \in Q_0$ and
  $\langle r(x \cdot 1), \ldots ,r(x \cdot m)\rangle \in \delta( r(x),
  L(x))$, for each $x \in T$.
  We say that $(T,r)$ is \emph{accepting} if in all infinite paths
  $ y_1y_2\cdots$ of $T$, the set $\{ i \mid r(y_i) \in F \}$ is
  infinite.  An infinite $\Gamma$-labeled tree $(T,L)$ is
  \emph{accepted} by \Amf if there is an accepting run of \Amf on
  $(T,L)$. We use $\L(\Amf)$ to denote the set of all infinite
  $\Gamma$-labeled trees accepted by \Amf.
\end{definition}
The emptiness problem for NBAs can be solved in polynomial time.  Our
aim is to build an NBA \Bmf such that the labeled trees accepted by
\Bmf represent tree interpretations \Imc that satisfy Conditions~(a)
and~(b). We make precise which trees should be accepted by \Bmf. Let
$\Gamma'_0$ be the set of all subsets of $\mn{cl}(\Tmc_1) \cup \{
R \in \NR \mid R \text{ occurs in } \Tmc_1 \}$ that contain at most
one role name and let $\Gamma'=(\Gamma'_0 \times 2^{\mn{sub}(\Tmc_2)}) \cup
\{\mathit{empty}\}$. For a $\Gamma'$-labeled tree $(T,L)$ and a
node $x$ in $T$ with $L(x) \neq \mathit{empty}$, we write $L_i(x)$
to denote the $i+1$st component of $L(x)$, for  $i \in
\{0,1\}$. Informally, the projection of a $\Gamma'$-labeled tree to
the $L_0$-components 
represents \Imc and the projection to the $L_1$-components contains
bookkeeping information that helps to ensure Condition~(b).  A
$\Gamma'$-labeled tree is \emph{proper} if the following
conditions hold, for $x_1,x_2 \in T$:
\begin{enumerate}

\item[--] $L(\varepsilon)=(S_1,S_2)$;


\item[--] if $L(x_1) \neq \mathit{empty}$, then $L_0(x_1)$ is satisfiable with $\Tmc_1$;

\item[--] if $x_2$ is a child of $x_1$ and $R \in L_0(x_2)$, then $\forall R . C
  \in L_0(x_1)$ implies $C \in L_0(x_2)$ for all $\forall R . C \in \mn{cl}(\Tmc_1)$;


\item[--] if $\exists R . C \in L_0(x_1)$, then there is a child $x_2$ of
  $x_1$ such that $\{R,C\} \subseteq L_0(x_2)$;

\item[--] if $x_2$ is a child of $x_1$ and $L(x_1)=\mathit{empty}$,
  then $L(x_2)=\mathit{empty}$;

\item[--] if $x_2$ is a child of $x_1$, $L_0(x_2)$ contains the role name
  $R$, and $L_1(x_1)= \{ \exists R . A, \forall R . B_1, \dots,\forall
  R . B_n \}$, then
  %
there is a $\Sigmaquery$-concept name in
  $\mn{cl}_{\Tmc_2}(\{A,B_1,\dots,B_n\}) \setminus L_0(x_2)$ or
$L_1(x_2)$ is
  a $\Sigmaquery$-successor set for $\mn{cl}_{\Tmc_2}(\{A,B_1,\dots,B_n\})$;

\item[--] there are only finitely many nodes $x$ with $L_1(x) \neq \emptyset$.

\end{enumerate}
In the conditions above, we assume that whenever a condition is posed
on a component of the label of a node $x$, then
$L(x) \neq \mathit{empty}$. Note that the $L_1$-component of a node
label plays the same role as the $L_4$-component in the previous construction.  Every proper $\Gamma'$-labeled tree $(T, L)$
represents the following tree interpretation $\Imc_{(T,L)}$:
$$
\begin{array}{rcl}
   \Delta^{\Imc_{(T,L)}} &=& \{ x \in T \mid
                             L(x) \neq \mathit{empty} \}, \\
  A^{\Imc_{(T,L)}} &=& \{ x \mid A \in L_0(x) \}, \\
  R^{\Imc_{(T,L)}} &=& \{ (x_1,x_2) \mid \text{$x_2$ child of $x_1$
                       and } R \in L_0(x_2) \}.
\end{array}
$$
Set $m'=|\Tmc_1|$. The proof of the following lemma is similar to that
of Lemma~\ref{lem:gammattrees}, but simpler. 
\begin{lemma}
  There is an $m'$-ary proper $\Gamma'$-labeled tree $(T,L)$ iff there
  is a model \Imc of $\Tmc_1$ and a $d \in \Delta^\Imc$ that satisfy
  Conditions~$(a)$ and~$(b)$ from before Definition~\ref{def:NBA}; in
  fact, $\Imc_{(T,L)}$ is such a model.
\end{lemma}
It is now straightforward to construct an NBA \Bmf whose number of
states is polynomial in $|\Tmc_1|$ and exponential in $|\Tmc_2|$ and
which accepts exactly the $m'$-ary proper $\Gamma'$-labeled trees.
Details are left to the reader.

%

\subsection{2\ExpTime upper bound for $\Theta$-CQ-entailment of \hALC TBoxes by \ALC TBoxes}

We now consider the case of non-rooted CQs. Our aim is to prove the following 2\ExpTime upper bound:
\begin{theorem}
\label{thm:2exptbox:upper}
$\Theta$-CQ entailment of \hALC TBoxes by \ALC TBoxes is in 2\ExpTime.
%
\end{theorem}
%
The proof again builds on the characterisations provided by
Theorem~\ref{thm:homcharhorn}. Since we are now working with CQs rather
than rCQs, we have to consider $\Sigma_2$-homomorphic embeddability
instead of con-$\Sigma_2$-homomorphic embeddability. Note that
Lemma~\ref{lem:homtosim} also provides a characterisation in terms of
simulations in that case, adding a third condition. We modify the
previous construction to accommodate this additional condition.

Condition (2) of Lemma~\ref{lem:homtosim} tells us to avoid certain
simulations. In the previous construction, we were able to do that by
storing a single successor set in the $L_4$-component of each
$\Gamma$-label, that is, it was sufficient to avoid at most one
simulation into each individual of the ABox $\Amc_{(T,L)}$. In the
current construction, this is no longer the case. We thus let the
$L_4$-component of $\Gamma$-labels range over
$2^{2^{\mn{sub}(\Tmc_2)}}$ rather than $2^{\mn{sub}(\Tmc_2)}$ and
use it to store \emph{sets of} successor sets. To address (3) in
Lemma~\ref{lem:homtosim}, we add an $L_5$-component to
$\Gamma$-labels, which also ranges over
$2^{2^{\mn{sub}(\Tmc_2)}}$. The purpose of this component is to
represent elements of the canonical model $\Imc_{\Tmc_2,\Amc}$ from
which we have to avoid a simulation into \emph{any} individual in
$\Amc_{(T,L)}$ and, in fact, into any element of the interpretation
(partially) represented by the $L_2$-components of node labels.  The
notion of $i$-properness remains the same for $i \in \{0,1,2,3\}$. We
adapt the notion of 4-properness and add a notion of 5-properness.

As a preliminary, we define a notion of $\Sigmaquery$-descendant
set. While a $\Sigmaquery$-successor set for $t \subseteq
\mn{CN}(\Tmc_2)$ represents a $\Sigma_2$-successor of an element $d$
in a canonical model $\Imc_{\Tmc_2,\Amc}$ that satisfies $d \in
A^{\Imc_{\Tmc_2,\Amc}}$ for all $A \in t$, a $\Sigmaquery$-descendent
set represents a \emph{descendent} of such a $d$ that is attached to
its predecessor via a role name that is \emph{not} in $\Sigmaquery$,
as in (3) of Lemma~\ref{lem:homtosim}.  Formally, for $t \subseteq
\mn{CN}(\Tmc_2)$, we define $\Gamma_t$ to be the smallest set such
that $t \in \Gamma_t$ and if $t' \in \Gamma_t$ and $S$ is a successor
set for $\mn{cl}_{\Tmc_2}(t')$, then $S^\downarrow \in \Gamma_t$.  A
set $s \subseteq \mn{CN}(\Tmc_2)$ is a \emph{$\Sigmaquery$-descendant
  set} for $t$ if there is a $t' \in \Gamma_t$ and successor set
$S=\{\exists R . A, \forall R . B_1, \dots, \forall R . B_n \}$ for
$\mn{cl}_{\Tmc_2}(t')$ with $R\not\in\Sigma_{2}$ such that $s=S^\downarrow$.

A $\Gamma$-labeled tree $(T,L)$ is \emph{4-proper} if 
the following 
conditions are satisfied for all $x_1,x_2 \in T$:
\begin{enumerate}

\item[--] if $L_3(x_1)=1$, then one of the following holds:
  \begin{itemize}
  \item[--]
  there is a $\Sigmaquery$-concept name in $L_2(x_1)
  \setminus L_1(x_1)$;

  \item[--] $L_4(x_1)$ contains
  a $\Sigma_2$-successor set for~$L_2(x_1)$;

\item[--] $L_5(x_1)$ contains a $\Sigma_2$-descendant set for~$L_2(x_1)$;

  \end{itemize}

\item[--] there is a model \Imc of $\Tmc_1$ and a $d \in \Delta^\Imc$ such
  that the following hold:
  \begin{itemize}

  \item[--] $d \in C^\Imc$ iff $C \in L_1(x_1)$, for all $C \in
    \mn{cl}(\Tmc_1)$;

  \item[--] if $\{\exists R . A, \forall R .B_1,\dots,\forall
  R.B_n\} \in
    L_4(x_1)$ and $(d,e) \in R^\Imc$, then
    $(\Imc_{\Tmc_2,\{A,B_1,\dots,B_n\}},a_\varepsilon) \not\leq_{\Sigmaquery} (\Imc,e)$;

  \item[--] if $s \in L_5(x_1)$ and $e \in \Delta^\Imc$, then
    $(\Imc_{\Tmc_2,s},a_\varepsilon)
    \not\leq_{\Sigmaquery} (\Imc,e)$;

  \end{itemize}


\item[--] if $x_2$ is a child of $x_1$, $L_0(x_2)$ contains the role name
  $R$, and $L_4(x_1) \ni \{ \exists R . A, \forall R . B_1, \dots,\forall
  R . B_n \}$, then
  %
there is a $\Sigmaquery$-concept name in
  $\mn{cl}_{\Tmc_2}(\{A,B_1,\dots,B_n\}) \setminus L_1(x_2)$ or
$L_4(x_2)$ contains
  a $\Sigmaquery$-successor set for $\mn{cl}_{\Tmc_2}(\{A,B_1,\dots,B_n\})$;

\item [--]if $x_2$ is a child of $x_1$, $L_0(x_2)$ contains the role
  $R^-$, and $L_4(x_2) \ni \{ \exists R . A, \forall R . B_1,
  \dots,\forall R . B_n \}$, then
  %
%
there is a $\Sigmaquery$-concept name in
  $\mn{cl}_{\Tmc_2}(\{A,B_1,\dots,B_n\}) \setminus L_1(x_1)$ or
%
$L_4(x_1)$ contains
  a $\Sigmaquery$-successor set for $\mn{cl}_{\Tmc_2}(\{A,B_1,\dots,B_n\})$.
%


\end{enumerate}
A $\Gamma$-labeled tree $(T,L)$ is \emph{5-proper} if the following
conditions are satisfied for all $x_1 \in T$:
\begin{enumerate}

\item[--] all $x \in T$ agree regarding their $L_5$-label;

\item[--] if $s \in L_5(x_1)$, then one of the following holds:
  \begin{itemize}
  \item[--]
  there is a $\Sigmaquery$-concept name in $s
  \setminus L_1(x_1)$;

  \item[--] $L_4(x_1)$ contains
  a $\Sigma_2$-successor set for~$s$.

\end{itemize}

\end{enumerate}
Note that 4-properness and 5-properness together implement (2) and (3)
of Lemma~\ref{lem:homtosim}; in particular, Point~(3) from
Lemma~\ref{lem:homtosim} requires that $(\Imc_{\Tmc_2,\Amc},d_0)
\not\leq_{\Sigmaquery}(\Imc_1,e)$ for any element $e$ of
$\mathcal{I}_1$ which can be broken down into the two cases
above.

The proof of the following lemma is
similar to that of Lemma~\ref{lem:derivationtrees}: 
\begin{lemma}
There is an $m$-ary $\Gamma$-labeled tree that is $i$-proper for all
$i \in \{0,\dots,5\}$ iff there is a tree $\Sigmaabox$-ABox
$\Amc$ of outdegree at most $m$ that is consistent with $\Tmc_1$
and $\Tmc_2$ and a model $\Imc_1$ of
$(\Tmc_1,\Amc)$ such that the canonical model $\Imc_{\Tmc_2,\Amc}$ of
$(\Tmc_2,\Amc)$ is not $\Sigmaquery$-homomorphically embeddable
into $\Imc_1$.
\end{lemma}
We can now adapt the automata construction presented in the
previous section. It is straightforward to construct an NTA $\Amf_5$
with double exponentially many states that verifies 5-properness. Also, 
the NTA $\Amf_4$ for 4-properness will now have double exponentially
many states because $L_4$- and $L_5$-components are sets of sets of
concepts rather than sets of concepts. In fact, we could dispense with NTAs
altogether and use a \TWABA that has exponentially many states, both
for $\Amf_4$ and $\Amf_5$. The construction of $\Amf_4$ needs to
decide whether, for given sets $S_1 \subseteq \mn{cl}(\Tmc_1)$ and
$S_2,S_3 \subseteq 2^{\mn{CN}(\Tmc_2)}$, there is a model $\Imc$ of
$\Tmc_1$ and a $d \in \Delta^\Imc$ such that
\begin{enumerate}

\item[(a)] $d \in C^\Imc$ iff $C \in S_1$, for all $C \in \mn{cl}(\Tmc_1)$; 

\item[(b)] $(\Imc_{\Tmc_2,S},a_\varepsilon) \not\leq_{\Sigma_2}
  (\Imc,d)$ for all $S \in S_2$;

\item[(c)] $(\Imc_{\Tmc_2,S},a_\varepsilon) \not\leq_{\Sigma_2}
  (\Imc,e)$ for all $S \in S_3$ and $e \in \Delta^\Imc$;

\end{enumerate}
This check can be implemented in 2\ExpTime using
a decision procedure based on NBAs, mixing ideas from the
corresponding construction in the previous section and the
construction above.  Overall, we obtain the 2\ExpTime upper bound stated
in Theorem~\ref{thm:2exptbox:upper}.

\subsection{2\ExpTime lower bound for $\Theta$-CQ-inseparability between \hALC TBoxes}

We prove a matching lower bound for the 2\ExpTime upper bound established in Theorem~\ref{thm:2exptbox:upper} using a
reduction of the word problem of exponentially space bounded ATMs (see Section~\ref{sect:first2explower}). More precisely, we show the
following:
\begin{theorem}\label{thm:lower2exp}
%
%
$(\Sigma,\Sigma)$-CQ inseparability between the empty TBox and \hALC TBoxes is 2\ExpTime-hard.
\end{theorem}
Note that we obtain a 2\ExpTime lower bound for $\Theta$-CQ entailment
as well since, clearly, the empty TBox $(\Sigma,\Sigma)$-CQ-entails a
TBox $\Tmc$ iff the empty TBox and $\Tmc$ are
$(\Sigma,\Sigma)$-CQ-inseparable.  Let
$M=(Q,\Gamma_{I},\Gamma,q_{0},\Delta)$ be an exponentially space
bounded ATM whose word problem is {\sc 2Exp\-Time}-hard, where $Q$ is
the finite set of \emph{states}, $\Gamma_I$ the \emph{input alphabet},
$\Gamma \supseteq \Gamma_I$ the \emph{tape alphabet} with \emph{blank
  symbol} $\Box \in \Gamma \setminus \Gamma_I$, $q_0 \in Q$ the
\emph{initial state}, and $\Delta \subseteq Q \times \Gamma \times Q
\times \Gamma \times \{L,R\}$ the \emph{transition relation}. We use
$\Delta(q,\sigma)$ to denote the set of \emph{transitions}
$(q',\sigma',D) \in Q \times \Gamma \times \{L,R\}$ possible when $M$
is in state $q$ and reads $\sigma$, that is, $(q,\sigma,q',\sigma',D)
\in \Delta$. We may assume that the length of every computation path of
$M$ on $w \in \Sigma^n$ is bounded by $2^{2^{n}}$, and all the
configurations $wqw'$ in such computation paths satisfy $|ww'| \leq
2^{n}$ (see \cite{ChandraKS81}). 
To simplify the reduction, we may also assume without loss of
generality that $M$ makes at least
one step on every input, that it never reaches the last tape cell, and
that every universal configuration has exactly two successor
configurations. 

Note that when $M$ accepts an input $w$, this is
witnessed by an \emph{accepting computation tree} whose nodes are
labeled with configurations such that the root is labeled with the
initial configuration of $M$ on $w$, the descendants of any non-leaf
labeled with a universal (respectively, existential) configuration
include all (respectively, one) of the successors of that
configuration, and all leafs are labeled with accepting
configurations.

Let $w$ be an input to $M$.  We aim to construct a \hALC
TBox $\Tmc$ and a signature $\Sigma$ such that $M$ accepts~$w$ iff there is a tree $\Sigma$-ABox \Amc\xspace such that
  \begin{itemize}
  \item[(a)] \Amc is consistent with $\Tmc$ and

  \item[(b)] $\Imc_{\Tmc,\Amc}$ is
    not $\Sigma$-homomorphically embeddable into
    $\Imc_{\Tmc_\emptyset,\Amc}$,

  \end{itemize}
where $\Tmc_\emptyset = \emptyset$.  Note that this is
equivalent to $(\Sigma,\Sigma)$-CQ-entailment of $\Tmc$ by
$\Tmc_\emptyset$ due to Theorem~\ref{thm:homcharhorn}~(2);
that theorem additionally imposes a restriction on the outdegree of
\Amc, but it is easy to go through the proofs and verify that the
characterisation holds also without that restriction.
We are going to construct $\Tmc$ and $\Sigma$ such that \Amc
represents an accepting computation tree of $M$ on $w$.

When dealing with an input $w$ of length $n$, in \Amc we represent
configurations of $M$ by a sequence of $2^n$ elements linked by the
role name $R$, from now on called \emph{configuration
  sequences}. These sequences are then interconnected to form a
representation of the computation tree of $M$ on $w$. This is
illustrated in Fig.~\ref{fig:red3}, which shows three configuration
sequences, enclosed by dashed boxes. The topmost configuration is
universal, and it has two successor configurations. All solid arrows
denote $R$-edges. We shall see at the very end of the reduction why
successor configurations are separated by two consecutive edges
instead of a single one.

\begin{figure}[tb]
  \centering
    \input{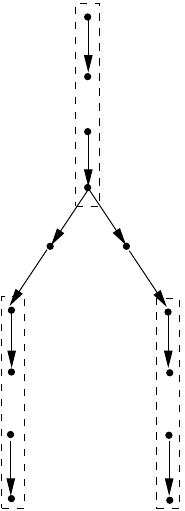_t}
    \caption{Configuration tree (partial).}
    \label{fig:red3}
\end{figure}

The above description is an oversimplification. In fact, every
configuration sequence stores two configurations instead of only one:
the current configuration and the previous configuration in the
computation. We will later use the homomorphism condition~(b) above to
ensure that
\begin{itemize}

\item[($*$)] the previous configuration stored in a configuration
sequence is identical to the current configuration stored in its
predecessor configuration sequence.

\end{itemize}
The actual transitions of $M$ are
then enforced locally inside configuration sequences.

The signature $\Sigma$ consists of the following
symbols:
\begin{enumerate}

\item[--] the concept names $A_0,\dots,A_{n-1},\overline{A}_0,\dots,\overline{A}_{n-1}$ that serve as bits in the
  binary representation of a number between 0 and $2^{n}-1$,
  identifying the position of tape cells inside configuration
  sequences ($A_0$, $\overline{A}_0$ are the lowest bit);

\item[--] the concept names $A'_0,\dots,A'_{m-1}$ and
  $\overline{A}'_0,\dots,\overline{A}'_{m-1}$, where $m = \lceil
  \mn{log}(2^n+2) \rceil$, that serve as bits of another counter which
  is able to count from 0 to $2^n+2$ and whose purpose will be explained
  later;

\item[--] the concept names $A_\sigma$, $A'_\sigma$, $\overline{A}_\sigma$, 
  for each $\sigma \in \Gamma$;

\item[--] the concept names $A_{q,\sigma}$, $A'_{q,\sigma}$, $\overline{A}_{q,\sigma}$,  for
  each $\sigma \in \Gamma$ and $q \in Q$;





\item[--] the concept names $X_1,X_2$ that mark the first and second successor
  configuration;

\item[--] the role name $R$.

\end{enumerate}
From the above list, the concept names $A_\sigma$ and $A_{q,\sigma}$ are
used to represent the current configuration and $A'_\sigma$ and
$A'_{q,\sigma}$ for the previous configuration. The role of the
concept names $\overline{A}_\sigma$ and $\overline{A}_{q,\sigma}$ will
be explained later.

It thus remains to construct the TBox $\Tmc$, which is the most
laborious part of the reduction. We use \Tmc to verify the existence
of a computation tree of $M$ on input $w$ in the ABox. For the time
being, we are going to assume that ($*$) holds and, in a second step,
we will demonstrate how to actually achieve that.
\setcounter{equation}{0}
We start with verifying halting configurations, which must all be
accepting in an accepting computation tree, in a bottom-up manner:
%
\begin{eqnarray}
 A_0 \sqcap \cdots \sqcap A_{n-1} \sqcap A_\sigma \sqcap A'_{\sigma}
  &\sqsubseteq& V ,\\
  A_i \sqcap \exists R . A_i \sqcap \midsqcup_{j < i} \exists R
  . A_j &\sqsubseteq& \mn{ok}_i , \\
  \overline{A}_i \sqcap \exists R . \overline{A}_i \sqcap \midsqcup_{j < i} \exists R
  . A_j &\sqsubseteq& \mn{ok}_i  ,\\
  A_i \sqcap \exists R . \overline{A}_i \sqcap \midsqcap_{j < i} \exists R
  . \overline{A}_j &\sqsubseteq& \mn{ok}_i , \\
  \overline{A}_i \sqcap \exists R . A_i \sqcap \midsqcap_{j < i} \exists R
  . \overline{A}_j &\sqsubseteq& \mn{ok}_i , \\
  \mn{ok}_0 \sqcap  \cdots \sqcap \mn{ok}_{n-1} \sqcap \overline{A}_i \sqcap \exists R . V
  \sqcap A_\sigma \sqcap A'_{\sigma} &\sqsubseteq& V ,\\
  \mn{ok}_0 \sqcap  \cdots \sqcap \mn{ok}_{n-1} \sqcap \overline{A}_i \sqcap \exists R . V
  \sqcap A_\sigma \sqcap A'_{q,\sigma'} &\sqsubseteq& V_{L,\sigma}, \\
  \mn{ok}_0 \sqcap  \cdots \sqcap \mn{ok}_{n-1} \sqcap \overline{A}_i \sqcap \exists R . V
  \sqcap A_{q_a,\sigma} \sqcap A'_{\sigma} &\sqsubseteq& V_{R,q_a} ,\\
  \mn{ok}_0 \sqcap  \cdots \sqcap \mn{ok}_{n-1} \sqcap \overline{A}_i \sqcap \exists R . V_{L,\sigma}
  \sqcap A_{q_a,\sigma'} \sqcap A'_{\sigma'} &\sqsubseteq& V_{L,q_a,\sigma}, \\
  \mn{ok}_0 \sqcap  \cdots \sqcap \mn{ok}_{n-1} \sqcap \overline{A}_i \sqcap \exists R . V_{R,q_a}
  \sqcap A_{\sigma} \sqcap A'_{q,\sigma'} &\sqsubseteq& V_{R,q_a,\sigma} ,\\
\hspace*{2cm}
  \mn{ok}_0 \sqcap  \cdots \sqcap \mn{ok}_{n-1} \sqcap \overline{A}_i \sqcap \exists R . V_{D,q_a,\sigma}
  \sqcap A_{\sigma'} \sqcap A'_{\sigma'} &\sqsubseteq& V_{D,q_a,\sigma} ,\\
  \exists R . A_i \sqcap \exists R . \overline{A}_i &\sqsubseteq& \bot,
\end{eqnarray}
%
where $\sigma,\sigma'$ range over $\Gamma$, $q$ over $Q$, $i$ over
$0,\dots, n-1$, and $D$ over $\{L,R\}$.  The first line starts the
verification at the last tape cell, ensuring that at least one concept
name $A_\sigma$ and one concept name $A'_\sigma$ is true (it also
verifies that the symbol is identical in the current and previous
configuration, assuming ($*$); it is here that the assumption that $M$
never reaches the last tape cell makes the construction easier). The
following lines implement the verification of the remaining tape cells
of the configuration. Lines (2)--(5) implement decrementation of a
binary counter and the conjunct $\overline{A}_i$ in lines (6)--(11)
prevents the counter from wrapping around once it has reached 0. We
use several kinds of verification markers:
\begin{itemize}

\item[--] with $V$, we indicate that we have not yet seen the head of the
  ATM;

\item[--] $V_{L,\sigma}$ indicates that the ATM made a step to the left to
  reach the current configuration, writing $\sigma$;

\item[--] $V_{R,q}$ indicates that the ATM made a step to the right to
  reach the current configuration, switching to state $q$;

\item[--] $V_{D,q,\sigma}$ indicates that the ATM moved in direction $D$
to reach the current configuration, switching to state $q$ and writing
$\sigma$.

\end{itemize}
In the remaining reduction, we expect that a marker 
$V_{D,q,\sigma}$ has been derived at the first (thus top-most) cell of
the configuration.  This makes sure that there is exactly one head in
the current and previous configuration, and that the head moved
exactly one step between the previous and current position. Also
note that the above CIs ensure that the tape content does not change
for cells that were not under the head in the previous configuration,
assuming ($*$). 
Note
that it is not immediately clear that lines (2)--(11) work as
intended since they can speak about different $R$-successors for
different bits. The last line fixes this problem.
We also ensure that relevant concept names are mutually exclusive:
%
\begin{eqnarray}
  A_i \sqcap \overline{A}_i &\sqsubseteq& \bot, \\
  A_{\sigma_1} \sqcap A_{\sigma_2} &\sqsubseteq& \bot, \qquad \text{ if }
  \sigma_1 \neq \sigma_2, \\
  A_{\sigma_1} \sqcap A_{q_2,\sigma_2} &\sqsubseteq& \bot, \\
\hspace*{4cm}
  A_{q_1,\sigma_1} \sqcap A_{q_2,\sigma_2} &\sqsubseteq& \bot, \qquad \text{ if }
  (q_1,\sigma_1) \neq (q_2,\sigma_2),
\end{eqnarray}
%
where $i$ ranges over $0,\dots,n-1$, $\sigma_1,\sigma_2$ over $\Gamma$,
and $q_1,q_2$ over~$Q$. We also add the same CIs for
the primed versions of these concept names.  The next step is to
verify non-halting configurations:
%
\begin{eqnarray}
\hspace*{2.5cm}
  \exists R. \exists R . (X_1 \sqcap \overline{A}_0 \sqcap \cdots \sqcap
  \overline{A}_{n-1} \sqcap (V_{D,q,\sigma} \sqcup V'_{D,q,\sigma})) &\sqsubseteq& L\mn{ok}, \\
  \exists R . \exists R . (X_2  \sqcap \overline{A}_0 \sqcap \cdots \sqcap
  \overline{A}_{n-1}\sqcap (V_{D,q,\sigma} \sqcup V'_{D,q,\sigma})) &\sqsubseteq& R\mn{ok} ,\\
  A_0 \sqcap \cdots \sqcap A_{n-1} \sqcap A_\sigma \sqcap A'_{\sigma}
  \sqcap L\mn{ok} \sqcap R\mn{ok}   &\sqsubseteq& V', \\
  \mn{ok}_0 \sqcap  \cdots \sqcap \mn{ok}_{n-1} \sqcap \overline{A}_i \sqcap \exists R . V'
  \sqcap A_\sigma \sqcap A'_{\sigma} &\sqsubseteq& V', \\
  \mn{ok}_0 \sqcap  \cdots \sqcap \mn{ok}_{n-1} \sqcap \overline{A}_i \sqcap \exists R . V'
  \sqcap A_\sigma \sqcap A'_{q,\sigma'} &\sqsubseteq& V'_{L,\sigma} ,\\
  \mn{ok}_0 \sqcap  \cdots \sqcap \mn{ok}_{n-1} \sqcap \overline{A}_i \sqcap \exists R . V'_{R,q}
  \sqcap A_{\sigma} \sqcap A'_{q',\sigma'} &\sqsubseteq& V'_{R,q,\sigma}, \\
  \mn{ok}_0 \sqcap  \cdots \sqcap \mn{ok}_{n-1} \sqcap \overline{A}_i \sqcap \exists R . V'_{D,q,\sigma}
  \sqcap A_{\sigma'} \sqcap A'_{\sigma'} &\sqsubseteq& V'_{D,q,\sigma},
\end{eqnarray}
where $\sigma,\sigma',\sigma''$ range over $\Gamma$, $q$ and $q'$ over
$Q$, $i$ over $0,\dots,n-1$, and $D$ over $\{L,R\}$. We switch to
different verification markers $V'$, $V'_{L,\sigma}$, $V'_{R,q}$,
$V'_{D,q,\sigma}$ to distinguish between halting and non-halting
configurations. Note that the first verification step is different for
non-halting configurations: we expect to see one successor marked with
$X_1$ and one with $X_2$, both the first cell of an already verified
(halting or non-halting) configuration. For easier construction, we
require two successors also for existential configurations; they can
simply be identical.  The above CIs do not yet deal with cells where
the head is currently located. We need some prerequisites because when
verifying these cells, we want to (locally) verify the transition
relation. For this purpose, we carry the transitions implemented
locally at a configuration up to its predecessor configuration:
%
\begin{eqnarray}
\hspace*{3cm}
  \exists R . \exists R . (X_t \sqcap \overline{A}_0 \sqcap \cdots \sqcap
  \overline{A}_{n-1} \sqcap  V_{q,\sigma,D'}) & \sqsubseteq&
  S^t_{q,\sigma,D'} ,\\
  \exists R . \exists R . (X_t \sqcap \overline{A}_0 \sqcap \cdots \sqcap
  \overline{A}_{n-1} \sqcap  V'_{q,\sigma,D'}) & \sqsubseteq&
  S^t_{q,\sigma,D'} ,\\
  \exists R . (A_{\sigma} \sqcap S^t_{q,\sigma',D}) &\sqsubseteq&
  S^t_{q,\sigma',D}, 
\end{eqnarray}
%
where $q$ ranges over $Q$, $\sigma$ and $\sigma'$ over $\Gamma$, $t$
over $\{1,2\}$, and $i$ over $0,\dots,n-1$.  Note that markers are
propagated up exactly to the head position. One issue with the above
is that additional $S^t_{q,\sigma,D}$-markers could be propagated up not
from the successors that we have verified, but from surplus
(unverified) successors. To prevent such undesired markers, we add
the CIs
%
\begin{equation}
\hspace*{5cm}
 S^t_{q_1,\sigma_1,D_1} \sqcap S^t_{q_2,\sigma_2,D_2} \sqsubseteq \bot
\end{equation}
%
for all $t \in \{1,2\}$ and all distinct
$(q_1,\sigma_1,D_1),(q_2,\sigma_2,D_2) \in Q \times \Gamma \times
\{L,R\}$.  We can now implement the verification of the cells under the
head in non-halting configurations. We take
%
\begin{eqnarray}
\hspace*{1cm}
  \mn{ok}_0 \sqcap  \cdots \sqcap \mn{ok}_{n-1} \sqcap \overline{A}_i \sqcap \exists R . V'
  \sqcap
  A_{q_1,\sigma_1} \sqcap A'_{\sigma_1} \sqcap S^1_{q_2,\sigma_2,D_2}
  \sqcap S^2_{q_3,\sigma_3,D_3}&\sqsubseteq& V'_{R,q_1} ,\\
  \mn{ok}_0 \sqcap  \cdots \sqcap \mn{ok}_{n-1} \sqcap \overline{A}_i \sqcap \exists R . V'_{L,\sigma}
  \sqcap
  A_{q_1,\sigma_1} \sqcap A'_{\sigma_1} \sqcap S^1_{q_2,\sigma_2,D_2}
  \sqcap S^2_{q_3,\sigma_3,D_3}&\sqsubseteq& V'_{L,q_1,\sigma},
\end{eqnarray}
for all $(q_1,\sigma_1) \in Q \times \Gamma$ with $q_1$ a universal
state and
$\Delta(q_1,\sigma_1)=\{(q_2,\sigma_2,D_2),(q_3,\sigma_3,D_3)\}$, $i$
from $0, \dots,n-1$, and $\sigma$ from $\Gamma$; moreover, we take
%
\begin{eqnarray}
\hspace*{1cm}
   \mn{ok}_0 \sqcap  \cdots \sqcap \mn{ok}_{n-1} \sqcap \overline{A}_i \sqcap \exists R . V'
   \sqcap
   A_{q_1,\sigma_1} \sqcap A'_{\sigma_1} \sqcap S^1_{q_2,\sigma_2,D_2}
   \sqcap S^2_{q_2,\sigma_2,D_2}&\sqsubseteq& V'_{R,q_1} ,\\
   \mn{ok}_0 \sqcap  \cdots \sqcap \mn{ok}_{n-1} \sqcap \overline{A}_i \sqcap \exists R . V'_{L,\sigma}
   \sqcap
   A_{q_1,\sigma_1} \sqcap A'_{\sigma_1} \sqcap S^1_{q_2,\sigma_2,D_2}
   \sqcap S^2_{q_2,\sigma_2,D_2}&\sqsubseteq& V'_{L,q_1,\sigma} ,
\end{eqnarray}
%
for all $(q_1,\sigma_1) \in Q \times \Gamma$ with $q_1$ an existential
state, for all $(q_2,\sigma_2,D_2) \in \Delta(q_1,\sigma_1)$, all $i$
from $0,\dots,n-1$, and all $\sigma$ from~$\Gamma$.  It remains to verify
the initial configuration. Let $w=\sigma_0 \cdots \sigma_{n-1}$, let
$(C=j)$ be the conjunction over the concept names $A_i$,
$\overline{A}_i$ that expresses $j$ in binary, for $0 \leq j < n$, and
let $(C \geq n)$ be the Boolean concept over the concept names $A_i$,
$\overline{A}_i$ expressing that the counter value is at least
$n$. Then we take
%
\begin{eqnarray}
\hspace*{4cm}
  A_0 \sqcap \cdots \sqcap A_{n-1} \sqcap A_\Box 
  \sqcap L\mn{ok} \sqcap R\mn{ok}   &\sqsubseteq& V^I ,\\
\mn{ok}_0 \sqcap  \cdots \sqcap \mn{ok}_{n-1}
\sqcap (C \geq n) \sqcap \exists R . V^I
  \sqcap A_\Box 
&\sqsubseteq& V^I , \\
 \mn{ok}_0 \sqcap  \cdots \sqcap \mn{ok}_{n-1} \sqcap (C=i) \sqcap \exists R . V^I
  \sqcap A_{\sigma_i} 
&\sqsubseteq& V^I,
\end{eqnarray}
%
where $i$ ranges over $1,\dots,n-1$ and $\sigma,\sigma'$ over $\Gamma$.  This
verifies the initial conditions except for the left-most cell, where the head
must be located (in initial state $q_0$) and where we must verify the
transition, as in all other configurations.  Recall that we assume
$q_0$ to be an existential state. We can thus add
%
\begin{eqnarray}
\hspace*{2cm}
   \mn{ok}_0 \sqcap  \cdots \sqcap \mn{ok}_{n-1} \sqcap (C=0) \sqcap \exists R . V^I
   \sqcap
   A_{q_0,\sigma_0} 
\sqcap S^1_{q,\sigma,D}
   \sqcap S^2_{q,\sigma,D}&\sqsubseteq& I
\end{eqnarray}
%
for all $(q,\sigma,D) \in \Delta(q_0,\sigma_0)$.

At this point, we have finished the verification of the computation
tree, except that we have assumed but not yet established ($*$).
Achieving ($*$) consists of two parts. In the first part, we use the
concept names $B_i$, $\overline{B}_i$, $i < m$ (recall that $m=
\lceil \mn{log}(2^n+2) \rceil$) to implement an additional counter
that serves the purpose of generating a path whose length is $2^n+2$,
the distance between two corresponding tape cells in consecutive
configurations. Let $\alpha_0,\dots,\alpha_{k-1}$ be the elements of
$Q \cup (Q \times \Gamma)$.  We add the following to $\Tmc$:
%
\begin{eqnarray}
\hspace*{5cm}
  \exists R . I & \sqsubseteq & \exists S .  \midsqcap_{\ell <k}
  \exists R . (A_{\alpha_\ell} \sqcap B_{\alpha_\ell} \sqcap (C_B = 0)) \\
  B_{\alpha_\ell} &\sqsubseteq& \exists R . \top ,\\
  B_i \sqcap \midsqcap_{j < i} B_j & \sqsubseteq& \forall R
  . \overline{B}_i ,\\
  \overline{B}_i \sqcap \midsqcap_{j < i} B_j & \sqsubseteq& \forall R
  . B_i ,\\
  B_i \sqcap \midsqcup_{j < i} \overline{B}_j & \sqsubseteq& \forall R
  . B_i ,\\
  \overline{B}_i \sqcap \midsqcup_{j < i} \overline{B}_j & \sqsubseteq& \forall R
  . \overline{B}_i ,\\
  (C_B < 2^n+1)  \sqcap B_{\alpha_\ell}& \sqsubseteq & \forall R
                                                     . B_{\alpha_\ell}, \\
  (C_B = 2^n+1) \sqcap B_{\alpha_\ell} & \sqsubseteq & \forall R . \overline{A}_{\alpha_\ell},
\end{eqnarray}
%
where  $\ell$ ranges over $0,\dots,k-1$,
$i$ ranges over $0,\dots,m$, and $(C_B = j)$ (respectively, $(C_B < j)$) denotes a Boolean
concept expressing that the value of the $B_i$/$\overline{B}_i$-counter is $j$
(respectively, smaller than $j$).  We will explain shortly why we need to
travel one more $R$-step (in the first line) after seeing~$I$.

The above CIs generate, after the verification of the
computation tree has ended successfully, a tree in the canonical model
of the input ABox and of $\Tmc$ as shown in Fig.~\ref{fig:mod}.
\begin{figure}[t!]
  \begin{center}
    \input{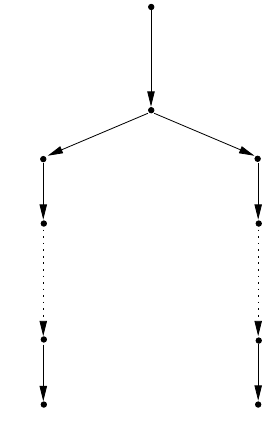_t}
    \caption{Tree gadget.}
    \label{fig:mod}
  \end{center}
\end{figure}
\begin{figure}[t!]
  \begin{center}
    \input{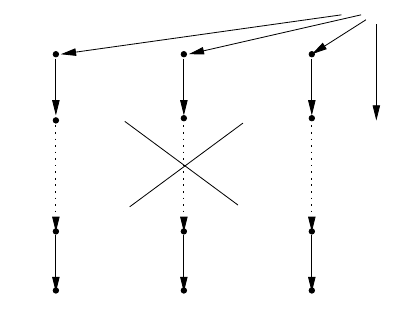_t}
    \caption{Additional paths attached to computation tree. In the
     sequence of paths on the left, the path for $A_{\alpha_i}$ is missing.}
    \label{fig:add}
  \end{center}
\end{figure}
Note that the topmost edge is labeled with the role name $S$, which is
\emph{not} in~$\Sigma$. To satisfy Condition~(b) above, we must thus
not (homomorphically) find the subtree rooted at the node with the
incoming $S$-edge \emph{anywhere} in the canonical model of the ABox
and $\Tmc_\emptyset$ (which is just a different presentation of
\Amc). We use this effect to ensure that ($*$) is satisfied
\emph{everywhere}. Note that the $R$-paths in Fig.~\ref{fig:mod} have
length $2^n+2$ and that we do not display the labelling with the
concept names $B_i$, $\overline{B}_i$, $B_\alpha$.  These concept
names are not in $\Sigma$ and only serve the purpose of achieving the
intended path length and of memorising $\alpha$. Informally, every
$R$-path in the tree represents one possible \emph{copying
  defect}. The concept names of the form $\overline{A}_\alpha$ stand
for the disjunction over all $A'_\beta$ with $\beta \neq
\alpha$.  Although we have not done it so far, we can easily modify
$\Tmc$ to achieve that they are indeed used this way in the input
ABox. For example, we can add the conjunct $\bigsqcap_{\sigma' \in
  \Gamma \setminus \{ \sigma\}} \overline{A}_{\sigma'}$ to the left-hand
side of the concept inclusion in (1), and likewise for (6), (7), and
so on.

If there is a copying defect somewhere in the ABox, then one of the
$R$-paths in Fig.~\ref{fig:mod} can be homomorphically embedded.  We
have to ensure that the other paths can be embedded, too. The first
step is to add the following CIs:
\begin{eqnarray}
\hspace*{6cm}
  (C'=2^n+2) \sqcap \overline{A}_{\alpha_\ell}
  &\sqsubseteq& V'_\ell ,\\
  A'_i \sqcap \exists R . A'_i \sqcap \midsqcup_{j < i} \exists R
  . A'_j &\sqsubseteq& \mn{ok}'_i , \\
  \overline{A}'_i \sqcap \exists R . \overline{A}'_i \sqcap \midsqcup_{j < i} \exists R
  . A'_j &\sqsubseteq& \mn{ok}'_i  ,\\
  A'_i \sqcap \exists R . \overline{A}'_i \sqcap \midsqcap_{j < i} \exists R
  . \overline{A}'_j &\sqsubseteq& \mn{ok}'_i , \\
  \overline{A}'_i \sqcap \exists R . A'_i \sqcap \midsqcap_{j < i} \exists R
  . \overline{A}'_j &\sqsubseteq& \mn{ok}'_i  ,\\
  \mn{ok}'_0 \sqcap  \cdots \sqcap \mn{ok}'_{n-1} \sqcap \overline{A}'_i \sqcap \exists R . V'_\ell
  \sqcap A_\sigma \sqcap A'_{\sigma} &\sqsubseteq& V'_\ell ,\\
  \exists R . ((C'=0)\sqcap V'_\ell \sqcap A_{\alpha_\ell}) & \sqsubseteq & V_\ell,
\end{eqnarray}
%
where $\ell$ ranges over $0,\dots,k-1$, $i$ ranges over $0,\dots,m$,
and $(C' = j)$ denotes a Boolean concept which expresses that the
value of the $A'_i$/$\overline{A}'_i$-counter is $j$; recall that the
concept names implementing this counter are in $\Sigma$.  The purpose
of the above CIs is to set the verification marker $V_\ell$ at an
individual $a$ whenever we find in the ABox an $R$-path with root $a$
that is isomorphic to the $R$-path labeled with
$A_{\alpha_\ell}$/$\overline{A}_{\alpha_\ell}$ in Fig.~\ref{fig:mod}
(and additionally is decorated in an appropriate way with the concept
names used by the $A'_i$/$\overline{A}'_i$-counter).

As the second step, it remains to add the verification markers
$V_\ell$ to the left-hand side of the CIs in $\Tmc$ in such a way that
\begin{itemize}

\item[($**$)] whenever an ABox individual $a$ that is part of the
  computation tree has an $R$-successor in that tree which is labeled
  with $A_{\alpha_\ell}$, then all verification markers $V_j$ with $j
  \in \{0,\dots,\ell-1,\ell+1,\dots,k-1 \}$ must be present at $a$.

\end{itemize}
Informally, ($**$) achieves the presence of additional paths attached
to nodes of the computation tree, as displayed in
Fig.~\ref{fig:add}. There, $a$ and $b$ are nodes in the computation
tree proper and since $A_{\alpha_i}$ holds at $b$, we attach to $a$
all paths from Fig.~\ref{fig:mod} except the one for $A_{\alpha_i}$.
By what was achieved in the first step, we can thus homomorphically
embed the $R$-tree in Fig.~\ref{fig:mod} at $a$ iff there is a copying
defect at the successor of $a$.

We next describe the modifications required to achieve
($**$). Line~(20) needs to be extended by adding to the left-hand side
the conjunct
$\bigsqcap_{j \in \{0,\dots,\ell-1,\ell+1,\dots,k-1 \}} V_{j} \sqcap
\exists R . \alpha_\ell$
where $\ell$ ranges over $0,\dots,k-1$. Here, we want
$\exists R . \alpha_\ell$ to refer to the same $R$-successor whose
existence is verified by the existing concept $\exists R . V'$ on the
left-hand side of~(20), or at least to a successor that has the same
$\alpha_{\ell}$-label. This can be achieved by adding the CIs
\begin{eqnarray}
\hspace*{6cm}
\exists R . \alpha_\ell \sqcap \exists R . \alpha_{\ell'} \sqsubseteq \bot
\end{eqnarray}
where $\ell$ and $\ell'$ are distinct, ranging over $0,\dots,k-1$.

The same conjunct needs to be added to the left-hand sides of Lines
(21)--(23), (28)--(31), and (33)--(35). We also need to add the
conjunct into the scope of the outermost (but not innermost!)
existential quantifier in~(17) and~(18) and to (36), outside the scope
of the existential quantifier. Note that we indeed need to travel one
more $R$-step after seeing $I$ (the explanation of this was deferred
until now): we always consider copying defects at $R$-successor of
some individual name and thus also the root of our configuration tree
should be the $R$-successor of some individual.  Also note that we
indeed need to separate successor configurations by two $R$-steps (the
remaining deferred explanation).  If we used only one $R$-step, then
the branching ABox individual would \emph{always} allow the $R$-tree
from Fig.~\ref{fig:mod} to be homomorphically embedded, no matter
whether there is a copying defect or not.
%
%
%
\begin{lemma}
\label{lem:2expcorr}
  The following conditions are equivalent:
\begin{enumerate}
\item[$(1)$] there is a tree $\Sigma$-ABox \Amc\xspace such that
  $(a)$ \Amc is consistent with $\Tmc$ and $(b)$ $\Imc_{\Tmc,\Amc}$ is not
    $\Sigma$-homomorphically embeddable into
    $\Imc_{\Tmc_\emptyset,\Amc}$;

\item[$(2)$] $M$ accepts $w$.

\end{enumerate}
\end{lemma}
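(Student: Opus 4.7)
The proof splits into the two standard directions of a reduction correctness argument.

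For the direction $(2) \Rightarrow (1)$, I would start from an accepting computation tree of $M$ on $w$ and build a tree-shaped $\Sigma$-ABox $\Amc$ that faithfully encodes it. Each configuration becomes a sequence of $2^n$ elements linked by $R$, storing both the current configuration (via $A_\sigma, A_{q,\sigma}$ and the bit-concepts $A_i, \overline{A}_i$) and the previous configuration (via the primed $A'_\sigma, A'_{q,\sigma}$). Successor configurations are attached via two $R$-edges and marked with $X_L$ or $X_R$; for an existential state one simply duplicates the chosen successor into both branches. The encoding is built so that $(*)$ holds by construction and no disjointness CI is violated; hence $\Amc$ is consistent with both $\Tmc_1$ and $\Tmc_2$. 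A bottom-up induction on the computation tree then shows that, in $\Imc_{\Tmc_2,\Amc}$, the verification markers $V$, $V_{M,q,\sigma}$, $V'$, $V'_{M,q,\sigma}$, $V^I$, $V^I_{R,q}$ and finally $I$ are derived at the appropriate individuals, using in particular the $\mn{ok}_i$ and $S^L_{q,\sigma,M}, S^R_{q,\sigma,M}$-machinery to bind the various $R$-successors to a common configuration. Once $I$ is derived, the $\Tmc_2$-CIs spawn the tree gadget of Figure~\ref{fig:mod} off a fresh $S$-edge.

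It remains to argue that this gadget does not $\Sigma$-homomorphically embed into $\Imc_{\Tmc_1,\Amc}$. The $\Tmc_1$-CIs generate, at any node whose $R$-successor satisfies $A_{\alpha_i}$, exactly the $k-1$ defect branches of length $2^n+1$ for $\ell \neq i$, each ending in $\overline{A}_{\alpha_\ell}$; the missing $\alpha_i$-branch of the gadget would have to be realized by descending into the ABox along the $A_{\alpha_i}$-successor, but $(*)$ forces $A_{\alpha_i}$ to be preserved after $2^n+1$ further $R$-steps, so $\overline{A}_{\alpha_i}$ cannot be reached there. The disjointness CI $\exists R.A_\alpha \sqcap \exists R . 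A_\beta \sqsubseteq \bot$ in $\Tmc_1$ blocks the alternative of jumping to an ABox sibling carrying a different $A_\beta$, which is precisely why successor configurations are separated by two $R$-edges and why one extra $R$-step is traveled after seeing $I$.

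For $(1) \Rightarrow (2)$, start with any witnessing $\Sigma$-ABox $\Amc$ and observe that $S$ is the only non-$\Sigma$ role appearing on a right-hand side of $\Tmc_2$, occurring solely in the CI $\exists R.I \sqsubseteq \exists S.(\cdots)$. Thus the non-embeddability of $\Imc_{\Tmc_2,\Amc}$ into $\Imc_{\Tmc_1,\Amc}$ must be caused by the tree gadget attached via that $S$-edge, which in turn forces $I$ to be derived at some individual $a \in \mn{ind}(\Amc)$. Unwinding the verification CIs backwards, $a$ must sit at the head of an initial-configuration sequence whose cells carry $V^I$ and terminate in $V^I_{R,q}$; this sequence is connected via $V'_{M,q,\sigma}$-chains and $X_L$/$X_R$-branches to sequences whose last cells carry the halting markers $V_{M,q_\mn{acc},\sigma}$. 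Together with the local transition checks propagated up by the $S^L$- and $S^R$-concepts, this is exactly the shape of an accepting computation tree of $M$ on $w$, modulo copying correctness. The non-embeddability of the gadget is then used, via the same analysis as in the other direction, to exclude any copying defect at any reachable node: such a defect would allow the otherwise missing $\alpha_i$-branch of the gadget to be realized inside the ABox and hence the whole gadget to be embedded. Thus $(*)$ holds everywhere and the extracted computation is a genuine accepting computation of $M$. The main obstacle is the backward unwinding of the verification CIs, in particular showing that the $\mn{ok}_i$- and $S^L/S^R$-CIs really pin the verification to a single, fully defined configuration per sequence; this reduces to a lengthy but direct case analysis that leans on the disjointness CIs already set up in the reduction.
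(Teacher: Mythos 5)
Your proposal follows essentially the same route as the paper's own (sketched) proof: for $2 \Rightarrow 1$ you encode the accepting computation tree as a $\Sigma$-ABox with duplicated existential successors and argue that, absent copying defects, the gadget of Figure~\ref{fig:mod} spawned below the $S$-edge cannot be $\Sigma$-homomorphically mapped into $\Imc_{\Tmc_1,\Amc}$; for $1 \Rightarrow 2$ you observe that non-embeddability forces $I$ (and hence the gadget) to be derived, extract the computation tree from the verification CIs, and use condition (b) together with the gadget to enforce $(*)$, exactly as in the paper. Your write-up is in fact somewhat more detailed than the paper's sketch (e.g.\ the analysis of which gadget branch must descend into the ABox), and I see no gap at this level of detail.
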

\begin{proof}(sketch) For $(2) \Rightarrow (1)$, suppose $M$ accepts $w$. The accepting computation tree of $M$ on $w$
  can be represented as a $\Sigma$-ABox as detailed above alongside
  the construction of the TBox $\Tmc$. The representation only uses
  the role name $R$ and the concept names $A_i$,
  $\overline{A}_i$,$A'_i$, $\overline{A}'_i$, $A_\sigma$,
  $A_{q,\sigma}$, $A'_\sigma$, $A'_{q,\sigma}$, $\overline{A}_\sigma$,
  $\overline{A}_{\q,\sigma}$, $X_1$, and $X_2$. As explained above, we
  need to duplicate the successor configurations of existential
  configurations to ensure that there is binary branching after each
  configuration. Also, we need to add one additional incoming $R$-edge
  to the root of the tree. The resulting ABox \Amc is consistent with
  $\Tmc$. Moreover, since there are no copying defects, there is no
  homomorphism from $\Imc_{\Tmc,\Amc}$ to $\Imc_{\Tmc_\emptyset,\Amc}$.

  For $(1) \Rightarrow (2)$, suppose there is a tree
  $\Sigma$-ABox \Amc that satisfies~(a) and~(b).  Because
  of~(b), $I$ must be true somewhere in
  $\Imc_{\Tmc,\Amc}$: otherwise, $\Imc_{\Tmc,\Amc}$ does not
  contain anonymous elements and the identity is a homomorphism from
  $\Imc_{\Tmc,\Amc}$ to $\Imc_{\Tmc_\emptyset,\Amc}$, contradicting
  (b). Since $I$ is true somewhere in $\Imc_{\Tmc,\Amc}$ and by the 
  construction of $\Tmc$, the ABox must contain the representation
  of an accepting computation tree of $M$ on $w$, except satisfaction
  of ($*$).  For the same reason, $\Imc_{\Tmc,\Amc}$ must contain a
  tree as shown in Fig.~\ref{fig:mod}.  As already been argued
  during the construction of $\Tmc$, however, condition
  ($*$) follows from the existence of such a tree in
  $\Imc_{\Tmc,\Amc}$ together with (b).
\end{proof}
We remark that the above reduction also yields 2\ExpTime hardness for
$(\Sigma,\Sigma)$-CQ entailment in the DL 
$\mathcal{ELI}$ extending $\mathcal{EL}$ with inverse roles. In fact,
CIs $D\sqsubseteq \forall r.C$ can be replaced by
$\exists r^{-}.D \sqsubseteq C$ and disjunctions on the left-hand side
can be removed with only a polynomial blowup. It thus remains to
eliminate $\bot$, which only occurs non-nested on the right-hand side
of CIs.  With the exception of the CIs in~(27),
this can be done as follows: replace $\Tmc_\emptyset$ with a non-empty
TBox $\Tmc_1$ and rename $\Tmc$ to $\Tmc_2$ for uniformity; include
all CIs with $\bot$ on the right-hand side in $\Tmc_1$ instead of in
$\Tmc_2$; then replace $\bot$ with a fresh concept name $D$ and
further extend $\Tmc_1$ with CIs which make sure that
$\Imc_{\Tmc_1,\Amc}$ contains an $R$-tree as in Fig.~\ref{fig:mod}
whenever $D$ is non-empty, which is straightforward. As a consequence,
any ABox that satisfies the left-hand side of a $\bot$-CI in the
original TBox $\Tmc$ cannot satisfy (b) from Lemma~\ref{lem:2expcorr}
and does not have to be considered.

For the excluded CIs, a different approach needs
to be taken since these CIs rely on many CIs in $\Tmc_2$ that
are not included in $\Tmc_1$. We only sketch the required modifiction:
instead of introducing the concept names $S^t_{q_1,\sigma_1,D_1}$, one
would propagate transitions inside the $V'$-markers. Thus,
$S^{1}_{q_1,\sigma_1,D_1}$, $S^2_{q_2,\sigma_2,D_2}$, and $V'$ would
be integrated into a single marker
$V'_{q_1,\sigma_1,D_1,q_2,\sigma_2,D_2}$, and likewise for
$V_{L,q}$. The excluded CIs can then simply be
dropped.

\begin{theorem}
\label{thm:2exptboxeli}
It is 2\ExpTime-hard to decide whether an \ELI TBox
$(\Sigma,\Sigma)$-CQ entails an \ELI TBox.
%
\end{theorem}
A corresponding upper bound has recently been established in
\cite{JungLMS17}.


\section{Related Work}

The comparison of logical theories has been an active research area almost since the invention 
of formal logic. Important concepts include Tarski's notion of \emph{interpretability}~\cite{Alfred} of one theory into
another and the notion of \emph{conservative extension}, which has been employed extensively
in mathematical logic, in particular to compare theories of sets and numbers~\cite{Rautenberg}.
Conservative extensions have also been used to formalise modular software specification~\cite{GoguenB92,Maibaum1,Goguen93}
and to enable modular ontology development~\cite{GhiLuWo-06,DBLP:series/lncs/5445,KutzML10}.
Query entailment can be regarded as a generalisation of conservative extension where we do not require that one of the theories under consideration is included in the other and where conservativity depends on 
database queries in a signature of interest instead of formulas in the signature of the smaller theory. 
In an independent but closely related research field, various notions of equivalence 
between (extended) datalog programs have been proposed and investigated~\cite{Woltran10}, often focusing on 
answer set programming~\cite{Woltran10,LifschitzPV01,EiterFW07,HarrisonLPV17}.

The state of the art in the research of inseparability between
description logic ontologies has recently been presented in great
detail in \cite{BotoevaKLRWZ16}. This survey contains,
in particular, a discussion of the relationships between
concept-based, model-based, and query-based inseparability. In the
first approach, one compares the concept inclusions entailed by the
two versions of an ontology. In the second approach, one compares the
models of the two versions.  In contrast, in the query-based approach
underpinning the present investigation, one compares the certain
answers to database queries.  It turns out that the three approaches
exhibit rather different properties and require different
model-theoretic and algorithmic techniques. While various forms of
bisimulations and corresponding bisimulation-invariant tree automata
are required to investigate concept-based inseparability, query-based
inseparability relies on understanding homomorphisms between
interpretations and products, which are then reflected in the games or
automata required to design algorithms; we refer the reader
to~\cite{BotoevaKLRWZ16} for an in-depth
discussion. Important notions that are closely related to query
inseparability, such as knowledge exchange and entailment between OBDA
specifications, are discussed in~\cite{BotoevaKRWZ16}. 

In what follows, we focus on summarising what is known about query
inseparability between description logic ontologies, discussing both
the KB and the TBox cases. All existing results are about Horn-DLs as
the present paper is the first one to study query-based inseparability
for expressive non Horn-DLs. As discussed in this paper, for Horn-DLs, there 
is no difference between CQ- and UCQ-inseparability, so we do not explicitly 
distinguish between them below.

We start with the KB case. In \cite{BotoevaKRWZ16},
CQ-inseparability between KBs is investigated for Horn-DLs ranging
from the lightweight $\mathcal{EL}$ and \DLc to
$\textsl{Horn}\mathcal{ALCHI}$.  The authors develop model-theoretic
and game-theoretic characterisations of query inseparability. In
contrast to the present investigation, the main complexity results,
summarised in~Table~\ref{table:kb-related}, are then obtained using
the game-theoretic characterisations instead of reductions to the
emptiness problem of tree-automata.  It is also proved that rootedness
does not affect the worst-case complexity of query entailment. Observe
that the addition of the inverse role constructor leads to an
exponential increase of the complexity of checking query
inseparability.

\begin{table}[h]
\centering
\caption{KB query inseparability~\cite{BotoevaKRWZ16}.}
\label{table:kb-related}
\begin{tabular}{|c|c|c|c|}
\hline
 DL &  complexity & DL & complexity \\
\hline
$\mathcal{EL(H}^{dr}_\bot)$ &  \PTime   & -  & - \\
\hline
\DLc & \PTime & \DLcH & \ExpTime\\
\hline
$\textsl{Horn}\mathcal{ALC(H)}$ & \ExpTime & $\textsl{Horn}\mathcal{ALC(H)I}$ & 2\ExpTime\\
\hline
\end{tabular}
\end{table}

CQ-inseparability between TBoxes has been investigated for
$\mathcal{EL}$ terminologies (a restricted form of TBox) extended with
role inclusions and domain and range restrictions
\cite{KonevL0W12,KonevLW12-cex25}, for (unrestricted TBoxes
in) the description logic $\mathcal{EL}$
\cite{LutzW10}, and for variants of DL-Lite
\cite{BotoevaKLRWZ16,BotoevaKRWZ16}. The
algorithms presented in \cite{KonevL0W12} are based
on both model-theoretic and proof-theoretic methods. The authors focus
not only on deciding inseparability but also on presenting the logical
difference between TBoxes to the user.  A versioning and
modularisation system for acyclic $\mathcal{EL}$ TBoxes based on
CQ-inseparability is presented and evaluated in \cite{KonevLW12-cex25}. The
system makes intense use of the fact that, in this case, query
inseparability can be decided in polynomial time.  This is in contrast
to general $\mathcal{EL}$ TBoxes for which \ExpTime completeness of
deciding CQ-inseparability is shown in
\cite{LutzW10}. The method is purely model-theoretic
and based on the close relationship between concept and query
inseparability for $\mathcal{EL}$.  More recently, CQ inseparability
has been investigated for Horn$\mathcal{ALCHI}$ and shown to be
2\ExpTime-complete, using a subtle approach that combines a mosaic
technique with automata~\cite{JungLMS17}. The mentioned results are
summarised in Table~\ref{table:TBox-related}.
%
%
\begin{table}[h]
\centering
\caption{TBox query inseparability.}%
\label{table:TBox-related}
\begin{tabular}{|c|c|c|c|}
\hline
DL &  complexity & DL & complexity \\
\hline
 $\mathcal{EL}$&  \ExpTime~\cite{LutzW10}  &  $\textsl{Horn}\mathcal{ALC(H)I}$ & 2\ExpTime~\cite{JungLMS17} \\
\hline
 \DLc & in \PTime~\cite{BotoevaKLRWZ16} &  \DLcH & \ExpTime~\cite{BotoevaKRWZ16} \\
\hline
\end{tabular}
\end{table}


\section{Conclusion and Future Work}
We have made significant steps towards understanding query entailment and
inseparability for KBs and TBoxes in expressive DLs. Our main---and rather unexpected---results are as follows:
\begin{itemize}
\item[--] for $\mathcal{ALC}$-KBs, $\Sigma$-(r)UCQ inseparability is decidable and (r)CQ-inseparability
is undecidable (even without restrictions on the signature);
\item[--] for \hALC-TBoxes, $\Theta$-rCQ inseparability is \ExpTime complete and $\Theta$-CQ inseparability is
2\ExpTime complete.
\end{itemize}
The first result reflects a fundamental difference between the model-theoretic characterisations of inseparability
for CQs and UCQs: while UCQ-inseparability can be characterised using (partial) homomorphisms between models of the
respective KBs, CQ-inseparability requires the construction of products of the models of the respective KBs, a result
which is at the core of our undecidability proof. The second result reflects a fundamental difference between homomorphisms whose
domain is connected to ABox individuals (as required for rooted CQs) and those whose domain is not necessarily reachable from the ABox. Searching for the latter turns out to be much harder. Both results have important practical
implications. The first one indicates that one should approximate CQ-inseparability using UCQ-inseparability when designing practical algorithms.
Observe that this is a sound approximation as no two ontologies that are UCQ-inseparable can be separated by CQs.
The second one indicates that it is worth focusing on rooted (U)CQs rather than all (U)CQs when designing practical algorithms
for inseparability. The latter are likely to cover the vast majority of queries used in practice. We believe that
our model-theoretic characterisations provide a good foundation for developing practical (approximation) algorithms.

Many problems remain open. The main one, which can be directly inferred from the tables presenting our
results,  is the decidability of UCQ-inseparability for $\mathcal{ALC}$ TBoxes. We conjecture that this problem is
undecidable but have found no way of proving this. Another family of interesting open problems concerns the role of
the signatures $\Sigma$ and $\Theta$ in our investigation of the decidability/complexity of inseparability
between KBs and TBoxes, respectively. Observe that admitting more symbols in $\Sigma$ or $\Theta$ leads to sound approximations
of the original inseparability problem: for example, if TBoxes are $\Theta'$-CQ inseparable for a pair of signatures $\Theta'\supseteq \Theta$,
then they are $\Theta$-CQ inseparable as well. It would, therefore, be of great interest to understand the complexity
of inseparability if $\Sigma$ and $\Theta$ consist of \emph{all} concept and role names (the `full signature' case).
We have been able to prove undecidability of full signature (r)CQ-inseparability for $\mathcal{ALC}$ KBs, but the complexity of full signature (r)UCQ-inseparability
between $\mathcal{ALC}$ KBs remains open. Similarly, the decidability of full signature (r)CQ-inseparability and (r)UCQ-inseparability
between \ALC TBoxes remains open. The `hiding technique' discussed in this paper might be a good starting point to attack
those problems.
Finally, it would be of interest to consider extensions of $\mathcal{ALC}$ with inverse roles, qualified number restrictions, nominals,
and role inclusions. We conjecture that extensions of our results to DLs with qualified number restrictions and role inclusions are
rather straightforward (though proofs might become significantly less transparent). The addition of inverse roles, however, might
lead to non-trivial modifications of the model-theoretic criteria, see
also~\cite{JungLMS17}. 

\section*{Acknowledgements} 
We thank the anonymous reviewers for their very thorough and useful comments.
This research was supported by the DFG
grant LU 1417/2-1 (C.~Lutz), the ERC consolidator grant CODA 647289,
and the EPSRC joint grants EP/M012646/1 and EP/M012670/1 `iTract:
Islands of Tractability in Ontology-Based Data Access' (F.~Wolter and
M.~Zakharyaschev).

%

\newenvironment{theoremnum}[1]{\smallskip\noindent\textbf{Theorem~#1.}\hspace*{0.3em}\em}{\par\smallskip}

\newenvironment{lemmanum}[1]{\smallskip\noindent\textbf{Lemma~#1.}\hspace*{0.3em}\em}{\par\smallskip}
\appendix



%
%
%
%
%
%
%
%


\section{Proof of Theorem~\ref{thm:undecidability-rcq}}

For the proof of Theorem~\ref{thm:undecidability-rcq}~(\emph{i}), suppose that an instance $\mathfrak{T}$
of the rectangle tiling problem is given. Consider the KBs $\KonerCQ = (\TonerCQ,\ArCQ)$ and
$\KtworCQ = (\TtworCQ,\ArCQ)$ given in the proof sketch for Theorem~\ref{thm:undecidability-rcq}~(\emph{i}).
It suffices to prove Lemmas~\ref{qn-tiling} and~\ref{qn-rejected-by-k1} for the new KBs, the rCQs $q_{n}^{r}(y)$, and
the signature $\SigmarCQ$.
\begin{lemma}
The instance $\mathfrak{T}$ admits a rectangle tiling iff there exists $\q_n^{r}(a)$ such that
$\KtworCQ \models \q_n^{r}(a)$.
%
\end{lemma}
\begin{proof} $(\Rightarrow)$ Suppose $\mathfrak T$ tiles
the $N \times M$ grid so that a tile of type $T^{ij} \in \mathfrak T$
covers $(i,j)$. Let
$$
\textit{block}_j = (\widehat T^{1,j}_k, \dots, \widehat T^{N,j}_k, \Row),
$$
for $j=1,\dots,M-1$ and $k =(j-1) \!\! \mod \! 3$. Let $\q^{r}_n$ be the CQ in which the $B_i$ follow the pattern
$$
\Row, \ \textit{block}_1, \ \textit{block}_1, \ \textit{block}_2,  \dots, \ \textit{block}_{M-1}
$$
(thus, $n = (N+1) \times M +1$).  In view of Lemma~\ref{min-elu-complete},  we
only need to prove $\I \models \q^{r}_n(a)$ for each minimal model $\I \in
\Mod_{\KtworCQ}$. Take such an $\I$. We have to show that there is an $R$-path
$a,x_0,\dots,x_{n+1}$ in $\I$ such that $x_i \in B_i^\I$ and $x_{n+1}
\in \End^\I$.

\begin{figure}[h]
  \centering
  \scalebox{1}{\begin{tikzpicture}[yscale=1, %
    point/.style={thick,circle,draw=black,fill=white, minimum size=1mm,inner
      sep=0pt}%
    ]

    \begin{scope}[yshift=-0.75cm]
      \foreach \al/\x/\y/\lab/\wh/\extra in {%
        y0/0/3/{a}/above/, %
        yy0a/0/2//right/, %
        yy0/0/1.5/(x_0)/left/rowind, %
        yy1/0/1//right/, %
        yyNa/0/0.4//right/, %
        yyN/0/0/(x_{N+1})/left/rowind, %
        y1/0/-0.5/{~~~\textcolor{red}{x_t}}/right/, %
        yN/0/-1.5/{}/right/rowind, %
        yN1/0/-2//right/, %
        yN2/0/-3//right/rowind, %
        ylN2/0/-4/{(x_{n-N-1})}/left/rowind, %
        ylN1/0/-4.5//right/, %
        yl/0/-5.5/{(x_n)}/left/rowind, %
        yend/0/-6/\End/below/%
      }{ \node[point, \extra, label={[inner sep=1]\wh:{\scriptsize $\lab$}}] (\al) at (\x,\y)
        {}; }

      \foreach \from/\to/\type in {%
        y0/yy0a/dotted, yy0a/yy0/role, yy0/yy1/role, yy1/yyNa/dotted,
        yyNa/yyN/role, yyN/y1/role, y1/yN/dotted, yN/yN1/role, yN1/yN2/dotted,
        yN2/ylN2/dotted, ylN2/ylN1/role, ylN1/yl/dotted, 
        yl/yend/role%
      } {\draw[thick, \type] (\from) -- (\to);}

      \begin{pgfonlayer}{background}
        \foreach \first/\last/\ind in {%
          yy1/yyN/1, y1/yN/1, yN1/yN2/2, ylN1/yl/{M-1}%
        }{ \node[fit=(\first)(\last), rounded corners, fill=gray!30,
          label=right:{\scriptsize \textit{block}$_{\ind}$}] {}; }
      \end{pgfonlayer}
    \end{scope}

    \begin{scope}[xshift=-3cm]
      \foreach \al/\x/\y/\hei/\wid in {%
        t1/0/0/1cm/1.8cm, %
        t2/0.4/-1.5/1cm/1.8cm, %
        t3/0.1/-4/1.5cm/2.2cm%
      }{ \node[subtree, minimum height=\hei, minimum width=\wid] (tree-\al) at
        (\x,\y) {};%

        \node[point] (\al) at (\x,\y) {};%
      }

      \foreach \al/\x/\y/\lab/\wh/\extra in {%
        a/0/1/{D,\Row,\{\widehat{T}_0\}}/left/constant,%
        start1/0/0.5/\Row/right/rowind,%
        start2/0.4/-1//right/rowind,%
        start3/0.1/-3.5//right/rowind,%
        startm/-0.2/-5.5//right/rowind,%
        end1/-0.8/-6/\End/above left/,%
        s1/0/-4.6/Q_1/right/,%
        s2/-0.1/-5.1/S_1/right/%
      }{ \node[point, \extra, label={[inner sep=1]\wh:{\scriptsize $\lab$}}] (\al) at
        (\x,\y) {}; }

      \foreach \from/\to/\type in {%
        a/start1/role, %
        start1/t1/role, start2/t2/role, start3/t3/role, %
        s1/s2/role, s2/startm/role, startm/end1/role, %
        tree-t2.260/start3/dotted, t3/s1/dotted%
      } {\draw[thick, \type] (\from) -- (\to);}

    \draw[role] (a) to[out=130, in=50, looseness=25] (a);

      \foreach \al/\lab/\wh in {%
        start1/y_0/left, t1/y_1/left, start2/y_{N+1}/above left,
        t2/y_{N+2}/left, start3/y_{n-2N-2}/left, startm/y_{n-N-1}/above left,
        t1/~I_0/right%
      }{ \node[label={[inner sep=2]\wh:{\scriptsize $\lab$}}, inner sep=0] at
        (\al) {}; }

      \foreach \al/\lab/\wh in {%
        startm/\sigma/above right, end1/\sigma w_{\exists R. \End}/272%
      }{\node[label={[inner sep=2,red]\wh:{\scriptsize $\lab$}}, inner sep=0] at (\al)
        {}; }


      \node at (-0.7,2) {$\I_l$};

    \begin{pgfonlayer}{background}
      \foreach \from/\to/\in in {%
        y0/a/50, yyNa/a/10,%
        yyN/start1/0, y1/t1/0, yN/start2/-30, ylN2/start3/0, yl/startm/-15,
        yend/end1/0%
      } {\draw[homomorphism, black!30] (\from) to[out=160, in=\in] (\to);}
    \end{pgfonlayer}
    \end{scope}

    \begin{scope}[xshift=3cm, yshift=-0cm]
      \foreach \al/\x/\y/\hei/\wid in {%
        t1/0/0/1cm/1.8cm, %
        t2/0.4/-1.5/1cm/1.8cm, %
        t3/0.1/-4/1.5cm/2.2cm, %
        tm/0.3/-6/1.5cm/2.6cm%
      }{ \node[subtree, minimum height=\hei, minimum width=\wid] (tree-\al) at
        (\x,\y) {};%
        \node[point] (\al) at (\x,\y) {};%
      }

      \foreach \al/\x/\y/\lab/\wh/\extra in {%
        a/0/1//right/constant,%
        start1/0/0.5/\Row/right/rowind,%
        start2/0.4/-1//right/rowind,%
        start3/0.1/-3.5//right/rowind,%
        startm/-0.2/-5.5//right/rowind,%
        s1/0/-4.6/Q_1/right/,%
        s2/-0.1/-5.1/S_1/right/,%
        vN1/0.1/-6.6/U_2^{\halt}/right/,%
        vN/0.3/-7.1/T_2^{\halt}/right/,%
        row/0.5/-7.5//above right/rowind,%
        end2/0.5/-8/\End/right/%
      }{ \node[point, \extra, label={[inner sep=1]\wh:{\scriptsize $\lab$}}] (\al) at
        (\x,\y) {}; }

      \foreach \from/\to/\type in {%
        a/start1/role, %
        start1/t1/role, start2/t2/role, start3/t3/role, startm/tm/role, %
        s1/s2/role, s2/startm/role, vN1/vN/role, vN/row/role, row/end2/role,
        tree-t2.260/start3/dotted, t3/s1/dotted, tm/vN1/dotted%
      } {\draw[thick, \type] (\from) -- (\to);}

    \draw[role] (a) to[out=130, in=50, looseness=25] (a);

      \foreach \al/\lab/\wh in {%
        start1/y_0/left, t1/y_1/left, start2/y_{N+1}/above left,
        t2/y_{N+2}/left, start3/y_{n-2N-2}/left, startm/y_{n-N-1}/above left,%
        t1/~I_0/right, tm/z_1/left, vN/z_{N}/left
      }{ \node[label={[inner sep=2]\wh:{\scriptsize $\lab$}}, inner sep=0] at
        (\al) {}; }

      \foreach \al/\lab/\wh in {%
        startm/\sigma/above right%
      }{\node[label={[inner sep=2,red]\wh:{\scriptsize $\lab$}}, inner sep=0] at (\al)
        {}; }

      \begin{pgfonlayer}{background}
        \foreach \from/\to/\in in {%
          y0/a/170, yy0a/a/200, yy0/start1/200, yy1/t1/200, %
          yyN/start2/200, y1/t2/200, yN/tree-t2.south/200, ylN2/startm/200,
          yl/row/200, yend/end2/190%
        } {\draw[homomorphism, black!60] (\from) to[out=-20, in=\in] (\to);}
      \end{pgfonlayer}

      \begin{pgfonlayer}{background}
        \foreach \start/\finish/\x/\i in {%
          start1/start2/1.3/1, %
          start2/tree-t2.south/1.8/2,%
          start3/startm/1.3/M-1, %
          startm/row/1.8/M%
        } { \draw[gray, thick, <->] ({(\x,0)}|-\start) -- %
          node[right] {\small Row $\i$} ({(\x,0)}|-\finish); }
      \end{pgfonlayer}

      \node at (0.7,2) {$\I_r$};
    \end{scope}
\end{tikzpicture}}

\caption{Two homomorphisms to minimal models.}
\label{two-homomorphisms-rooted}
\end{figure}
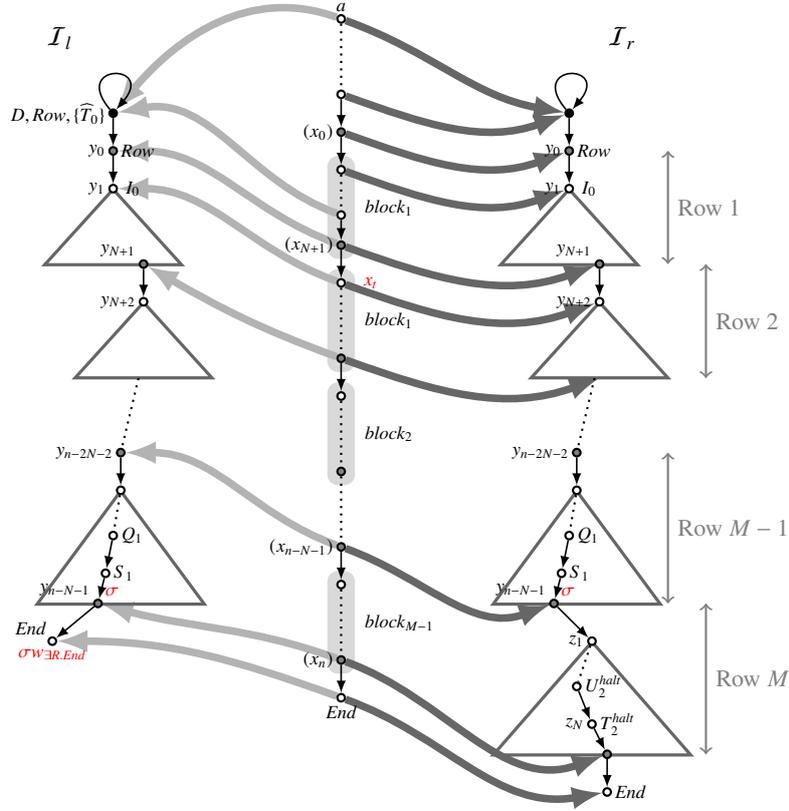


First, we construct an auxiliary $R$-path $y_0,\dots,y_{n-N-1}$.
We take $y_0 \in \Row^\I$, the successor of $a$ in $\I$, and
$y_1 \in {I_0}^\I$, the successor of $y_0$ in $\I$, by~\eqref{r-initial}
($I_0 = T^{1,1}$). Then we take
$y_2 \in (T^{2,1})^\I, \dots, y_{N} \in (T^{N,1})^\I$
by~\eqref{horizontal-k}. We now have $\textit{right}(T^{N,1}) =
W$. By~\eqref{end-of-row-k}, we obtain $y_{N+1} \in {\Row_{1}}^{\I}$.
By~\eqref{row-k}, $y_{N+1} \in {\Row_1}^\I \subseteq \Row^\I$. We proceed in
this way, starting with \eqref{start-row-k}, till the moment we construct
$y_{n-1} \in (T^{N,M-1})^\I$, for which we use~\eqref{start-last-row} and
\eqref{row-halt} to obtain $y_n \in (\Row^{\halt}_k)^\I \subseteq \Row^\I$, for
some $k$.  Note that $T^\I \subseteq \widehat T^\I$ by~\eqref{tile-hat}.

By~\eqref{disjunction}, two cases are possible now.

\emph{Case 1}: there is $y$ such that $(y_n,y) \in R^\I$ and $y \in \End^\I$. Then we take $x_0 = \dots = x_{N} = a$, $x_{N+1} = y_0, \dots, x_{n} = y_{n-N-1}, x_{n+1} = y$.

\emph{Case 2}: there is $z_1$ such that $(y_n,z_1) \in R^\I$ and $z_1 \in (T^{\it halt}_k)^\I$, where $T = T^{1,M}$ and $\textit{up}(T) = C$. We then use~\eqref{horizontal-halt} and find $z_2,\dots,z_N,u,v$ such that $z_i \in (T^{\it halt}_k)^\I$, where $T = T^{i,M}$, $u \in \Row^\I$ and $v \in \End^\I$.
We take $x_0 = y_0,\dots, x_{n-N-1} = y_{n-N-1}, x_{n-N} = z_1,\dots, x_{n-1}= z_N, x_n = u, x_{n+1} = v$. Note that, by~\eqref{vertical-k} and~\eqref{vertical-halt}, we have $(T^{i,j})^\I \subseteq (\widehat T^{i,j-1})^\I$.

$(\Leftarrow)$ Suppose $\KtworCQ\models \q^{r}_{n}(a)$ for some $n>0$.
Consider all the pairwise distinct pairs $(\I,h)$ such that $\I
\in \Mod_{\KtworCQ}$ and $h$ is a homomorphism from $\q^{r}_{n}(a)$ to $\I$.
Note that $h(\q^{r}_{n})$ contains an or-node $\sigma_h$ (which is an instance of
$\Row^{\halt}_k$, for some $k$).  We call $(\I,h)$ and $h$ \emph{left} if
$h(x_{n+1}) = \sigma_h \cdot w_{\exists R.\End}$, and \emph{right} otherwise.
It is not hard to see that there exist a left $(\I_l,h_l)$ and a right $(\I_r,h_r)$ with $\sigma_{h_l} = \sigma_{h_r}$ (if this is not the case, we can construct $\I\in\Mod_{\KtworCQ}$ such that $\I\not\models\q^{r}_{n}(a)$).

Take $(\I_l,h_l)$ and $(\I_r,h_r)$ such that $\sigma_{h_l} = \sigma_{h_r} = \sigma$ and use them to construct the required tiling. Let $\sigma = a w_0\cdots w_{n'}$. We have $h_l(x_{n}) = \sigma$, $h_l(x_{n+1}) = \sigma \cdot w_{\exists R.\End}$. Let $h_r(x_{n+1}) = \sigma v_1\dots v_{m+2}$, which is an instance of $\End$. Then $h_r(x_n) = \sigma v_1\dots v_{m+1}$, which is an instance of $\Row$.

Suppose $v_{m} = w_{\exists R. T^{\halt}_2}$ (any $k$ other than 2 is treated analogously). By \eqref{end}, $\textit{right}(T) = W$; by~\eqref{horizontal-halt}, $\textit{up}(T) = C$.
Suppose $w_{n'-1} = w_{\exists R. S_k}$. Now, we know that $k=1$. By \eqref{start-last-row}, $\textit{right}(S) = W$. Consider the atom $B_{n-1}(x_{n-1})$ from $\q^{r}_{n}$. Both $a w_0\cdots w_{n'-1}$ and $\sigma v_1\cdots v_{m}$ are instances of $B_{n-1}$. By \eqref{tile-hat} and \eqref{vertical-halt}, $B_{n-1} = \widehat S_1$ and $\textit{down}(T) = \textit{up}(S)$.
Suppose $v_{m-1} = w_{\exists R. U^{\halt}_2}$. By \eqref{horizontal-halt}, $\textit{right}(U) = \textit{left}(T)$ and $\textit{up}(U) = C$.
Suppose $w_{n'-2} = w_{\exists R. Q_1}$. By \eqref{horizontal-k}, $\textit{right}(Q) = \textit{left}(S)$.
Consider the atom $B_{n-2}(x_{n-2})$ from $\q^{r}_{n}$. Both $a w_0\cdots w_{n'-2}$ and $\sigma \cdots v_{m-1}$ are instances of $B_{n-2}$. By \eqref{tile-hat} and \eqref{vertical-halt}, $B_{n-2} = \widehat Q_1$ and $\textit{down}(U) = \textit{up}(Q)$.
We proceed in the same way until we reach $\sigma$ and $a w_0\cdots w_{n'-N-1}$, for $N=m$, both of which are instances of $B_{n-N-1} = \Row$. Thus, we have tiled the last two rows of the grid.

We proceed in this way until we have reached some variable $x_t$, for $t \geq 0$, of $\q^{r}_{n}$ that is mapped by $h_l$ to $a w_0 w_1$ (see Fig.~\ref{two-homomorphisms-rooted}). Note that this situation is guaranteed to occur. Indeed, $h_l(a) = a$, $h_l(x_0) \in \{ a, a w_0\}$, $h_l(x_1) \in \{ a, a w_0, a w_0 w_1\}$, etc. Clearly, the assumption that $h_l(x_i) \in \{ a, a w_0\}$ for all $i$ ($0 \leq i \leq n+1$) leads to a contradiction.
Let $h_r(x_t) = a w_0 \cdots w_s$, for some $s > 1$. Note that $s =
N+2$. By~\eqref{r-initial}, it follows that $a w_0 w_1$ is an instance of
$I_0$. Therefore, $B_t = \widehat{I}_0$ and, by~\eqref{vertical-k},
$a w_0\cdots w_s$ is an instance of $V_1$, for some tile $V$ such that
$\textit{down}(V) = \textit{up}(I)$.

Thus, we have a tiling as required since the vertical and horizontal
compatibility of the tiles is ensured by the construction above and by the fact
that the tile $I$ occurs in it as the initial tile.
\end{proof}
\begin{lemma}\label{qn-rejected-by-k1-r}
$\prod\!\Mod_{\KtworCQ}$ is con-$n\SigmarCQ$-homomorphically embeddable into $\I_{\KonerCQ}$ preserving $\{a\}$ for all
$n\geq 1$ iff there does not exist an rCQ $\q^{r}_m(y)$ such that $\prod\!\Mod_{\KtworCQ} \models \q^{r}_m(a)$.
\end{lemma}
\begin{proof}
  $(\Rightarrow)$ Suppose otherwise, that is, 
  $\prod \Mod_{\KtworCQ} \models \q^{r}_m(a)$ for some $m$. By the assumption,
  $\prod \Mod_{\KtworCQ}$ is con-$n\SigmarCQ$-homomorphically embeddable into
  $\I_{\KonerCQ}$ for $n=m+3$ (the length of $\q^{r}_m$).  So we have
  $\I_{\KonerCQ} \models \q^{r}_m(a)$, which is clearly impossible because none
  of the paths of $\I_{\KonerCQ}$ contains the full sequence of symbols
  mentioned in $\q^{r}_m(y)$.

$(\Leftarrow)$ Suppose $\prod \Mod_{\KtworCQ} \not\models \q^{r}_m(a)$ for all $m$.
Take any subinterpretation of $\prod \Mod_{\KtworCQ}$ whose domain contains $n$ elements connected to $a$.
Recall from the proof of Theorem~\ref{prop:char} that we can regard the $\SigmarCQ$-reduct of this subinterpretation as a $\SigmarCQ$-rCQ,
and so denote it by $\q(y)$. Clearly, $\q$ is tree shaped plus the atom $R(y, y)$.
We know that there is no $\SigmarCQ$-homomorphism from $\q^{r}_m(y)$ into $\q(y)$ for any $m$; in particular,
$\q(y)$ does not have a subquery of the form $\q^{r}_m(y)$. We have to show that $\I_{\KonerCQ} \models \q(a)$.
We show how to map $\q(y)$ starting from $a$.

We call a variable $x$ in $\q(y)$ a \emph{gap} if there exists no $B \in \SigmarCQ$ such that $B(x)$ is in $\q(y)$.
Since $\q(y)$ does not contain a subquery of the form $\q^{r}_m(y)$, we know that every path $\rho$ starting from $y$ in $\q(y)$ either:
\begin{description}\itemsep 0cm
\item[(a)] does not contain $\End(x)$, or
\item[(b)] contains $\End(x)$ and contains a gap $x'$ that occurs between the $y$ and $x$.
\end{description}
If all paths $\rho$ starting from $y$ in $\q(y)$ are of type \textbf{(a)} we map $\q(y)$ on the path $\pi_\omega$:
\begin{center}
  \begin{tikzpicture}[xscale=2, yscale=1.2, %
    emptyind/.style={fill=gray!50, inner sep=1.7},
    rotatelabels/.style={rotate=-14},
    abovesloped/.style={above,rotatelabels}]

    \foreach \al/\x/\y/\lab/\wh/\extra/\rot in {%
      a/0/0/{\qquad\qquad A,\Row,\widehat{T}_0}/above/constant/, %
      a2/0/-0.7/{}/below/emptyind/, %
      a3/0.7/-1/{\End,\Sigma_{0},E~~~}/below//rotatelabels, %
      a4/1.4/-1.3/{~~~\End,\Sigma_{0},E}/below//rotatelabels, %
      x1/1.5/0/{\Sigma_0,D}/above//, %
      x2/1.5/-0.7/{}/below/emptyind/, %
      x3/2.2/-1/{\End,\Sigma_{0},E~~~}/below//rotatelabels, %
      x4/2.9/-1.3/{~~~\End,\Sigma_{0},E}/below//rotatelabels, %
      y1/3/0/{\Sigma_0,D}/above//,%
      y2/3/-0.7/{}/above/emptyind/,%
      y3/3.7/-1/{\End,\Sigma_{0},E~~~}/below//rotatelabels,%
      y4/4.4/-1.3/{~~~\End,\Sigma_{0},E}/below//rotatelabels,%
      z1/4.5/0/{\Sigma_0,D}/above//,%
      z2/4.5/-0.7/{}/above/emptyind/,%
      z3/5.2/-1/{\End,\Sigma_{0},E~~~}/below//rotatelabels,%
      z4/5.9/-1.3/{~~~\End,\Sigma_{0},E}/below//rotatelabels%
    }{ \node[point, \extra, label={[\rot, inner sep=2]\wh:{\scriptsize
          $\lab$}}] (\al) at (\x,\y) {}; }

    \node[label={[inner xsep=1]left:{\scriptsize$a$}}] at (a) {};%

    \foreach \from/\to/\wh in {%
      a/a2/left, a2/a3/abovesloped, a3/a4/abovesloped, %
      a/x1/above, x1/x2/left, x2/x3/abovesloped, x3/x4/abovesloped, %
      x1/y1/above, y1/y2/left, y2/y3/abovesloped, y3/y4/abovesloped, %
      y1/z1/above, z1/z2/left, z2/z3/abovesloped, z3/z4/abovesloped%
    } {\draw[role] (\from) -- node[\wh] {\scriptsize $R$} (\to);}

    \draw[role] (a) to[out=230,in=130,looseness=12] node[left] {\scriptsize $R$} (a);

    \foreach \from in {%
      a4, x4, y4, z4
    } {\draw[dotted, thick] (\from) -- +(0.5,-0.2);}

    \draw[dotted, thick] (z1) -- +(1,0);

    \node at (-1,0) {$\I_{\KonerCQ}$:};
    \node[red] at (5.9,0) {$\pi_\omega$};
    \node[red] at (2.1,-1.5) {$\pi_1$};
    \node[red] at (3.6,-1.5) {$\pi_2$};
    \node[red] at (5.1,-1.5) {$\pi_3$};
    \node[red] at (6.6,-1.5) {$\pi_4$};
  \end{tikzpicture}
\end{center}
Otherwise, let $y$ be the current variable and $a$ the current image. Let
$x_1,\dots,x_k$ be all successor gaps and $z_1,\dots,z_l$ all successor
non-gaps of the current variable in $\q(y)$. We map all $x_i$ to the vertical
successor and all $z_i$ to the horizontal successor of the current image. All
the rest of the paths starting from $x_i$ can then be mapped to an appropriate
$\pi_i$. We then consider each $z_i$ as the current variable, and the point
where it has been mapped as the current image, and continue analogously. Thus,
the paths $\rho$ not containing gaps and $\End(x)$ atoms would result in being
mapped to $\pi_\omega$, while the paths with gaps would each result in being
mapped to an appropriate $\pi_i$.
\end{proof}

We now prove Theorem~\ref{thm:undecidability-rcq}~(\emph{ii}).
We set $\K_2 = \KtworCQ \cup \KonerCQ$ and show that the following are equivalent:
\begin{enumerate}
\item[(1)] $\KonerCQ$ $\SigmarCQ$-rCQ entails $\KtworCQ$;
\item[(2)] $\KonerCQ$ and $\K_2$ are $\SigmarCQ$-rCQ inseparable.
\end{enumerate}
Let $\I_{\KonerCQ}$ be the canonical model of $\KonerCQ$ and $\Mod_{\KtworCQ}$
the set of minimal models of $\KtworCQ$. Again, one can easily show that the
following set $\Mod_{\K_{2}}$ is complete for $\K_2$:
$$
\Mod_{\K_2} = \{ \I \uplus \I_{\KonerCQ} \mid \I \in \Mod_{\KtworCQ}\},
$$
where $\I \uplus \I_{\KonerCQ}$ is the interpretation that results from merging
the roots $a$ of $\I$ and $\I_{\KonerCQ}$. Now $(2) \Rightarrow (1)$ is
trivial. For the converse, suppose $\KonerCQ$ $\SigmarCQ$-rCQ entails
$\KtworCQ$. It directly follows that $\K_2$ $\SigmarCQ$-rCQ entails
$\KonerCQ$. So it remains to show that $\KonerCQ$ $\SigmarCQ$-rCQ entails
$\K_2$.  Suppose this is not the case. Without loss of generality, we may
assume that there is a $\SigmarCQ$-rCQ $\q(y)$, a ditree with one answer variable
$y$ not mentioning $D$ and $E$, such that $\K_2 \models \q(a)$ and $\KonerCQ
\not\models \q(a)$. We can assume $\q$ to be a \emph{smallest} rCQ with this
property. 
%
%
Consider the various cases of $\q(y)$:
\begin{itemize}
\item[--] $\q(y)$ does not contain $\End$ atoms: but then $\KonerCQ
  \models \q(a)$ (see the proof of Lemma~\ref{qn-rejected-by-k1-r}),
  contrary to our assumption.
\item[--] $\q(y)$ contains $\End$ atoms and, on each path from $y$ to an $\End$ atom,
  there is a variable $x$ that does not appear in $\q(y)$ in any atom of the
  form $B(x)$, for a concept name $B \in \Sigma$. But then  $\KonerCQ \models \q(a)$ (see the proof of Lemma~\ref{qn-rejected-by-k1-r}),
  contrary to our assumption.
\item[--] $\q(y)$ contains $\End$ atoms and a path from $y$ to
  an $\End$ atom such that each variable $x$ on this path appears in an atom of
  the form $B(x)$, for a concept name $B \in \Sigma$. Denote this path by
  $\q'(y)$, and observe that $\q'(y)$ is a query of the form $\q^{r}_n(y)$. Then
  $\KonerCQ \not\models \q'(a)$ by the construction of $\KonerCQ$, moreover there
  is no subquery $\q''$ of $\q'(y)$ such that there is a model $\I \in
  \Mod_{\KtworCQ}$ and $\I \uplus \I_{\KonerCQ} \models \q'(a)$ by mapping
  $\q''$ entirely into $\I_{\KonerCQ}$. So it must be that
  $\KtworCQ\models\q'(a)$. But now, as $\KonerCQ \models \KtworCQ$, we know that
  $\KtworCQ \not\models \q^{r}_n(a)$ for each $n$, which is again a contradiction.
\end{itemize}
The contradictions arise from the assumption that $\KonerCQ$ does not
$\SigmarCQ$-rCQ entail $\K_2$.

\section{Proof of Theorem~\ref{thm:TBoxbasicundecidability} for Rooted CQs}
We show that it is undecidable whether an $\mathcal{EL}$ TBox is $\Theta$-rCQ inseparable from an $\mathcal{ALC}$ TBox.
For the proof we require homomorphisms between ABoxes and the observation that they preserve certain answers. Let
$\A_{1}$ and $\A_{2}$ be ABoxes. A map $h$ from $\ind(\A_{1})$ to $\ind(\A_{2})$ is called an \emph{ABox-homomorphism}
if $A(a)\in \A_{1}$ implies $A(h(a))\in \A_{2}$ for all concept names $A$, and $R(a,b)\in \A_{1}$ implies $R(h(a),h(b))\in \A_{2}$
for all role names $R$. The following is shown in \cite{BaaderBL16}.
\begin{proposition}\label{prop:ABoxhom}
Let $\T$ be an \ALC TBox, $\A,\A'$ be ABoxes, and $h\colon\A \rightarrow \A'$ an ABox homomorphism. Then
\begin{itemize}
\item $\A$ is consistent with $\T$ if $\A'$ is consistent with $\T$, and
\item $(\T,\A)\models \q(\avec{a})$ implies $(\T,\A')\models \q(h(\avec{a}))$
  for all CQs $\q(\avec{x})$.
\end{itemize}
\end{proposition}
To prove the undecidability of the problem whether an $\mathcal{EL}$ TBox is $\Theta$-rCQ inseparable from an $\mathcal{ALC}$ TBox,
we use the TBoxes constructed in the proof of Theorem~\ref{thm:undecidability-rcq}. Recall the KBs $\KonerCQ= (\TonerCQ,\ArCQ)$,
$\KtworCQ=(\TtworCQ,\ArCQ)$ and $\K_{2}= (\T_{2},\ArCQ)$, where $\T_{2}=\TonerCQ \cup \TtworCQ$. Set $\Theta=(\Sigma_{1},\Sigma_{2})$,
where $\Sigma_{1}=\sig(\ArCQ)$ and $\Sigma_{2}=\SigmarCQ$. We aim to show that the following conditions are equivalent:
\begin{enumerate}
\item[(1)] $\KonerCQ$ and $\K_{2}$ are $\SigmarCQ$-rCQ inseparable;
\item[(2)] $\TonerCQ$ and $\T_{2}$ are $\Theta$-rCQ inseparable.
\end{enumerate}
The implication $(2) \Rightarrow (1)$ is straightforward: if $\KonerCQ$ and
$\K_{2}$ are not $\SigmarCQ$-CQ inseparable then the ABox $\ArCQ$ witnesses
that $\TonerCQ$ and $\T_{2}$ are not $\Theta$-rCQ inseparable.  Conversely,
suppose $\TonerCQ$ and $\T_{2}$ are not $\Theta$-rCQ inseparable. Take a
$\Sigma_{1}$-ABox $\A$ such that $(\TonerCQ,\A)$ and $(\T_{2},\A)$ are not
$\Sigma_{2}$-rCQ inseparable. Clearly, $(\T_{2},\A)$ $\Sigma_{2}$-rCQ entails $(\TonerCQ,\A)$.
Thus, $(\TonerCQ,\A)$ does not $\Sigma_{2}$-rCQ entail $(\T_{2},\A)$.
The canonical model $\I_{1}$ of the $\mathcal{EL}$ KB $(\TonerCQ,\A)$ can be
constructed by taking, for every $A(b)\in \A$, a copy of the canonical model
$\I_{\KonerCQ}$ and hooking the two $R$-successors of $a$ in $\I_{\KonerCQ}$
(together with the subinterpretations they root) as fresh $R$-successors to
$b$.
On the other hand, the class $\Mod$ of minimal models of $(\T_{2},\A)$ is
obtained from $\I_{1}$ by hooking to every $b$ with $A(b)\in \A$ a copy of a
minimal model $\I_{b}\in \Mod_{\KtworCQ}$ by identifying the root $a$ of
$\I_{b}$ with $b$.

Now consider a $\Sigma_{2}$-rCQ $\q(\avec{a})$ with
$(\TonerCQ,\A)\not\models \q(\avec{a})$ and $(\T_{2},\A)\models \q(\avec{a})$.
Suppose $\q(\avec{a})$ is the smallest rCQ with this property. Using the description 
of the canonical model $\I_{1}$ of $(\TonerCQ,\A)$ and the class $\Mod$ of minimal models of $(\T_{2},\A)$, one can show in
the same way as in the proof of Theorem~\ref{thm:undecidability-rcq}~(\emph{ii}) given in the appendix above that there
must be a path in $\q$ from an answer variable to an $\End$ atom such that each
variable $x$ on this path appears in an atom of the form $B(x)$ with
$B\in \SigmarCQ$. But then $\q$ contains a query of the form $\q^{r}_{n}(x)$ (see again the proof of Theorem~\ref{thm:undecidability-rcq}~(\emph{ii}))
such that $(\T_{2},\A)\models \q_{n}^{r}(a)$ for some individual $a$ and $n>0$.
Observe that the map $h \colon \ind(\A) \rightarrow \{a\}$ is an ABox-homomorphism from the ABox
$\A$ onto the ABox $\ArCQ$. It follows from Proposition~\ref{prop:ABoxhom} that
$(\T_{2},\ArCQ)\models \q_{n}^{r}(h(a))$, for some $n$. We know from the proof of Theorem~\ref{thm:undecidability-rcq}
that $\KonerCQ \not\models q_{n}^{r}(a)$. Thus, $\KonerCQ$
and $\K_{2}$ are not $\SigmarCQ$-rCQ inseparable, as required.

\bibliography{bibliography}
\bibliographystyle{model1-num-names}

\end{document}